\newcommand*{\NEURIPS}{}
\newcommand*{\CAMREADY}{}

\ifdefined\ARXIV

\documentclass{article} %
\usepackage[table]{xcolor}
\usepackage{arxiv}
\usepackage{libertine} % Different font style
\usepackage{pifont}
\usepackage{footmisc}
\usepackage{hyperref}
\definecolor{myblue}{rgb}{0.15,0.35,0.8}
\hypersetup{ %
	pdftitle={},
	pdfauthor={},
	pdfsubject={},
	pdfkeywords={},
	colorlinks=true,
	linkcolor=myblue,
	citecolor=myblue,
	filecolor=myblue,
	urlcolor=myblue
}
\usepackage{url}

\def\papertitle{When Errors Can Be Beneficial: \\ A Categorization of Imperfect Rewards for Policy Gradient} % Used for defining main title
\def\runningtitle{When Errors Can Be Beneficial: A Categorization of Imperfect Rewards for Policy Gradient} % Defines the running title on the top of each page

\title{\papertitle}

\author{
	Author 1, Author 2, Author 3, Author 4, Author 5 \\[1.75mm]
	~\hspace{0.05mm}Princeton Language and Intelligence, Princeton University
}

\fi

\ifdefined\NEURIPS
	\PassOptionsToPackage{numbers}{natbib}
	\documentclass{article}
	\ifdefined\CAMREADY
		\usepackage[final,main]{neurips_2025_arxiv}
	\else
		\usepackage{neurips_2025_arxiv}
	\fi
	\usepackage{footmisc}
	\usepackage[utf8]{inputenc} 
	\usepackage[T1]{fontenc}    
	 \usepackage{xcolor}
	\usepackage{hyperref}
	\definecolor{myblue}{rgb}{0.176,0.372,0.85}
	\hypersetup{ %
		pdftitle={},
		pdfauthor={},
		pdfsubject={},
		pdfkeywords={},
		colorlinks=true,
		linkcolor=myblue,
		citecolor=myblue,
		filecolor=myblue,
		urlcolor=myblue}
	\usepackage{url}            	
	\usepackage{booktabs}       
	\usepackage{amsfonts}       
	\usepackage{nicefrac}       	
	\usepackage{microtype}      
\fi

\ifdefined\CVPR
	\documentclass[10pt,twocolumn,letterpaper]{article}
	\usepackage{footmisc}
	\usepackage{cvpr}
	\usepackage{times}
	\usepackage{epsfig}
	\usepackage{graphicx}
	\usepackage{amsmath}
	\usepackage{amssymb}
	\usepackage[square,comma,numbers]{natbib}
\fi
\ifdefined\AISTATS
	\documentclass[twoside]{article}
	\ifdefined\CAMREADY
		\usepackage[accepted]{aistats2015}
	\else
		\usepackage{aistats2015}
	\fi
	\usepackage{url}
	\usepackage[round,authoryear]{natbib}
\fi
\ifdefined\ICML
	\documentclass{article}
	\usepackage{footmisc}
	\usepackage{microtype}
	\usepackage{graphicx}
	\usepackage{booktabs}
	\usepackage{xcolor}
	\usepackage{hyperref}
	\definecolor{myblue}{rgb}{0.176,0.372,0.85}
	\hypersetup{ %
		pdftitle={},
		pdfauthor={},
		pdfsubject={},
		pdfkeywords={},
		colorlinks=true,
		linkcolor=myblue,
		citecolor=myblue,
		filecolor=myblue,
		urlcolor=myblue}

	\ifdefined\CAMREADY
		\usepackage[accepted]{icml2021}
	\else
		\usepackage{icml2021}
	\fi
\fi
\ifdefined\ICLR
	\documentclass{article}
	\usepackage{iclr2019_conference,times}
	\usepackage{footmisc}
	\usepackage{xcolor}
	\usepackage{hyperref}
	\definecolor{myblue}{rgb}{0.176,0.372,0.85}
	\hypersetup{ %
		pdftitle={},
		pdfauthor={},
		pdfsubject={},
		pdfkeywords={},
		colorlinks=true,
		linkcolor=myblue,
		citecolor=myblue,
		filecolor=myblue,
		urlcolor=myblue}
	\usepackage{url}

	\ifdefined\CAMREADY
		\iclrfinalcopy
	\fi
\fi
\ifdefined\COLT
	\ifdefined\CAMREADY
		\documentclass{colt2017}
	\else
		\documentclass[anon,12pt]{colt2017}
	\fi
	\usepackage{times}
\fi

\usepackage{dsfont}
\usepackage{booktabs,multirow}       % professional-quality tables
\usepackage{color}
\usepackage{amsfonts}
\usepackage{algorithm}
\usepackage{algorithmic}
\usepackage{graphicx}
\usepackage{nicefrac}
\usepackage{bbm}
\usepackage{wrapfig}
\usepackage{xcolor}
\usepackage{tikz}
\usepackage[skins,theorems]{tcolorbox}
\usepackage[titletoc,title]{appendix}
\usepackage{stmaryrd}
\usepackage{comment}
\usepackage{endnotes}
\usepackage{hyperendnotes}
\usepackage{enumerate}
\usepackage{enumitem}
\usepackage[normalem]{ulem}
	
\usepackage{titlesec}
\ifdefined\COLT
\else
	\usepackage{amsmath,amsthm,amssymb,mathtools}
\fi
\usepackage[small]{caption}
\usepackage[caption = false]{subfig}
\usepackage[extdef=true]{delimset}

\ifdefined\COLT
	\newtheorem{claim}[theorem]{Claim}
	\newtheorem{fact}[theorem]{Fact}
	\newtheorem{procedure}{Procedure}
	\newtheorem{conjecture}{Conjecture}	
	\newtheorem{hypothesis}{Hypothesis}	
	\newcommand{\qed}{\hfill\ensuremath{\blacksquare}}
\else
	\newtheorem{lemma}{Lemma}
	\newtheorem{corollary}{Corollary}
	\newtheorem{theorem}{Theorem}
	\newtheorem{proposition}{Proposition}
	\newtheorem{assumption}{Assumption}

	\theoremstyle{definition}
	\newtheorem{definition}{Definition}
	\newtheorem{remark}{Remark}
\fi

\usepackage{zref-clever}
\zcsetup{
  abbrev = false,
  cap = true,
  nameinlink = false,
  hyperref = true 
}
\zcRefTypeSetup{assumption}{
	Name-sg = Assumption,
	name-sg = assumption,
	Name-pl = Assumptions,
	name-pl = assumptions
}
\zcsetup{
  lastsep  = {{, and\nobreakspace}},
  tlastsep = {{, and\nobreakspace}},
  comp=false
}

\definecolor{green}{rgb}{0.0, 0.5, 0.0}

\definecolor{xcolor-gray}{gray}{0.95}

\definecolor{harmful}{rgb}{0.9529411764705882,0.3137254901960784,0.3137254901960784}
\definecolor{benign}{rgb}{1,0.5647058823529412,0}
\definecolor{beneficial}{rgb}{0.12941176470588237,0.6941176470588235,0.42745098039215684}
\colorlet{harmfulbg}{harmful!10}
\colorlet{benignbg}{benign!10}
\colorlet{beneficialbg}{beneficial!10}

\newtcolorbox{harmfulbox}{
	colback=harmfulbg,
	colframe=harmful,
	boxrule=0.75pt,
	enhanced,
	top=6pt,
	bottom=5pt,
	left=6pt,
	right=6pt
}
\newtcolorbox{benignbox}{
	colback=benignbg,
	colframe=benign,
	boxrule=0.75pt,
	enhanced,
	top=6pt,
	bottom=5pt,
	left=6pt,
	right=6pt
}
\newtcolorbox{beneficialbox}{
	colback=beneficialbg,
	colframe=beneficial,
	boxrule=0.75pt,
	enhanced,
	top=6pt,
	bottom=5pt,
	left=6pt,
	right=6pt
}

\newcommand\harmful[1]{{\color{harmful}#1}}
\newcommand\benign[1]{{\color{benign}#1}}
\newcommand\beneficial[1]{{\color{beneficial}#1}}

\definecolor{bg_lightblue}{rgb}{0.9529,0.9725,1}
\definecolor{darkolive}{rgb}{0.1607,0.1803,0.1176}
\tcbset{
	takeawaybox/.style={
		top=10pt,
		bottom=5pt,
		colback=bg_lightblue,
		colframe=darkolive,
		colbacktitle=darkolive,
		boxrule=1pt,
		enhanced,
		center,
		attach boxed title to top left={yshift=-0.1in,xshift=0.08in},
		boxed title style={boxrule=0pt,colframe=white}
	}
}
\newtcolorbox[auto counter]{takeawaybox}[2][]{
	takeawaybox,
	title=Takeaway~\thetcbcounter: #2,#1
}

\def\be{\begin{equation}}
	\def\ee{\end{equation}}
\def\beas{\begin{eqnarray*}}
	\def\eeas{\end{eqnarray*}}
\def\bea{\begin{eqnarray}}
	\def\eea{\end{eqnarray}}

\newcommand{\ebf}{{\mathbf e}}

\newcommand{\Z}{{\mathcal Z}}

\newcommand{\D}{{\mathcal D}}

\renewcommand{\S}{{\mathcal S}}

\newcommand{\X}{{\mathcal X}}
\newcommand{\Y}{{\mathcal Y}}
\newcommand{\OO}{{\mathcal O}}

\newcommand{\EE}{\mathop{\mathbb E}} % expectation operator
\newcommand{\R}{{\mathbb R}}

\newcommand{\indc}[1]{\mathds{1} \brk[s]*{ #1 }}

\newcommand{\inprod}[2]  {\left\langle{#1},{#2}\right\rangle}

\newcommand{\inprodBig}[2]  {\Big \langle{#1},{#2} \Big \rangle}

\DeclareMathOperator*{\argmax}{argmax}

\newcommand{\sign}{\mathrm{sign}}

\DeclareMathOperator{\var}{Var}

\newcommand{\proxyreward}{r_{\mathrm{P}}}
\newcommand{\Vproxy}{V_{\mathrm{P}}}
\newcommand{\Vproxybar}{\widebar{V}_{\mathrm{P}}}
\newcommand{\gtreward}{r_{\mathrm{G}}}
\newcommand{\Vgt}{V_{\mathrm{G}}}
\newcommand{\ystar}{y_{\star}}
\newcommand{\ymed}{y_{\mathrm{med}}}
\newcommand{\Ybad}{\Y_{\mathrm{bad}}}
\newcommand{\ybad}{y_{\mathrm{bad}}}
\newcommand{\advproxy}{A_{\mathrm{P}}}
\newcommand{\advgt}{A_{\mathrm{G}}}
\newcommand{\adv}{A}
\newcommand{\yw}{y^{+}}
\newcommand{\yl}{y^{-}}

\newcommand{\acc}{\mathrm{Acc}}
\newcommand{\hacc}{\mathrm{HAcc}}
\newcommand{\TV}{\mathrm{TV}}
\newcommand{\teps}{t_{\star}}
\newcommand{\tepsnomed}{t^{\mathrm{no-med}}_{\star}}
\newcommand{\tepsnomedP}{t^{\mathrm{no-med}}_{\mathrm{P}}}
\newcommand{\Deltaone}{\Delta_1}
\newcommand{\Deltatwo}{\Delta_2}
\newcommand{\thetagt}{\theta^{\mathrm{G}}}

\DeclareFontFamily{U}{mathx}{\hyphenchar\font45}
\DeclareFontShape{U}{mathx}{m}{n}{<-> mathx10}{}
\DeclareSymbolFont{mathx}{U}{mathx}{m}{n}
\DeclareMathAccent{\widebar}{0}{mathx}{"73}

\definecolor{darkspringgreen}{rgb}{0.09, 0.45, 0.27}
\usetikzlibrary{decorations.pathreplacing,calligraphy}
\usetikzlibrary{patterns}

\ifdefined\ENABLEENDNOTES
	\renewcommand{\theendnote}{\alph{endnote}} 
\else
	\renewcommand{\endnote}[1]{\null} 
\fi

\ifdefined\CVPR
	\usepackage[breaklinks=true,bookmarks=false]{hyperref}
	\ifdefined\CAMREADY
		\cvprfinalcopy 
	\fi
\fi

\ifdefined\ARXIV
	\newcommand*{\ABBR}{}
\fi
\ifdefined\ICML
	\newcommand*{\ABBR}{}
\fi
\ifdefined\NEURIPS
	\newcommand*{\ABBR}{}
\fi
\ifdefined\ICLR
	\newcommand*{\ABBR}{}
\fi
\ifdefined\COLT
	\newcommand*{\ABBR}{}
\fi
\ifdefined\ABBR
	\newcommand{\eg}{{\it e.g.}}
	\newcommand{\ie}{{\it i.e.}}
	\newcommand{\cf}{{\it cf.}}

\fi

\begin{document}
	
	% SPACING (only use when absolutely necessary!)
	% Use to control size below and above equations
	%	\setlength{\abovedisplayskip}{1.2pt}
	%	\setlength{\belowdisplayskip}{1.2pt}
	%	\setlength{\abovedisplayshortskip}{1.2pt}
	%	\setlength{\belowdisplayshortskip}{1.2pt}
	% Use to control size of itemize and enumerate environments
	%	\setenumerate{itemsep=0pt}
	%	\setitemize{itemsep=1pt}
	
	% TITLE AND AUTHORS
	\ifdefined\ARXIV
		\maketitle
	\fi
	\ifdefined\NEURIPS
	\title{When Errors Can Be Beneficial: A Categorization of Imperfect Rewards for Policy Gradient}
	\author{
		\textbf{Shuning Shang\thanks{Equal contribution.} \,, Hubert Strauss\footnotemark[1] \,, Stanley Wei, Sanjeev Arora, Noam Razin}\\[1.75mm]
		Princeton Language and Intelligence, Princeton University
		}
		\maketitle
	\fi
	\ifdefined\CVPR
		\title{Paper Title}
		\author{
			Author 1 \\
			Author 1 Institution \\	
			\texttt{author1@email} \\
			\and
			Author 2 \\
			Author 2 Institution \\
			\texttt{author2@email} \\	
			\and
			Author 3 \\
			Author 3 Institution \\
			\texttt{author3@email} \\
		}
		\maketitle
	\fi
	\ifdefined\AISTATS
		\twocolumn[
		\aistatstitle{Paper Title}
		\ifdefined\CAMREADY
			\aistatsauthor{Author 1 \And Author 2 \And Author 3}
			\aistatsaddress{Author 1 Institution \And Author 2 Institution \And Author 3 Institution}
		\else
			\aistatsauthor{Anonymous Author 1 \And Anonymous Author 2 \And Anonymous Author 3}
			\aistatsaddress{Unknown Institution 1 \And Unknown Institution 2 \And Unknown Institution 3}
		\fi
		]	
	\fi
	\ifdefined\ICML
		\icmltitlerunning{Paper Title}
		\twocolumn[
		\icmltitle{Paper Title} 
		\icmlsetsymbol{equal}{*}
		\begin{icmlauthorlist}
			\icmlauthor{Author 1}{inst} % Add ''equal'' next to institution identifier if appropriate
			\icmlauthor{Author 2}{inst}
		\end{icmlauthorlist}
		\icmlaffiliation{inst}{Some Institute}
		\icmlcorrespondingauthor{Author 1}{author1@email}
		\icmlkeywords{}
		\vskip 0.3in
		]
		\printAffiliationsAndNotice{} % Add \icmlEqualContribution inside {} if appropriate
	\fi
	\ifdefined\ICLR
		\title{Paper Title}
		\author{
			Author 1 \\
			Author 1 Institution \\
			\texttt{author1@email}
			\And
			Author 2 \\
			Author 2 Institution \\
			\texttt{author2@email}
			\And
			Author 3 \\ 
			Author 3 Institution \\
			\texttt{author3@email}
		}
		\maketitle
	\fi
	\ifdefined\COLT
		\title{Paper Title}
		\coltauthor{
			\Name{Author 1} \Email{author1@email} \\
			\addr Author 1 Institution
			\And
			\Name{Author 2} \Email{author2@email} \\
			\addr Author 2 Institution
			\And
			\Name{Author 3} \Email{author3@email} \\
			\addr Author 3 Institution}
		\maketitle
	\fi

	% ABSTRACT
	\vspace{-3mm}
\begin{abstract}
\vspace{-1.5mm}
Training language models via reinforcement learning often relies on imperfect \emph{proxy} rewards, since \emph{ground truth} rewards that precisely define the intended behavior are rarely available.
Standard metrics for assessing the quality of proxy rewards, such as ranking accuracy, treat incorrect rewards as strictly harmful.
In this work, however, we highlight that not all deviations from the ground truth are equal.
By theoretically analyzing which outputs attract probability during policy gradient optimization, we categorize reward errors according to their effect on the increase in ground truth reward.
The analysis establishes that reward errors, though conventionally viewed as harmful, can also be benign or even beneficial by preventing the policy from stalling around outputs with mediocre ground truth reward.
We then present two practical implications of our theory.
First, for reinforcement learning from human feedback (RLHF), we develop reward model evaluation metrics that account for the harmfulness of reward errors.
Compared to standard ranking accuracy, these metrics typically correlate better with the performance of a language model after RLHF, yet gaps remain in robustly evaluating reward models.
Second, we provide insights for reward design in settings with verifiable rewards.
A key theme underlying our results is that the effectiveness of a proxy reward function depends heavily on its interaction with the initial policy and learning algorithm.
\end{abstract}

	% KEYWORDS
	\ifdefined\COLT
		\medskip
		\begin{keywords}
			\emph{TBD}, \emph{TBD}, \emph{TBD}
		\end{keywords}
	\fi

	% Add main paper sections here
	
	% INTRODUCTION
	\section{Introduction}
\label{sec:intro}
\vspace{-0.5mm}

Training language models via reinforcement learning commonly relies on imperfect \emph{proxy} rewards.
This reliance is largely unavoidable since specifying \emph{ground truth} rewards that exactly capture the intended behavior is rarely feasible.
For example, in reinforcement learning from human feedback (RLHF) \citep{ouyang2022training}, learned reward models serve as proxies for a ground truth reward that is assumed to govern human preferences.
Moreover, so-called verifiable rewards \citep{lambert2024tulu,guo2025deepseek} are often also imperfect proxies: rewards for mathematical reasoning usually do not verify intermediate steps and can suffer from incorrect parsing \citep{huang2025pitfalls,tao2025hybrid}, and
rewards for code generation are based on unit tests, which are inherently incomplete \citep{liu2023rltf,guo2025deepseek,olmo2025olmo}.

Despite the widespread use of proxy rewards, there is limited understanding of how discrepancies between the proxy and ground truth rewards affect the reinforcement learning process.
Arguably the most well-known potential consequence of such discrepancies is \emph{reward hacking}, where maximizing the proxy reward results in poor ground truth performance \citep{amodei2016concrete,skalse2022defining,pang2023reward,gao2023scaling,karwowski2024goodhart,fluri2025perils,laidlaw2025correlated}.
Reflecting this concern, standard metrics for assessing the quality of a proxy reward function, including ranking accuracy \citep{lambert2025rewardbench} and mean squared error \citep{huang2025best}, consider deviations from the ground truth reward as strictly harmful.
However, this coarse treatment overlooks the possibility that different erroneous proxy rewards may influence the learning process in distinct ways.

In this work, we highlight that not all reward errors are equal, or even necessarily harmful.
Formally, \emph{reward error} refers to cases where $\proxyreward(x,y) \neq \gtreward(x,y)$, for a proxy reward function~$\proxyreward$, ground truth reward function $\gtreward$, and one or more input-output pairs $(x, y)$.
Focusing on \emph{policy gradient}---the predominant approach for training language models via reinforcement learning---we theoretically characterize the effect of reward errors on the increase in ground truth reward (\zcref{sec:categorization}).
In particular, we categorize reward errors into \emph{harmful}, \emph{benign}, or \emph{beneficial}.
Harmful errors, aside from causing reward hacking, can slow down the rate at which the ground truth reward increases.
Benign errors have negligible impact on optimization.
Lastly, and perhaps most strikingly, we prove that some errors are beneficial, accelerating the increase in ground truth reward beyond what is achieved when optimizing the ground truth reward directly.
See \zcref{fig:rm_errors_categorization} for an overview of our categorization.

\begin{figure*}[t]
	\vspace{-7mm}
	\begin{center}
        \includegraphics[width=1\textwidth]{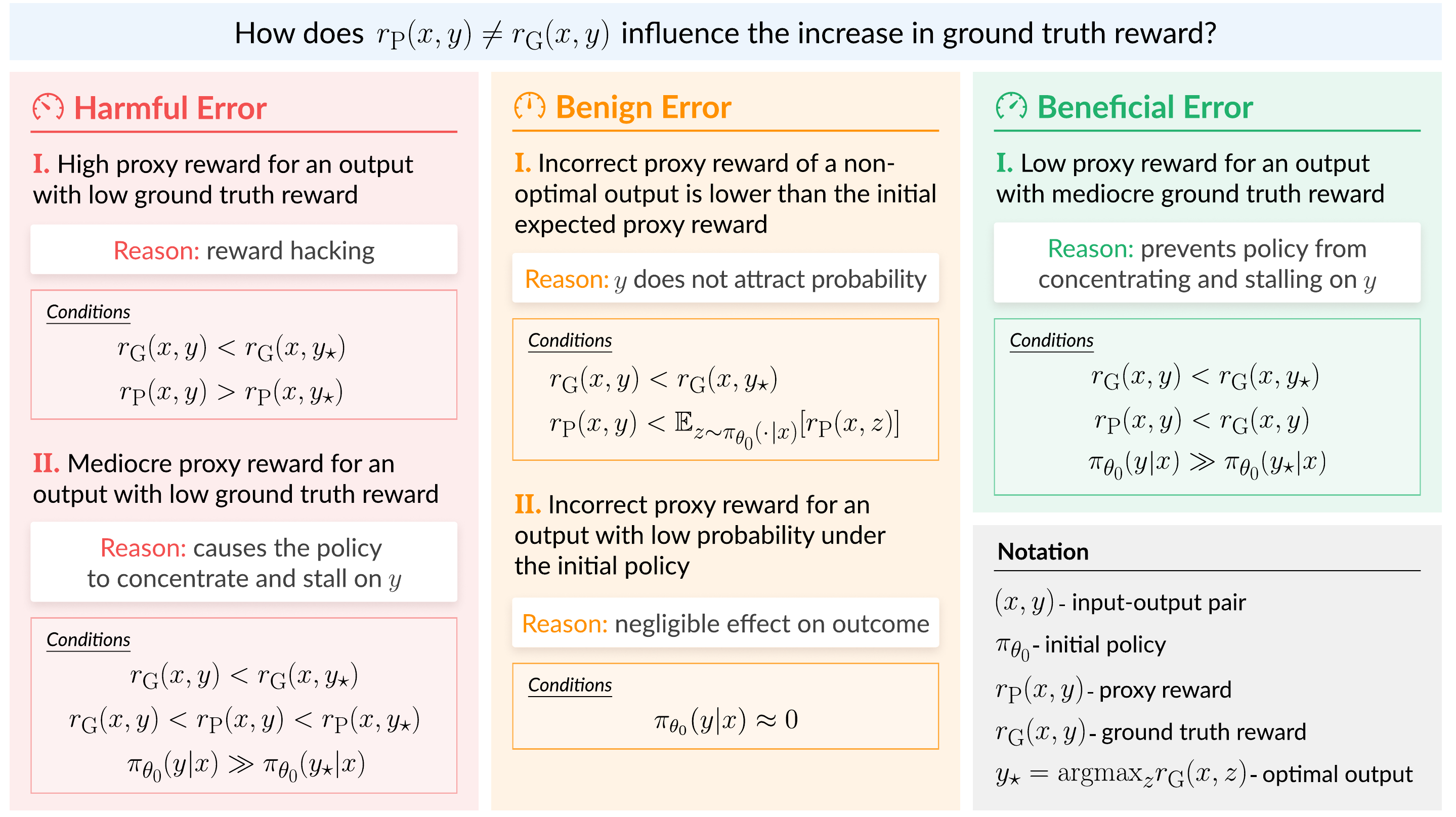}
	\end{center}
	\vspace{-1.5mm}
	\caption{
		\textbf{Reward error categorization overview.}
		We categorize \emph{reward errors}---cases where the proxy and ground truth rewards disagree on one or more input-output pairs---according to their effect on the increase in ground truth reward under policy gradient optimization (\zcref{sec:categorization}).
		Aside from being harmful, we prove that reward errors can also be benign or even beneficial.
		This categorization depends on the interplay between the proxy reward function, initial policy, and policy gradient updates.
		\zcref{sec:modified_acc,sec:reward_design} explore practical implications of our theory for reward model evaluation in RLHF and reward design in settings with verifiable rewards.
	}
	\label{fig:rm_errors_categorization}
	\vspace{-1mm}
\end{figure*}

We derive this categorization by analyzing which outputs attract probability during policy gradient optimization for linear softmax policies.
The main theoretical result, which underlies the existence of beneficial reward errors, shows that outputs with mediocre reward can attract probability and stall policy gradient in their vicinity for an arbitrarily long time.
Thus, assigning low proxy reward to outputs with mediocre ground truth reward can accelerate the increase in ground truth reward by steering the policy away from such outputs.
The analysis is corroborated by controlled experiments.

As a practical application of our theory, we consider the problem of reward model evaluation for RLHF (\zcref{sec:modified_acc}).
Reward models are primarily evaluated through ranking accuracy \citep{lambert2025rewardbench,zhou2025rmb,liu2025rm,frick2025evaluate,malik2025rewardbench}, which treats all incorrect rankings of outputs as equally harmful.
Based on our categorization of reward errors, we develop ranking accuracy variants that account for the harmfulness of an incorrect ranking to policy gradient optimization.
Experiments show that these harm-aware metrics typically correlate better with the performance of a language model after RLHF, across datasets and model families (Llama \cite{grattafiori2024llama}, OLMo \cite{olmo2025olmo}, and Qwen \cite{yang2025qwen3}).
Yet, despite these gains, the correlation can still be weak, showcasing challenges in robustly evaluating reward models.

Beyond reward model evaluation, we explore implications of our theory for reward design in settings with verifiable rule-based rewards (\zcref{sec:reward_design}).
Namely, we demonstrate that rewarding partially correct outputs (\cf~\cite{pyatkin2025generalizing,sun2026rl}) can be detrimental for optimization if the initial policy is noticeably more likely to produce partially correct outputs than fully correct ones.

A central theme of this work is that the effectiveness of proxy rewards cannot be assessed solely by how much they deviate from the ground truth.
It is essential to consider the interplay between the proxy reward function, initial policy, and learning algorithm.
Our results take a step towards characterizing principles for proxy reward evaluation and design that account for this interplay.
We hope that they will inspire further research in this direction, given the growing usage of reinforcement learning for training language models in complex environments.

	% PRELIMINARIES
	\section{Preliminaries}
\label{sec:prelim}
\vspace{-0.5mm}

We use $\pi_\theta$ to denote a language model, parameterized by $\theta$, that maps an input (\ie, prompt) $x \in \X$ to a distribution over outputs $y \in \Y$, where $\X$ and $\Y$ are the input and output spaces, respectively.
We will also refer to $\pi_\theta$ as a \emph{policy}.

\subsection{Reinforcement Learning for Language Models}
\label{sec:prelim:rl_for_lm_post_training}

Reinforcement learning is a key component of language model post-training pipelines \citep{ouyang2022training,olmo2025olmo,guo2025deepseek,yang2025qwen3}.
In many cases, it is difficult to specify a \emph{ground truth} reward function $\gtreward : \X \times \Y \to [-1, 1]$ that exactly defines the intended objective.
Thus, a \emph{proxy} reward function $\proxyreward : \X \times \Y \to [-1, 1]$ is used instead.\footnote{
    As typical in language model applications, we consider rewards defined over complete outputs.
}
A prominent example is reinforcement learning from human feedback (RLHF) \citep{ouyang2022training}, where $\proxyreward$ is a learned reward model and $\gtreward$ is a reward function assumed to capture human preferences.
Perhaps less obvious examples arise in environments with so-called verifiable rewards, such as mathematical reasoning and code generation.
An ideal reward function would distinguish fully correct outputs from partially correct or incorrect ones.
Yet, rewards in math environments typically do not verify intermediate steps and can suffer from incorrect parsing \citep{lambert2024tulu,huang2025pitfalls,tao2025hybrid}, and rewards for code generation rely on unit tests, which are fundamentally limited \citep{guo2025deepseek,olmo2025olmo}.

Given a proxy reward function $\proxyreward$, the language model $\pi_\theta$ is usually optimized via \emph{policy gradient} methods  (\eg, PPO \citep{schulman2017proximal}, RLOO \citep{ahmadian2024back}, and GRPO \citep{shao2024deepseekmath}).
This amounts to maximizing through gradient updates the expected proxy reward $\EE\nolimits_{x \sim \D, y \sim \pi_\theta (\cdot | x)} \brk[s]*{ \proxyreward (x, y) }$, where $\D$ is a distribution over inputs.
Optionally, a KL regularization term may be included to penalize divergence from some reference policy (\cf~\cite{ziegler2019fine,stiennon2020summarize,ouyang2022training}).
The premise is that, if $\proxyreward$ is a good proxy for $\gtreward$, increasing expected proxy reward should also increase the expected ground truth reward $\EE\nolimits_{x \sim \D, y \sim \pi_\theta (\cdot | x)} \brk[s]*{ \gtreward (x, y) }$.

\subsection{Evaluating the Quality of Proxy Reward Functions}
\label{sec:prelim:rm_eval}

Assessing the quality of a proxy reward function $\proxyreward$ is a problem of both practical and theoretical interest \citep{gleave2021quantifying,wulfe2022dynamics,skalse2024starc,karwowski2024goodhart,lambert2025rewardbench}.
This problem is particularly salient in the context of RLHF, where \emph{ranking accuracy} (\zcref{def:ranking_accuracy}) is the primary metric used to evaluate reward models \citep{lambert2025rewardbench,liu2025rm,frick2025evaluate,zhou2025rmb,malik2025rewardbench}.
A main reason for its adoption in practice is that it does not require direct access to the ground truth $\gtreward$, relying instead on preference labels that can be obtained through human annotation.
\begin{definition}
\label{def:ranking_accuracy}
For a set $\S$ containing preference examples $(x, \yw, \yl_1, \ldots, \yl_K)$, where $x \in \X$ is an input and $\yw, \yl_1, \ldots, \yl_K \in \Y$ are outputs satisfying $\gtreward (x, \yw) > \max\nolimits_{k \in [K]} \gtreward (x, \yl_k)$, the \emph{ranking accuracy} of $\proxyreward$ is given by:
\[
\acc (\proxyreward ; \S) := \frac{1}{|\S|} \sum\nolimits_{(x, \yw, \{\yl_k\}_{k \in [K]}) \in \S} \indc{ \max\nolimits_{k \in [K]} \proxyreward (x, \yl_k) < \proxyreward (x, \yw)}
\text{\,,}
\]
where $\indc{\cdot}$ is an indicator function and $[K] := \{1, \ldots, K \}$.
\end{definition}
Ranking accuracy, along with other metrics for evaluating proxy rewards in the literature \citep{gleave2021quantifying,wulfe2022dynamics,skalse2024starc,huang2025best}, rests on a common assumption: deviations of $\proxyreward$ from $\gtreward$ are strictly harmful.
Moreover, it treats all incorrect rankings of outputs as equal.
Our aim is to show that this view is too coarse, as it disregards the fact that different \emph{reward errors} (\ie, cases where the proxy and ground truth rewards disagree on one or more input-output pairs) affect the policy optimization process in distinct ways.

	% CATEGORIZATION OF IMPERFECT REWARDS
	\section{Categorization of Reward Errors: Harmful, Benign, and Beneficial}
\label{sec:categorization}
\vspace{-0.5mm}

Reward errors are conventionally viewed as \emph{harmful} due to the risk of reward hacking---if an output with low ground truth reward is assigned a high proxy reward, maximizing the proxy reward may lead to poor ground truth performance \citep{amodei2016concrete,skalse2022defining,pang2023reward,gao2023scaling,karwowski2024goodhart,fluri2025perils}.
However, we prove that aside from being harmful, reward errors can also be \emph{benign} or even \emph{beneficial}.
Specifically, by theoretically analyzing which outputs attract probability during policy gradient optimization, we categorize reward errors according to their effect on the increase in ground truth reward; see \zcref{fig:rm_errors_categorization} for an overview.

\zcref{sec:categorization:setting} presents the technical setting, after which \zcref{sec:categorization:benign} characterizes two types of benign reward errors by showing that outputs with low proxy reward or low probability under the initial policy negligibly affect the outcome of policy gradient.
\zcref{sec:categorization:beneficial} then delivers our main theoretical result, establishing the existence of beneficial reward errors.
It identifies that outputs with mediocre proxy reward can attract probability and trap policy gradient for an arbitrarily long time.
Thus, assigning low proxy reward to outputs with mediocre ground truth reward can accelerate the increase in ground truth reward by steering the policy away from such outputs.
Lastly, the attraction to mediocre outputs reveals an additional type of harmful reward error (\zcref{sec:categorization:harmful}).
Namely, suppose that an output with low ground truth reward is incorrectly assigned a higher proxy reward.
Even if the proxy reward is not high enough to induce reward hacking, this error may still slow down the increase in ground truth reward.
Controlled experiments support our theoretical findings.

\subsection{Technical Setting}
\label{sec:categorization:setting}

\textbf{Policy parameterization.}
We consider bandit environments with a single input (\ie, context) $x \in \X$, finite set of outputs $\Y$, and \emph{linear softmax policies} \citep{agarwal2021theory,mei2023ordering,razin2024vanishing,lin2025rethinking,foster2025good}.
Each output $y \in \Y$ is associated with a feature vector $\phi (y) \in \R^D \setminus \{ 0 \}$ and the policy $\pi_\theta$ is defined by:
\[
\pi_\theta (y) := \frac{ \exp \brk1{ \inprod{\phi (y)}{ \theta } } }{ \sum\nolimits_{z \in \Y} \exp \brk1{ \inprod{\phi (z)}{ \theta } } }
\text{\,,}
\]
where $\theta \in \R^D$.
For conciseness, in this section we omit the input $x \in \X$ from our notation.
Despite the apparent simplicity of linear softmax policies, they induce non-concave proxy and ground truth objectives for $\proxyreward, \gtreward : \Y \to [-1, 1]$:
\[
\Vproxy (\theta) := \EE\nolimits_{y \sim \pi_{\theta}} \brk[s]{ \proxyreward (y) } ~~ , ~~ \Vgt (\theta) := \EE\nolimits_{y \sim \pi_{\theta}} \brk[s]{ \gtreward (y) }
\text{\,,}
\]
whose optimization properties are not well understood \citep{mei2020global,agarwal2021theory,li2021softmax,mei2023ordering,lin2025rethinking}.

\textbf{Optimization.}
Language model post-training is often done with small learning rates (\cf~\cite{xu2024dpo,lambert2024tulu,olmo2025olmo}).
Accordingly, and following prior theoretical analyses of gradient-based optimization with small learning rates (\eg, \cite{saxe2014exact,woodworth2020kernel,razin2020implicit,vardi2021implicit,razin2021implicit,razin2022implicit,safran2022effective,kothapalli2023neural,tirer2023perturbation,chou2023more,slutzky2025implicit,bietti2025learning,ran2026outcome}), we analyze \emph{gradient flow}:
\be
\frac{d}{dt} \theta_t = \nabla \Vproxy (\theta_t) ~~ , ~~ t \geq 0
\text{\,,}
\label{eq:gf}
\ee
where $\theta_t$ denotes the parameters at time $t$ of training and $\pi_{\theta_0}$ is the initial policy.\footnote{
    Our theoretical results can be translated to gradient ascent with a small learning rate via Corollary 4 in \cite{elkabetz2021continuous}.
}

\textbf{Limitations.}
The technical setting involves three main simplifications: \emph{(i)} it considers linear softmax policies; \emph{(ii)} it assumes access to exact gradients of the expected proxy reward $\Vproxy$; and \emph{(iii)} it focuses on bandit environments with a single input.
As \zcref{sec:modified_acc,sec:reward_design} demonstrate empirically, although these simplifications abstract away complexities that may stem from using a neural network architecture, sample-based estimates of $\nabla \Vproxy (\theta)$, or the interaction between inputs, they allow deriving insights that apply to practical settings.

\subsection{Benign Errors}
\label{sec:categorization:benign}

\subsubsection{Proxy Reward Is Lower Than the Initial Expected Proxy Reward}
\label{sec:categorization:benign:low_reward_outputs}

We begin by making a straightforward, yet important observation: outputs with proxy reward lower than the initial expected proxy reward, $\Vproxy (\theta_0)$, generally do not attract probability.
This becomes apparent by examining the optimization dynamics for the logit $\inprod{\phi (y)}{\theta_t}$ of an output $y \in \Y$.
\begin{proposition}[Proof deferred to \zcref{app:proofs:logit_dynamics}]
\label{prop:logit_dynamics}
Suppose that we maximize via gradient flow the expected proxy reward $\Vproxy$ (\zcref{eq:gf}).
For any output $y \in \Y$ and time $t \geq 0$ it holds that:
\[
\frac{d}{dt} \inprod{\phi (y)}{\theta_t} = \underbrace{ \pi_{\theta_t} (y) \advproxy (y ; \theta_t) \cdot \norm*{ \phi(y) }^2 }_{\text{contribution due to $y$}} + \sum\nolimits_{z \in \Y \setminus \brk[c]{y} } \underbrace{ \pi_{\theta_t} (z) \advproxy (z ; \theta_t) \cdot \inprod{ \phi(z) }{ \phi(y) } }_{ \text{contribution due to other output} }
\text{\,,}
\]
where $\advproxy (z ; \theta_t) := \proxyreward (z) - \Vproxy (\theta_t)$ is the advantage of $z \in \Y$ under $\proxyreward$ and $\pi_{\theta_t}$.
\end{proposition}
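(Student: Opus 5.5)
The identity is a direct computation: first the gradient of $\Vproxy$ for a linear softmax policy, then the chain rule applied to the logit along the gradient flow trajectory in \zcref{eq:gf}.

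\textbf{Step 1: policy score.} Write $\log \pi_\theta(z) = \inprod{\phi(z)}{\theta} - \log \sum_{z' \in \Y} \exp(\inprod{\phi(z')}{\theta})$. Differentiating gives the score
\[
\nabla_\theta \log \pi_\theta(z) = \phi(z) - \sum\nolimits_{z' \in \Y} \pi_\theta(z') \phi(z') =: \phi(z) - \bar\phi_\theta ,
\]
and hence $\nabla_\theta \pi_\theta(z) = \pi_\theta(z)\brk{\phi(z) - \bar\phi_\theta}$.

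\textbf{Step 2: gradient of $\Vproxy$.} Since $\Vproxy(\theta) = \sum_{z} \pi_\theta(z) \proxyreward(z)$,
\[
\nabla \Vproxy(\theta) = \sum\nolimits_z \pi_\theta(z)\proxyreward(z)\phi(z) - \Vproxy(\theta)\,\bar\phi_\theta .
\]
Substituting $\bar\phi_\theta = \sum_z \pi_\theta(z)\phi(z)$ into the second term gives
\[
\nabla \Vproxy(\theta) = \sum\nolimits_z \pi_\theta(z)\brk{\proxyreward(z) - \Vproxy(\theta)}\phi(z) = \sum\nolimits_z \pi_\theta(z)\advproxy(z;\theta)\phi(z).
\]
The only nonroutine point is this rewriting of the baseline term: it is what turns raw rewards into advantages, and it uses $\sum_z \pi_\theta(z) = 1$.

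\textbf{Step 3: chain rule.} Along the gradient flow,
\[
\frac{d}{dt}\inprod{\phi(y)}{\theta_t} = \inprod{\phi(y)}{\tfrac{d}{dt}\theta_t} = \inprod{\phi(y)}{\nabla\Vproxy(\theta_t)} .
\]
Substituting the expression from Step 2 gives
\[
\frac{d}{dt}\inprod{\phi(y)}{\theta_t} = \sum\nolimits_z \pi_{\theta_t}(z)\advproxy(z;\theta_t)\inprod{\phi(z)}{\phi(y)} .
\]
Separating the $z = y$ term, where $\inprod{\phi(y)}{\phi(y)} = \norm{\phi(y)}^2$, from the terms with $z \neq y$ yields exactly the claimed decomposition.

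There is no real obstacle; the identity holds for all $t \geq 0$ at which the flow is defined. Global existence of the trajectory follows because $\nabla\Vproxy$ is smooth and bounded: $\Y$ is finite and the rewards lie in $[-1,1]$.
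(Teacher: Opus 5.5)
Your proof is correct and follows essentially the same route as the paper: the chain rule along the gradient flow, with $\nabla \Vproxy(\theta) = \sum_{z} \pi_\theta(z)\advproxy(z;\theta)\,\phi(z)$ (which the paper isolates as an auxiliary lemma proved exactly as in your Step 2), and then splitting off the $z = y$ term. Your remark on global existence of the trajectory is a harmless addition that the paper does not make.
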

Since $\Vproxy (\theta_t)$ is non-decreasing,\footnote{
    The fact that $\Vproxy (\theta_t)$ is monotonically non-decreasing follows from $\frac{d}{dt} \Vproxy (\theta_t) = \norm{ \nabla \Vproxy (\theta_t) }^2 \geq 0$.
}
if $\proxyreward (y) < \Vproxy (\theta_0)$, then the contribution of $y$ to its own logit dynamics is negative for all $t \geq 0$, and is more negative the lower $\proxyreward (y)$ is compared to $\Vproxy (\theta_t)$.
The contribution due to other outputs counteracts this negative push only if the proxy reward function and output features are misaligned, in the sense that there exist outputs with reward higher than $\Vproxy (\theta_0)$ whose features have a large inner product with $\phi (y)$.
Thus, as long as the feature geometry reasonably accords with the proxy rewards, an output $y$ with $\proxyreward (y) < \Vproxy (\theta_0)$ will not attract probability.
This implies that the following reward error type is benign.

\begin{benignbox}
\phantomsection\label{err:benign_1}
\benign{\textbf{Benign Error I:}} \textbf{Incorrect proxy reward is lower than the initial expected proxy reward.}\\
If an output $y$ is assigned an incorrect proxy reward $\proxyreward(y) \neq \gtreward (y)$ satisfying $\proxyreward (y) < \Vproxy (\theta_0)$, then it will generally not attract probability.
As a result, unless $y$ achieves maximal ground truth reward, this reward error usually does not harm the increase in ground truth reward.
\end{benignbox}

\subsubsection{Incorrect Proxy Reward for Improbable Outputs}
\label{sec:categorization:benign:low_prob_outputs}

The logit dynamics in \zcref{prop:logit_dynamics} further reveals that the contribution of an output $y$ is dampened by its probability $\pi_{\theta_t} (y)$.
This suggests that the optimization trajectory is not significantly affected by incorrect rewards assigned to outputs with low probability under the initial policy $\pi_{\theta_0}$.

\zcref{prop:low_prob_outputs} formalizes this prospect.
Namely, suppose that $\proxyreward$ and $\gtreward$ differ only on a set of outputs with low initial probability.
\zcref{prop:low_prob_outputs} shows that policies learned by maximizing these reward functions remain close in total variation distance throughout training.
This is in line with common intuition, by which low probability outputs have little influence since they are unlikely to be sampled when constructing sample-based gradient estimates.
However, we prove a stronger claim: even in an idealized setting with access to exact gradients---where all outputs factor into the update---the contribution of low probability outputs is still negligible.
This is a consequence of the optimization dynamics induced by policies that produce a distribution over outputs via the softmax function.

\begin{proposition}
\label{prop:low_prob_outputs}
Suppose that the proxy reward function $\proxyreward$ and ground truth reward function $\gtreward$ differ only on a set of outputs $\Z \subseteq \Y$, \ie, $\proxyreward (y) = \gtreward (y)$ for all $y \in \Y \setminus \Z$.
Denote by $\theta_t$ and $\thetagt_t$ the parameters at time $t \geq 0$ when maximizing via gradient flow (\zcref{eq:gf}) the expected reward with respect to $\proxyreward$ and $\gtreward$, respectively, starting from the same initial parameters $\theta_0 = \thetagt_0$.
For any time $T \geq 0$ and approximation level $\epsilon > 0$, if $\pi_{\theta_0} (\Z) \leq \frac{2}{\Delta_\Z \exp \brk{10 B^2 T}} \cdot \epsilon$, with $B := \max_{y \in \Y} \norm{\phi (y)}$ and $\Delta_\Z := \max_{y \in \Z} \abs{ \proxyreward (y) - \gtreward (y)}$, then for all $t \in [0, T]$:
\[
\TV \brk1{ \pi_{\theta_t}, \pi_{\thetagt_t} } \leq \epsilon
\text{\,,}
\]
where $\TV (\pi_{\theta_t}, \pi_{\thetagt_t}) := \frac{1}{2} \norm{ \pi_{\theta_t} - \pi_{\thetagt_t} }_1$ is the total variation distance between $\pi_{\theta_t}$ and $\pi_{\thetagt_t}$.
\end{proposition}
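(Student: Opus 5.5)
The plan is a Grönwall-type comparison between the two gradient flows. Write $\delta_t := \theta_t - \thetagt_t$, so that $\delta_0 = 0$, and bound $\norm{\delta_t}$ on $[0,T]$. The gradient of the expected reward under a linear softmax policy is
\[
\nabla V(\theta) = \sum\nolimits_{y \in \Y} \pi_\theta(y)\brk1{r(y) - V(\theta)} \phi(y)
\text{\,,}
\]
which is the same computation underlying \zcref{prop:logit_dynamics}. We then decompose
\[
\tfrac{d}{dt}\delta_t = \brk1{\nabla \Vproxy(\theta_t) - \nabla \Vgt(\theta_t)} + \brk1{\nabla \Vgt(\theta_t) - \nabla \Vgt(\thetagt_t)}
\text{\,.}
\]
The first bracket is the effect of the reward discrepancy at a fixed policy. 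The second is the Lipschitz effect of the ground truth gradient.

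\textbf{Step 1 (discrepancy term).} Let $d := \proxyreward - \gtreward$, which is supported on $\Z$. Then
\[
\nabla \Vproxy(\theta) - \nabla \Vgt(\theta) = \sum\nolimits_y \pi_\theta(y)\brk1{d(y) - \EE_{\pi_\theta}[d]}\phi(y) = \mathrm{Cov}_{\pi_\theta}(d, \phi)
\text{\,,}
\]
whose norm is at most $2B\,\Delta_\Z\,\pi_\theta(\Z)$. It therefore suffices to control $\pi_{\theta_t}(\Z)$ along the proxy trajectory. Since $\norm{\frac{d}{dt}\theta_t} = \norm{\nabla\Vproxy} \le 2B$, each logit changes at rate at most $2B^2$, and each log-partition term likewise. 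Hence $\pi_{\theta_t}(\Z) \le \pi_{\theta_0}(\Z)\exp(4B^2 t)$.

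\textbf{Step 2 (Lipschitz term).} The Hessian of $\Vgt$ is a sum of centered third-moment-type terms of $\phi$ weighted by rewards in $[-1,1]$, so $\nabla\Vgt$ is $cB^2$-Lipschitz for a small constant $c$. A crude bound of $c = 6$ should suffice for the stated constant. This gives
\[
\tfrac{d}{dt}\norm{\delta_t} \le cB^2\norm{\delta_t} + 2B\Delta_\Z\,\pi_{\theta_0}(\Z)\, e^{4B^2 t}
\text{\,,}
\]
and Grönwall yields $\norm{\delta_t} \lesssim \frac{\Delta_\Z\,\pi_{\theta_0}(\Z)}{B}\, e^{(c+4)B^2 t}$.

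\textbf{Step 3 (to TV).} The map $\theta \mapsto \pi_\theta$ is Lipschitz from $\ell_2$ into $\ell_1$. Indeed $\frac{d}{ds}\pi_{\theta+s v}(y) = \pi(y)\brk{\inprod{\phi(y)}{v} - \EE_\pi\inprod{\phi}{v}}$, so $\norm{\cdot}_1 \le 2B\norm{v}$, and $\TV \le B\norm{\delta_t}$. Combining the steps gives $\TV \le C\,\Delta_\Z\,\pi_{\theta_0}(\Z)\,e^{10B^2 T}$. Under the hypothesis this is at most $\epsilon$, once the constants are tracked to give $C = 1/2$.

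An alternative is to bound the TV distance directly, via a differential inequality on $\norm{\pi_{\theta_t} - \pi_{\thetagt_t}}_1$, if that yields tighter constants.

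\textbf{Main obstacle.} The main difficulty is bookkeeping the constants so that the exponent is exactly $10B^2$ and the prefactor matches $\frac{2}{\Delta_\Z}$. The Lipschitz constant of $\nabla\Vgt$ and the growth rate of $\pi_{\theta_t}(\Z)$ must add up to at most $10B^2$. If the crude Hessian bound is too lossy, I would first tighten the covariance bounds using centering, for example $\norm{\phi - \EE\phi} \le 2B$ combined with variance bounds.
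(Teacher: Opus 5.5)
Your proposal is correct and follows essentially the same route as the paper's proof. The pieces match one for one:
\begin{itemize}
\item the same split into a Lipschitz term (the paper shows $\nabla \Vgt$ is $6B^2$-Lipschitz via a Hessian bound) and a reward-discrepancy term bounded by $2B\Delta_\Z \pi_{\theta_t}(\Z)$;
\item the same logit-rate argument giving $\pi_{\theta_t}(\Z) \le \pi_{\theta_0}(\Z) e^{4B^2 t}$;
\item Grönwall;
\item and $\TV(\pi_\theta,\pi_{\theta'}) \le B\|\theta-\theta'\|$.
\end{itemize}

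The constant bookkeeping you were worried about does close. Bounding the forcing term uniformly on $[0,T]$ by $2B\Delta_\Z\pi_{\theta_0}(\Z)e^{4B^2T}$ and applying Grönwall with rate $6B^2$ gives $\|\theta_t - \thetagt_t\| \le \frac{\Delta_\Z \pi_{\theta_0}(\Z)}{3B} e^{10B^2T}$. So your constant is $C = 1/3$, and $\TV \le 2\epsilon/3$ under the hypothesis.
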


\begin{proof}[Proof sketch (full proof in \zcref{app:proofs:low_prob_outputs})]
We prove that $\pi_{\theta_t} (\Z)$ remains low for all $t \in [0, T]$, under both $\proxyreward$ and $\gtreward$, by showing that the rate at which $\pi_{\theta_t} (\Z)$ grows is small whenever $\pi_{\theta_t} (\Z)$ is low.
Since $\proxyreward$ and $\gtreward$ differ only on $\Z$ and, for softmax policies, the contribution of outputs in $\Z$ to the gradient is scaled by their own probability, this implies that the distance between $\nabla \Vproxy (\theta)$ and $\nabla \Vgt (\theta)$ is small at any $\theta$ visited during training (under either $\proxyreward$ or $\gtreward$).
The fact that $\pi_{\theta_t}$ and \smash{$\pi_{\thetagt_t}$} stay close in total variation distance then follows through standard arguments on the stability of gradient flow over smooth objectives to gradient perturbations.
\end{proof}

\zcref{prop:low_prob_outputs} yields an additional type of benign reward error.

\begin{benignbox}
\phantomsection\label{err:benign_2}
\benign{\textbf{Benign Error II:}} \textbf{Incorrect proxy reward for an output with low probability under $\pi_{\theta_0}$.}\\
If an output $y$ is improbable under $\pi_{\theta_0}$, then assigning it an incorrect proxy reward $\proxyreward (y) \neq \gtreward (y)$ negligibly affects the outcome of policy gradient.
\end{benignbox}

\subsection{Beneficial Errors That Prevent Attraction to Mediocre Outputs}
\label{sec:categorization:beneficial}

\zcref{sec:categorization:benign} showed that outputs with low proxy reward or low probability under the initial policy do not attract probability.
Where then does probability mass go during policy gradient?
It is helpful to first consider the case of orthonormal feature vectors $\brk[c]{ \phi (y) }_{y \in \Y}$.
In this setting, the logit dynamics of an output $y \in \Y$ simplifies to (\cf~\zcref{prop:logit_dynamics}):
\[
\frac{d}{dt} \inprod{\phi (y)}{\theta_t} = \pi_{\theta_t} (y) \advproxy (y ; \theta_t)
\text{\,.}
\]
As evident from the equation above, the logit of $y$ grows whenever its advantage $\advproxy (y ; \theta_t) = \proxyreward (y) - \Vproxy (\theta_t)$ is positive.
Though, since the advantage is scaled by the output probability $\pi_{\theta_t} (y)$, policy gradient can be drawn to outputs with mediocre proxy reward at the expense of outputs with higher proxy reward, if the former are more likely under the policy.
This phenomenon was empirically observed in~\cite{mei2020escaping}.
However, the analysis of \cite{mei2020escaping} only showed that if the policy already assigns high probability to a suboptimal output, escaping that region takes a long time, leaving open the question of when and to what extent the policy becomes highly concentrated on such outputs to begin with.

We prove that the attraction to mediocre outputs can be severe, stalling optimization for an arbitrarily long time (\zcref{thm:attraction_to_mediocre_outputs_beneficial_error}).
As we establish below, this implies that the following reward error type is beneficial for improving the rate at which the ground truth reward increases.

\begin{beneficialbox}
\phantomsection\label{err:beneficial}
\textbf{\beneficial{Beneficial Error I:} Low proxy reward for an output with mediocre ground truth reward.}\\
Let $\ymed$ be an output with mediocre ground truth reward, \ie, $\Vgt (\theta_0) < \gtreward (\ymed) < \gtreward (\ystar)$ where $\ystar := \argmax\nolimits_{y \in \Y} \gtreward (y)$ and $\Vgt (\theta_0) :=  \EE\nolimits_{y \sim \pi_{\theta_0}} \brk[s]{ \gtreward (y) }$.
If the probability of $\ymed$ under $\pi_{\theta_0}$ is higher than that of $\ystar$, then assigning low proxy reward to $\ymed$ can accelerate the increase in ground truth reward by steering the policy away from it.
\end{beneficialbox}

\smallskip

Formally, we compare the optimization trajectory across two settings.
In the first setting, we maximize directly a ground truth reward function $\gtreward$ that assigns mediocre reward to an output $\ymed$.
In the second setting, we maximize a proxy reward function $\proxyreward$ that is identical to $\gtreward$, except that it assigns low reward to $\ymed$.
\zcref{thm:attraction_to_mediocre_outputs_beneficial_error} establishes that, if $\pi_{\theta_0} (\ystar)$ is sufficiently small relative to $\pi_{\theta_0} (\ymed)$, the time required to achieve high ground truth reward when maximizing $\gtreward$ directly can be arbitrarily larger than the time required when maximizing $\proxyreward$.
This gap in optimization time arises because, when maximizing $\gtreward$, the policy initially concentrates its probability mass on $\ymed$ and stalls in its vicinity.
By contrast, when maximizing $\proxyreward$, the policy directly shifts its probability mass towards~$\ystar$.

For conciseness, \zcref{thm:attraction_to_mediocre_outputs_beneficial_error} provides an abridged version of the result; see \zcref{app:formal_statements} for the detailed theorem statement.
\zcref{fig:loglin_exps} corroborates \zcref{thm:attraction_to_mediocre_outputs_beneficial_error} by demonstrating empirically the effect of mediocre outputs: the smaller $\pi_{\theta_0}(\ystar)$ is relative to $\pi_{\theta_0} (\ymed)$, the longer policy gradient is delayed by an attraction to $\ymed$ (when $\ymed$ is assigned mediocre proxy reward).

\begin{figure*}[t]
	\vspace{-4mm}
	\begin{center}
        \includegraphics[width=1\textwidth]{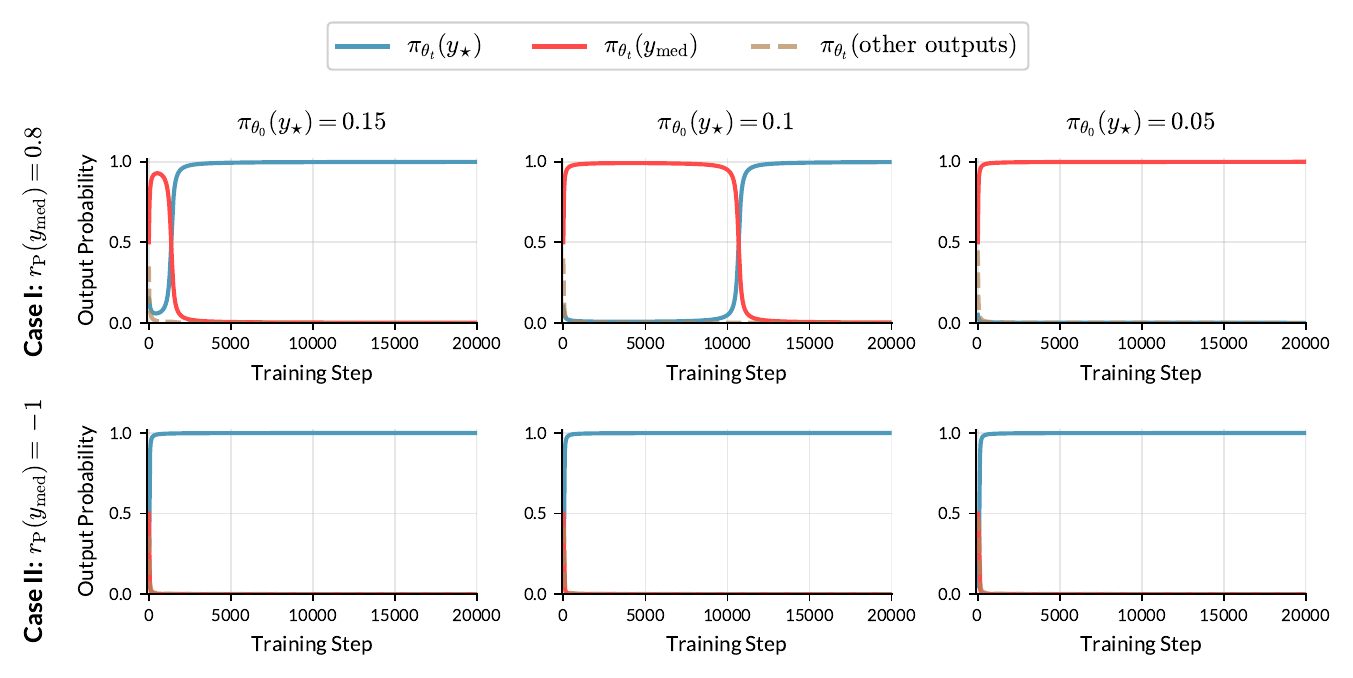}
	\end{center}
	\vspace{-2.25mm}
	\caption{
		\textbf{Attraction to mediocre outputs can impede policy gradient optimization.}
		Plotted is the evolution of output probabilities during policy gradient in settings corresponding to \zcref{thm:attraction_to_mediocre_outputs_beneficial_error}: a linear softmax policy with orthonormal output features, trained using exact gradients of the expected proxy reward.
        The ground truth reward $\gtreward$ assigns a maximal reward of $1$ to $\ystar$, a mediocre reward of $0.8$ to $\ymed$, and a low reward of $-1$ to all remaining outputs.
		In the top row, we use $\proxyreward = \gtreward$.
		Bottom row shows experiments in identical settings, except that $\proxyreward (\ymed) = -1$.
		In line with \zcref{thm:attraction_to_mediocre_outputs_beneficial_error}, when $\ymed$ receives mediocre proxy reward, the policy initially concentrates its probability mass on $\ymed$ and stagnates.
		By contrast, when $\ymed$ is assigned low proxy reward, the policy directly increases the probability of~$\ystar$.
		The gap in optimization time between the two cases becomes larger as $\pi_{\theta_0} (\ystar)$ decreases (left to right).
		\zcref{app:experiments:further} provides experiments demonstrating similar trends with sample-based gradients instead of exact gradients.
		See \zcref{app:experiments:details} for additional implementation details.
	}
	\label{fig:loglin_exps}
\end{figure*}

\begin{theorem}[Abridged version of \zcref{thm:attraction_to_mediocre_outputs_beneficial_error_formal}]
\label{thm:attraction_to_mediocre_outputs_beneficial_error}
Assume that the feature vectors $\brk[c]{ \phi (y) }_{y \in \Y}$ are orthonormal and that $\Y = \brk[c]{ \ystar, \ymed } \cup \Ybad$, where $\ystar = \argmax\nolimits_{y \in \Y} \gtreward (y)$ is an output with maximal ground truth reward, $\ymed$ has mediocre ground truth reward satisfying $\Vgt (\theta_0) = \EE\nolimits_{y \sim \pi_{\theta_0}} \brk[s]{ \gtreward (y) } < \gtreward (\ymed) < \gtreward (\ystar)$, and every output in $\Ybad \subset \Y$ has ground truth reward lower than $\Vgt (\theta_0)$.
Let $\epsilon \in \brk1{0, \gtreward (\ystar) - \gtreward (\ymed)}$ and denote by $\teps$ the initial time at which $\Vgt (\theta_t) \geq \gtreward (\ystar) - \epsilon$ when maximizing the expected ground truth reward directly via gradient flow (\ie, \zcref{eq:gf} with $\Vgt$ in place of $\Vproxy$).
Furthermore, let $\tepsnomed$ be the analogous time when maximizing a proxy reward function $\proxyreward$ that is identical to $\gtreward$, except that it assigns to $\ymed$ minimal reward, starting from the same initial parameters $\theta_0$.
For any $T \geq 0$, if $\pi_{\theta_0} (\ystar)$ is sufficiently small, then:
\[
\teps - \tepsnomed \geq T
\text{\,,}
\]
\ie, the gap between $\teps$ and $\tepsnomed$ can be arbitrarily large.
\end{theorem}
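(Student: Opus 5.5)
With orthonormal features the logits evolve independently. Write $z_y(t) := \inprod{\phi(y)}{\theta_t}$. By \zcref{prop:logit_dynamics},
\[
\dot z_y = \pi_{\theta_t}(y)\bigl(r(y) - V(\theta_t)\bigr),
\]
where $r$ is whichever reward is being maximized. Only the component of $\theta$ in the span of the features matters, so the dynamics reduce to a softmax tabular flow on the simplex:
\[
\dot\pi(y) = \pi(y)\Bigl(\pi(y)A(y) - \textstyle\sum_z \pi(z)^2 A(z)\Bigr).
\]
The plan is to bound $\tepsnomed$ from above, bound $\teps$ from below, and show that the lower bound grows without bound as $p := \pi_{\theta_0}(\ystar) \to 0$ while the upper bound grows much more slowly.

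\textbf{Upper bound on $\tepsnomed$.} Under $\proxyreward$, the output $\ymed$ joins the set of minimal-reward outputs. Every output other than $\ystar$ then has reward below $\Vproxy(\theta_0)$. This holds because $\Vproxy(\theta_0) \ge \Vgt(\theta_0) - \pi_0(\ymed)(\gtreward(\ymed)+1)$; if it fails for some bad outputs, I will handle them through monotonicity of $\Vproxy$ and the fact that $A(\ystar) > 0$.

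Since $\Vproxy$ is non-decreasing, all logits except $z_{\ystar}$ are non-increasing, and $\dot z_{\ystar} = \pi(\ystar)(1 - \pi(\ystar))(\proxyreward(\ystar) - \bar r) \ge c\,\pi(\ystar)(1-\pi(\ystar))$ for a constant $c>0$. A logistic-type comparison then gives $\tepsnomed \le C_1/p + C_2\log(1/\epsilon)$, with constants depending on the reward gaps and the initial logits of the non-star outputs.

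\textbf{Lower bound on $\teps$.} Under $\gtreward$ I will show three things in turn.

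\emph{(a)} There is a phase in which $\pi(\ymed)$ grows to near $1$ while $\pi(\ystar)$ stays $O(p)$. Until $V$ reaches $\gtreward(\ymed) - \delta$, we have $\dot z_{\ystar} \le \pi(\ystar)\cdot 2$. We also have $\dot z_{\ymed} \ge \pi(\ymed)\cdot \delta$ with $\pi(\ymed) \ge \pi_0(\ymed)$, a constant, so this phase lasts time $O(1)$ independent of $p$. During it, $\pi(\ystar) \le p\,e^{O(1)}$.

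\emph{(b)} Once $V$ is near $\gtreward(\ymed)$, the advantage of $\ystar$ is at most $\Delta := \gtreward(\ystar) - \gtreward(\ymed) + o(1)$. Before $V \ge \gtreward(\ystar) - \epsilon$ can hold, $\pi(\ystar)$ must exceed $q := 1 - \epsilon/\Delta' = \Omega(1)$, because $\epsilon < \gtreward(\ystar) - \gtreward(\ymed)$ and the remaining mass has reward at most $\gtreward(\ymed)$. So it suffices to lower-bound the time for $\pi(\ystar)$ to travel from $O(p)$ to $q$.

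\emph{(c)} The key estimate is that $\frac{d}{dt}\log\pi(\ystar) \le \pi(\ystar)\,2 + \sum_z \pi(z)^2\lvert A(z)\rvert$. The second term is also dangerous because $\pi(\ymed)$ is near $1$, but when $\pi(\ymed)\approx 1$ we get $A(\ymed) = \gtreward(\ymed) - V \approx -\pi(\ystar)\Delta$, which is small. So I need a sharper bound: $\sum_z \pi(z)^2 A(z) = \sum_z \pi(z)(\pi(z) - \pi(\ymed))A(z)$, using $\sum_z \pi(z)A(z) = 0$. This is bounded by $O(\pi(\ystar) + \pi(\Ybad))$, and $\pi(\Ybad)$ is decaying.

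Combining these, $\frac{d}{dt}\log\pi(\ystar) \le K(\pi(\ystar) + \pi(\Ybad))$. Integrating, with $\pi(\Ybad)$ integrable by the initial phase, the time to go from $O(p)$ to $q$ is $\ge 1/(Kp) - O(1)$ (Riccati comparison: $\dot u \le Ku^2$ plus a bounded integrable perturbation). The precise constants must give $K$ smaller than $1/C_1$. That may not hold, so I would instead compare exponents: under the proxy $\dot z_{\ystar}$ has coefficient roughly $(1 - \bar r)$, while under $\gtreward$ it is $\Delta$. Alternatively, if the constants don't separate at order $1/p$, I would aim to show $\tepsnomed = O(\log(1/p))$. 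This is true: under the proxy, $\dot z_{\ystar}$ grows while all other logits decrease at rate proportional to their constant probabilities, so $\pi(\ystar)$ grows exponentially at rate $\Omega(1)$ early on, giving time $O(\log 1/p)$. The Riccati lower bound $\Omega(1/p)$ under $\gtreward$, which holds because $\pi(\ymed)$ dominates and suppresses relative growth, then makes $\teps - \tepsnomed \ge T$ for small $p$.

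\textbf{Main obstacle.} The main obstacle is step (c): controlling the normalizer term near the mediocre vertex so that $\log\pi(\ystar)$ grows only at rate $O(\pi(\ystar))$. The identity using zero mean advantage is the step I would try first.
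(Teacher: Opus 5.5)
Your argument does not establish the separation, and the fallback you propose to rescue it is false. Under $\proxyreward$, every output other than $\ystar$ has the same minimal proxy reward $r_{\min}$. (In the detailed version all of $\Ybad$ shares one reward, and $\ymed$ is moved to that value.) Write $p_t := \pi_{\theta_t}(\ystar)$ and $D := \gtreward(\ystar) - r_{\min}$. Then $\advproxy(y;\theta_t) = -p_t D$ for every $y \neq \ystar$. The non-optimal logits therefore move at rate $\pi_{\theta_t}(y)\,p_t D = O(p_t)$, not at a rate $\Omega(1)$ set by their probabilities. In fact
\[
\frac{d}{dt}\ln p_t = p_t D\Bigl((1-p_t)^2 + \sum\nolimits_{y\neq\ystar}\pi_{\theta_t}(y)^2\Bigr) \le 2D\,p_t ,
\]
so $\tepsnomed = \Theta(1/p)$, not $O(\log(1/p))$. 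This is the vanishing-reward-variance regime: $\var_{y\sim\pi_{\theta_0}}[\proxyreward(y)] = p(1-p)D^2$. Consequently your lower bound $\teps \ge 1/(Kp) - O(1)$ and your upper bound $\tepsnomed \le C_1/p$ are of the same order. Getting $\teps - \tepsnomed \to \infty$ from them would require the leading constants to separate. You concede this may fail, and nothing in your argument controls it.

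The missing idea is that the attraction to $\ymed$ drives $\pi(\ystar)$ down to a \emph{strictly smaller order} than $p$, so the depth of concentration on $\ymed$ must be tied to $p$. Your phase (a) stops at a fixed gap $\delta$ and only certifies $\pi(\ystar) = O(p)$. The paper instead runs the first phase until $\Vgt(\theta_t) = \gtreward(\ymed) - \gamma$ with $\gamma = (p/M)^{14/13}$. It shows that $\pi(\ystar)$ decreases and $\pi(\ymed)$ increases throughout, using the ratio inequality $\frac{d}{dt}\ln\pi_{\theta_t}(\ystar) \le \frac{1}{7}\frac{d}{dt}\ln(1-\pi_{\theta_t}(\ymed))$ on $[0,t_{\sqrt{\gamma}}]$ and then a first-violation argument up to $t_\gamma$. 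This gives $1-\pi_{\theta_{t_\gamma}}(\ymed) = O(\gamma)$. Three further ingredients finish the argument: the bound $\|\nabla\Vgt(\theta)\| \le 2(\Deltaone+\Deltatwo)(1-\pi_\theta(\ymed))$, a Riccati control of $1-\pi_{\theta_t}(\ymed)$, and $1$-Lipschitzness of $\Vgt$. Together they give $\teps = \Omega(p^{-14/13})$, which beats $\tepsnomed = O(p^{-1})$ in the exponent rather than in the constant.

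Your step (c) also has a concrete flaw that would block any such refinement. Bounding the $\Ybad$ terms in absolute value is not harmless. Near the $\ymed$ vertex, $\pi_{\theta_t}(\Ybad)$ decays only like $\Theta(1/t)$, so it is not integrable on the $1/p$ horizon. Your inequality would then permit a factor $p^{-\Theta(1)}$ of growth in $\pi(\ystar)$. Keep the sign instead: after your centering identity these terms equal $\pi_{\theta_t}(z)\bigl(\pi_{\theta_t}(\ymed)-\pi_{\theta_t}(z)\bigr)\advgt(z;\theta_t) \le 0$, and they are exactly what keeps pushing $\pi(\ystar)$ down.
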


\begin{proof}[Proof sketch (full proof of \zcref{thm:attraction_to_mediocre_outputs_beneficial_error_formal} is in \zcref{app:proofs:attraction_to_mediocre_outputs})]
Consider first the case of maximizing $\gtreward$ directly.
The logit dynamics of an output $y$ in this case is given by $\frac{d}{dt} \inprod{\phi (y)}{\theta_t} = \pi_{\theta_t} (y) \advgt (y ; \theta_t)$, where $\advgt (y ; \theta_t) = \gtreward (y) - \Vgt (\theta_t)$ is the advantage of $y$ under $\gtreward$ and $\pi_{\theta_t}$ (\cf~\zcref{prop:logit_dynamics}).
If $\pi_{\theta_0} (\ymed)$ is substantially higher than $\pi_{\theta_0} (\ystar)$, the logit of the mediocre output $\ymed$ initially grows much faster than that of the optimal output $\ystar$, despite $\ymed$ having a smaller advantage than $\ystar$.
This causes the probability of $\ystar$ to decrease.
By carefully analyzing the optimization trajectory, we show that $\pi_{\theta_t} (\ystar)$ continues to decrease at least until $\pi_{\theta_t} (\ymed)$ is close to one, in the sense that \smash{$\pi_{\theta_t} (\ymed) \geq 1 - \Theta (\pi_{\theta_0} (\ystar)^{14/13})$}.
We then use the fact that the gradient vanishes whenever the policy is highly concentrated on a single output (\cf~\cite{mei2020escaping,agarwal2021theory,razin2024vanishing,razin2025what}) to prove that escaping the vicinity of $\ymed$ requires at least \smash{$\Omega (\pi_{\theta_0} (\ystar)^{-14/13})$} time, and so $\teps$ is at least of that order.
By contrast, when maximizing $\proxyreward$, the advantage of all outputs except $\ystar$ is negative throughout training since $\proxyreward$ assigns them a low reward.
Thus, the policy directly shifts its probability mass towards $\ystar$ and achieves $\epsilon$-optimal ground truth reward at time \smash{$\tepsnomed$}, which we prove is at most $\OO(\pi_{\theta_0} (\ystar)^{-1})$.
This implies that the gap between $\teps$ and \smash{$\tepsnomed$} becomes arbitrarily large as $\pi_{\theta_0} (\ystar) \to 0$.
\end{proof}

\textbf{Influence of feature similarity.}
Under the assumption of orthonormal feature vectors, \zcref{thm:attraction_to_mediocre_outputs_beneficial_error} shows that outputs with mediocre reward can prolong the time required to reach higher-quality outputs.
In \zcref{app:formal_statements}, we analyze how the feature geometry governs the extent to which mediocre outputs impede optimization.
Specifically, we prove that when $\inprod{\phi (\ystar)}{\phi (\ymed)} < 0$, an output $\ymed$ with mediocre reward delays optimization even further and may prevent the policy from ever reaching the optimal output $\ystar$.
Conversely, when $\inprod{\phi (\ystar)}{\phi (\ymed)} > 0$, attraction to $\ymed$ does not necessarily obstruct convergence to $\ystar$.
This highlights that whether outputs with mediocre reward are problematic for optimization also depends on their similarity to higher-quality outputs.

\subsection{Harmful Errors Beyond Reward Hacking}
\label{sec:categorization:harmful}

To conclude the categorization we revisit the most well-known type of reward error (\cf~\cite{amodei2016concrete,skalse2022defining,fluri2025perils}).

\begin{harmfulbox}
\phantomsection\label{err:harmful_1}
\textbf{\harmful{Harmful Error I:} High proxy reward for an output with low ground truth reward.}\\
Let $\ybad$ be an output with low ground truth reward and denote $\ystar := \argmax\nolimits_{y \in \Y} \gtreward (y)$.
If $\proxyreward (\ybad) > \proxyreward(\ystar)$, then maximizing $\proxyreward$ can lead to reward hacking.
\end{harmfulbox}

While assigning high proxy reward to an output $\ybad$ with low ground truth reward is obviously undesirable, the attraction to mediocre outputs characterized in \zcref{sec:categorization:beneficial} reveals a more subtle harmful reward error.
Even if the proxy reward assigned to $\ybad$ is not high enough to cause reward hacking, it may still impede optimization.
This follows as a corollary of \zcref{thm:attraction_to_mediocre_outputs_beneficial_error}; see \zcref{cor:attraction_to_mediocre_outputs_harmful_error} in \zcref{app:formal_statements} for the formal statement.

\begin{harmfulbox}
\phantomsection\label{err:harmful_2}
\textbf{\harmful{Harmful Error II:} Mediocre proxy reward for an output with low ground truth reward.}\\
Let $\ybad$ be an output with low ground truth reward whose probability under $\pi_{\theta_0}$ is higher than the probability of $\ystar := \argmax\nolimits_{y \in \Y} \gtreward (y)$.
Then, assigning a mediocre proxy reward to $\ybad$, satisfying $\Vproxy (\theta_0) < \proxyreward (\ybad) < \proxyreward (\ystar)$ where $\Vproxy (\theta_0) := \EE\nolimits_{y \sim \pi_{\theta_0}} \brk[s]{ \proxyreward (y) }$, can slow down the increase in ground truth reward by causing the policy to stall in the vicinity of $\ybad$.
\end{harmfulbox}

	% APPLICATION: MODIFIED ACC MEASURES
	\section{Application I: Harm-Aware Accuracy for Reward Model Evaluation}
\label{sec:modified_acc}

To demonstrate the potential utility of our theory, we consider the problem of reward model evaluation for RLHF.
Existing benchmarks primarily evaluate reward models via \emph{ranking accuracy} (see \zcref{sec:prelim}).
A limitation of ranking accuracy, uncovered in \zcref{sec:categorization}, is that it treats all incorrect rankings of outputs as equally harmful, while some deviations from the ground truth reward are benign or even beneficial for policy gradient optimization.
Indeed, recent work showed that ranking accuracy often does not correlate well with the performance of a language model after RLHF \citep{chen2024accuracy,wen2025rethinking,razin2025what}.

Motivated by this observation, we develop ranking accuracy variants that account for the harmfulness of an incorrect ranking to policy gradient.
Compared to standard ranking accuracy, our harm-aware variants usually correlate better with language model performance, across datasets and model families (Llama \cite{grattafiori2024llama}, OLMo \cite{olmo2025olmo}, and Qwen \cite{yang2025qwen3}).
However, our experiments also show that the correlation can still be weak, highlighting persistent challenges in robustly evaluating reward models.
For brevity, we defer some experiments and implementation details to \zcref{app:experiments:further,app:experiments:details}, respectively.

\subsection{Harm-Aware Ranking Accuracy}
\label{sec:modified_acc:metrics}

Let $\S$ be a dataset of preference examples \smash{$(x, \yw, \yl_1, \ldots, \yl_K)$}, where $x$ is an input and the output $\yw$ is preferred over $\yl_1, \ldots, \yl_K$ according to a ground truth reward function $\gtreward$.
In practice, we typically do not have access to $\gtreward$.
It is therefore impossible to determine for every incorrect ranking whether it is harmful or not based on the categorization in \zcref{sec:categorization}.
Nevertheless, the categorization allows us to identify certain types of incorrect rankings that are unlikely to be harmful.

For a reward model $\proxyreward$, let \smash{$\Vproxy (x; \theta) := \EE\nolimits_{y \sim \pi_{\theta} (\cdot | x)} \brk[s]{ \proxyreward (x,y)}$} be the expected proxy reward achieved by a language model $\pi_\theta$ given an input $x$. 
If $\proxyreward$ ranks a dispreferred output $\yl_k$ above $\yw$, but \smash{$\proxyreward (x, \yl_k) < \Vproxy (x ; \theta)$}, then this error may be benign since \smash{$\yl_k$} does not attract probability (\hyperref[err:benign_1]{\benign{Benign Error~I}}); it may even be beneficial when $\yw$ is of mediocre quality (\hyperref[err:beneficial]{\beneficial{Beneficial Error I}}).
Accordingly, we define the \emph{harm-aware ranking accuracy (HAcc)} by making a simple modification to ranking accuracy: HAcc does not penalize incorrect rankings whenever all dispreferred outputs receive proxy reward below an empirical estimate of $\Vproxy (x ; \theta)$, denoted $\Vproxybar (x ; \theta)$:
\[
% \vspace{1.8mm}
\hacc (\proxyreward; \S) := \frac{1}{ \abs{\S} }\sum\nolimits_{(x, \yw, \{\yl_k\}_{k \in [K]}) \in \S} \mathds{1} \Big [  \max_{k \in [K] }\proxyreward (x, \yl_k) < \max \brk[c]*{  \proxyreward (x, \yw) , \Vproxybar (x ; \theta)  }  \Big ]
\text{\,.}
\]
Furthermore, reward errors involving outputs that are improbable under $\pi_\theta$ are also benign (\hyperref[err:benign_2]{\benign{Benign Error~II}}).
Thus, in addition to ranking accuracy (Acc) and HAcc, we consider weighted variants (Acc-W and HAcc-W) that multiply the contribution of an example by \smash{$\widebar{\pi}_{\theta} (\yw | x) \prod_{k = 1}^K \widebar{\pi}_\theta (\yl_k | x) / Z (\S ; \pi_\theta)$}, where \smash{$\widebar{\pi}_\theta (y | x) := \pi_{\theta} (y | x)^{1 / \abs{y}}$} and $Z (\S; \pi_\theta)$ is the normalization constant ensuring weights sum to one.
The usage of $\widebar{\pi}_{\theta} (y | x)$ is equivalent to length normalizing the log probability of $y$.
We apply this transformation for numerical stability and to avoid biasing against long sequences.

\textbf{Computational cost.}
Unlike standard ranking accuracy, the proposed harm-aware variants depend on the language model (\ie, policy) $\pi_\theta$.
This dependence is desirable: our analysis, together with prior work \citep{razin2025what}, shows that the effectiveness of a reward model is inherently policy-dependent.
The drawback is a modest computational overhead.
Namely, these variants require sampling from $\pi_\theta$ to estimate $\Vproxybar(x;\theta)$ and computing the probabilities of outputs in $\S$.
In practice, this overhead is small relative to the cost of running policy gradient and is further reduced by reusing samples and probabilities when evaluating multiple reward models.

\subsection{Comparison of Ranking Accuracy Variants}
\label{sec:modified_acc:experiments}

\textbf{Main setting.}
We use four language models of different families and types: Llama-3.2-3B-Instruct, Llama-3.2-1B-Instruct \citep{grattafiori2024llama}, OLMo-2-1B-SFT \citep{olmo20242}, and Qwen3-1.7B-Base \citep{yang2025qwen3}.
Each language model is trained via the RLOO \citep{kool2019buy,ahmadian2024back} policy gradient method over prompts from the UltraFeedback dataset \citep{cui2024ultrafeedback}, using 13 reward models chosen to span a diverse range of RewardBench2 scores \citep{malik2025rewardbench} (see \zcref{tab:rms} in \zcref{app:experiments:details} for the full list).\footnote{
    We adopt RLOO because it is more resource efficient than PPO \citep{schulman2017proximal} and has shown competitive results \citep{ahmadian2024back}.
    We also experimented with GRPO \citep{shao2024deepseekmath} and observed analogous results.
}
We then compare ranking accuracy with its harm-aware variants (computed for the initial language models) based on their predictiveness of which reward model leads to better language model performance.
Following prior work \cite{gao2023scaling,coste2024reward,tang2024understanding,balashankar2024infalign,chen2024accuracy,wen2025rethinking}, we simulate ground truth rewards with a reward model (ArmoRM \citep{wang2024interpretable}), excluded from the ones used for training.
This ground truth model is used for evaluating both language model performance and reward model accuracy.
Unless stated otherwise, rewards and accuracy values are measured on examples from the policy gradient training set.\footnote{
        We mainly consider the ground truth reward increase on training examples as it directly reflects policy gradient optimization, which is the focus of our work, without conflating it with generalization.
        Nonetheless, as reported in \zcref{app:experiments:further}, we found the reward increase on test examples to be nearly identical.
}

\textbf{Additional settings.}
\zcref{app:experiments:further} includes experiments showing similar trends: \emph{(i)} when computing reward model accuracy values on RewardBench2 instead of the policy gradient training set; \emph{(ii)} over the WildChat-IF dataset \citep{lambert2024tulu} instead of UltraFeedback; \emph{(iii)} using a different ground truth reward model; and \emph{(iv)} when evaluating language model performance by win-rates according to a frontier (GPT) judge model, as opposed to by ground truth reward increase.

\begin{figure*}[t]
	\vspace{-3.5mm}
	\begin{center}
        \includegraphics[width=1\textwidth]{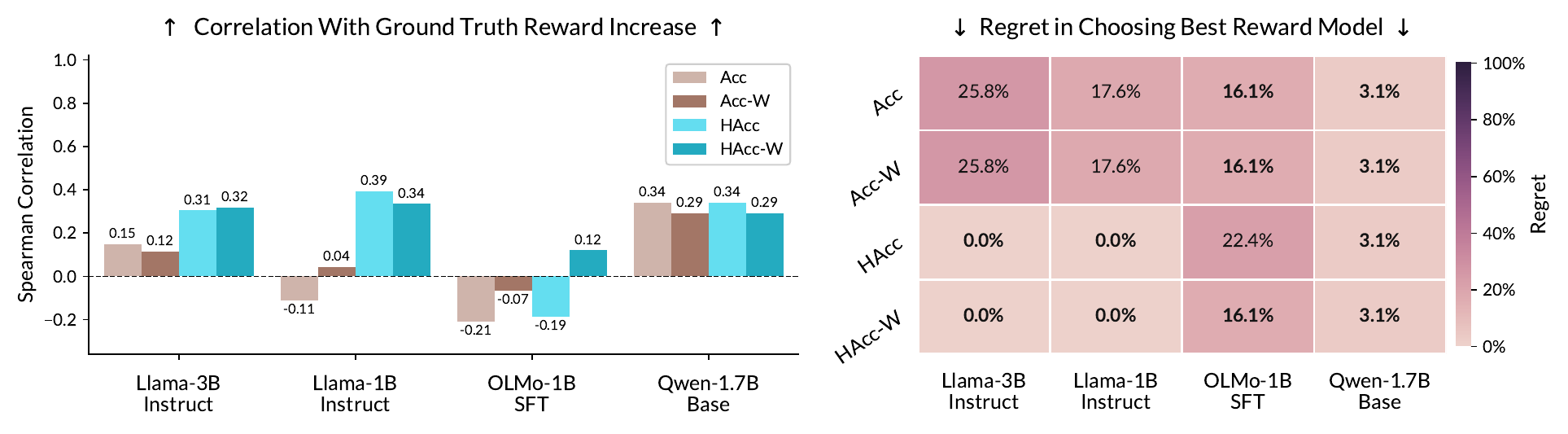}
	\end{center}
	\vspace{-2.25mm}
	\caption{
        \textbf{Harm-aware ranking accuracy variants are more predictive of which reward model leads to better language model performance.}
        For each language model, we run policy gradient (specifically, RLOO) using 13 different reward models on prompts from the UltraFeedback dataset, and compute (per language and reward model) the mean ground truth reward increase based on three separate runs.
        Compared to standard ranking accuracy and its weighted counterpart (Acc and Acc-W), the harm-aware variants (HAcc and HAcc-W) correlate better with ground truth reward increase and lead to lower regret in selecting the best reward model (see \zcref{sec:modified_acc:experiments} for the definition of regret).
        Yet, the correlation remains below $0.4$ and can even be negative, highlighting that robust reward model evaluation remains an open challenge.
        See \zcref{sec:modified_acc} for further details.
	}
	\label{fig:rm_accuracy_with_gt_exps}
\end{figure*}

\textbf{Result I: Harm-aware ranking accuracy variants are more predictive of which reward model leads to better language model performance.}
For each language model and ranking accuracy variant, \zcref{fig:rm_accuracy_with_gt_exps} presents the Spearman correlation between reward model accuracy and ground truth reward increase.
We also report the regret incurred when choosing the most accurate reward model, defined as $100 \brk{R_{\mathrm{best}} - R_{\mathrm{chosen}} } / R_{\mathrm{best}}$, where $R_{\mathrm{best}}$ is the largest ground truth reward increase due to policy gradient, across all reward models, and $R_{\mathrm{chosen}}$ is the increase under the chosen reward model.
Compared to standard ranking accuracy, we find that HAcc typically correlates better with ground truth reward increase and leads to lower regret.
On the other hand, we observe that weighting examples by output probabilities is not consistently helpful.
This suggests that the contribution of low probability outputs to standard ranking accuracy is not a primary cause for its limited correlation with language model performance (for the considered dataset and models).

\textbf{Result II: Robust reward model evaluation remains an open challenge.}
Despite the improvement provided by harm-aware ranking accuracy, the correlation between a ranking accuracy variant and ground truth reward increase remains below $0.4$, and in some settings is even negative.
As we discuss in \zcref{sec:conclusion}, this indicates that robustly evaluating reward models may require looking beyond adjustments to ranking accuracy.

	% APPLICATION: REWARD DESIGN
	\section{Application II: When Should Partially Correct Outputs Be Rewarded?}
\label{sec:reward_design}

Aside from reward model evaluation for RLHF (\zcref{sec:modified_acc}), we explore implications of our theory for reward design in settings with verifiable rule-based rewards.
In tasks such as instruction following or code generation, an output must satisfy a set of constraints or unit tests to be considered correct \citep{pyatkin2025generalizing,guo2025deepseek,olmo2025olmo}.
This raises the question of whether partially correct outputs should be rewarded.
For example, if an output satisfies one out of two desired constraints, should it receive some partial reward (\eg, 0.5) or should all non-fully correct outputs receive zero reward?

Since the goal is to produce fully correct outputs, it is natural to view the ground truth reward function as giving a reward of 1 to fully correct outputs and 0 to all others.
The question above then becomes a matter of proxy reward design.
Our analysis (\zcref{sec:categorization}) suggests that when the language model (\ie, initial policy) is noticeably more likely to produce partially correct outputs than fully correct ones, rewarding partial correctness can cause it to stall on these mediocre outputs (\hyperref[err:harmful_2]{\harmful{Harmful Error~II}}).
Thus, as long as there is a non-zero probability of producing fully correct outputs at initialization, in such cases it may be more effective to only reward full correctness.
We demonstrate this prospect empirically using instruction following tasks from IFBench \citep{pyatkin2025generalizing}.

\begin{figure*}[t]
	\vspace{0mm}
	\begin{center}
        \includegraphics[width=1\textwidth]{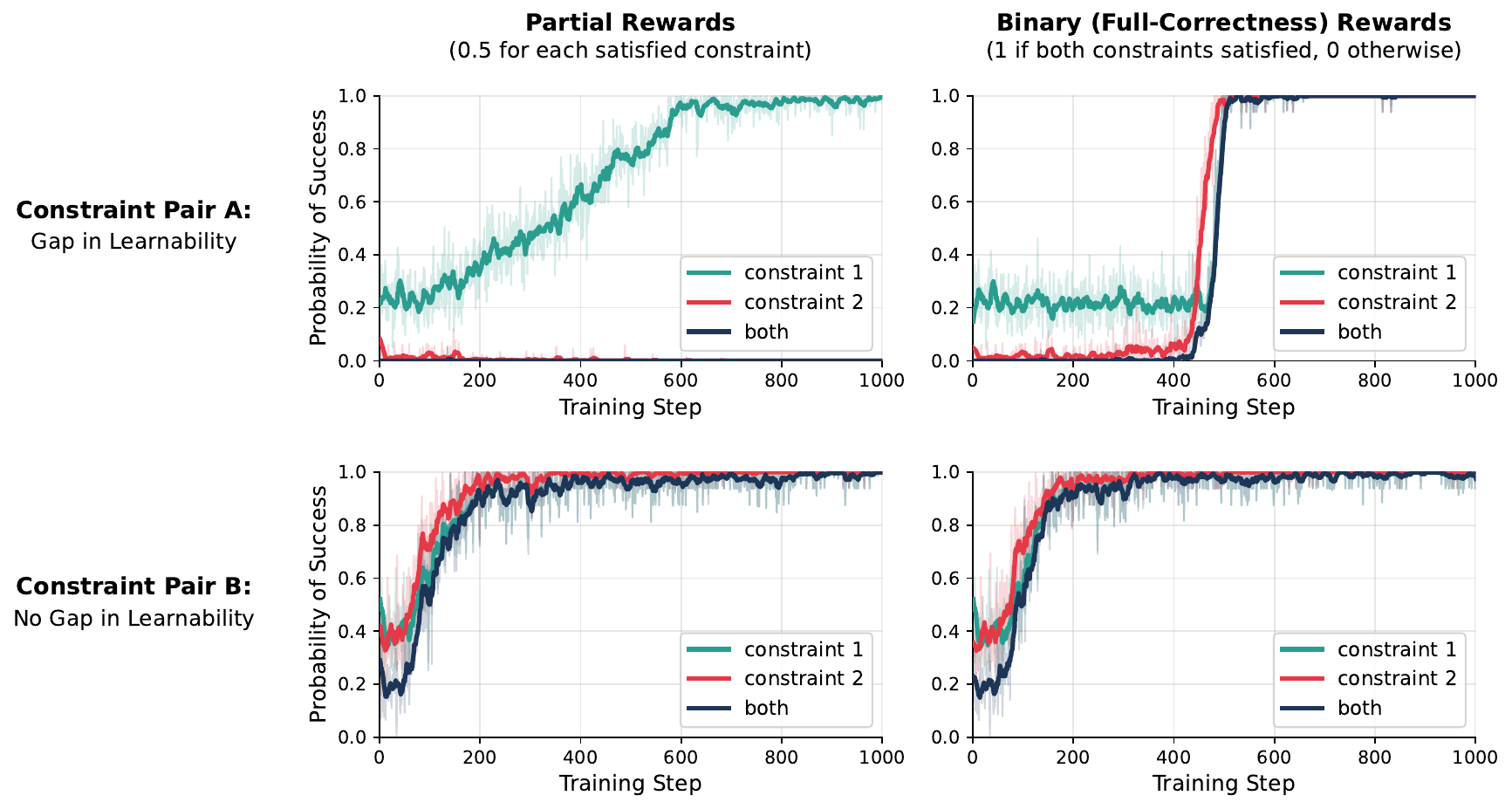}
	\end{center}
	\vspace{-2.25mm}
	\caption{
        \textbf{Rewarding partially correct outputs can impede policy gradient optimization.}
        We train Qwen3-1.7B using GRPO on two instruction following datasets, where the prompts in each dataset include a pair of constraints from IFBench that an output must satisfy to be considered correct (all prompts within a dataset share the same constraint pair).
        Plotted are probabilities of satisfying the constraints, averaged over ten training steps for clarity; unsmoothed values are shown in semi-transparent lines.
        As suggested by our theory (\zcref{sec:categorization}), when the initial probability of satisfying one constraint is noticeably higher than the probability of satisfying the other, rewarding partial correctness can cause the policy to learn only the easier constraint.
        In this case, rewarding only fully correct outputs can lead to faster learning of both constraints.
        However, when both constraints are initially satisfied with similar probability, both reward designs tend to work well.
        See \zcref{sec:reward_design} for further details.
	}
	\label{fig:rlvr_partial_rewards_qwen}
\end{figure*}

\textbf{Setting.}
We construct instruction following datasets by taking prompts from UltraFeedback and appending to them two instructions, each specifying a constraint from IFBench (\eg, “the response must start with a verb'').
All prompts within a dataset share the same constraints, and the datasets differ in which constraints they contain.
We then train Qwen3-1.7B on these datasets using GRPO \citep{shao2024deepseekmath} with two proxy reward designs.
In the first, we reward partial correctness by giving a reward of 0.5 for each satisfied constraint.
In the second, the reward is binary: fully correct outputs receive a reward of 1, and all other outputs receive a reward of 0.
See \zcref{app:experiments:further} for experiments with additional language models and \zcref{app:experiments:details} for further implementation details.

\textbf{Results.}
\zcref{fig:rlvr_partial_rewards_qwen} shows that, if the initial probability of satisfying one constraint noticeably exceeds the probability of satisfying the other, rewarding partial correctness may cause the policy to learn only the easier constraint, while binary (full-correctness) rewards can enable the policy to learn both constraints.\footnote{
        We note, however, that the probability of initially satisfying a constraint is not the sole factor determining its ease of learnability.
        As \zcref{app:experiments:further:reward_design} shows, in cases where one constraint is initially satisfied with substantially higher probability than the other, it is also possible for both constraints to be learned quickly under partial rewards (even when learning fails under binary rewards).
}
If instead the constraints are initially satisfied with similar probability, both reward designs tend to work well.
Taken together, these findings yield a practical insight: rewarding partially correct outputs is not universally beneficial or harmful.
Choosing suitable proxy rewards in verifiable settings requires considering both the task structure and language model capabilities.

	% RELATED WORK
	\section{Related Work}
\label{sec:related}

\textbf{Reinforcement learning with proxy rewards.}
Manually handcrafted and automatically learned proxy rewards have a long history across reinforcement learning applications (\eg, \cite{mataric1994reward,randlov1999learning,singh2010intrinsically,sorg2010reward,zheng2018learning,berner2019dota,chiang2019learning,peng2021amp,devidze2022exploration}), far too broad to fully cover here.
In the context of language models, which are the focus of this work, proxy rewards are widely used both for alignment with human preferences (\ie, RLHF; \citep{ziegler2019fine,ouyang2022training,grattafiori2024llama,razin2026why,swamy2026all}) and for verifiable tasks such as mathematical problem solving, code generation, and instruction following \citep{lambert2024tulu,guo2025deepseek,olmo2025olmo,huang2025pitfalls,tao2025hybrid,pyatkin2025generalizing,masud2026reward}.
Yet, despite the ubiquity of imperfect proxy rewards in practice, there is limited theoretical understanding of what makes a good proxy reward function.
We take a step towards addressing this gap.

\textbf{What makes a good proxy reward function?}
This question has roots in the reward shaping literature \citep{ng1999policy}. 
There, prior theoretical work characterized reward transformations that preserve the order of policies \citep{ng1999policy,asmuth2008potential,grzes2017reward,skalse2024starc} and demonstrated that reward shaping can improve the efficiency of stylized learning algorithms or value-based methods in long-horizon environments \citep{laud2003influence,dai2019maximum,cheng2021heuristic,devidze2022exploration,gupta2022unpacking}.
Furthermore, \cite{gleave2021quantifying,wulfe2022dynamics,skalse2024starc} derived measures for comparing two reward functions (\eg, a proxy and ground truth reward function) that account for said reward transformations, which leave policy ordering unchanged.
Similarly to our work, these lines of research can be interpreted as identifying cases where deviating from a ground truth reward function is in some sense benign or beneficial.
However, they disregard the impact of such deviations on policy gradient optimization.
As \zcref{sec:categorization} proves, reward errors that modify policy ordering can still be benign or beneficial.
Conversely, reward errors that preserve policy ordering may be harmful and slow down optimization.

Perhaps most closely related to our work is \cite{razin2025what}.
While reward models for RLHF are mainly evaluated via ranking accuracy (\cf~\zcref{sec:prelim:rm_eval}), \cite{razin2025what} established that other aspects can determine how good a reward model is.
In particular, they showed that when the reward model induces low reward variance, policy gradient suffers from a flat objective landscape that hinders optimization.
Our technical approach resembles that of \cite{razin2025what}, which also analyzes the rate of ground truth reward increase under proxy reward functions.
Though, unlike \cite{razin2025what}, we consider how different types of reward errors influence optimization beyond their effect on reward variance (see \zcref{remark:relation_to_reward_variance} in \zcref{app:formal_statements} for further discussion on this point).
This allows us to develop harm-aware ranking accuracy variants and provide insights for reward design in settings with verifiable rewards.

\textbf{Theoretical analyses of policy gradient optimization.}
For softmax policies, \ie, policies that produce a distribution over outputs via the softmax function, the reward maximization objective is non-concave even under tabular or linear parameterizations \citep{agarwal2021theory}.
Characterizing the complex optimization dynamics of policy gradient methods in such settings has therefore remained an active area of research.
Existing analyses showed that optimization can fail due to the gradient vanishing in long-horizon environments with sparse rewards \citep{agarwal2021theory,li2021softmax}, when the policy is nearly deterministic \citep{ahmed2019understanding,hennes2019neural,schaul2019ray,mei2020escaping,mei2020global,agarwal2021theory,garg2022alternate}, or, more generally, when the reward variance is low \citep{razin2024vanishing,razin2025what}.
Particularly relevant to our work, \cite{mei2020escaping} observed that policy gradient can be attracted to mediocre outputs at the expense of higher-quality ones, but left open the question of when and to what extent this phenomenon can impede optimization.
\zcref{thm:attraction_to_mediocre_outputs_beneficial_error} shows that the attraction to mediocre outputs can be severe, stalling optimization for an arbitrarily long time.
Our results also complement existing convergence guarantees \citep{mei2020global,mei2021leveraging,mei2023stochastic,klein2024beyond,liu2024elementary,lin2025rethinking}.
These guarantees depend on $\inf_{t \geq 0} \pi_{\theta_t}(\ystar)$---the infimal probability assigned to the optimal output during training.
We identify that the attraction to mediocre outputs can drive this quantity arbitrarily low.

	% CONCLUSION
	\section{Conclusion}
\label{sec:conclusion}

Imperfect proxy rewards are widely used for language model post-training, and reinforcement learning more broadly.
Yet, the understanding of how discrepancies between the proxy and ground truth rewards affect the learning process remains limited.
Such reward errors are conventionally viewed as harmful due to the risk of reward hacking.
In this work, however, we proved that aside from being harmful, reward errors can also be benign or even beneficial for policy gradient optimization.
Specifically, our theoretical analysis categorizes prominent reward error types into harmful, benign, or beneficial, and characterizes the mechanisms behind their distinct effects on optimization.

We then demonstrated that this categorization yields practical insights.
For RLHF, it motivated harm-aware reward model evaluation metrics that are more predictive of language model performance than standard ranking accuracy.
For environments with verifiable rewards, it revealed that rewarding partially correct outputs can impede learning rather than help it.
A key theme underlying these results is that the effectiveness of proxy rewards depends heavily on their interplay with the initial policy and learning algorithm.
As elaborated below, we hope that our work will inspire further research on proxy reward evaluation and design that accounts for this interplay.

\subsection{Limitations and Future Work}
\label{sec:conclusion:limitations_future_work}

\textbf{Theoretical analysis: beyond bandit environments.}
Our theory (\zcref{sec:categorization}) focuses on bandit environments since, in language model applications, rewards are often available only for complete outputs.
Extending the analysis to longer-horizon settings, corresponding to multi-turn dialogue or environments with process rewards, may reveal additional ways in which reward errors shape learning.
It would also be valuable to understand how reward errors interact across inputs, especially given recent evidence that learnability via policy gradient can depend on the data composition \citep{ran2026outcome,huang2026learning}.

\textbf{Towards robust reward model evaluation.}
Although the harm-aware ranking accuracy variants proposed in \zcref{sec:modified_acc} improve upon standard ranking accuracy, our results suggest that even these metrics capture only part of what makes an effective reward model.
We outline two potential barriers towards robust reward model evaluation, which future work can attempt to address.
First, output rankings provide only coarse information about the ground truth reward function.
This seems to place an inherent ceiling on metrics that solely rely on such data.
Second, reward model evaluation benchmarks often lack coverage of relevant inputs and outputs.
This limitation is reflected both in recent efforts to revisit existing benchmarks \citep{zhou2025rmb,wen2025rethinking,malik2025rewardbench} and in our experiments, where evaluating reward models on RewardBench2 is markedly less predictive of language model performance than evaluating them on examples drawn from the policy gradient training set (\zcref{fig:rm_accuracy_with_gt_exps_rb2} in \zcref{app:experiments:further}).

\textbf{Reward design in verifiable settings.}
We identified a drawback of rewarding partially correct outputs: it can hinder optimization by causing the policy to stall on such suboptimal outputs (\zcref{sec:reward_design}).
However, there are cases where rewarding partial correctness is beneficial.
In particular, \cite{sun2026rl} showed that when the initial policy produces fully correct outputs with near-zero probability, by first using partial rewards and then switching to binary (full-correctness) rewards, one can achieve better performance than using binary rewards from the start.
We therefore view analyzing adaptive proxy reward schemes that vary across inputs and training steps as a promising direction for future work.

	% ACKNOWLEDGMENTS
	\ifdefined\NEURIPS
		\begin{ack}
			We thank Eshbal Hezroni for aid in preparing illustrative figures.
SW is supported by an NSF Graduate Research Fellowship.
SA acknowledges funding from ONR, Schmidt Science,
and OpenAI.
NR is supported in part by the Zuckerman STEM Leadership Program.

		\end{ack}
	\else
		\newcommand{\ack}{}
	\fi
	\ifdefined\ARXIV
		\section*{Acknowledgements}
		\ack
	\else
		\ifdefined\COLT
			\acks{\ack}
		\else
			\ifdefined\CAMREADY
				\ifdefined\ICLR
					\newcommand*{\subsuback}{}
				\fi
				\ifdefined\NEURIPS
					% DO NOTHING - HANDLED EARLIER
				\else
					\section*{Acknowledgements}
					\ack
				\fi
			\fi
		\fi
	\fi

	% REFERENCES
	\section*{References}
	{\small
		\ifdefined\ICML
			\bibliographystyle{icml2021}
		\else
			\bibliographystyle{plainnat}
		\fi
		\bibliography{refs}
	}

	% APPENDIXES
	\clearpage
	\appendix
	
% Uncomment below for ICML appendix with two columns
%	\onecolumn
	
	% ENDNOTES
	\ifdefined\ENABLEENDNOTES
		\theendnotes
	\fi
	
	% TABLE OF CONTENTS FOR APPENDIX
%	{
%		\hypersetup{hidelinks}
%		\tableofcontents
%	}

	% Add Appendix sections here using \input

	% DEFERRED FORMAL STATEMENTS
	\section{Detailed Theorem Statements and Influence of Feature Similarity on Attraction to Mediocre Outputs}
\label{app:formal_statements}

In this appendix, we present \zcref{thm:attraction_to_mediocre_outputs_beneficial_error_formal}---the detailed statement of \zcref{thm:attraction_to_mediocre_outputs_beneficial_error}---which establishes the existence of beneficial reward errors.
In particular, \zcref{thm:attraction_to_mediocre_outputs_beneficial_error_formal} shows that mediocre outputs can attract probability and stall policy gradient optimization for an arbitrarily long time.
Thus, assigning low proxy reward to outputs with mediocre ground truth reward can be beneficial for accelerating the increase in ground truth reward.
As a corollary (\zcref{cor:attraction_to_mediocre_outputs_harmful_error}), we also obtain that assigning mediocre proxy reward to outputs with low ground truth reward can impede optimization, even if it does not induce reward hacking.
This formalizes \hyperref[err:harmful_2]{\harmful{Harmful Error~II}} from \zcref{sec:categorization:harmful}.
Both \zcref{thm:attraction_to_mediocre_outputs_beneficial_error_formal} and \zcref{cor:attraction_to_mediocre_outputs_harmful_error} consider orthonormal output feature vectors.
In \zcref{app:formal_statements:feature_similarity}, we relax this assumption and analyze how the similarity between output features can exacerbate or attenuate the attraction to mediocre outputs.
We refer the reader to \zcref{sec:categorization:setting} for the technical setting of the analysis.

Our aim is to characterize the phenomenon where mediocre outputs attract probability at the expense of higher-quality outputs.
Thus, we consider a setting in which $\Y$ contains an output $\ymed$ with ground truth reward lower than $\gtreward (\ystar)$, where $\ystar = \argmax_{y \in \Y} \gtreward (y)$ is the optimal output, but higher than the ground truth reward of all remaining outputs $\Ybad \subset \Y$.
For simplicity, we assume all outputs in $\Ybad$ have the same ground truth reward.
Our analysis extends straightforwardly to the case where outputs in $\Ybad$ have different rewards, as long as they are sufficiently lower than~$\gtreward(\ymed)$.

\begin{assumption}[Reward structure]
\label{assum:reward_structure_orthonormal}
The set of outputs can be decomposed as $\Y = \brk[c]{ \ystar, \ymed } \cup \Ybad$ such that $\gtreward (\ystar) > \gtreward (\ymed) > 0 \geq \max\nolimits_{y \in \Ybad} \gtreward (y)$.
Furthermore, all outputs in $\Ybad \subset \Y$ have the same ground truth reward, \ie, $\gtreward(y) = \gtreward(y')$ for all $y, y' \in \Ybad$. 
\end{assumption}

Suppose that we can maximize the ground truth reward directly (\ie, $\proxyreward = \gtreward$).
In this case, as discussed in \zcref{sec:categorization:beneficial}, an output $\ymed$ with mediocre (ground truth and proxy) reward can attract probability if \smash{$\gtreward (\ymed) > \Vgt (\theta_0) = \EE\nolimits_{y \sim \pi_{\theta_0}} \brk[s]{ \gtreward (y) }$} and the probability of $\ymed$ under $\pi_{\theta_0}$ is higher than the probability of $\ystar$.
This condition is formalized by assuming that $\pi_{\theta_0} (\ystar)$ is not too high and that $\pi_{\theta_0} (\Ybad)$ is neither too low (otherwise, $\Vgt (\theta_0)$ will be above $\gtreward (\ymed)$) nor too high (otherwise, $\pi_{\theta_0} (\ymed)$ may not be sufficiently higher than $\pi_{\theta_0} (\ystar)$).
See \zcref{lem:mediocre_output_reward_larger_than_initial_expected_reward} in \zcref{app:proofs} for a proof that \zcref{assum:reward_structure_orthonormal,assum:initial_probs_orthonormal} ensure $\gtreward (\ymed) > \Vgt (\theta_0)$.

\begin{assumption}[Initial policy]
\label{assum:initial_probs_orthonormal}
The initial policy $\pi_{\theta_0}$ satisfies the following conditions.
\begin{itemize}[leftmargin=6mm]
    \item The initial probability of $\ystar$ is not too high: 
    \[
        \pi_{\theta_0} (\ystar) < M \cdot \min \brk[c]*{ 0.01, \bigg(\frac{21}{1100(\Delta_1+\Delta_2)}\bigg)^{7/3}, \Delta_2^2 }^{13/14}
        \text{\,,}
    \] 
    where $\Deltaone := \gtreward (\ystar) - \gtreward (\ymed)$, $\Deltatwo := \gtreward (\ymed) - \gtreward (\ybad)$ for some $\ybad \in \Ybad$, and $M := \min \brk[c]1{ 1 ,  0.05 \cdot \gtreward (\ymed)^{2/7} (\Delta_1+\Delta_2)^{-1} }$.
    
    \item The initial probability of $\Ybad$ is neither too low nor too high:
    \[
        \max \brk[c]*{ \frac{2\pi_{\theta_0}(\ystar)^{7/13}}{\Deltatwo M^{7/13}}, 0.05 \cdot \gtreward (\ymed) } \leq \pi_{\theta_0} (\Ybad) \leq 0.1 \cdot \gtreward (\ymed)
        \text{\,.}
    \]
\end{itemize}
\end{assumption}

Under \zcref{assum:reward_structure_orthonormal,assum:initial_probs_orthonormal}, we prove that incorrectly assigning low reward to $\ymed$ is beneficial for increasing the ground truth reward.
Specifically, we compare the optimization trajectory when maximizing $\gtreward$ directly with the trajectory obtained when maximizing a proxy reward function $\proxyreward$ that is identical to $\gtreward$, except that it assigns low reward to the mediocre output $\ymed$.
\zcref{thm:attraction_to_mediocre_outputs_beneficial_error_formal} establishes that the time required to achieve high ground truth reward when maximizing $\gtreward$ directly can be arbitrarily larger than when maximizing $\proxyreward$.
This gap in optimization time arises because, when maximizing $\gtreward$, the policy first concentrates its probability mass on $\ymed$ and stalls in its vicinity.
By contrast, when maximizing $\proxyreward$, the policy directly shifts its probability mass towards $\ystar$.

\begin{theorem}[Detailed version of \zcref{thm:attraction_to_mediocre_outputs_beneficial_error}]
\label{thm:attraction_to_mediocre_outputs_beneficial_error_formal}
Suppose that the feature vectors \smash{$\brk[c]{ \phi (y) }_{y \in \Y}$} are orthonormal and that the ground truth reward function $\gtreward$ and initial policy parameters $\theta_0$ uphold \zcref{assum:reward_structure_orthonormal,assum:initial_probs_orthonormal}.
Furthermore, let $\proxyreward$ be a proxy reward function identical to $\gtreward$, except that it assigns $\ymed$ a low reward $\proxyreward (\ymed) = \min\nolimits_{y \in \Ybad} \gtreward (y)$.
For any $\epsilon \in (0, \Deltaone)$, where $\Deltaone := \gtreward (\ystar) - \gtreward (\ymed)$, the following hold.
\begin{itemize}[leftmargin=6mm]
    \item \textbf{Case I: Maximizing $\gtreward$.}
    If gradient flow is used to maximize the expected reward with respect to $\gtreward$ (\zcref{eq:gf} with $\Vgt$ in place of $\Vproxy$), then the initial time at which $\Vgt (\theta_t) \geq \gtreward( \ystar ) - \epsilon$, denoted $\teps$, is lower bounded as follows:
    \[
        \teps \geq \frac{2\sqrt{2}M^{14/13}\Delta_1 \gtreward (\ymed) \brk{ 1 - e^{-\sqrt{2}(\Delta_1 - \epsilon)} } }{(\Delta_1 + \Delta_2)(1 + 8\Delta_1)} \cdot \pi_{\theta_0} (\ystar)^{- 14 / 13} = \Omega \brk*{ \pi_{\theta_0} (\ystar)^{- 14 / 13} }
        \text{\,,}
    \]
    where $\Deltatwo := \gtreward (\ymed) - \gtreward (\ybad)$ for some $\ybad \in \Ybad$ and $M$ is as defined in \zcref{assum:initial_probs_orthonormal}.
    
    \item \textbf{Case II: Maximizing $\proxyreward$.}    
    In contrast, denote by $\tepsnomed$ the initial time at which $\Vgt (\theta_t) \geq \gtreward( \ystar ) - \epsilon$ when gradient flow is used to maximize the expected reward with respect to $\proxyreward$.
    Then, $\tepsnomed$ is upper bounded by a quantity that grows asymptotically slower as $\pi_{\theta_0} (\ystar) \to 0$ compared to the lower bound on $\teps$ in Case I:
    \[
        \tepsnomed \leq \frac{ \brk*{ \gtreward (\ystar) + 1 }^2 }{\epsilon^2  \brk*{ \proxyreward (\ystar) - \Vproxy (\theta_0)  } }  \cdot \pi_{\theta_0} (\ystar)^{-1} = \OO \brk*{ \pi_{\theta_0} (\ystar)^{-1} }
        \text{\,,}
    \]
    Furthermore, at any time $t$ greater than or equal to the right-hand side in the inequality above, we have $\pi_{\theta_t} (\ystar) \geq 1 - \epsilon / (\Deltaone + \Deltatwo)$.
\end{itemize}
Thus, for any arbitrarily large $T \geq 0$, if $\pi_{\theta_0} (\ystar)$ is sufficiently small, then $\teps - \tepsnomed \geq T$.
\end{theorem}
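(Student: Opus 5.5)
With orthonormal features, \zcref{prop:logit_dynamics} reduces each logit to $\frac{d}{dt} u_y = \pi_{\theta_t}(y) A(y;\theta_t)$, where $u_y := \inprod{\phi(y)}{\theta_t}$. The problem therefore becomes a softmax ODE over a small number of effective coordinates. All outputs in $\Ybad$ share the same reward, so their logits receive identical increments, and their mutual probability ratios stay fixed. The state is thus described by $p_\star := \pi_{\theta_t}(\ystar)$, $p_m := \pi_{\theta_t}(\ymed)$ and $p_b := \pi_{\theta_t}(\Ybad)$, with
\[
\dot p_y = p_y\Big(\dot u_y - \sum\nolimits_z p_z \dot u_z\Big), \qquad \dot u_y = p_y \big(r(y) - V\big).
\]
The plan is to analyze the two cases separately and then compare the resulting bounds as $\pi_{\theta_0}(\ystar) \to 0$.

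\textbf{Case II} is the easier one. Under $\proxyreward$, $\ymed$ and $\Ybad$ carry the same minimal reward, so $\ystar$ is the unique output with positive advantage, and $\Vproxy(\theta_t) = r_{\min} + p_\star\,(\proxyreward(\ystar) - r_{\min})$ is monotone in $p_\star$. Hence $p_\star$ is non-decreasing, since $\Vproxy$ is non-decreasing. I would lower bound $\frac{d}{dt}\Vproxy = \norm{\nabla \Vproxy}^2 \ge p_\star^2 (\proxyreward(\ystar) - \Vproxy)^2 \ge \pi_{\theta_0}(\ystar)\, p_\star (\proxyreward(\ystar) - \Vproxy)^2$. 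This gives a differential inequality for the gap $g := \proxyreward(\ystar) - \Vproxy$ of the form $\dot g \le -c\,\pi_{\theta_0}(\ystar)\, g^2$, with the factor $p_\star$ absorbed via $g \propto (1 - p_\star)$ and the normalization $(\gtreward(\ystar)+1)$. Integrating yields $g(t) \le 1/(c\,\pi_{\theta_0}(\ystar)\, t)$. It then remains to convert a bound on the proxy gap into $\Vgt \ge \gtreward(\ystar) - \epsilon$ and $p_\star \ge 1 - \epsilon/(\Deltaone + \Deltatwo)$. Since $\gtreward(\ystar) - \Vgt \le (\Deltaone + \Deltatwo)(1 - p_\star)$ and $1 - p_\star = g/(\proxyreward(\ystar) - r_{\min})$, this conversion is straightforward. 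The stated constants should follow from careful bookkeeping, with the $\epsilon^{-2}$ arising from using a cruder bound.

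\textbf{Case I} contains the main work and is split into two phases.

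\emph{Phase 1: concentration on $\ymed$ while $p_\star$ shrinks.} Initially $V_0 < \gtreward(\ymed)$ (by \zcref{lem:mediocre_output_reward_larger_than_initial_expected_reward}). Moreover, $\dot u_m = p_m A_m$ greatly exceeds $\dot u_\star = p_\star A_\star$, because $p_m$ is large while $p_\star$ is tiny and $A_\star \le A_m + \Deltaone$. Consequently $\dot p_\star = p_\star(\dot u_\star - \sum_z p_z \dot u_z) < 0$. I would prove by a bootstrap/continuity argument that $p_\star(t) \le \pi_{\theta_0}(\ystar)$ throughout, as long as $p_m \le 1 - \kappa$ with $\kappa = \Theta(\pi_{\theta_0}(\ystar)^{14/13})$. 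Over the same period $p_b$ decreases, because $\Ybad$ has negative advantage. The delicate point is that as $p_m \to 1$ the advantage $A_m = \gtreward(\ymed) - V$ shrinks like $\Deltatwo\, p_b - \Deltaone\, p_\star$. One needs $\Deltatwo\, p_b$ to dominate, so that $\dot u_m$ keeps exceeding $\dot u_\star$. This forces a comparison of $p_b$ against a power of $p_\star$, and it is the source of the exponents $7/13$ and $14/13$ and of the lower bound on $\pi_{\theta_0}(\Ybad)$ in \zcref{assum:initial_probs_orthonormal}. Concretely, I would track $\log(p_m/p_\star)$ and $\log(p_m/p_b)$ and show $p_b \gtrsim p_\star^{7/13}$ at the exit point. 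This phase is the main obstacle.

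\emph{Phase 2: escape time from the vicinity of $\ymed$.} Once $p_m \ge 1 - \kappa$, reaching $\Vgt \ge \gtreward(\ystar) - \epsilon$ requires $p_\star \ge (\Deltaone - \epsilon)/(\Deltaone + \Deltatwo)$, which is a constant. The gradient near a deterministic policy is small: $\norm{\nabla \Vgt}^2 = \sum_y p_y^2 A_y^2 = O(\kappa)$ in the relevant regime. I would bound the rate $\frac{d}{dt}\log(p_\star/p_m) = p_\star A_\star - p_m A_m$, and show that moving $p_\star$ from $O(\kappa)$ to a constant requires time at least $\Omega(1/\kappa) = \Omega(\pi_{\theta_0}(\ystar)^{-14/13})$. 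The factor $(1 - e^{-\sqrt2(\Deltaone - \epsilon)})$ suggests integrating a Lipschitz-type bound on $\frac{d}{dt} u$ over the escape path.

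Finally, comparing $\Omega(\pi_{\theta_0}(\ystar)^{-14/13})$ with $O(\pi_{\theta_0}(\ystar)^{-1})$ shows that $\teps - \tepsnomed \to \infty$ as $\pi_{\theta_0}(\ystar) \to 0$, so $\teps - \tepsnomed \ge T$ once $\pi_{\theta_0}(\ystar)$ is sufficiently small.
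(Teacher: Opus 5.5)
\textbf{Case I has a genuine gap.} It sits exactly where the exponent $14/13$ is produced: nothing in your Phase 1 forces $p_\star$ to \emph{shrink} by a polynomial factor.

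A bootstrap that only maintains $p_\star\le\pi_{\theta_0}(\ystar)$ cannot be run down to $p_m\ge 1-\kappa$ with $\kappa=\Theta(\pi_{\theta_0}(\ystar)^{14/13})$, for two reasons.
\begin{itemize}
\item Since $1-p_m\ge p_\star$, reaching that exit already requires $p_\star\le\kappa\ll\pi_{\theta_0}(\ystar)$.
\item Proving $\dot p_\star<0$ requires at least $p_\star(1-p_\star)\advgt(\ystar;\theta_t)<p_m^2\advgt(\ymed;\theta_t)$. From $p_\star\le\pi_{\theta_0}(\ystar)$ alone, this can be certified only while $\advgt(\ymed;\theta_t)\gtrsim\Deltaone\pi_{\theta_0}(\ystar)$, i.e.\ until $1-p_m=\Theta(\pi_{\theta_0}(\ystar))$.
\end{itemize}
The escape-time lower bound you could then derive is only $\Omega(\pi_{\theta_0}(\ystar)^{-1})$, which does not beat Case II.

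The missing idea is a quantitative coupling between the decay of $p_\star$ and the concentration on $\ymed$. Let $\gamma=(\pi_{\theta_0}(\ystar)/M)^{14/13}$, and let $t_{\sqrt\gamma}$ be the first time $\Vgt\ge\gtreward(\ymed)-\sqrt\gamma$. On $[0,t_{\sqrt\gamma}]$ the paper proves $\dot p_\star\le-0.3\,p_\star p_m^2\advgt(\ymed;\theta_t)$ and $\dot p_m\le 2.1\,p_m^2(1-p_m)\advgt(\ymed;\theta_t)$. Dividing gives $\frac{d}{dt}\ln p_\star\le\frac17\frac{d}{dt}\ln(1-p_m)$, hence $p_\star(t_{\sqrt\gamma})\le\pi_{\theta_0}(\ystar)\,(26\sqrt\gamma/\gtreward(\ymed)^2)^{1/7}=O(\gamma)$. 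This is $O(\gamma)$ because $\pi_{\theta_0}(\ystar)=M\gamma^{13/14}$ and $(\sqrt\gamma)^{1/7}=\gamma^{1/14}$. Only with $p_\star=O(\gamma)$ in hand can a second bootstrap keep $\dot p_\star<0$ until $\advgt(\ymed;\theta_t)=\gamma$, which forces $1-p_m(t_\gamma)=O(\gamma)$.

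Tracking log-ratios is the right instinct, but the target you name, ``$p_b\gtrsim p_\star^{7/13}$ at the exit point,'' is not this statement. At $p_m=1-\kappa$ one has $p_b\le\kappa$, so it would demand $p_\star=O(\pi_{\theta_0}(\ystar)^2)$ there. The $7/13$ in the assumption is just $\sqrt\gamma$ written in terms of $\pi_{\theta_0}(\ystar)$. It guarantees $\Vgt(\theta_0)<\gtreward(\ymed)-\sqrt\gamma$ and $p_b\ge\sqrt\gamma/\Deltatwo$ up to $t_{\sqrt\gamma}$.

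\textbf{Case II has a local error.} The inequality $\dot g\le-c\,\pi_{\theta_0}(\ystar)\,g^2$ is false near $t=0$ for any fixed $c$ once $\pi_{\theta_0}(\ystar)$ is small. There $|\dot g|=\|\nabla\Vproxy\|^2=O(p_\star^2)$ while $g=\Theta(1)$. The discarded factor $p_\star$ cannot be ``absorbed'' into $g\propto 1-p_\star$. Keeping it honestly gives only $O(\pi_{\theta_0}(\ystar)^{-1}\log(1/\pi_{\theta_0}(\ystar)))$, which is enough for the separation but not for the stated bound. Write the Riccati inequality in $p_\star$ instead, as the paper does: $\dot p_\star\ge p_\star^2(1-p_\star)\advproxy(\ystar;\theta_t)\ge(1-\rho)^2\advproxy(\ystar;\theta_0)\,p_\star^2$ while $p_\star\le\rho:=(\gtreward(\ystar)+1-\epsilon)/(\gtreward(\ystar)+1)$. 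Integrating $1/p_\star$ gives the stated bound, with $(\gtreward(\ystar)+1)^2/\epsilon^2=(1-\rho)^{-2}$.

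\textbf{The three-coordinate reduction is not exact.} With orthonormal features $\dot u_z=\pi_{\theta_t}(z)\advgt(z;\theta_t)$, so outputs in $\Ybad$ with unequal probabilities receive unequal increments. One must instead bound $\sum_{z\in\Ybad}\pi_{\theta_t}(z)^2\le p_b^2$.

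\textbf{Your Phase 2 is a valid alternative to the paper's endgame.} The paper combines the gradient bound $\|\nabla\Vgt\|\le 2(\Deltaone+\Deltatwo)(1-p_m)$, a Riccati bound on $1-p_m$, and Lipschitzness of $\Vgt$. Your route instead uses
\[
\tfrac{d}{dt}\ln(p_\star/p_m)=p_\star\advgt(\ystar;\theta_t)-p_m\advgt(\ymed;\theta_t)\le(2\Deltaone+\Deltatwo)\,p_\star\le(2\Deltaone+\Deltatwo)\,p_\star/p_m .
\]
So $p_\star/p_m$ needs time $\Omega(1/\gamma)$ to grow from $O(\gamma)$ to a constant. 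This requires only $p_\star(t_\gamma)=O(\gamma)$ and yields a different constant than the stated one. It still rests entirely on the missing Phase 1 estimate.
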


A proof sketch is provided after the statement of \zcref{thm:attraction_to_mediocre_outputs_beneficial_error}; see \zcref{app:proofs:attraction_to_mediocre_outputs} for the full proof.

\smallskip

\begin{remark}
\label{remark:relation_to_reward_variance}
Recently, \cite{razin2024vanishing,razin2025what} identified that if the reward function $r$ used for training induces low reward variance for the initial policy, \ie, if $\var_{y \sim \pi_{\theta_0}} \brk[s]{ r (y) } \approx 0$, then policy gradient suffers from slow optimization (see Theorem 1 in \cite{razin2025what}).
It is therefore natural to wonder whether $\proxyreward$ facilitates faster optimization than $\gtreward$ in \zcref{thm:attraction_to_mediocre_outputs_beneficial_error_formal} due to inducing a higher initial reward variance.
Interestingly, the opposite is true: the initial reward variance is actually higher under $\gtreward$.
The slow optimization under $\gtreward$ is not caused by low initial reward variance, but rather by a collapse in reward variance during training, as the policy becomes concentrated on the mediocre output $\ymed$.
Although $\proxyreward$ initially induces lower reward variance than $\gtreward$, it guides the policy directly towards $\ystar$, avoiding this intermediate variance collapse.
\end{remark}

\medskip

\textbf{Formal statement behind Harmful Error II.} 
As a corollary of \zcref{thm:attraction_to_mediocre_outputs_beneficial_error_formal}, by swapping the proxy and ground truth reward functions, we obtain that assigning mediocre proxy reward to an output with low ground truth reward can be harmful due to impeding optimization, even if it does not induce reward hacking.
That is, \zcref{cor:attraction_to_mediocre_outputs_harmful_error} formalizes \hyperref[err:harmful_2]{\harmful{Harmful Error~II}} from \zcref{sec:categorization:harmful}.

\begin{corollary}
\label{cor:attraction_to_mediocre_outputs_harmful_error}
Consider the setting and assumptions of \zcref{thm:attraction_to_mediocre_outputs_beneficial_error_formal}, with \zcref{assum:reward_structure_orthonormal,assum:initial_probs_orthonormal} interpreted with $\proxyreward$ in place of $\gtreward$.
Furthermore, let $\gtreward$ be identical to $\proxyreward$, except that it assigns $\ymed$ a low reward $\gtreward (\ymed) = \min\nolimits_{y \in \Ybad} \proxyreward (y)$.
For any $\epsilon \in (0, \Deltaone)$, where $\Deltaone := \proxyreward (\ystar) - \proxyreward (\ymed)$, the following hold.
\begin{itemize}[leftmargin=6mm]
    \item \textbf{Case I: Maximizing $\proxyreward$.}
    If gradient flow is used to maximize the expected reward with respect to $\proxyreward$ (\zcref{eq:gf}), then the initial time at which $\Vgt (\theta_t) \geq \gtreward( \ystar ) - \epsilon$, denoted $\teps$, is lower bounded as follows:
    \[
        \teps \geq \frac{2\sqrt{2}M^{14/13}\Delta_1 \proxyreward(\ymed)(1 - e^{-\sqrt{2}(\Delta_1 - \epsilon)})}{(\Delta_1 + \Delta_2)(1 + 8\Delta_1)} \cdot \pi_{\theta_0} (\ystar)^{- 14 / 13} = \Omega \brk*{ \pi_{\theta_0} (\ystar)^{- 14 / 13} }
        \text{\,,}
    \]
    where $\Deltatwo := \proxyreward (\ymed) - \proxyreward (\ybad)$ for some $\ybad \in \Ybad$ and $M$ is as defined in \zcref{assum:initial_probs_orthonormal}.
    
    \item \textbf{Case II: Maximizing $\gtreward$.}    
    In contrast, denote by $\tepsnomed$ the initial time at which $\Vgt (\theta_t) \geq \gtreward( \ystar ) - \epsilon$ when gradient flow is used to maximize the expected reward with respect to $\gtreward$ (\zcref{eq:gf} with $\Vgt$ in place of $\Vproxy$).
    Then, $\tepsnomed$ is upper bounded by a quantity that grows asymptotically slower as $\pi_{\theta_0} (\ystar) \to 0$ compared to the lower bound on $\teps$ in Case I:
    \[
        \tepsnomed \leq \frac{ \brk*{ \gtreward (\ystar) + 1 }^2 }{\epsilon^2 \brk*{\gtreward (\ystar) - \Vgt (\theta_0)}  } \cdot \pi_{\theta_0} (\ystar)^{-1} = \OO \brk*{ \pi_{\theta_0} (\ystar)^{-1} }
        \text{\,,}
    \]
    Furthermore, at any time $t$ greater than or equal to the right-hand side in the inequality above, we have $\pi_{\theta_t} (\ystar) \geq 1 - \epsilon / (\Deltaone+\Deltatwo)$.
\end{itemize}
Thus, for any arbitrarily large $T \geq 0$, if $\pi_{\theta_0} (\ystar)$ is sufficiently small, then $\teps - \tepsnomed \geq T$.
\end{corollary}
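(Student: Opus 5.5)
The plan is to obtain the corollary as a direct relabeling of \zcref{thm:attraction_to_mediocre_outputs_beneficial_error_formal}, without redoing any trajectory analysis. The key observation is that the theorem's two cases depend only on two things. First, the reward function that drives the gradient flow. Second, the reward against which the stopping condition $\Vgt(\theta_t) \geq \gtreward(\ystar) - \epsilon$ is measured. I would therefore write the theorem abstractly in terms of two reward functions, a ``trained'' reward $r$ and an ``evaluation'' reward $r'$, and check which role each function plays in the corollary.

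\textbf{Step 1 (Case I).} In the corollary, $\proxyreward$ satisfies \zcref{assum:reward_structure_orthonormal,assum:initial_probs_orthonormal}, so $\proxyreward$ plays the role of the theorem's $\gtreward$. Gradient flow on $\Vproxy$ is then exactly the dynamics of Case I of the theorem, with $\Deltaone,\Deltatwo,M$ defined from $\proxyreward$. The theorem's Case I lower bound, however, is stated for the hitting time of $\Vproxy(\theta_t) \geq \proxyreward(\ystar) - \epsilon$, whereas the corollary measures time with $\gtreward$. I need a comparison between the two. Since $\gtreward \leq \proxyreward$ pointwise and $\gtreward(\ystar) = \proxyreward(\ystar)$, we have $\Vgt(\theta_t) \leq \Vproxy(\theta_t)$ for every $t$. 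Hence $\Vgt(\theta_t) \geq \gtreward(\ystar) - \epsilon$ implies $\Vproxy(\theta_t) \geq \proxyreward(\ystar) - \epsilon$. The $\gtreward$-hitting time therefore cannot precede the $\proxyreward$-hitting time, and the theorem's lower bound transfers verbatim with $\proxyreward(\ymed)$ in place of $\gtreward(\ymed)$.

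\textbf{Step 2 (Case II).} When gradient flow maximizes $\gtreward$, the training reward assigns $\ymed$ the value $\min_{y \in \Ybad} \proxyreward(y)$ and agrees with $\proxyreward$ elsewhere. This is precisely the proxy of the theorem's Case II, built from the theorem's ground truth (here $\proxyreward$). So the trajectory is identical to the theorem's Case II trajectory. The theorem's second claim then gives $\pi_{\theta_t}(\ystar) \geq 1 - \epsilon/(\Deltaone+\Deltatwo)$ for $t$ beyond the stated bound, with $\proxyreward(\ystar) - \Vproxy(\theta_0)$ in the denominator renamed to the trained reward's advantage $\gtreward(\ystar) - \Vgt(\theta_0)$, and $\gtreward(\ystar) = \proxyreward(\ystar)$ in the numerator.

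It remains to convert this probability bound into the $\gtreward$-criterion. Every output other than $\ystar$ has $\gtreward$ at least $\gtreward(\ystar) - (\Deltaone+\Deltatwo)$. Indeed, $\gtreward(\ymed) = \gtreward(\ybad) = \proxyreward(\ystar) - \Deltaone - \Deltatwo$, using equal rewards on $\Ybad$. Therefore $\Vgt(\theta_t) \geq \gtreward(\ystar) - (1 - \pi_{\theta_t}(\ystar))(\Deltaone+\Deltatwo) \geq \gtreward(\ystar) - \epsilon$. This yields the upper bound on $\tepsnomed$.

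\textbf{Step 3.} Compare the two rates: the lower bound grows like $\pi_{\theta_0}(\ystar)^{-14/13}$ and the upper bound like $\pi_{\theta_0}(\ystar)^{-1}$, so their difference is unbounded as $\pi_{\theta_0}(\ystar) \to 0$. I need the remaining constants to be controlled as $\pi_{\theta_0}(\ystar)$ shrinks. For $\Vgt(\theta_0)$ this holds because $\Vgt(\theta_0) \leq \Vproxy(\theta_0) < \proxyreward(\ymed)$ (the latter by \zcref{lem:mediocre_output_reward_larger_than_initial_expected_reward}), so the denominator stays at least $\Deltaone$. I expect the main obstacle to be Step 1: making sure the theorem's Case I lower bound really concerns the hitting time for the trained reward, or else re-checking that its proof (stalling near $\ymed$ with $\pi_{\theta_t}(\ystar)$ small) controls $\Vgt$ as well. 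The pointwise domination $\gtreward \leq \proxyreward$ should settle this either way.
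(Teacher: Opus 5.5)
Your proposal is correct and matches the paper's proof. You apply the detailed theorem with the roles of $\gtreward$ and $\proxyreward$ swapped, transfer the Case I lower bound using $\Vgt \le \Vproxy$ together with $\gtreward(\ystar) = \proxyreward(\ystar)$, and turn the Case II guarantee $\pi_{\theta_t}(\ystar) \ge 1 - \epsilon/(\Deltaone+\Deltatwo)$ into $\Vgt(\theta_t) \ge \gtreward(\ystar) - \epsilon$ because every $\gtreward$ value lies in $[\gtreward(\ybad), \gtreward(\ystar)]$ and $\Deltaone + \Deltatwo = \gtreward(\ystar) - \gtreward(\ybad)$. Your Step 3 check that $\gtreward(\ystar) - \Vgt(\theta_0) \ge \Deltaone$ stays bounded away from zero is a small addition the paper leaves implicit, and it does not change the argument.
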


\begin{proof}
The result follows by applying \zcref{thm:attraction_to_mediocre_outputs_beneficial_error_formal} while swapping the roles of $\gtreward$ and $\proxyreward$.
For Case I (\ie, when maximizing $\proxyreward$), \zcref{thm:attraction_to_mediocre_outputs_beneficial_error_formal} yields a lower bound on the time until $\Vproxy(\theta_t) \geq \proxyreward(\ystar)-\epsilon$.
Since $\gtreward(\ystar) = \proxyreward(\ystar)$ and $\Vgt(\theta) \leq \Vproxy(\theta)$ for all $\theta$ (recall, $\gtreward$ is identical to $\proxyreward$, except that it assigns a lower reward to $\ymed$), this immediately implies that the same lower bound holds for $\teps$---the time until $\Vgt(\theta_t) \geq \gtreward(\ystar)-\epsilon$.
For Case II (\ie, when maximizing $\gtreward$), \zcref{thm:attraction_to_mediocre_outputs_beneficial_error_formal} implies that $\pi_{\theta_t}(\ystar)\geq 1 - \epsilon / (\Deltaone + \Deltatwo)$ at any time $t$ greater than or equal to the desired upper bound on $\tepsnomed$.
Hence, because ground truth rewards lie within the interval $[\gtreward (\ybad), \gtreward (\ystar)]$ and $\Deltaone + \Deltatwo = \gtreward (\ystar) - \gtreward (\ybad)$, where $\ybad \in \Ybad$, at any such time $\Vgt(\theta_t) \geq \gtreward(\ystar)-\epsilon$.
This establishes the upper bound on $\tepsnomed$.
\end{proof}

\subsection{Influence of Feature Similarity}
\label{app:formal_statements:feature_similarity}

We now analyze how the similarity (or dissimilarity) of output features affects the extent to which mediocre outputs impede optimization.
Namely, under conditions analogous to those of \zcref{thm:attraction_to_mediocre_outputs_beneficial_error_formal}, we characterize how the conclusion of \zcref{thm:attraction_to_mediocre_outputs_beneficial_error_formal} changes when $\phi (\ystar)$ and $\phi (\ymed)$ are not orthogonal (recall that $\ystar$ is the output with maximal ground truth reward and $\ymed$ is an output with mediocre ground truth reward).
The analysis reveals that when $\inprod{\phi (\ystar) }{ \phi (\ymed) } < 0$, the mediocre output $\ymed$ delays optimization even further and can prevent the policy from ever reaching the optimal output $\ystar$.
Conversely, when $\inprod{\phi (\ystar) }{ \phi (\ymed) } > 0$, attraction to $\ymed$ does not necessarily impede optimization and, if $\phi (\ystar)$ and $\phi (\ymed)$ are extremely similar, assigning $\ymed$ a mediocre proxy reward may be required for enabling convergence to $\ystar$.
\zcref{app:formal_statements:feature_similarity:neg_in_prod,app:formal_statements:feature_similarity:pos_in_prod} deliver the formal results for the cases of negative and positive inner products, respectively.

\subsubsection{Negative Inner Product Between $\phi (\ystar)$ and $\phi (\ymed)$}
\label{app:formal_statements:feature_similarity:neg_in_prod}

We consider the ground truth reward function $\gtreward$ described in \zcref{assum:reward_structure_orthonormal} (\ie, the ground truth from the orthonormal features setting considered in \zcref{thm:attraction_to_mediocre_outputs_beneficial_error_formal}) and make the following \zcref{assum:initial_probs_neg_inner_product} on the initial policy probabilities.
\zcref{assum:initial_probs_neg_inner_product} is identical to \zcref{assum:initial_probs_orthonormal} up to constants that depend on the norms of feature vectors.
Similar to \zcref{assum:initial_probs_orthonormal}, \zcref{assum:initial_probs_neg_inner_product} guarantees that $\gtreward (\ymed) > \Vgt (\theta_0)$ (see \zcref{lem:mediocre_output_reward_larger_than_initial_expected_reward_neg_inner_product}) and that $\pi_{\theta_0} (\ymed)$ is substantially higher than $\pi_{\theta_0} (\ystar)$.

\begin{assumption}[Initial policy]
\label{assum:initial_probs_neg_inner_product}
The initial policy $\pi_{\theta_0}$ satisfies the following conditions.
\begin{itemize}[leftmargin=6mm]
    \item The initial probability of $\ystar$ is not too high: 
    \[
        \pi_{\theta_0} (\ystar) < \min \brk[c]3{ M' \! \cdot  \min \brk[c]3{ 0.01, \bigg(\frac{21 \norm{ \phi(\ymed) }^2}{1100B^2(\Delta_1+\Delta_2)}\bigg)^{7/3} \! }^{13/14}  , \frac{ \norm{ \phi(\ymed) } }{ \norm{\phi(\ystar)} } \pi_{\theta_0}(\ymed) }
        \text{\,,}
    \] 
    where $\Deltaone := \gtreward (\ystar) - \gtreward (\ymed)$, $\Deltatwo := \gtreward (\ymed) - \gtreward (\ybad)$ for some $\ybad \in \Ybad$, $B := \max\nolimits_{y \in \Y} \norm{\phi (y)}$, and $M' := \min \brk[c]1{ 1 ,  \norm{ \phi(\ymed) }^2 \gtreward (\ymed)^{2/7} ( 40B^2 (\Delta_1+\Delta_2) )^{-1} }$.
    
    \item The initial probability of $\Ybad$ is neither too low nor too high:
    \[
        \max \brk[c]*{ \frac{2\pi_{\theta_0}(\ystar)^{7/13}}{\Deltatwo M'^{7/13}}, 0.05 \cdot \gtreward (\ymed) } \leq \pi_{\theta_0} (\Ybad) \leq 0.1 \cdot \gtreward (\ymed)
        \text{\,.}
    \]
\end{itemize}
\end{assumption}

We focus here on the case of $\inprod{\phi (\ystar)}{\phi (\ymed)} < 0$.
To isolate the effect of $\inprod{\phi (\ystar)}{\phi (\ymed)}$ on the optimization dynamics from interactions between other feature vectors, we require that, for any output in $\Ybad$, its feature vector is orthogonal to the feature vectors of all other outputs.
We also make the mild assumption that $\norm{\phi (\ymed)} \geq \norm{ \phi (\ybad) }$ for all $\ybad \in \Ybad$.

\begin{assumption}[Output feature vectors]
\label{assum:feature_structure_neg_inn_prod}
The feature vectors of $\ystar$ and $\ymed$ have negative inner product, \ie, $\inprod{\phi (\ystar)}{\phi (\ymed)} < 0$, and $\inprod{ \phi (\ybad) }{ \phi (y) } = 0$ for all $\ybad \in \Ybad, y \in \Y \setminus \{ \ybad \}$.
Furthermore, $\norm{\phi (\ymed)} \geq \norm{ \phi (\ybad) }$ for all $\ybad \in \Ybad$.
\end{assumption}

If we maximize $\gtreward$ directly, $\ymed$ attracts probability at the expense of $\ystar$ even more strongly than in the case of orthonormal feature vectors.
The reason for this stronger attraction is that, as long as the advantage $\advgt (\ymed ; \theta_t) := \gtreward (\ymed) - \Vgt (\theta_t)$ is positive, $\inprod{\phi(\ystar)}{\phi(\ymed)}$ pushes the logit of $\ystar$ downwards (see \zcref{prop:logit_dynamics}).
As \zcref{thm:attraction_to_mediocre_outputs_beneficial_error_neg_inner_product} shows, this yields a larger separation between $\teps$---the time needed to achieve high ground truth reward when maximizing $\gtreward$ directly---and the corresponding time $\tepsnomed$ when maximizing a proxy reward function $\proxyreward$ that assigns low reward to $\ymed$.
In particular, the lower bound in \zcref{thm:attraction_to_mediocre_outputs_beneficial_error_neg_inner_product} on $\teps$ grows asymptotically faster as $\pi_{\theta_0}(\ystar) \to 0$ compared to the lower bound in \zcref{thm:attraction_to_mediocre_outputs_beneficial_error_formal}.
On the other hand, the upper bound on $\tepsnomed$ can remain roughly the same since $\advproxy (\ymed ; \theta_t) < 0$ throughout optimization when $\ymed$ is assigned a low proxy reward, in which case $\inprod{\phi(\ystar)}{\phi(\ymed)}$ only pushes the logit of $\ystar$ further upwards.

\begin{theorem}
\label{thm:attraction_to_mediocre_outputs_beneficial_error_neg_inner_product}
Suppose that the ground truth reward function $\gtreward$, initial policy parameters $\theta_0$, and output feature vectors $\{ \phi (y) \}_{y \in \Y}$ uphold \zcref{assum:reward_structure_orthonormal,assum:initial_probs_neg_inner_product,assum:feature_structure_neg_inn_prod}. 
Let $\proxyreward$ be a proxy reward function identical to $\gtreward$, except that it assigns $\ymed$ a low reward $\proxyreward (\ymed) = \min\nolimits_{y \in \Ybad} \gtreward (y)$.
For any $\epsilon \in (0, \Deltaone)$, where $\Deltaone := \gtreward (\ystar) - \gtreward (\ymed)$, the following hold.
\begin{itemize}[leftmargin=6mm]
    \item \textbf{Case I: Maximizing $\gtreward$.}
    If gradient flow is used to maximize the expected reward with respect to $\gtreward$ (\zcref{eq:gf} with $\Vgt$ in place of $\Vproxy$), then the initial time at which $\Vgt (\theta_t) \geq \gtreward( \ystar ) - \epsilon$, denoted $\teps$, is lower bounded as follows:
    \[
        \teps \geq \frac{\gtreward(\ymed)M'^{14/13 + \alpha} (1 - e^{-2(\Delta_1- \epsilon)})}{4B^2(\Delta_1 + \Delta_2) K} \cdot \pi_{\theta_0} (\ystar)^{- 14 / 13 - \alpha } = \Omega \brk*{ \pi_{\theta_0} (\ystar)^{- 14 / 13 - \alpha } }
        \text{\,,}
    \]
    where $\Deltatwo := \gtreward (\ymed) - \gtreward (\ybad)$ for some $\ybad \in \Ybad$.
    The constants $B$ and $M'$ are as defined in \zcref{assum:initial_probs_neg_inner_product}, $\alpha := - \inprod{\phi (\ystar) }{ \phi (\ymed) } /  \big ( 26 \brk{ \norm{ \phi (\ymed) }^2 - 0.5 \inprod{\phi (\ystar) }{ \phi (\ymed) } } \big )  \in \brk*{ 0 , 1 / 13 }$, and 
    \[
        K := \brk*{\frac{26}{\gtreward(\ymed)^2} }^{13\alpha/7} \brk*{1 + \frac{\|\phi(\ymed)\|^2\pi_{\theta_0}(\ymed)^2}{8(\|\phi(\ystar)\|^2-\inprod{\phi(\ystar)}{\phi(\ymed)})\advgt(\ystar;\theta_0)}} 
        \text{\,.}
    \]
    
    \item \textbf{Case II: Maximizing $\proxyreward$.}    
    In contrast, denote by $\tepsnomed$ the initial time at which $\Vgt (\theta_t) \geq \gtreward( \ystar ) - \epsilon$ when gradient flow is used to maximize the expected reward with respect to $\proxyreward$.
    Then, $\tepsnomed$ is upper bounded by a quantity that grows asymptotically slower as $\pi_{\theta_0} (\ystar) \to 0$ compared to the lower bound on $\teps$ in Case I:
    \[
        \tepsnomed \leq \frac{ \brk*{ \gtreward (\ystar) + 1 }^2 }{\epsilon^2  \brk1{ \proxyreward (\ystar) - \Vproxy (\theta_0)  } \norm{ \phi(\ystar) }^2 } \cdot \pi_{\theta_0} (\ystar)^{-1} = \OO \brk*{ \pi_{\theta_0} (\ystar)^{-1} }
        \text{\,,}
    \]
\end{itemize}
Thus, for any arbitrarily large $T > 0$, if $\pi_{\theta_0} (\ystar)$ is sufficiently small, then $\teps - \tepsnomed \geq T$.
\end{theorem}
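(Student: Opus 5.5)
The proof adapts that of \zcref{thm:attraction_to_mediocre_outputs_beneficial_error_formal}. We work with the logit dynamics of \zcref{prop:logit_dynamics}. Under \zcref{assum:feature_structure_neg_inn_prod}, every $\ybad \in \Ybad$ is orthogonal to all other outputs, so only $\ystar$ and $\ymed$ interact. Writing $c := \inprod{\phi(\ystar)}{\phi(\ymed)} < 0$, the dynamics become
\[
\tfrac{d}{dt}\inprod{\phi(\ystar)}{\theta_t} = \pi_{\theta_t}(\ystar)\advgt(\ystar;\theta_t)\norm{\phi(\ystar)}^2 + c\,\pi_{\theta_t}(\ymed)\advgt(\ymed;\theta_t)\text{\,,}
\]
with the analogous equation for $\ymed$, while each bad logit evolves as $\pi_{\theta_t}(\ybad)\advgt(\ybad;\theta_t)\norm{\phi(\ybad)}^2$.

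\textbf{Case II.} This case is the simpler one. Under $\proxyreward$, every output other than $\ystar$ has negative advantage for all $t$, because $\Vproxy(\theta_t) \geq \Vproxy(\theta_0) > \min\gtreward$ is non-decreasing. The cross term $c\,\pi(\ymed)\advproxy(\ymed)$ is therefore nonnegative and only pushes the logit of $\ystar$ upward. Hence $\frac{d}{dt}\inprod{\phi(\ystar)}{\theta_t} \geq \pi_{\theta_t}(\ystar)(\proxyreward(\ystar)-\Vproxy(\theta_0))\norm{\phi(\ystar)}^2$, and all other logits are non-increasing. This makes $\pi_{\theta_t}(\ystar)$ monotone, so we can reuse the Case II argument of \zcref{thm:attraction_to_mediocre_outputs_beneficial_error_formal}. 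That argument integrates a differential inequality for $1/\pi(\ystar)$, or alternatively integrates $\frac{d}{dt}\Vproxy = \norm{\nabla \Vproxy}^2$ and lower bounds the gradient norm by its $\phi(\ystar)$-component. The only change is the extra factor $\norm{\phi(\ystar)}^2$, which yields the stated $\OO(\pi_{\theta_0}(\ystar)^{-1})$ bound. Finally, $\pi(\ystar) \geq 1-\epsilon/(\Deltaone+\Deltatwo)$ gives $\Vgt \geq \gtreward(\ystar)-\epsilon$.

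\textbf{Case I.} This case has three phases.

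\emph{Phase 1.} Over an initial interval, the advantages of $\ystar$ and $\ymed$ are positive and those of $\Ybad$ are negative (\zcref{lem:mediocre_output_reward_larger_than_initial_expected_reward_neg_inner_product}). The bad outputs lose mass, so $\Vgt$ rises and $\pi(\ymed)$ grows. To compare the growth of the two logits, we track the ratio $\pi(\ystar)/\pi(\ymed)$, whose log-derivative is the difference of the two logit velocities. The term $c\,\pi(\ymed)\advgt(\ymed)$ enters the $\ystar$ logit negatively, and $c\,\pi(\ystar)\advgt(\ystar)$ enters the $\ymed$ logit. Since $\pi(\ymed)\gg\pi(\ystar)$, the dominant contribution to the ratio's log-derivative is approximately $-(\norm{\phi(\ymed)}^2 - c)\,\pi(\ymed)\advgt(\ymed)$, whereas the $\ystar$ logit itself moves at roughly $c\,\pi(\ymed)\advgt(\ymed)$.

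To extract the extra exponent $\alpha$, we compare these two rates through the common time-change $\int \pi(\ymed)\advgt(\ymed)\,dt$. The $\ystar$ logit decreases at a fraction $|c|/(\norm{\phi(\ymed)}^2 - c/2)$ of the rate at which the $\ymed$ logit increases, up to the bookkeeping constant $1/26$ that is inherited from the $13/14$ and $7/13$ exponents of the orthonormal proof. The $\ymed$ logit increases by roughly $\log(1/\pi_{\theta_0}(\ystar))$ times a constant before $\pi(\ymed) \geq 1-\Theta(\pi_{\theta_0}(\ystar)^{14/13})$. Over this period the $\ystar$ logit therefore drops by an additional $\alpha\log(1/\pi_{\theta_0}(\ystar))$ (times constants absorbed into $K$), which gives $\pi(\ystar) \lesssim \pi_{\theta_0}(\ystar)^{14/13+\alpha}$ at the point of near-concentration on $\ymed$. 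The second constraint in \zcref{assum:initial_probs_neg_inner_product}, namely $\pi_{\theta_0}(\ystar) < \frac{\norm{\phi(\ymed)}}{\norm{\phi(\ystar)}}\pi_{\theta_0}(\ymed)$, together with the factor $K$ (which involves $\advgt(\ystar;\theta_0)$), is used to keep $\ystar$ from overtaking $\ymed$ early in this phase.

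\emph{Phase 2.} Near the concentration point, both the advantages $\advgt(\ymed)$ and $\pi(\ystar)$ are tiny. The gradient norm is then bounded by $O(B^2(\Deltaone+\Deltatwo)\cdot(\pi(\ystar)+\pi(\Ybad)))$, so moving $\Vgt$ from about $\gtreward(\ymed)$ to $\gtreward(\ystar)-\epsilon$ requires $\pi(\ystar)$ to climb to a constant. We bound the rate of this climb by $\frac{d}{dt}\log\pi(\ystar) \leq O(B^2\Deltaone)$ with an escape-type argument as before. The argument yields the factor $1-e^{-2(\Deltaone-\epsilon)}$ and the time lower bound $\Omega(\pi_{\theta_0}(\ystar)^{-14/13-\alpha})$. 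Note that with $c<0$, once $\advgt(\ymed)$ turns negative, the cross term helps $\ystar$, so the bound must not depend on its sign. We handle this by bounding all terms by $B^2$.

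\emph{Main obstacle.} The hardest step is Phase 1. There we must make the ratio comparison rigorous while the advantages change and the cross terms couple the two logits, and we must verify that the claimed $\alpha < 1/13$ accounting actually closes against the thresholds inherited from the orthonormal proof.
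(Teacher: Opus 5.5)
\textbf{Case~I has a genuine gap in the escape phase, and it also changes what Phase~1 has to deliver.}

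A bound of the form $\frac{d}{dt}\log\pi_{\theta_t}(\ystar)\le O(B^2\Deltaone)$ only says that $\pi(\ystar)$ grows at most exponentially at a constant rate. Such a bound forces a climb time from $\pi_{\theta_0}(\ystar)^{14/13+\alpha}$ to a constant of order $\log(1/\pi_{\theta_0}(\ystar))$, not $\pi_{\theta_0}(\ystar)^{-14/13-\alpha}$. It also does not produce the factor $1-e^{-2(\Deltaone-\epsilon)}$.

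What you need is that the velocity itself vanishes near concentration. Combine $\|\nabla\Vgt(\theta)\|\le 2B(\Deltaone+\Deltatwo)(1-\pi_\theta(\ymed))$ with $\|\phi(y)-\EE_{z\sim\pi_\theta}[\phi(z)]\|\le 2B$. This gives the Riccati-type inequality $\frac{d}{dt}(1-\pi_{\theta_t}(\ymed))\le 4B^2(\Deltaone+\Deltatwo)(1-\pi_{\theta_t}(\ymed))^2$. Integrate it, bound $\|\theta_T-\theta_{t_1}\|$ by $\int\|\nabla\Vgt\|$, and use that $\Vgt$ is $B$-Lipschitz. The result is $\teps-t_1\ge\frac{1-e^{-2(\Deltaone-\epsilon)}}{4B^2(\Deltaone+\Deltatwo)(1-\pi_{\theta_{t_1}}(\ymed))}$.

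So the quantity that must be $O(\pi_{\theta_0}(\ystar)^{14/13+\alpha})$ when escape starts is $1-\pi(\ymed)$, not just $\pi(\ystar)$. Your Phase~1 stops when $1-\pi(\ymed)=\Theta(\pi_{\theta_0}(\ystar)^{14/13})$. At that point $\pi(\Ybad)$ is still of that order, so this route only recovers the orthonormal exponent $14/13$.

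The paper instead continues the attraction phase further. Set $s:=\inprod{\phi(\ystar)}{\phi(\ymed)}$, $h(s):=\frac{\|\phi(\ymed)\|^2-s}{7(\|\phi(\ymed)\|^2-s/2)}=\frac17+\frac{13\alpha}{7}$, $\gamma:=(\pi_{\theta_0}(\ystar)/M')^{14/13}$, and $C:=(26/\gtreward(\ymed)^2)^{13\alpha/7}$. The paper runs up to $t_\gamma$, the first time $\Vgt\ge\gtreward(\ymed)-C\gamma^{13/14+h(s)/2}$. It shows that $\pi(\ystar)$ keeps decreasing on $[t_{\sqrt\gamma},t_\gamma]$, because $\advgt(\ymed;\theta_t)\ge C\gamma^{13/14+h(s)/2}$ still dominates $\pi_{\theta_{t_{\sqrt\gamma}}}(\ystar)\advgt(\ystar;\theta_0)$ up to constants. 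Reading off $\Vgt(\theta_{t_\gamma})$ then gives $1-\pi_{\theta_{t_\gamma}}(\ymed)\le K\gamma^{13/14+h(s)/2}/\gtreward(\ymed)$. This is where $K$ enters; it plays no role in keeping $\ystar$ from overtaking $\ymed$.

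\textbf{Phase~1 is left as a heuristic, and the logit time-change is not how $\alpha$ arises.} The paper works directly with probabilities and needs three ingredients.
\begin{itemize}
\item A conditional monotonicity lemma: if $\Vgt<\gtreward(\ymed)$ and $\pi(\ystar)$ is decreasing, then $\pi(\ymed)$ is increasing. This uses $s^2\le\|\phi(\ystar)\|^2\|\phi(\ymed)\|^2$.
\item A proof that $\pi(\Ybad)$ decreases. This is exactly where $\pi_{\theta_0}(\ystar)<\frac{\|\phi(\ymed)\|}{\|\phi(\ystar)\|}\pi_{\theta_0}(\ymed)$ is used, to make $\|\phi(\ymed)\|^2\pi(\ymed)+s\pi(\ystar)>0$.
\item The two bounds $\frac{d}{dt}\pi(\ystar)\le-0.3(\|\phi(\ymed)\|^2-s)\pi(\ystar)\pi(\ymed)^2\advgt(\ymed)$ and $\frac{d}{dt}\pi(\ymed)\le 2.1(\|\phi(\ymed)\|^2-s/2)\pi(\ymed)^2(1-\pi(\ymed))\advgt(\ymed)$.
\end{itemize}
The ratio of the last two bounds gives $\pi_{\theta_t}(\ystar)\le\pi_{\theta_0}(\ystar)\big(\frac{1-\pi_{\theta_t}(\ymed)}{1-\pi_{\theta_0}(\ymed)}\big)^{h(s)}$. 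At $t_{\sqrt\gamma}$ this yields $\pi(\ystar)\lesssim\gamma^{13/14}\gamma^{h(s)/2}=\gamma^{1+13\alpha/14}\propto\pi_{\theta_0}(\ystar)^{14/13+\alpha}$. That is the real origin of $\alpha$ and of the constant $26$.

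\textbf{Case~II has a smaller slip.} Since $\Vproxy(\theta_t)\ge\Vproxy(\theta_0)$, you have $\advproxy(\ystar;\theta_t)\le\advproxy(\ystar;\theta_0)$, so your lower bound on the logit velocity is backwards. Before the threshold is reached you must use $\advproxy(\ystar;\theta_t)\ge(1-\rho)\advproxy(\ystar;\theta_0)$ with $1-\rho=\epsilon/(\gtreward(\ystar)+1)$. Together with $1-\pi(\ystar)\ge1-\rho$, this is what produces the $\epsilon^2$ in the bound.
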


\begin{proof}[Proof sketch (full proof in \zcref{app:proofs:attraction_to_mediocre_outputs_beneficial_error_neg_inner_product})]
The proof follows a line similar to that of \zcref{thm:attraction_to_mediocre_outputs_beneficial_error_formal}.
The key difference is that, due to the negative inner product between $\phi (\ystar)$ and $\phi (\ymed)$, the logit dynamics of $\ystar$ now contains an additional term that depends on $\inprod{\phi (\ystar)}{\phi (\ymed)}$.
Specifically, when maximizing $\gtreward$ directly (Case I), the logit dynamics of $\ystar$ is given by (\cf~\zcref{prop:logit_dynamics}):
\[
    \frac{d}{dt} \inprod{\phi (\ystar)}{\theta_t} = \pi_{\theta_t} (\ystar) \advgt (\ystar ; \theta_t) \cdot \norm*{ \phi(\ystar) }^2  + \pi_{\theta_t} (\ymed) \advgt (\ymed ; \theta_t) \cdot \inprod{ \phi(\ystar) }{ \phi(\ymed) }
    \text{\,.}
\]
Thus, as long as $\advgt (\ymed ; \theta_t) > 0$, the second term on the right-hand side is negative and pushes the probability of $\ystar$ further downwards compared to the case of orthonormal feature vectors.
We show that this implies that the policy will become even more concentrated on $\ymed$, and so will take a longer time to escape the vicinity of $\ymed$.
On the other hand, when maximizing $\proxyreward$ (Case II), the advantage $\advproxy (\ymed; \theta_t)$ is negative throughout optimization since $\proxyreward$ assigns $\ymed$ a low reward.
Hence, the contribution of $\ymed$ to the logit dynamics of $\ystar$ can only push the logit of $\ystar$ further upwards compared to the orthonormal feature vectors case.
This straightforwardly allows deriving an upper bound on $\tepsnomed$ that is roughly the same as the one in \zcref{thm:attraction_to_mediocre_outputs_beneficial_error_formal}.
\end{proof}

\medskip

\textbf{When $\inprod{\phi (\ystar)}{\phi (\ymed)} < 0$ the policy may fail to ever assign high probability to $\ystar$.}
\zcref{thm:attraction_to_mediocre_outputs_beneficial_error_neg_inner_product} indicates that the lower the inner product between $\phi (\ystar)$ and $\phi (\ymed)$ is, the longer the policy stalls in the vicinity of the mediocre output $\ymed$ when maximizing $\gtreward$.
One may wonder if the policy always escapes the attraction of $\ymed$ and reaches $\ystar$ as optimization progresses.
We show that the answer is no.
For certain initial policies and output feature vectors, \zcref{prop:neg_inner_product_fail} proves that the policy never achieves a ground truth reward above $\gtreward (\ymed)$, and thus never assigns high probability to $\ystar$.
Notably, this failure occurs despite the existence of policy parameters that assign $\ystar$ a probability arbitrarily close to one.
\zcref{prop:neg_inner_product_fail} is obtained by adapting Proposition~13 from \cite{lin2025rethinking} to our setting.

\begin{proposition}
\label{prop:neg_inner_product_fail}
Suppose that the ground truth reward function $\gtreward$ upholds \zcref{assum:reward_structure_orthonormal}.
Furthermore, let $\ybad \in \Ybad$, define $\Deltaone := \gtreward(\ystar) - \gtreward(\ymed)$ and $\Deltatwo := \gtreward(\ymed) - \gtreward(\ybad)$, and assume that the initial policy parameters $\theta_0$ and output feature vectors $\{ \phi (y) \}_{y \in \Y}$ satisfy the following conditions, alongside \zcref{assum:feature_structure_neg_inn_prod}.
\begin{itemize}[leftmargin=6mm]
    \item $\pi_{\theta_0} (\ymed) > \pi_{\theta_0} (y)$ for all $y \in \Y \setminus \{ \ymed \}$;

    \item $\pi_{\theta_0} (\ystar) \leq \frac{ \Deltatwo \pi_{\theta_0} (\Ybad) }{ \Deltaone }$;
    
    \item and $\theta_0 = C \cdot \brk*{ \phi (\ybad) - \phi (\ystar) }$ for some $C > \max \brk[c]1{ 0, - \ln (\zeta) / \norm{ \phi (\ybad) - \phi (\ystar) }^2 }$, where 
    \[
        \zeta := \frac{\ln \pi_{\theta_0}(\ymed) - \ln \pi_{\theta_0}(\ybad)}{\ln \pi_{\theta_0}(\ymed) - \ln \pi_{\theta_0}(\ystar)} \cdot \frac{\Vgt(\theta_0) - \gtreward(\ybad)}{\gtreward(\ystar)-\Vgt(\theta_0)} > 0
        \text{\,.}
    \]
\end{itemize}
If gradient flow is used to maximize the expected reward with respect to $\gtreward$ (\zcref{eq:gf} with $\Vgt$ in place of $\Vproxy$), then $\Vgt (\theta_t) < \gtreward (\ymed)$ for all $t \geq 0$.
\end{proposition}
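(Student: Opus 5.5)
The plan is a barrier (invariant set) argument in the spirit of Proposition~13 of \cite{lin2025rethinking}. First I would rewrite the target event in terms of probabilities. Since all outputs in $\Ybad$ share the same ground truth reward, we have $\Vgt(\theta) - \gtreward(\ymed) = \pi_\theta(\ystar)\Deltaone - \pi_\theta(\Ybad)\Deltatwo$. Hence $\Vgt(\theta_t) < \gtreward(\ymed)$ is equivalent to
\[
\frac{\pi_{\theta_t}(\ystar)}{\pi_{\theta_t}(\Ybad)} < \frac{\Deltatwo}{\Deltaone}
\text{\,.}
\]
The second bullet gives this at $t=0$, with a non-strict inequality that I would first make strict using the slack from the choice of $C$. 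I would then suppose, for contradiction, that there is a first time $\tau > 0$ at which $\Vgt(\theta_\tau) = \gtreward(\ymed)$. On $[0,\tau)$ we have $\advgt(\ymed;\theta_t) > 0$ and $\advgt(\ybad;\theta_t) < 0$ for every $\ybad \in \Ybad$.

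Next I would track log-probability gaps rather than probabilities, using \zcref{prop:logit_dynamics} together with \zcref{assum:feature_structure_neg_inn_prod}. The relevant gaps are $u_t := \ln\pi_{\theta_t}(\ymed) - \ln\pi_{\theta_t}(\ybad)$ and $w_t := \ln\pi_{\theta_t}(\ymed) - \ln\pi_{\theta_t}(\ystar)$. By orthogonality of the $\Ybad$ features, all outputs in $\Ybad$ stay symmetric whenever they start symmetric. The initialization $\theta_0 = C(\phi(\ybad)-\phi(\ystar))$ controls the starting logits: it makes the logit of $\ystar$ equal to $-C\norm{\phi(\ystar)}^2$ and that of $\ymed$ equal to $-C\inprod{\phi(\ystar)}{\phi(\ymed)} > 0$. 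The derivatives are then explicit:
\begin{itemize}
\item $\frac{d}{dt}$ of the logit of $\ystar$ is $\pi(\ystar)\advgt(\ystar)\norm{\phi(\ystar)}^2 + \pi(\ymed)\advgt(\ymed)\inprod{\phi(\ystar)}{\phi(\ymed)}$;
\item that of $\ymed$ is $\pi(\ymed)\advgt(\ymed)\norm{\phi(\ymed)}^2 + \pi(\ystar)\advgt(\ystar)\inprod{\phi(\ystar)}{\phi(\ymed)}$;
\item that of $\ybad$ is $\pi(\ybad)\advgt(\ybad)\norm{\phi(\ybad)}^2$.
\end{itemize}
On $[0,\tau)$ the cross term in the $\ystar$ equation is negative, and the $\ybad$ logit decreases.

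The core step is to show that on $[0,\tau)$ the ratio $\pi(\ystar)/\pi(\Ybad)$ cannot climb to $\Deltatwo/\Deltaone$. I would compare the growth of $w_t$ with that of $u_t$, aiming for a differential inequality $\dot w_t \geq c\,\dot u_t$ with a suitable constant $c$. Integrating from $0$ gives a lower bound on $w_\tau$, so $\pi(\ystar)$ remains exponentially smaller than $\pi(\ymed)$ relative to $\pi(\ybad)$. Plugging this into the identity for $\Vgt - \gtreward(\ymed)$ should give strict negativity at $\tau$, which is the contradiction. Two hypotheses enter here. The first bullet ($\ymed$ most probable) and $\norm{\phi(\ymed)} \geq \norm{\phi(\ybad)}$ ensure that $\ymed$'s self-term dominates both $\ybad$'s term and the positive contribution of $\ystar$. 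The factor $\zeta$ is exactly the ratio of initial log-gaps times the ratio of reward gaps $(\Vgt(\theta_0)-\gtreward(\ybad))/(\gtreward(\ystar)-\Vgt(\theta_0))$. So I expect the condition $C > -\ln(\zeta)/\norm{\phi(\ybad)-\phi(\ystar)}^2$, i.e.\ $e^{-C\norm{\phi(\ybad)-\phi(\ystar)}^2} < \zeta$, to be precisely what keeps the integrated inequality on the right side of the barrier.

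The main obstacle is this comparison step. The positive self-term $\pi(\ystar)\advgt(\ystar)\norm{\phi(\ystar)}^2$ must be controlled against the negative cross term, and both depend on $\Vgt(\theta_t)$, which moves over time. My first attempt would bound everything in terms of $\Vgt(\theta_t)$, using $\Vgt(\theta_t) \in [\Vgt(\theta_0), \gtreward(\ymed))$ by monotonicity, and then check that the worst case occurs at $t=0$. If that fails, I would follow \cite{lin2025rethinking} more literally. There one shows that $\theta_t$ stays in a cone spanned by $\phi(\ybad)-\phi(\ystar)$ and $\phi(\ymed)$, and verifies that the vector field points inward on the boundary face $\{\Vgt = \gtreward(\ymed)\}$.
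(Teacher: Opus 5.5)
Your core comparison step is left as an expectation, and as you set it up it cannot be completed. The set you try to keep the trajectory in, $\{\Vgt < \gtreward(\ymed)\}$ (equivalently $\pi(\ystar)/\pi(\Ybad) < \Deltatwo/\Deltaone$), is not self-sustaining. On the face $\Vgt = \gtreward(\ymed)$ the $\ymed$ term drops out of the logit dynamics, and
\[
\frac{d}{dt}\ln\frac{\pi_{\theta_t}(\ystar)}{\pi_{\theta_t}(\ybad)} = \pi_{\theta_t}(\ystar)\advgt(\ystar;\theta_t)\norm{\phi(\ystar)}^2 - \pi_{\theta_t}(\ybad)\advgt(\ybad;\theta_t)\norm{\phi(\ybad)}^2 > 0 .
\]
By continuity, $\pi(\ystar)/\pi(\ybad)$ therefore increases near that face, inside your set. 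So knowing only $\Vgt(\theta_t) \in [\Vgt(\theta_0), \gtreward(\ymed))$ cannot give $\dot w_t \ge \dot u_t$. With $c \neq 1$, integrating $\dot w_t \ge c\,\dot u_t$ bounds $u_\tau - w_\tau = \ln(\pi_{\theta_\tau}(\ystar)/\pi_{\theta_\tau}(\ybad))$ only up to a term $(1-c)u_\tau$ that you do not control. Your fallback fails for a more basic reason: under gradient flow $\frac{d}{dt}\Vgt(\theta_t) = \norm{\nabla\Vgt(\theta_t)}^2 \ge 0$, so the vector field never points into a sublevel set of $\Vgt$. Staying below $\gtreward(\ymed)$ has to come from a strictly smaller invariant set. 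Separately, your symmetry reduction is false here: under $\theta_0$, $\ybad$ has logit $C\norm{\phi(\ybad)}^2$ while every other $z \in \Ybad$ has logit $0$.

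The missing idea is to bootstrap on $\{\pi_{\theta_t}(\ystar)/\pi_{\theta_t}(\ybad) < \zeta\}$ for the single distinguished $\ybad$. First use $\sum_y \pi_{\theta_t}(y)\advgt(y;\theta_t) = 0$ to replace each $\phi(y)$ in $\nabla\Vgt$ by $\phi(y) - \phi(\ymed)$. This eliminates the $\ymed$ term and gives
\[
\frac{d}{dt}\ln\frac{\pi_{\theta_t}(\ystar)}{\pi_{\theta_t}(\ybad)} \le a\,\pi_{\theta_t}(\ystar)\advgt(\ystar;\theta_t) + b\,\pi_{\theta_t}(\ybad)\advgt(\ybad;\theta_t) ,
\]
where $a := \inprod{\phi(\ystar)-\phi(\ybad)}{\phi(\ystar)-\phi(\ymed)}$ and $b := \inprod{\phi(\ystar)-\phi(\ybad)}{\phi(\ybad)-\phi(\ymed)}$. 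The dropped terms for $z \in \Ybad\setminus\{\ybad\}$ are each a positive factor $-\inprod{\phi(\ystar)}{\phi(\ymed)}$ times a negative advantage, so they are nonpositive.

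Because $\theta_0 = C(\phi(\ybad)-\phi(\ystar))$, you have $\ln\pi_{\theta_0}(\ymed)-\ln\pi_{\theta_0}(\ybad) = Cb$ and $\ln\pi_{\theta_0}(\ymed)-\ln\pi_{\theta_0}(\ystar) = Ca$. So the log-gap factor in $\zeta$ is the fixed geometric constant $b/a$; it is not produced by integration. Likewise, the first bullet is what makes $b > 0$; it is not about the $\ymed$ self-term dominating. Since $a - b = \norm{\phi(\ystar)-\phi(\ybad)}^2 > 0$, the second bullet then gives $\zeta \le \Deltatwo/\Deltaone$.

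Finally, $(\gtreward(\ystar)-\Vgt(\theta_t))/(\Vgt(\theta_t)-\gtreward(\ybad))$ is non-increasing along the flow. Hence the right-hand side above is negative whenever the ratio is below $\zeta$. The ratio therefore decreases from its initial value $e^{-C\norm{\phi(\ybad)-\phi(\ystar)}^2} < \zeta$. The claim then follows from $\gtreward(\ystar) - \Vgt(\theta_t) \ge \Deltaone(1-\pi_{\theta_t}(\ystar)) + \Deltatwo\pi_{\theta_t}(\ybad) > \Deltaone$.
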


\begin{proof}[Proof sketch (full proof in \zcref{app:proofs:neg_inner_product_fail})]
At initialization, the policy favors $\ybad$ over $\ystar$ in the sense that $\pi_{\theta_0} (\ybad) > \pi_{\theta_0} (\ystar)$ and $\pi_{\theta_0} (\ystar) / \pi_{\theta_0} (\ybad) < \zeta$.
The proof follows by showing that the suppressive effect of $\ymed$ on $\ystar$, which stems from $\ymed$ having a positive initial advantage and the inner product of $\phi (\ystar)$ and $\phi (\ymed)$ being negative, is strong enough to ensure that the policy maintains this relationship between $\ybad$ and $\ystar$ throughout optimization.
In other words, the policy never assigns substantially more probability to $\ystar$ than to $\ybad$.
This in turn implies that the expected ground truth reward remains below $\gtreward (\ymed)$ for all $t \geq 0$.
\end{proof}

\begin{remark}
Consider the policy parameters $\theta (\beta) = \beta \cdot \phi (\ystar)$, for $\beta \in \R$.
Under the conditions of \zcref{prop:neg_inner_product_fail} on $\{ \phi (y) \}_{y \in \Y}$, taking $\beta \to \infty$ implies $\pi_{\theta (\beta)} (\ystar) \to 1$ since $\inprod{\phi (\ystar)}{\phi (\ymed)} < 0$ and $\inprod{\phi (\ystar)}{\phi (y)} = 0$ for all $y \in \Ybad$.
Thus, as mentioned in the text preceding \zcref{prop:neg_inner_product_fail}, this result shows that even though there exist policy parameters that assign $\ystar$ a probability arbitrarily close to one, policy gradient fails to reach them.
\end{remark}

\begin{remark}
For completeness, we provide an example of output feature vectors satisfying \zcref{assum:feature_structure_neg_inn_prod} for which the conditions in \zcref{prop:neg_inner_product_fail} hold.
 Consider for simplicity a case with only three outputs, $\ystar$, $\ymed$, and $\ybad$. 
 Let
\[
\phi(\ystar)=\ebf_1 \quad , \quad
\phi(\ymed)= -\rho \cdot \ebf_1 + \sqrt{1-\rho^2} \cdot \ebf_2 \quad , \quad
\phi(\ybad)=\lambda \cdot \ebf_3
\text{\,,}
\]
for $0 < \lambda^2 < \rho < 1$ and orthonormal vectors $\ebf_1, \ebf_2, \ebf_3 \in \R^D$.
In this case, the inner products between the feature vectors are given by:
\[
\langle \phi(\ystar),\phi(\ymed)\rangle=-\rho<0 \quad , \quad
\langle \phi(\ybad),\phi(\ymed)\rangle
=
\langle \phi(\ybad),\phi(\ystar)\rangle
=0
\text{\,.}
\]
Thus, taking $\theta_0 = C \cdot \brk*{ \phi (\ybad) - \phi (\ystar) }$ for some $C > 0$, the initial logits assigned to $\ystar$, $\ymed$, and $\ybad$ by $\theta_0$ are:
\[
\inprod{\phi (\ystar)}{ \theta_0 }  = -C \quad , \quad
\inprod{\phi (\ymed) }{ \theta_0} = C\rho \quad , \quad
\inprod{\phi (\ybad) }{ \theta_0 } = C \lambda^2
\text{\,.}
\]
This implies that $\pi_{\theta_0}(\ymed)>\pi_{\theta_0}(\ybad)>\pi_{\theta_0}(\ystar)$.
In particular, $\ymed$ has the highest initial probability, as required by the first condition in \zcref{prop:neg_inner_product_fail}.
For the second and third conditions, notice that:
\[
\zeta := \frac{\ln \pi_{\theta_0}(\ymed) - \ln \pi_{\theta_0}(\ybad)}{\ln \pi_{\theta_0}(\ymed) - \ln \pi_{\theta_0}(\ystar)} \cdot \frac{\Vgt(\theta_0) - \gtreward(\ybad)}{\gtreward(\ystar)-\Vgt(\theta_0)} =
\frac{\rho-\lambda^2}{\rho+1}\cdot
\frac{\Vgt(\theta_0)-\gtreward(\ybad)}{\gtreward(\ystar)-\Vgt(\theta_0)}
> 0
\]
and
\[
\frac{\pi_{\theta_0}(\ystar)}{\pi_{\theta_0}(\ybad)}
=
\exp \brk*{ -C (1 + \lambda^2) }
\text{\,.}
\]
Hence, for any $C > \max \brk[c]1{ 0, - \ln (\zeta) / \norm{ \phi (\ybad) - \phi (\ystar) }^2, \ln(\Delta_1/\Delta_2) }$ we get that:
\[
\pi_{\theta_0}(\ystar) \leq \frac{\Deltatwo \pi_{\theta_0}(\ybad)}{\Deltaone}
\text{\,,}
\]
meaning all conditions in \zcref{prop:neg_inner_product_fail} are satisfied.
\end{remark}

\subsubsection{Positive Inner Product Between $\phi (\ystar)$ and $\phi (\ymed)$}
\label{app:formal_statements:feature_similarity:pos_in_prod}

Moving to the case where $\inprod{\phi (\ystar)}{\phi (\ymed)} > 0$, we again consider the ground truth reward function described in \zcref{assum:reward_structure_orthonormal} and impose the same assumptions on the initial policy as in the negative inner product case (\ie, \zcref{assum:initial_probs_neg_inner_product}).
Furthermore, we make the following assumption on the structure of output features, analogous to \zcref{assum:feature_structure_neg_inn_prod}.

\begin{assumption}[Output feature vectors]
\label{assum:feature_structure_pos_inn_prod}
The feature vectors of $\ystar$ and $\ymed$ have positive inner product, \ie, $\inprod{\phi (\ystar)}{\phi (\ymed)} > 0$, and $\inprod{ \phi (\ybad) }{ \phi (y) } = 0$ for all $\ybad \in \Ybad, y \in \Y \setminus \{ \ybad \}$. 
\end{assumption}

Suppose that we maximize $\gtreward$ directly.
In contrast to the case of $\inprod{\phi (\ystar)}{\phi (\ymed)} < 0$, when the advantage $\advgt (\ymed ; \theta_t) = \gtreward (\ymed) - \Vgt (\theta_t)$ is positive, the mediocre output $\ymed$ contributes a positive term $\pi_{\theta_t} (\ymed) \advgt (\ymed ; \theta_t) \cdot \inprod{ \phi (\ystar) }{ \phi (\ymed) }$ to the logit dynamics of $\ystar$ (see \zcref{prop:logit_dynamics}).
Intuitively, this implies that the attraction to $\ymed$ need not impede optimization: even if probability mass initially moves towards $\ymed$, this movement can simultaneously facilitate the growth of $\pi_{\theta_t}(\ystar)$.
However, once $\Vgt (\theta_t)$ exceeds $\gtreward(\ymed)$, the advantage of $\ymed$ becomes negative.
As a result, the contribution of $\ymed$ to the logit of $\ystar$ reverses sign and starts pushing it downwards.
This creates an intricate dynamics, where to enable the growth of $\pi_{\theta_t} (\ystar)$ towards one, the contribution of $\ystar$ to its own logit dynamics, \smash{$\pi_{\theta_t} (\ystar) \advgt (\ystar ; \theta_t) \cdot \norm{\phi (\ystar)}^2$}, needs to overcome the negative contribution of $\ymed$.

To enable a clean characterization, we focus on a regime where \smash{$\norm{\phi (\ystar)}^2 > \inprod{\phi (\ystar)}{\phi (\ymed)}$} and \smash{$\inprod{\phi (\ystar)}{\phi (\ymed)} > \norm{\phi (\ymed)}^2$}, leaving treatment of other regimes to future work.
As established in \zcref{thm:attraction_to_mediocre_outputs_beneficial_error_pos_inner_product}, under this condition the mediocre output $\ymed$ does not harm optimization.
Specifically, the time needed to achieve high ground truth reward when maximizing $\gtreward$ directly has the same asymptotic dependence on $\pi_{\theta_0}(\ystar)$ as the upper bounds in \zcref{thm:attraction_to_mediocre_outputs_beneficial_error_formal,thm:attraction_to_mediocre_outputs_beneficial_error_neg_inner_product}, which apply to cases where $\ymed$ is assigned low proxy reward.
Furthermore, if $\phi (\ystar)$ and $\phi (\ymed)$ are highly similar, in the sense that their inner product is above a certain threshold, then assigning $\ymed$ a mediocre reward can be necessary for optimization to succeed.
In such cases, we prove that if one maximizes a proxy reward function $\proxyreward$ that assigns low reward to $\ymed$, the policy fails to achieve near-optimal ground truth reward, and thus never assigns high probability to~$\ystar$.

\begin{remark}
\zcref{thm:attraction_to_mediocre_outputs_beneficial_error_pos_inner_product} implies that, when the feature vectors of $\ystar$ and $\ymed$ are highly similar, the type of reward error caused by assigning $\ymed$ a low proxy reward changes from beneficial to harmful.
By the same reasoning, in the case where $\ymed$ has low ground truth reward, assigning it a mediocre proxy reward is beneficial rather than harmful.
\end{remark}

\begin{theorem}
\label{thm:attraction_to_mediocre_outputs_beneficial_error_pos_inner_product}
Suppose that the ground truth reward function $\gtreward$, initial policy parameters $\theta_0$, and output feature vectors $\{ \phi (y) \}_{y \in \Y}$ uphold \zcref{assum:reward_structure_orthonormal,assum:initial_probs_neg_inner_product,assum:feature_structure_pos_inn_prod}.
Furthermore, assume that \smash{$ \norm{\phi (\ymed)}^2 < \inprod{\phi (\ystar)}{\phi (\ymed)} < \norm{\phi (\ystar)}^2 < 100 \norm{ \phi (\ymed) }^2$}, and let $\proxyreward$ be a proxy reward function identical to $\gtreward$, except that it assigns $\ymed$ a low reward $\proxyreward (\ymed) \le \min\nolimits_{y \in \Ybad} \gtreward (y)$.
For any $\epsilon \in (0, \Deltaone)$, where $\Deltaone := \gtreward (\ystar) - \gtreward (\ymed)$, the following hold.
\begin{itemize}[leftmargin=6mm]
    \item \textbf{Case I: Maximizing $\gtreward$.}
    If gradient flow is used to maximize the expected reward with respect to $\gtreward$ (\zcref{eq:gf} with $\Vgt$ in place of $\Vproxy$), then the initial time at which $\Vgt (\theta_t) \geq \gtreward( \ystar ) - \epsilon$, denoted $\teps$, is upper bounded as follows:
    \[
        \teps \leq \frac{ \brk*{ \gtreward (\ystar) + 1 }^2 }{\epsilon^2  \Delta_1 \norm{ \phi (\ystar) - \phi(\ymed) }^2 } \cdot \pi_{\theta_0} (\ystar)^{-1} = \OO \brk*{ \pi_{\theta_0} (\ystar)^{-1} }
        \text{\,.}
    \]

    \item \textbf{Case II: Maximizing $\proxyreward$.}    
    In contrast, if gradient flow is used to maximize the expected reward with respect to $\proxyreward$ and $\inprod{ \phi (\ystar) }{ \phi (\ymed) }$ is sufficiently high, in the sense that
    \[
        \inprod{ \phi (\ystar) }{ \phi (\ymed) } > \frac{ \pi_{\theta_0} (\ystar) \advproxy (\ystar ; \theta_0) \norm{ \phi (\ystar) }^2 - \pi_{\theta_0} (\ymed) \advproxy (\ymed ; \theta_0) \norm{ \phi (\ymed )}^2 }{ \pi_{\theta_0} (\ystar) \advproxy (\ystar ; \theta_0) - \pi_{\theta_0} (\ymed) \advproxy (\ymed ; \theta_0) }
        \text{\,,}
    \]
    then $\Vgt (\theta_t) < \gtreward (\ystar) - \brk{ \gtreward (\ystar) - \gtreward (\ymed) } \pi_{\theta_0} (\ymed)$ for all $t \geq 0$.
\end{itemize}
\end{theorem}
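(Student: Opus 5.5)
The proof splits into two arguments, one per case. Both use the logit dynamics of \zcref{prop:logit_dynamics}. Write $c := \inprod{\phi(\ystar)}{\phi(\ymed)}$, $a_t := \pi_{\theta_t}(\ystar) A(\ystar;\theta_t)$ and $b_t := \pi_{\theta_t}(\ymed) A(\ymed;\theta_t)$, where $A$ is the advantage under whichever reward is being maximized. Under \zcref{assum:feature_structure_pos_inn_prod} the outputs in $\Ybad$ are orthogonal to everything else, so only the pair $(\ystar,\ymed)$ interacts:
\[
\tfrac{d}{dt}\inprod{\phi(\ystar)-\phi(\ymed)}{\theta_t} = a_t\brk{\norm{\phi(\ystar)}^2 - c} - b_t\brk{\norm{\phi(\ymed)}^2 - c} = a_t\brk{\norm{\phi(\ystar)}^2-c} + b_t\brk{c-\norm{\phi(\ymed)}^2} .
\]
Under $\norm{\phi(\ymed)}^2 < c < \norm{\phi(\ystar)}^2$ both coefficients $\norm{\phi(\ystar)}^2 - c$ and $c - \norm{\phi(\ymed)}^2$ are positive. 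Hence the sign of this derivative is governed by how $a_t$ compares with $-b_t$. The whole proof amounts to controlling this sign.

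\textbf{Case I ($\gtreward$).} Here $a_t > 0$ always, since $\ystar$ is optimal and $\Vgt(\theta_t) < \gtreward(\ystar)$. While $\Vgt(\theta_t) \le \gtreward(\ymed)$ we also have $b_t \geq 0$, and then both $\ystar$ and $\ymed$ gain relative to $\Ybad$.

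\emph{Step 1.} I will show that $\pi_{\theta_t}(\ystar) \geq \pi_{\theta_0}(\ystar)$ for all $t$, i.e., the attraction to $\ymed$ never suppresses $\ystar$. To do so I compare the logit of $\ystar$ with those of every other output:
\begin{itemize}[leftmargin=6mm]
\item Against $\Ybad$, the gap grows at rate $a_t\norm{\phi(\ystar)}^2 + b_t c - \pi(\ybad)A(\ybad)\norm{\phi(\ybad)}^2$. This is positive when $b_t \ge 0$. When $b_t < 0$, I will use $c < \norm{\phi(\ystar)}^2$ together with $\sum_y \pi_{\theta_t}(y) A(y;\theta_t) = 0$, which gives $a_t + b_t = -\pi(\Ybad)A(\Ybad) \geq 0$, so $|b_t| \le a_t$.
\item Against $\ymed$, the display above is nonnegative once $|b_t| \le a_t$, since $\norm{\phi(\ystar)}^2 - c > c - \norm{\phi(\ymed)}^2$ is not needed: I will use $a_t \ge |b_t|$ combined with the two coefficient bounds. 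If this fails, I will fall back on monotonicity of the logit gap alone.
\end{itemize}

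\emph{Step 2.} I lower bound the gradient norm by projecting onto $v := \phi(\ystar) - \phi(\ymed)$:
\[
\norm{\nabla\Vgt(\theta_t)} \geq \frac{\inprod{\nabla\Vgt(\theta_t)}{v}}{\norm{v}} = \frac{a_t(\norm{\phi(\ystar)}^2-c) + b_t(c - \norm{\phi(\ymed)}^2)}{\norm{v}} .
\]
I then need to show the numerator is at least of order $\pi_{\theta_t}(\ystar)\,\Deltaone\,\norm{v}^2$ whenever $\Vgt(\theta_t) < \gtreward(\ystar) - \epsilon$. For this I rewrite $A(\ymed) = A(\ystar) - \Deltaone$, use $A(\ystar) \ge \epsilon$, and use $\norm{v}^2 = (\norm{\phi(\ystar)}^2 - c) + (\norm{\phi(\ymed)}^2 - c)$.

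\emph{Step 3.} I integrate $\frac{d}{dt}\Vgt = \norm{\nabla\Vgt}^2$, in the same way as the Case II argument of \zcref{thm:attraction_to_mediocre_outputs_beneficial_error_formal}. The range of $\Vgt$ is at most $\gtreward(\ystar)+1$, and combining this with Step 1 should yield the stated $(\gtreward(\ystar)+1)^2/(\epsilon^2\Deltaone\norm{v}^2) \cdot \pi_{\theta_0}(\ystar)^{-1}$ bound.

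\textbf{Case II ($\proxyreward$).} Now $b_t < 0$ for all $t$. A direct rearrangement shows that the hypothesis on $c$ is exactly the statement that $\frac{d}{dt}\inprod{\phi(\ymed)-\phi(\ystar)}{\theta_t} > 0$ at $t = 0$.

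\emph{Step 4.} I will prove that this sign persists for all $t$, by a first-exit-time argument. Suppose $t_1$ is the first time the derivative vanishes. On $[0,t_1]$ the ratio $\pi_{\theta_t}(\ymed)/\pi_{\theta_t}(\ystar)$ has only grown, and the positive term $b_t$ has only become more negative. The vanishing condition, once its inequality is written as $c\,(a_t - b_t) > a_t \norm{\phi(\ystar)}^2 - b_t\norm{\phi(\ymed)}^2$, depends only on the ratio $b_t/a_t$. So I will show that $b_t/a_t$ moves in the direction that preserves the inequality, which gives a contradiction at $t_1$.

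\emph{Step 5.} Persistence gives $\pi_{\theta_t}(\ymed)/\pi_{\theta_t}(\ystar) \geq \pi_{\theta_0}(\ymed)/\pi_{\theta_0}(\ystar)$. Writing $\Vgt \le \gtreward(\ystar) - \Deltaone\pi(\ymed) - (\gtreward(\ystar) - \gtreward(\ybad))\pi(\Ybad)$ and optimizing over feasible probability vectors should give $\Vgt(\theta_t) < \gtreward(\ystar) - \Deltaone\pi_{\theta_0}(\ymed)$. If the ratio bound alone is too weak, I will additionally show $\pi_{\theta_t}(\ymed)$ cannot drop below $\pi_{\theta_0}(\ymed)$ while $\ystar$ gains.

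\textbf{Main obstacle.} The hardest step is the invariance argument in Case II: the sign of $a_t/b_t$ must be tracked while $\Vproxy(\theta_t)$ changes both advantages. I will attempt it by differentiating the ratio $b_t/a_t$ directly. The lower bound on the projected gradient in Step 2 is a secondary difficulty.
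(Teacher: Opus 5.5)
The genuine gap is in Case~I, Step~3: the integration you describe cannot give the $\OO(\pi_{\theta_0}(\ystar)^{-1})$ rate. Your projection bound is fine. In fact, your identity $a_t+b_t=-\sum_{z\in\Ybad}\pi_{\theta_t}(z)\advgt(z;\theta_t)\ge 0$ yields $\inprod{\nabla\Vgt(\theta_t)}{v}\ge a_t\norm{v}^2\ge\epsilon\,\pi_{\theta_t}(\ystar)\norm{v}^2$ for $t<\teps$, for either sign of $b_t$. This gives $\frac{d}{dt}\Vgt(\theta_t)=\norm{\nabla\Vgt(\theta_t)}^2\ge\epsilon^2\norm{v}^2\pi_{\theta_t}(\ystar)^2$. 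The only information about $\pi_{\theta_t}(\ystar)$ you then feed in is Step~1, namely $\pi_{\theta_t}(\ystar)\ge\pi_{\theta_0}(\ystar)$. Dividing the range $\gtreward(\ystar)+1$ by this minimal rate gives $\teps\le(\gtreward(\ystar)+1)\,\epsilon^{-2}\norm{v}^{-2}\,\pi_{\theta_0}(\ystar)^{-2}$. That is $\OO(\pi_{\theta_0}(\ystar)^{-2})$, which does not even separate this case from the orthonormal one, where $\teps=\Omega(\pi_{\theta_0}(\ystar)^{-14/13})$. So the whole point of Case~I is lost. Note also that the Case~II argument of \zcref{thm:attraction_to_mediocre_outputs_beneficial_error_formal}, which you cite as your template, never integrates $\Vgt$. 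It integrates a differential inequality for $\pi_{\theta_t}(\ystar)$ itself.

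The missing idea is to exploit the self-reinforcing growth of $\pi_{\theta_t}(\ystar)$. You need a Riccati-type inequality $\frac{d}{dt}\pi_{\theta_t}(\ystar)\ge\kappa\,\pi_{\theta_t}(\ystar)^2$ on $[0,\teps)$, which integrates to $\teps\le 1/(\kappa\,\pi_{\theta_0}(\ystar))$. The paper obtains $\kappa=(1-\rho)^2\Deltaone\norm{v}^2$, where $\rho:=(\gtreward(\ystar)+1-\epsilon)/(\gtreward(\ystar)+1)$, so that $\pi_{\theta_t}(\ystar)<\rho$ before $\teps$:
\begin{itemize}
\item expand $\frac{d}{dt}\pi_{\theta_t}(\ystar)$ via \zcref{lem:probability_derivative_expression} and drop the nonnegative $\Ybad$ terms;
\item insert $\advgt(\ystar;\theta_t)\ge(1-\pi_{\theta_t}(\ystar))\Deltaone$ and $\advgt(\ymed;\theta_t)\ge-\pi_{\theta_t}(\ystar)\Deltaone$;
\item minimize the resulting quadratic over $\pi_{\theta_t}(\ymed)\in[0,1-\pi_{\theta_t}(\ystar)]$.
\end{itemize}
This reproduces the stated constant exactly and needs no monotonicity, so your Step~1 is unnecessary. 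Your own ingredients would also work. Write $\phi(\ystar)-\bar\phi_{\theta_t}=\pi_{\theta_t}(\Ybad)\phi(\ystar)+\pi_{\theta_t}(\ymed)v-\sum_{z\in\Ybad}\pi_{\theta_t}(z)\phi(z)$ and use both $\inprod{\phi(\ystar)}{\nabla\Vgt(\theta_t)}\ge a_t\norm{v}^2$ and $\inprod{v}{\nabla\Vgt(\theta_t)}\ge a_t\norm{v}^2$. This gives $\frac{d}{dt}\pi_{\theta_t}(\ystar)\ge\pi_{\theta_t}(\ystar)^2(1-\pi_{\theta_t}(\ystar))\advgt(\ystar;\theta_t)\norm{v}^2$, which integrates to a bound implying the stated one.

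Two smaller remarks. First, in Step~1 the inequality $\norm{\phi(\ystar)}^2-c\ge c-\norm{\phi(\ymed)}^2$ \emph{is} needed when $b_t<0$; it holds because it is just $\norm{v}^2\ge0$. Second, your Case~II is correct and is essentially the paper's argument. The sign of $\frac{d}{dt}\inprod{v}{\theta_t}$ depends only on $b_t/a_t=-\frac{\pi_{\theta_t}(\ymed)}{\pi_{\theta_t}(\ystar)}\cdot\frac{\abs{\advproxy(\ymed;\theta_t)}}{\advproxy(\ystar;\theta_t)}$. Both factors are non-decreasing up to the first exit time, the second because $\Vproxy(\theta_t)$ is non-decreasing. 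In Step~5 the ratio bound alone suffices, since $\pi_{\theta_t}(\Ybad)>0$ makes the optimum strictly exceed $\Deltaone\pi_{\theta_0}(\ymed)/(\pi_{\theta_0}(\ymed)+\pi_{\theta_0}(\ystar))\ge\Deltaone\pi_{\theta_0}(\ymed)$. So the real difficulty is the rate in Case~I, not the invariance in Case~II.
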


\begin{proof}[Proof sketch (full proof in \zcref{app:proofs:attraction_to_mediocre_outputs_beneficial_error_pos_inner_product})]
When maximizing $\gtreward$ directly (Case I), the logit dynamics of $\ystar$ is given by (\cf~\zcref{prop:logit_dynamics}):
\[
    \frac{d}{dt} \inprod{\phi (\ystar)}{\theta_t} = \pi_{\theta_t} (\ystar) \advgt (\ystar ; \theta_t) \cdot \norm*{ \phi(\ystar) }^2  + \pi_{\theta_t} (\ymed) \advgt (\ymed ; \theta_t) \cdot \inprod{ \phi(\ystar) }{ \phi(\ymed) }
    \text{\,.}
\]
The advantage of the mediocre output $\ymed$ is initially positive, \ie, $\advgt (\ymed ; \theta_0) > 0$.
Thus, since $\inprod{ \phi (\ystar) }{ \phi (\ymed) } > 0$, whenever $\ymed$ attracts probability it also pushes the logit of $\ystar$ upwards.
In particular, the conditions on the output feature vectors ensure that $\pi_{\theta_t} (\ystar)$ is monotonically non-decreasing, as opposed to the orthonormal feature vectors case (\zcref{thm:attraction_to_mediocre_outputs_beneficial_error_formal}) where $\pi_{\theta_t} (\ystar)$ initially decreases due to the increase in the logit of $\ymed$.
As a result, the policy does not become highly concentrated on $\ymed$ and $\teps$ can be upper bounded by a quantity that has the same asymptotic dependence on $\pi_{\theta_0} (\ystar)$ as the upper bound in \zcref{thm:attraction_to_mediocre_outputs_beneficial_error_formal}, which applies to cases where $\ymed$ is assigned low proxy reward.
On the other hand, when maximizing $\proxyreward$ (Case II), the advantage of $\ymed$ is negative throughout optimization since $\proxyreward$ assigns $\ymed$ a low reward.
Hence, $\ymed$ contributes negatively to the logit dynamics of $\ystar$.
We show that this suppressive effect of $\ymed$ on $\ystar$ prevents the policy from assigning $\ystar$ a higher probability than $\ymed$.
Specifically, $\pi_{\theta_t} (\ystar) / \pi_{\theta_t} (\ymed)$ is monotonically non-increasing for all $t \geq 0$, and initially $\pi_{\theta_0} (\ystar) < \pi_{\theta_0} (\ymed)$.
This in turn implies that the expected ground truth reward is bounded away from $\gtreward( \ystar)$ even as $t \to \infty$.
\end{proof}

% For completeness, \zcref{lemma:pos_inner_product_fail_conditions_nonvacuous} establishes that the conditions for Case II of \zcref{thm:attraction_to_mediocre_outputs_beneficial_error_pos_inner_product} are non-vacuous.
% That is, there exist settings in which these conditions jointly hold.

% \begin{lemma}
% \label{lemma:pos_inner_product_fail_conditions_nonvacuous}
% Under the setting of \zcref{thm:attraction_to_mediocre_outputs_beneficial_error_pos_inner_product}, suppose that $D\ge |\Y|$ and $1+\gtreward(\ybad)>0.05\gtreward(\ymed)$ for some $\ybad\in\Ybad$.
% Then there exist feature vectors $\{ \phi (y) \}_{y \in \Y}$, initial policy parameters $\theta_0$, and a value for $\proxyreward (\ymed) \in [-1, 1]$ such that the conditions for Case II of \zcref{thm:attraction_to_mediocre_outputs_beneficial_error_pos_inner_product} hold.
% Specifically, \zcref{assum:initial_probs_neg_inner_product,assum:feature_structure_pos_inn_prod}, \smash{$ \norm{\phi (\ymed)}^2 < \inprod{\phi (\ystar)}{\phi (\ymed)} < \norm{\phi (\ystar)}^2 < 100 \norm{ \phi (\ymed) }^2$}, and the lower bound on $\inprod{ \phi (\ystar) }{ \phi (\ymed) }$ are all satisfied.
% \end{lemma}

% The proof of \zcref{lemma:pos_inner_product_fail_conditions_nonvacuous} is deferred to \zcref{app:proofs:pos_inner_product_fail_conditions_nonvacuous}.

\subsubsection{Empirical Demonstration}
\label{app:formal_statements:feature_similarity:empirical_demonstration}

\zcref{fig:loglin_exps_features_sim} empirically demonstrates how the similarity between output features affects the extent to which mediocre outputs impede optimization. Consistent with our theoretical analysis (\zcref{thm:attraction_to_mediocre_outputs_beneficial_error_formal,thm:attraction_to_mediocre_outputs_beneficial_error_neg_inner_product,thm:attraction_to_mediocre_outputs_beneficial_error_pos_inner_product}), when $\phi (\ystar)$ and $\phi (\ymed)$ have a negative inner product, attraction to $\ymed$ is more severe than in the orthogonal case, causing the policy to stall near $\ymed$ for longer. Conversely, when $\phi(\ystar)$ and $\phi(\ymed)$ have a positive inner product, the attraction to $\ymed$ is milder and does not obstruct the policy from assigning high probability to $\ystar$.

\begin{figure*}[t]
	\vspace{-4mm}
	\begin{center}
        \includegraphics[width=1\textwidth]{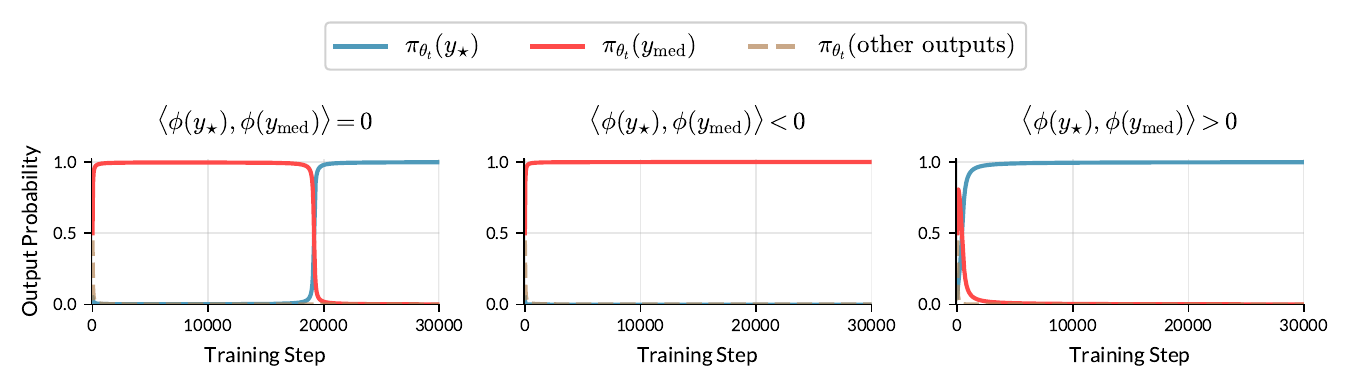}
	\end{center}
	\vspace{-2.5mm}
	\caption{
		\textbf{Feature similarity affects whether mediocre outputs impede policy gradient optimization.}
		Plotted is the evolution of output probabilities during policy gradient in settings corresponding to \zcref{thm:attraction_to_mediocre_outputs_beneficial_error_formal,thm:attraction_to_mediocre_outputs_beneficial_error_neg_inner_product,thm:attraction_to_mediocre_outputs_beneficial_error_pos_inner_product} (left to right plots).
        We train linear softmax policies using exact gradients of the expected ground truth reward $\gtreward$, which assigns a maximal reward of $1$ to $\ystar$, a mediocre reward of $0.8$ to $\ymed$, and a low reward of $-1$ to the remaining outputs.
        In all experiments, initial policy probabilities and feature vector norms are identical, with $\pi_{\theta_0} (\ystar) = 0.05$, $\pi_{\theta_0} (\ymed) = 0.5$, $\| \phi(\ystar) \| = 1.5$, and $\| \phi (\ymed) \| = 1$.
        The feature vectors of other outputs (\ie, not $\ystar$ and $\ymed$) have unit norm and are mutually orthogonal, as well as orthogonal to $\phi (\ystar)$ and $\phi (\ymed)$.
        What varies across the experiments is the value of \smash{$\inprod{\phi (\ystar)}{\phi (\ymed)}$}, which is equal to $0$ in the leftmost plot, \smash{$- 3 / 2\sqrt{2}$} in the middle plot, and \smash{$3 / 2 \sqrt{2}$} in the rightmost plot.
		In line with our theory, when $\phi (\ystar)$ and $\phi (\ymed)$ are orthogonal or have a negative inner product, the policy initially concentrates its probability mass on $\ymed$ and stagnates, with the attraction to $\ymed$ being stronger in the negative inner product case.
		By contrast, when $\phi (\ystar)$ and $\phi (\ymed)$ have a positive inner product, the attraction to $\ymed$ is milder and does not obstruct the policy from assigning high probability to $\ystar$.
		See \zcref{app:experiments:details} for additional implementation details.
	}
	\label{fig:loglin_exps_features_sim}
\end{figure*}

	% DEFERRED PROOFS
	\section{Deferred Proofs}
\label{app:proofs}

\subsection{Notation}

We use $\advgt(y;\theta) := \gtreward(y) - \Vgt(\theta)$ to denote the advantage of an output $y \in \Y$ under the ground truth reward function $\gtreward$ and the policy $\pi_\theta$.
Similarly, we let $\advproxy(y;\theta) := \proxyreward(y) - \Vproxy(\theta)$ denote the advantage of $y$ under $\proxyreward$ and $\pi_\theta$.
Furthermore, $\norm{\cdot}$ denotes the Euclidean norm and $\norm{\cdot}_1$ the $\ell_1$ norm.
For matrices, $\norm{\cdot}_2$ stands for the spectral norm.

\subsection{Proof of \zcref{prop:logit_dynamics}}
\label{app:proofs:logit_dynamics}

Fix an output $y \in \Y$.
By \zcref{eq:gf} and the chain rule we have that
\[
\frac{d}{dt} \inprod{\phi (y)}{\theta_t} 
=
\left\langle \phi(y), \tfrac{d}{dt}\theta_t \right\rangle
=
\langle \phi(y), \nabla \Vproxy(\theta_t)\rangle .
\]
By \zcref{lem:gradient_expression} we know that $\nabla \Vproxy(\theta) = \sum\nolimits_{z\in \Y}\pi_\theta(z)\advproxy(z;\theta) \cdot \phi(z)$.
Plugging this gradient expression into the logit dynamics above yields:
\[
\frac{d}{dt}\langle \phi(y),\theta_t\rangle
=
\left\langle
\phi(y),
\sum\nolimits_{z\in \Y}\pi_{\theta_t}(z)\advproxy(z;\theta_t)\cdot \phi(z)
\right\rangle
=
\sum\nolimits_{z\in \Y}\pi_{\theta_t}(z)\advproxy(z;\theta_t)\langle \phi(z),\phi(y)\rangle.
\]
Separating the term corresponding to $z=y$ from the rest concludes the proof:
\[
\frac{d}{dt}\langle \phi(y),\theta_t\rangle
=
\pi_{\theta_t}(y)\advproxy(y;\theta_t) \cdot \norm{ \phi (y)}^2
+
\sum\nolimits_{z\in \Y\setminus\{y\}}
\pi_{\theta_t}(z)\advproxy(z;\theta_t)\cdot \langle \phi(z),\phi(y)\rangle.
\]
\qed

\subsection{Proof of \zcref{prop:low_prob_outputs}}
\label{app:proofs:low_prob_outputs}

If $\Delta_\Z = 0$, then $\proxyreward = \gtreward$, and so trivially $\theta_t = \thetagt_t$ and $\TV \brk1{\pi_{\theta_t}, \pi_{\thetagt_t}} = 0$ for all $t \ge 0$.
Hence, in the remainder of the proof we assume $\Delta_\Z > 0$.
We begin by showing that $\pi_{\theta_T}(\Z)$ remains low until time $T$, regardless of which reward function is used for optimization.

\begin{lemma}
\label{lemma: prob_A_upper_loglin}
Suppose that gradient flow is used to maximize the expected reward with respect to $r: \Y \to [-1, 1]$, starting from $\theta_0$.
Then, for all $t \in [0, T]$ it holds that:
\begin{align*}
    \pi_{\theta_t}(\Z) \le \pi_{\theta_0}(\Z) \cdot \exp \brk*{4B^2T} \leq \frac{2}{\Delta_\Z \exp \brk{6 B^2 T}} \cdot \epsilon
    \text{\,.}
\end{align*}
\end{lemma}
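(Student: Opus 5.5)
We will prove the first inequality with a Grönwall-type argument on $\pi_{\theta_t}(\Z)$. The second inequality then follows from the assumption of \zcref{prop:low_prob_outputs}, $\pi_{\theta_0}(\Z) \leq \frac{2}{\Delta_\Z \exp(10B^2T)}\epsilon$, since multiplying by $\exp(4B^2T)$ gives exactly $\frac{2}{\Delta_\Z\exp(6B^2T)}\epsilon$.

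\textbf{Derivative of $\pi_{\theta_t}(z)$.} For any $z \in \Z$, the softmax structure gives
\[
\frac{d}{dt}\pi_{\theta_t}(z) = \pi_{\theta_t}(z)\Big(\frac{d}{dt}\inprod{\phi(z)}{\theta_t} - \sum\nolimits_{y\in\Y}\pi_{\theta_t}(y)\frac{d}{dt}\inprod{\phi(y)}{\theta_t}\Big).
\]
By the chain rule and \zcref{eq:gf}, $\frac{d}{dt}\inprod{\phi(y)}{\theta_t} = \inprod{\phi(y)}{\nabla V(\theta_t)}$. The gradient formula used in the proof of \zcref{prop:logit_dynamics} states $\nabla V(\theta) = \sum_{y}\pi_\theta(y)A(y;\theta)\phi(y)$, where $A$ is the advantage under $r$.

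\textbf{Bounding the gradient.} Since $r$ takes values in $[-1,1]$, every advantage satisfies $|A(y;\theta)| \le 2$. Each $\phi(y)$ has norm at most $B$ and the weights $\pi_\theta(y)$ sum to one, so $\norm{\nabla V(\theta)} \le 2B$. Cauchy–Schwarz then gives $|\frac{d}{dt}\inprod{\phi(y)}{\theta_t}| \le 2B^2$ for every $y$. Consequently the bracketed term above is bounded in absolute value by $2B^2 + 2B^2 = 4B^2$, since the second term is a convex combination of quantities bounded by $2B^2$.

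\textbf{Summing and integrating.} The previous step yields $\frac{d}{dt}\pi_{\theta_t}(z) \le 4B^2\pi_{\theta_t}(z)$ for each $z \in \Z$. Summing over $z\in\Z$ gives $\frac{d}{dt}\pi_{\theta_t}(\Z) \le 4B^2\pi_{\theta_t}(\Z)$. Grönwall's inequality (equivalently, differentiating $\log \pi_{\theta_t}(\Z)$, which is well defined because softmax probabilities are positive) yields $\pi_{\theta_t}(\Z)\le \pi_{\theta_0}(\Z)e^{4B^2t}\le \pi_{\theta_0}(\Z)e^{4B^2T}$ for all $t\in[0,T]$.

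No step here is a real obstacle. The only points needing care are obtaining the constant $4B^2$, which requires bounding the advantage by $2$ rather than $1$, and noting that gradient flow exists on all of $[0,T]$, which holds because $\nabla V$ is smooth and bounded. The argument uses no property of $r$ beyond its range, so it applies equally to $\proxyreward$ and $\gtreward$.
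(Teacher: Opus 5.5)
Your proof is correct and follows essentially the same route as the paper. Both bound each logit derivative by $2B^2$, deduce $\frac{d}{dt}\pi_{\theta_t}(z)\le 4B^2\pi_{\theta_t}(z)$, sum over $\Z$, apply Gr\"onwall, and then invoke the assumption on $\pi_{\theta_0}(\Z)$. The only cosmetic difference is the order of bounds: you first bound $\norm{\nabla V(\theta)}\le 2B$ and then apply Cauchy--Schwarz, whereas the paper applies Cauchy--Schwarz term by term inside the sum.
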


\begin{proof}
For simplicity of notation, let $f_y(\theta) := \inprod{ \phi (y) }{ \theta }$ denote the logit of $y \in \Y$ under the policy parameters $\theta$.
By \zcref{prop:logit_dynamics}, the fact that rewards are bounded within $[-1, 1]$, and the Cauchy-Schwarz inequality, we can upper bound $\frac{d}{dt} f_y (\theta_t)$, for any $y \in \Y$ and $t \in [0, T]$, as follows:
\be
\begin{split}
    \abs*{ \frac{d}{dt} f_y (\theta_t) } &= \abs*{ \inprod{ \phi(y) }{ \tfrac{d}{dt} \theta_t } } \\
    & = \abs*{ \sum\nolimits_{z \in \Y} \pi_{\theta_t}(z) (r (z) - V (\theta_t)) \langle \phi(y), \phi(z) \rangle } \\
    & \le 2\sum\nolimits_{z \in \Y} \pi_{\theta_t}(z)   \abs*{ \langle \phi(y), \phi(z) \rangle } \\
    & \le 2 \sum\nolimits_{z \in \Y} \pi_{\theta_t}(z) \norm{ \phi (y) } \norm{ \phi (z) }  \\
    & \le 2B^2 \sum\nolimits_{z \in \Y} \pi_{\theta_t}(z)  \\
    & = 2B^2 ,
\end{split}
\label{eq: upper_f}
\ee
where $V (\theta) := \EE\nolimits_{z \sim \pi_{\theta}} \brk[s]{ r(z) }$ is the expected reward with respect to $r$ and $\pi_{\theta}$, and recall that $B = \max_{y \in \Y} \norm{ \phi(y) }$.
This leads to an upper bound on $\frac{d}{dt} \pi_{\theta_t}(y)$:
\begin{align*}
    \frac{d}{dt} \pi_{\theta_t}(y) & = \pi_{\theta_t} (y) \frac{d}{dt} \ln \pi_{\theta_t} (y) \\
    & = \pi_{\theta_t}(y) \brk*{ \tfrac{d}{dt} f_y (\theta_t) - \tfrac{d}{dt} \ln \brk*{ \sum\nolimits_{u \in \Y} \exp \brk*{ f_u (\theta_t) } } } \\
    & = \pi_{\theta_t}(y) \brk*{ \tfrac{d}{dt} f_y (\theta_t) - \EE\nolimits_{u \sim \pi_{\theta_t}}\brk[s]*{ \tfrac{d}{dt} f_u(\theta_t) } } \\
    &\le \pi_{\theta_t}(y) \brk*{ \abs*{ \tfrac{d}{dt} f_y(\theta_t) } + \abs*{ \EE\nolimits_{u \sim \pi_{\theta_t}} \brk[s]*{ \tfrac{d}{dt} f_u(\theta_t) } } }\\
    &\le \pi_{\theta_t}(y) \brk*{ \abs*{ \tfrac{d}{dt} f_y(\theta_t) } + \EE\nolimits_{u \sim \pi_{\theta_t}} \brk[s]*{ \abs*{ \tfrac{d}{dt} f_u(\theta_t) } }  } \\
    &\le 4B^2\pi_{\theta_t}(y).
\end{align*}
Here, in the first and second inequalities we apply the triangle inequality, and the last inequality is from \zcref{eq: upper_f}.
Summing over all $y \in \Z$, we therefore have that for all $t \in [0, T]$:
\[
    \frac{d}{dt}\pi_{\theta_t}(\Z) \le 4B^2 \pi_{\theta_t}(\Z) .
\]
By Grönwall's inequality, this implies that for all $t \in [0, T]$:
\begin{align*}
    \pi_{\theta_t}(\Z) \le \pi_{\theta_0}(\Z) \cdot \exp \brk{4B^2 t} \leq \pi_{\theta_0}(\Z) \cdot \exp \brk{4B^2 T} \leq \frac{2}{\Delta_\Z \exp(6 B^2 T)} \cdot \epsilon ,
\end{align*}
where the last transition is by the assumption that $\pi_{\theta_0} (\Z) \leq \frac{2}{\Delta_\Z \exp \brk{10 B^2 T}} \cdot \epsilon$
\end{proof}

\medskip

With \zcref{lemma: prob_A_upper_loglin} in place, we turn to bound the deviation between $\theta_t$ and $\thetagt_t$ at time $t \in [0, T]$.
Examining their time derivatives, by the triangle inequality we get that
\[
    \norm*{ \tfrac{d}{dt} \thetagt_t -  \tfrac{d}{dt} \theta_t} = \norm{ \nabla \Vgt(\thetagt_t) - \nabla \Vproxy(\theta_t) } \le \underbrace{ \norm{ \nabla \Vgt(\thetagt_t) - \nabla \Vgt(\theta_t) } }_{I_1} + \underbrace{ \norm{ \nabla \Vgt(\theta_t) - \nabla \Vproxy(\theta_t) } }_{I_2} .
\]
For $I_1$, \zcref{lem:tabular_grad_loglin} shows that $\nabla \Vgt$ is $6B^2$-Lipschitz, for $B = \max_{y \in \Y} \norm{ \phi(y) }$.
Thus,
\begin{align*}
    I_1 \le 6B^2 \norm*{ \thetagt_t - \theta_t }.
\end{align*}
For $I_2$, by examining the gradient expression derived in \zcref{lem:gradient_expression} we can see that
\[
\begin{split}
\nabla \Vgt(\theta_t) - \nabla \Vproxy(\theta_t) & =  \EE\nolimits_{y \sim \pi_{\theta_t}} \brk[s]*{ \brk*{ \gtreward (y) - \Vgt (\theta_t) - \proxyreward(y) + \Vproxy (\theta_t) } \cdot \phi (y) } \\
& = \EE\nolimits_{y \sim \pi_{\theta_t}} \brk[s]*{ \brk*{ \gtreward (y) - \proxyreward (y) } \cdot \phi (y) } - \brk*{ \Vgt (\theta_t) - \Vproxy (\theta_t) } \cdot \bar{\phi}_{\theta_t} \\
& = \EE\nolimits_{y \sim \pi_{\theta_t}} \brk[s]*{ \brk*{ \gtreward (y) - \proxyreward (y) } \cdot \phi (y) } - \EE\nolimits_{y \sim \pi_{\theta_t}} \brk[s]*{ \brk*{ \gtreward (y) - \proxyreward (y)} \cdot \bar{\phi}_{\theta_t} } \\
& = \EE\nolimits_{y \sim \pi_{\theta_t}} \brk[s]*{ \brk*{ \gtreward (y) - \proxyreward (y) } \cdot \brk*{ \phi (y) - \bar{\phi}_{\theta_t} } } ,
\end{split}
\]
where $\bar{\phi}_{\theta_t} := \EE\nolimits_{z \sim \pi_{\theta_t} } \brk[s]*{ \phi (z) }$.
By the triangle inequality and the fact that $\gtreward (y) = \proxyreward (y)$ for all $y \in \Y \setminus \Z$, this leads to the following upper bound on $I_2$:
\[
\begin{split}
  I_2 & = \norm*{ \nabla \Vgt(\theta_t) - \nabla \Vproxy(\theta_t) } \\
  & \le \EE\nolimits_{y \sim \pi_{\theta_t}} \brk[s]*{ \abs*{ \gtreward(y)-\proxyreward(y) } \cdot \norm*{ \phi (y) - \bar{\phi}_{\theta_t} } } \\
    &\le 2B\Delta_{\Z} \pi_{\theta_t}(\Z) ,
\end{split}    
\]
where $\Delta_{\Z} = \max_{y \in \Z} \abs{ \gtreward(y) - \proxyreward(y) }$.
Thus, applying the bound from \zcref{lemma: prob_A_upper_loglin} on $\pi_{\theta_t}(\Z)$, we arrive at:
\begin{align*}
    I_2 \leq \frac{4B \epsilon }{\exp(6B^2T)} .
\end{align*}
Combining the upper bounds on $I_1$ and $I_2$, we get:
\begin{align}
    \norm*{ \tfrac{d}{dt} \thetagt_t - \tfrac{d}{dt} \theta_t } \leq 6B^2 \norm*{ \thetagt_t-\theta_t } + \frac{4B \epsilon}{\exp(6B^2T)} .
    \label{eq: low_prob_ode}
\end{align}
Since $h(t) := \thetagt_t-\theta_t$ is absolutely continuous with respect to $t$ over $[0, T]$, so is $\norm{ h (t) }$.
As a result, for almost every $t \in [0,T]$,
\[
\frac{d}{dt} \norm*{ \thetagt_t-\theta_t } \leq \norm*{ \tfrac{d}{dt} \thetagt_t - \tfrac{d}{dt} \theta_t } .
\]
Thus, by \zcref{eq: low_prob_ode}, for almost every $t \in [0, T]$ it holds that
\[
\frac{d}{dt}\|\thetagt_t-\theta_t\| \le 6B^2\|\thetagt_t-\theta_t\| + \frac{4B \epsilon}{\exp(6B^2T)} .
\]
Grönwall's inequality, along with the fact that $\thetagt_0 = \theta_0$, then yields for all $t \in [0, T]$:
$$
\|\thetagt_t - \theta_t \| \le \frac{4B\epsilon}{\exp(6B^2T)} \cdot \frac{\exp(6B^2T)-1}{6B^2} \le \frac{2\epsilon}{3B} \le \frac{\epsilon}{B}. 
$$
Finally, since $\TV (\pi_{\theta} , \pi_{\theta'}) \leq B \norm{ \theta - \theta'}$ (\zcref{lem:tv_lipschitz_loglin}) for any $\theta, \theta' \in \R^D$, we may conclude that for all $t \in [0, T]$:
\begin{align*}
    \TV \brk1{ \pi_{\theta_t}, \pi_{\thetagt_t} } \le B\|\thetagt_t - \theta_t \| \le \epsilon.
\end{align*}
\qed

\subsubsection{Auxiliary Lemmas for the Proof of \zcref{prop:low_prob_outputs}}

\begin{lemma}
\label{lem:tv_lipschitz_loglin}
For any $\theta,\theta'\in\R^{D}$ it holds that $\TV \brk*{ \pi_\theta,\pi_{\theta'} } \leq B \norm{\theta - \theta' }$, where $B = \max_{y \in \Y} \norm{ \phi(y) }$.
\end{lemma}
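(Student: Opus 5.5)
We will prove the bound by integrating the derivative of the policy along the straight segment from $\theta'$ to $\theta$. Let $\theta(s) := \theta' + s(\theta - \theta')$ for $s \in [0,1]$ and write $v := \theta - \theta'$. By the fundamental theorem of calculus,
\[
\pi_\theta (y) - \pi_{\theta'} (y) = \int_0^1 \frac{d}{ds} \pi_{\theta(s)} (y) \, ds \quad \text{for every } y \in \Y \text{.}
\]
Therefore
\[
\TV \brk*{ \pi_\theta, \pi_{\theta'} } \leq \frac{1}{2} \int_0^1 \sum\nolimits_{y \in \Y} \abs*{ \tfrac{d}{ds} \pi_{\theta(s)} (y) } \, ds \text{,}
\]
and it suffices to bound the integrand by $2B\norm{v}$ uniformly in $s$.

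For the integrand we use the softmax derivative, the same computation as in the proof of \zcref{lemma: prob_A_upper_loglin}:
\[
\frac{d}{ds} \pi_{\theta(s)} (y) = \pi_{\theta(s)} (y) \brk*{ \inprod{\phi(y)}{v} - \EE\nolimits_{u \sim \pi_{\theta(s)}} \brk[s]*{ \inprod{\phi(u)}{v} } } \text{.}
\]
Two applications of Cauchy--Schwarz give $\abs{\inprod{\phi(y)}{v}} \leq B\norm{v}$ for every $y$, and likewise $\abs{\EE_{u \sim \pi_{\theta(s)}} [\inprod{\phi(u)}{v}]} \leq B\norm{v}$. By the triangle inequality,
\[
\abs*{ \tfrac{d}{ds} \pi_{\theta(s)} (y) } \leq 2B\norm{v} \, \pi_{\theta(s)} (y) \text{.}
\]
Summing over $y$ and using $\sum_y \pi_{\theta(s)}(y) = 1$ bounds the integrand by $2B\norm{v}$. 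Plugging this in yields $\TV \leq \frac{1}{2} \cdot 2B\norm{v} = B\norm{\theta-\theta'}$.

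None of these steps is a real obstacle. The only point needing care is to keep the $\frac{1}{2}$ from the definition of $\TV$ so that the constant comes out exactly $B$. This bound is slightly loose, since a variance-type argument would give a sharper constant, but it matches the claimed statement. Smoothness of $s \mapsto \pi_{\theta(s)}(y)$ is immediate, so the fundamental theorem of calculus applies without any subtlety.
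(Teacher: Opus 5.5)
Your proposal is correct and follows essentially the same argument as the paper: integrate along the segment $\theta' + s(\theta - \theta')$ and bound the $\ell_1$ norm of the softmax derivative by $2B\norm{\theta-\theta'}$. The paper phrases this through the Jacobian's $\norm{\cdot}_{2\to 1}$ norm and the bound $\norm{\phi(y) - \bar\phi_\xi} \leq 2B$, whereas you split $\abs{\inprod{\phi(y)}{v}} + \abs{\inprod{\bar\phi}{v}}$ via the triangle inequality; these are the same estimate.
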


\begin{proof}
Define the curve $\theta (s) := \theta'+s \cdot (\theta-\theta')$ for $s \in[0,1]$.
By the fundamental theorem of calculus it holds that
\[
\pi_\theta - \pi_{\theta^\prime}
= \int_0^1 J( \theta (s) ) (\theta-\theta') \, ds ,
\]
where $J(\xi)$ is the Jacobian of $\pi_{\xi}$ at $\xi \in \R^D$, with rows
$J(\xi)_{y, :} = \brk{ \nabla \pi_{\xi} (y) }^\top$.
Taking $\ell_1$ norms and using the induced operator norm $\|\cdot\|_{2\to 1}$ gives
\begin{align}
\norm*{ \pi_\theta-\pi_{\theta'} }_1
\le \int_0^1 \|J(\theta(s))(\theta-\theta')\|_1 \, ds
\le \|\theta-\theta'\| \cdot \int_0^1 \|J(\theta(s))\|_{2\to 1}\,ds .
\label{eq: small_prob_int_upper}
\end{align}
It therefore suffices to show that $\|J(\xi)\|_{2\to 1}\le 2B$ for all $\xi \in \R^D$.
For linear softmax policies,
\begin{align*}
    \nabla \pi_{\xi}(y) = \pi_\xi (y) (\phi(y) - \bar \phi_\xi), 
\end{align*}
where $\bar \phi_\xi := \sum_{y \in \Y} \pi_{\xi}(y) \phi(y)$. 
Furthermore, from the definition of norm $\| \cdot \|_{2\to1}$, we have:
\begin{align*}
    \norm*{ J(\xi) }_{2\to1} &= \sup\nolimits_{\|v\|=1}\|J(\xi)v\|_1 = \sup\nolimits_{\|v\|=1}\sum\nolimits_{y \in \Y} \abs*{\inprod{\nabla \pi_{\xi}(y)}{v} } ,
\end{align*}
and we can upper bound $\abs{ \inprod{\nabla \pi_{\xi}(y)}{v} }$, for $y \in \Y$ and $v \in \R^D$ with $\norm{v} = 1$, by:
\begin{align*}
    \abs*{ \inprod{\nabla \pi_{\xi}(y)}{v} } &= \abs*{ \pi_{\xi}(y) \inprod{\phi(y) - \bar \phi_\xi}{v} } \\
    &\le \pi_{\xi}(y) \cdot \norm*{ \phi(y) - \bar \phi_\xi } \cdot \norm*{ v }\\
    &\le \pi_{\xi}(y) \cdot 2B .
\end{align*}
The first inequality is from the Cauchy-Schwarz inequality and the second is due to $\|\phi(y)\| \le B$ for all $y \in \Y$. Summing over all $y \in \Y$ then yields:
\begin{align*}
    \sum\nolimits_{y \in \Y} \abs*{ \inprod{\nabla \pi_{\xi}(y)}{v} } \le \sum\nolimits_{y \in \Y} 2B\pi_{\xi}(y) = 2 B .
\end{align*}
Since this holds for all $v \in \R^D$ with $\norm{v} = 1$, we get the desired bound on $\norm{ J(\xi) }_{2\to1}$:
\begin{align*}
    \norm{ J(\xi) }_{2\to1} \le 2 B .
\end{align*}
Plugging this into \zcref{eq: small_prob_int_upper} concludes the proof:
\[
\TV \brk1{ \pi_{\theta}, \pi_{\theta'} } = \frac{1}{2} \norm*{ \pi_\theta-\pi_{\theta'} }_1 \le \frac{1}{2} \norm*{ \theta-\theta' } \cdot \int_0^1 2B \, ds  = B \norm*{ \theta-\theta' } .
\]
\end{proof}

\begin{lemma}
\label{lem:tabular_grad_loglin}
For any $\theta,\theta' \in \R^D$ it holds that $\|\nabla \Vgt(\theta)-\nabla \Vgt(\theta')\| \leq 6B^2\|\theta-\theta'\|$, where $B = \max_{y \in \Y} \norm{ \phi(y) }$.
\end{lemma}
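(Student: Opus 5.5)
The plan is to bound the Hessian of $\Vgt$ in spectral norm. By the mean value theorem (applied along the segment between $\theta'$ and $\theta$), $\norm{\nabla \Vgt(\theta) - \nabla \Vgt(\theta')} \leq \sup_{\xi} \norm{\nabla^2 \Vgt(\xi)}_2 \cdot \norm{\theta - \theta'}$. It therefore suffices to show $\norm{\nabla^2 \Vgt(\xi)}_2 \leq 6B^2$ for every $\xi \in \R^D$.

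I would start from the gradient expression of \zcref{lem:gradient_expression}, written as
\[
\nabla \Vgt(\xi) = \sum\nolimits_{y \in \Y} \pi_\xi(y) \gtreward(y) \brk*{ \phi(y) - \bar{\phi}_\xi }
\text{\,.}
\]
This form holds because $\sum_y \pi_\xi(y)\advgt(y;\xi)\phi(y) = \sum_y \pi_\xi(y)\gtreward(y)(\phi(y)-\bar\phi_\xi)$. Next I would differentiate term by term, using $\nabla \pi_\xi(y) = \pi_\xi(y)(\phi(y)-\bar\phi_\xi)$ (already derived in the proof of \zcref{lem:tv_lipschitz_loglin}) and $\nabla \bar\phi_\xi = \sum_z \pi_\xi(z)(\phi(z)-\bar\phi_\xi)\phi(z)^\top = \mathrm{Cov}_{z\sim\pi_\xi}[\phi(z)]$. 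This yields
\[
\nabla^2 \Vgt(\xi) = \sum\nolimits_{y} \pi_\xi(y)\gtreward(y)\brk*{\phi(y)-\bar\phi_\xi}\brk*{\phi(y)-\bar\phi_\xi}^\top - \Vgt(\xi)\,\mathrm{Cov}_{z\sim\pi_\xi}[\phi(z)]
\text{\,.}
\]

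Then I would bound each term using $\abs{\gtreward}\leq 1$, $\abs{\Vgt}\leq 1$ and $\norm{\phi(y)-\bar\phi_\xi}\leq 2B$. For the first term, the triangle inequality on rank-one matrices gives spectral norm at most $\sum_y \pi_\xi(y)\norm{\phi(y)-\bar\phi_\xi}^2 \leq 4B^2$. For the second, the covariance is PSD with spectral norm at most its trace $\EE\norm{\phi(z)-\bar\phi_\xi}^2 = \EE\norm{\phi(z)}^2 - \norm{\bar\phi_\xi}^2 \leq B^2$. Together these give a bound of $5B^2 \leq 6B^2$.

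The constant $6B^2$ leaves slack, so a cruder but safer route also works: differentiating $\sum_y \pi_\xi(y)\advgt(y;\xi)\phi(y)$ directly produces three pieces of norms at most $2B^2$, $2B^2$ and $2B^2$. These pieces are the derivative of $\pi_\xi(y)$ times $\advgt(y;\xi)$, $\pi_\xi(y)$ times $-\nabla\Vgt$ with $\norm{\nabla \Vgt}\le 2B$, and the outer product with $\phi(y)$. The main obstacle is purely bookkeeping: correctly computing the Hessian, in particular the derivative of $\bar\phi_\xi$, and not losing factors in the operator-norm bounds. No step is conceptually hard.
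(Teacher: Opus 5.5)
Your proof is correct and has the same overall strategy as the paper's: bound $\norm{\nabla^2 \Vgt}_2$ uniformly by a constant times $B^2$, then integrate along the segment from $\theta'$ to $\theta$. The difference lies in the form of the Hessian.

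\textbf{The paper's route.} It differentiates $\nabla \Vgt(\theta) = \EE_{y \sim \pi_\theta}[\gtreward(y)\phi(y)] - \Vgt(\theta)\,\bar{\phi}_\theta$ by the product rule. This gives the non-symmetric expression $\EE_{y \sim \pi_\theta}[\advgt(y;\theta)\,\phi(y)(\phi(y)-\bar{\phi}_\theta)^\top] - \bar{\phi}_\theta \nabla \Vgt(\theta)^\top$. The two terms are bounded crudely by $4B^2$ and $2B^2$, using $\abs{\advgt} \le 2$, $\norm{\phi(y) - \bar{\phi}_\theta} \le 2B$, $\norm{\bar{\phi}_\theta} \le B$ and $\norm{\nabla \Vgt} \le 2B$. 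This is essentially your fallback route. Your three pieces, stated precisely, are $\EE[\advgt\,\phi\phi^\top]$, $-\nabla\Vgt\,\bar{\phi}_\theta^\top$ and $-\bar{\phi}_\theta \nabla \Vgt^\top$, each of norm at most $2B^2$. The paper simply keeps the first two together.

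\textbf{Your main route.} You center the gradient before differentiating, so your two terms combine into the symmetric form $\EE_{y \sim \pi_\theta}[\advgt(y;\theta)(\phi(y)-\bar{\phi}_\theta)(\phi(y)-\bar{\phi}_\theta)^\top]$. This agrees with the paper's expression because $\nabla \Vgt(\theta) = \EE_{y\sim\pi_\theta}[\advgt(y;\theta)(\phi(y)-\bar{\phi}_\theta)]$.

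\textbf{Comparison.} The paper's computation is purely mechanical and never needs to recognize a covariance. Yours costs that one observation but exposes the PSD structure and yields a sharper constant. You get $5B^2$, and even $2B^2$ by bounding the combined term by $\max_y \abs{\advgt(y;\theta)} \cdot \mathrm{tr}\,\mathrm{Cov}_{z \sim \pi_\theta}[\phi(z)] \le 2B^2$. Only $6B^2$ is used later in the paper.
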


\begin{proof}
By \zcref{lem:gradient_expression}, we may write $\nabla \Vgt(\theta)$ as:
\begin{align*}
    \nabla \Vgt(\theta) = \EE\nolimits_{y \sim \pi_\theta}[\advgt(y; \theta) \phi(y)] = \EE\nolimits_{y \sim \pi_\theta}[\gtreward(y) \phi(y)] - \Vgt(\theta) \cdot \bar \phi_{\theta} ,
\end{align*}
where $\bar \phi_\theta := \EE\nolimits_{y \sim \pi_{\theta}}[\phi(y)]$.
Furthermore, notice that for linear softmax policies and $y \in \Y$,
\begin{align*}
    \nabla \pi_{\theta}(y) = \pi_{\theta}(y)(\phi(y) - \bar \phi_\theta).
\end{align*}
Now, for convenience of notation denote $a(\theta) := \EE\nolimits_{y \sim \pi_\theta}[\gtreward(y) \phi(y)]$ and $b(\theta) := \bar \phi_{\theta}$, and let $J_a(\theta), J_b (\theta) \in \R^{D \times D}$ be the Jacobians of $a (\theta)$ and $b (\theta)$ at $\theta$, respectively.
We have:
\begin{align*}
    J_a(\theta) &= \sum\nolimits_{y \in \Y} \gtreward(y) \phi(y) (\nabla \pi_{\theta}(y))^{\top}\\
    &= \sum\nolimits_{y \in \Y} \gtreward(y) \phi(y) \pi_\theta (y) (\phi(y) - \bar \phi_\theta)^\top\\
    &= \EE\nolimits_{y \sim \pi_\theta} \brk[s]*{ \gtreward(y) \phi(y) (\phi(y) - \bar \phi_\theta)^\top } .
\end{align*}
Similarly,
\begin{align*}
    J_b (\theta) &= \sum\nolimits_{y \in \Y} \phi(y) \nabla \pi_{\theta}(y)^\top = \EE\nolimits_{y \sim \pi_{\theta}} \brk[s]*{ \phi(y) (\phi(y) - \bar \phi_\theta)^\top } .
\end{align*}
Thus, the Hessian $\nabla^2 \Vgt (\theta)$ is given by
\begin{align*}
    \nabla^2 \Vgt (\theta) &= J_a(\theta) - b (\theta) \nabla \Vgt (\theta)^\top - \Vgt (\theta) \cdot J_b (\theta) \\
    &= \EE\nolimits_{y \sim \pi_\theta} \brk[s]*{ \gtreward(y) \phi(y) (\phi(y) - \bar \phi_\theta)^\top } - \bar \phi_{\theta} \nabla \Vgt (\theta)^\top -  \EE\nolimits_{y \sim \pi_{\theta}} \brk[s]*{ \Vgt(\theta) \phi(y) (\phi(y) - \bar \phi_\theta)^\top } \\
    &= \EE\nolimits_{y \sim \pi_\theta} \brk[s]*{ \advgt(y; \theta) \phi(y) (\phi(y) - \bar \phi_\theta)^\top } - \bar \phi_{\theta} \nabla \Vgt (\theta)^\top .
\end{align*}
To upper bound the spectral norm of the Hessian, \ie, $\norm{ \nabla^2 \Vgt (\theta) }_2$, we bound the two terms separately.
For the first term, $\EE\nolimits_{y \sim \pi_\theta} \brk[s]{ \advgt(y; \theta) \phi(y) (\phi(y) - \bar \phi_\theta)^\top }$, we have:
\be
\begin{split}
    \norm*{ \EE\nolimits_{y \sim \pi_\theta} \brk[s]*{ \advgt(y; \theta) \phi(y) (\phi(y) - \bar \phi_\theta)^\top } }_2 & \leq \EE\nolimits_{y \sim \pi_\theta} \brk[s]*{ \norm*{ \advgt(y; \theta) \phi(y)(\phi(y) - \bar \phi_\theta)^\top }_{2} } \\
    & \le \EE\nolimits_{y \sim \pi_\theta} \brk[s]*{ 2 \norm{ \phi (y) } \norm{ \phi (y) - \bar \phi_\theta } } \\
    & \le 4B^2 .
    \label{eq: upper_op1}
\end{split}
\ee
The first inequality is from $\abs{\advgt(y; \theta)} \le 2$ and the second inequality is from $B = \max\nolimits_{y \in \Y} \norm {\phi (y)}$. 
For the second term, $\bar \phi_{\theta} \nabla \Vgt (\theta)^\top$, we have:
\begin{align}
    \norm*{ \bar \phi_{\theta} \nabla \Vgt (\theta)^\top }_{2} = \norm*{ \bar \phi_{\theta} } \norm{ \nabla \Vgt (\theta) } \le B \cdot \norm*{ \EE\nolimits_{y \sim \pi_\theta}\brk[s]*{ \advgt(y; \theta) \phi(y) } } \le 2B^2 .
    \label{eq: upper_op2}
\end{align}
Combining \zcref{eq: upper_op1} and \zcref{eq: upper_op2}, we obtain:
\begin{align*}
    \norm*{ \nabla^2 \Vgt (\theta) }_2 \le 4B^2 + 2B^2 = 6B^2 ,
\end{align*}
and so $\nabla \Vgt$ is $6B^2$-Lipschitz.
\end{proof}

\subsection{Proof of \zcref{thm:attraction_to_mediocre_outputs_beneficial_error_formal}}
\label{app:proofs:attraction_to_mediocre_outputs}

For convenience, we start by introducing the following helper variable:
\[
    \gamma := \left(\frac{\pi_{\theta_0}(\ystar)}{M}\right)^{14/13} ,
\]
where recall that $M := \min \brk[c]1{ 1 ,  0.05 \cdot \gtreward (\ymed)^{2/7} (\Delta_1+\Delta_2)^{-1} }$, with $\Deltaone := \gtreward (\ystar) - \gtreward (\ymed)$ and $\Deltatwo := \gtreward (\ymed) - \gtreward (\ybad)$ for some $\ybad \in \Ybad$.
We can then straightforwardly state the conditions of \zcref{assum:initial_probs_orthonormal} in terms of $\gamma$.

\begin{lemma}
\label{assump: multi_s}
Under \zcref{assum:reward_structure_orthonormal,assum:initial_probs_orthonormal}, the following properties hold:
\begin{enumerate}
    \item $\gamma \le \min\bigg\{0.01, \bigg(\frac{21}{1100(\Delta_1+\Delta_2)}\bigg)^{7/3}, \Delta_2^2\bigg\}$;
    \label{assumpt: multi_gamma}
    
    \item $\pi_{\theta_0}(\ystar) \leq \min\Bigg\{ \gamma^{13 / 14} ,  \frac{\gtreward(\ymed)^{2/7} }{20(\Delta_1+\Delta_2)} \cdot \gamma^{13/14} \Bigg\}$; and
    \label{assumpt: multi_ystar}

    \item $\max\Big\{ \frac{2\sqrt{\gamma}}{\Delta_2}, 0.05 \cdot  \gtreward(\ymed) \Big\} \le \pi_{\theta_0}(\Ybad) \le 0.1 \cdot \gtreward(\ymed)$. 
    \label{assumpt: multi_ybad}
\end{enumerate}
\end{lemma}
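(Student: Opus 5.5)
This is a direct unpacking of \zcref{assum:initial_probs_orthonormal} through the definition $\gamma := (\pi_{\theta_0}(\ystar)/M)^{14/13}$. Equivalently, $\pi_{\theta_0}(\ystar) = M\gamma^{13/14}$. I will verify the three items in order, each by substituting this identity into the corresponding condition of the assumption.

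\textbf{Item~1.} The first condition of \zcref{assum:initial_probs_orthonormal} reads $\pi_{\theta_0}(\ystar) < M \cdot m^{13/14}$, where $m := \min\{0.01, (21/(1100(\Deltaone+\Deltatwo)))^{7/3}, \Deltatwo^2\}$. Dividing by $M > 0$ and raising both sides to the power $14/13$ is monotone, so it gives $\gamma < m$. This is exactly Item~1, in fact with strict inequality.

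\textbf{Item~2.} Since $\pi_{\theta_0}(\ystar) = M\gamma^{13/14}$, it suffices to bound $M$.
\begin{itemize}[leftmargin=6mm]
\item By definition $M \le 1$, which gives $\pi_{\theta_0}(\ystar) \le \gamma^{13/14}$.
\item Also $M \le 0.05\,\gtreward(\ymed)^{2/7}(\Deltaone+\Deltatwo)^{-1} = \gtreward(\ymed)^{2/7}/(20(\Deltaone+\Deltatwo))$. This gives the second bound inside the minimum.
\end{itemize}
Taking the smaller of the two bounds yields Item~2.

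\textbf{Item~3.} Note that $\gamma^{1/2} = (\pi_{\theta_0}(\ystar)/M)^{7/13} = \pi_{\theta_0}(\ystar)^{7/13}/M^{7/13}$. Hence $2\pi_{\theta_0}(\ystar)^{7/13}/(\Deltatwo M^{7/13}) = 2\sqrt{\gamma}/\Deltatwo$. With this identity, the second condition of \zcref{assum:initial_probs_orthonormal} is literally Item~3.

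The only step needing any care is bookkeeping the exponents: $(13/14)\cdot(14/13) = 1$ and $(14/13)\cdot(1/2) = 7/13$. There is no genuine obstacle. Positivity of $M$, needed for the division and the fractional powers, follows from $\gtreward(\ymed) > 0$ under \zcref{assum:reward_structure_orthonormal}.
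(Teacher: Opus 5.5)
Your proposal is correct and follows the paper's own proof: both obtain all three items by substituting $\pi_{\theta_0}(\ystar) = M\gamma^{13/14}$ and the definition of $M$ into the initial-policy assumption. The paper's only extra remark is that the reward-structure assumption makes $\Deltaone$ and $\Deltatwo$ positive, so everything is well-defined. You implicitly rely on the same fact, both for $M>0$ (via $\Deltaone+\Deltatwo>0$) and for dividing by $\Deltatwo$ in Item~3.
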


\begin{proof}
All three conditions are obtained directly from \zcref{assum:initial_probs_orthonormal} by substituting the definition of $M$ and noticing that $\pi_{\theta_0}(\ystar) = M\gamma^{13/14}$.
Note that \zcref{assum:reward_structure_orthonormal} on the ground truth reward structure ensures that $\Deltaone$ and $\Deltatwo$ are positive, so all quantities in the conditions above are well-defined.
\end{proof}

\subsubsection{Proof of Case I}

Suppose that gradient flow is used to maximize the expected reward with respect to $\gtreward$.
We define $t_\gamma$ as the initial time at which $\Vgt(\theta_t)$ reaches $\gtreward(\ymed) - \gamma$, \ie:
\[
t_\gamma := \min \brk[c]*{ t \geq 0 : \Vgt (\theta_t) \geq \gtreward(\ymed) - \gamma} ,
\]
and analyze the time interval $[0, t_\gamma]$.
In \zcref{lemma: decrease_y_bad_multi_s} and \zcref{prop: monotonical_multi_s}, we show that throughout this time interval, \(\pi_{\theta_t}(\ymed)\) monotonically increases while $\pi_{\theta_t}(\Ybad)$ and $\pi_{\theta_t} (\ystar)$ monotonically decrease. 
Moreover, we prove that until $t_\gamma$ the policy becomes highly concentrated on $\ymed$, which in turn implies through \zcref{lemma:l2_grad_bound} that $\Vgt(\theta_t)$ remains stuck near $\ymed$ for a long time.
This will yield the desired lower bound on $\teps$---the initial time at which $\Vgt (\theta_t)$ is $\epsilon$-optimal.

Note that if $\Vgt (\theta_t) < \gtreward(\ymed)$ for all $t \geq 0$, then the lower bound on $\teps$ trivially holds as $\Vgt (\theta_t)$ never reaches $\gtreward(\ystar) - \epsilon$.
Thus, throughout the proof of this case we can assume that there exists a time at which $\Vgt (\theta_t) \geq \gtreward (\ymed)$.

Towards showing that $\pi_{\theta_t} (\ymed)$ increases over $[0, t_\gamma]$ while $\pi_{\theta_t} (\ystar)$ decreases, the following lemma proves that $\frac{d}{dt} \pi_{\theta_t}(\Ybad) < 0$ when $\Vgt (\theta_t) < \gtreward(\ymed)$.

\begin{lemma}
\label{lemma: decrease_y_bad_multi_s}
At any time $t \geq 0$, if $\Vgt (\theta_t) < \gtreward(\ymed)$, then $\frac{d}{dt}\pi_{\theta_t}(\Ybad) < 0$.
\end{lemma}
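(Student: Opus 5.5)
We will compute $\frac{d}{dt}\pi_{\theta_t}(y)$ directly for each $y \in \Ybad$ in the orthonormal setting and then sum. The computation follows the one in the proof of \zcref{lemma: prob_A_upper_loglin}, but is now done exactly rather than bounded.

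\textbf{Step 1: logit dynamics.} For orthonormal features, \zcref{prop:logit_dynamics} gives
\[
\tfrac{d}{dt}\inprod{\phi(y)}{\theta_t} = \pi_{\theta_t}(y)\advgt(y;\theta_t)
\]
for every $y\in\Y$.

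\textbf{Step 2: probability dynamics.} Writing $f_u := \inprod{\phi(u)}{\theta_t}$ and differentiating the softmax,
\[
\tfrac{d}{dt}\pi_{\theta_t}(y) = \pi_{\theta_t}(y)\Big(\tfrac{d}{dt}f_y - \EE\nolimits_{u\sim\pi_{\theta_t}}\big[\tfrac{d}{dt}f_u\big]\Big) = \pi_{\theta_t}(y)\Big(\pi_{\theta_t}(y)\advgt(y;\theta_t) - S_t\Big),
\]
where $S_t := \sum_{u\in\Y}\pi_{\theta_t}(u)^2\advgt(u;\theta_t)$.

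\textbf{Step 3: sign of $S_t$.} By \zcref{assum:reward_structure_orthonormal}, all $y\in\Ybad$ share a common reward $\gtreward(\ybad)\le 0<\gtreward(\ymed)$. Hence they share the advantage $a_{\mathrm{bad}} := \gtreward(\ybad)-\Vgt(\theta_t)$, and
\[
\tfrac{d}{dt}\pi_{\theta_t}(\Ybad) = a_{\mathrm{bad}}\sum\nolimits_{y\in\Ybad}\pi_{\theta_t}(y)^2 - \pi_{\theta_t}(\Ybad)\, S_t .
\]
Since $\Vgt(\theta_t)<\gtreward(\ymed)<\gtreward(\ystar)$, both $\ymed$ and $\ystar$ have strictly positive advantage. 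Because $\Vgt(\theta_t)$ is a convex combination of the rewards and exceeds nothing above $\gtreward(\ymed)$, it must lie strictly above $\gtreward(\ybad)$ whenever the policy puts positive mass outside $\Ybad$, which softmax always does. Therefore $a_{\mathrm{bad}}<0$.

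The first term is then strictly negative, since softmax probabilities are positive. It remains to show $S_t\ge 0$, which is the only real obstacle.

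\textbf{Step 4: showing $S_t\ge0$.} We use the identity $\sum_u \pi_{\theta_t}(u)\advgt(u;\theta_t)=0$. Split $S_t$ into its positive part (from $\ystar,\ymed$) and its negative part (from $\Ybad$), and compare $\pi(u)^2$ with $\pi(u)\cdot \pi_{\max}$, where $\pi_{\max}$ is the largest probability among the $\Ybad$ outputs.

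For the negative part,
\[
\sum\nolimits_{y\in\Ybad}\pi(y)^2\,|a_{\mathrm{bad}}| \le \pi_{\max}\,\pi(\Ybad)\,|a_{\mathrm{bad}}| = \pi_{\max}\big(\pi(\ystar)\advgt(\ystar)+\pi(\ymed)\advgt(\ymed)\big).
\]
So $S_t\ge0$ would follow if $\pi(\ymed)\ge\pi_{\max}$ and $\pi(\ystar)\ge\pi_{\max}$. The second condition need not hold.

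\textbf{Step 5: avoiding the comparison.} To avoid needing it, we instead prove the lemma directly by comparing each term with $\pi(\ymed)$. For $y\in\Ybad$ we have
\[
\tfrac{d}{dt}\ln\pi(y) = \pi(y)a_{\mathrm{bad}} - S_t .
\]
It suffices to show that this quantity is negative for each $y \in \Ybad$. If $S_t\ge0$, this is immediate.

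If $S_t<0$, we note
\[
\tfrac{d}{dt}\ln\frac{\pi(y)}{\pi(\ymed)} = \pi(y)a_{\mathrm{bad}}-\pi(\ymed)\advgt(\ymed) <0 .
\]
Thus the ratio $\pi(\Ybad)/\pi(\ymed)$ strictly decreases. Combining this with $\pi(\Ybad) + \pi(\ymed) + \pi(\ystar) = 1$, and using that $\pi(\ystar)$'s own log-derivative $\pi(\ystar)\advgt(\ystar)-S_t$ is positive when $S_t<0$, we can still conclude.

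We expect this case analysis to be the delicate point. We would try first to show $S_t\ge0$ cleanly by using the weight bound $\pi(u)\le1$ together with the assumption on $\pi_{\theta_0}(\Ybad)\le 0.1\,\gtreward(\ymed)$. That assumption makes the $\Ybad$ contribution small: the negative part is at most $2\,\pi(\Ybad)^2$, while the $\ymed$ term is of order $\pi(\ymed)^2\advgt(\ymed)$. Propagating the smallness of $\pi(\Ybad)$ is self-consistent precisely because of what the lemma itself asserts, so a continuity/bootstrapping argument on the set of $t$ with $\Vgt(\theta_t)<\gtreward(\ymed)$ closes the loop.
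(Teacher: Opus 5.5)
Your case split in Step 5 is a valid proof, though it is more roundabout than the paper's. When $S_t\ge 0$, your Step 3 identity is negative term by term. When $S_t<0$, you can close the case more directly than through the ratio $\pi_{\theta_t}(\Ybad)/\pi_{\theta_t}(\ymed)$. Both $\ystar$ and $\ymed$ then have log-derivative $\pi_{\theta_t}(y)\advgt(y;\theta_t)-S_t>0$, so $\frac{d}{dt}\pi_{\theta_t}(\Ybad)=-\frac{d}{dt}\pi_{\theta_t}(\ystar)-\frac{d}{dt}\pi_{\theta_t}(\ymed)<0$. Your ratio-plus-normalization argument also works, via $\pi_{\theta_t}(\Ybad)=(1-\pi_{\theta_t}(\ystar))\,R/(1+R)$ with $R$ the decreasing ratio.

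The paper needs no case analysis. It moves the $\Ybad$-portion of $S_t$ into the first term, which gives
\[
\frac{d}{dt}\pi_{\theta_t}(\Ybad)=\big(1-\pi_{\theta_t}(\Ybad)\big)\sum\nolimits_{z\in\Ybad}\pi_{\theta_t}(z)^2\advgt(z;\theta_t)-\pi_{\theta_t}(\Ybad)\big(\pi_{\theta_t}(\ymed)^2\advgt(\ymed;\theta_t)+\pi_{\theta_t}(\ystar)^2\advgt(\ystar;\theta_t)\big).
\]
Here the first term is nonpositive and the second is strictly negative, so the sign of $S_t$ never matters. Your route tells you slightly more, namely which outputs gain mass in each regime, but it costs a case split.

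Two parts of your write-up should be corrected.

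First, the sentence saying it suffices to show $\frac{d}{dt}\ln\pi_{\theta_t}(y)<0$ for each $y\in\Ybad$ asks for something false when $S_t<0$. A bad output of tiny probability has log-derivative approximately $-S_t>0$, so it gains mass; this happens, e.g., when one bad output carries most of the mass and another very little. Only the aggregate decreases. You do not actually use this reduction in that case, so drop it.

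Second, and more importantly, discard the closing plan of proving $S_t\ge 0$ by bootstrapping from the initialization assumption.
\begin{itemize}
\item The lemma is a pointwise statement that needs only \zcref{assum:reward_structure_orthonormal}.
\item Appealing to ``what the lemma itself asserts'' is circular.
\item $S_t\ge 0$ is genuinely false in the lemma's regime. Let $q:=\sum_{z\in\Ybad}\pi_{\theta_t}(z)^2/\pi_{\theta_t}(\Ybad)$. Then $S_t=\pi_{\theta_t}(\ystar)\advgt(\ystar;\theta_t)\big(\pi_{\theta_t}(\ystar)-q\big)+\pi_{\theta_t}(\ymed)\advgt(\ymed;\theta_t)\big(\pi_{\theta_t}(\ymed)-q\big)$. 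Take a single bad output and let $\advgt(\ymed;\theta_t)\downarrow 0$, i.e., look just before $\Vgt(\theta_t)$ reaches $\gtreward(\ymed)$. Then $S_t$ tends to $\pi_{\theta_t}(\ystar)^2\advgt(\ystar;\theta_t)\,(1-\Deltaone/\Deltatwo)$. This limit is negative whenever $\Deltaone>\Deltatwo$, which the paper's assumptions allow.
\end{itemize}
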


\begin{proof}
    For any $z \in \Ybad$, by \zcref{lem:probability_derivative_expression}:
    \begin{align*}
        \frac{d}{dt}\pi_{\theta_t}(z) &= \pi_{\theta_t}(z) \Big(\pi_{\theta_t}(z) \advgt(z; \theta_t) - \sum\nolimits_{y \in \Y} \pi_{\theta_t}(y)^2 \advgt(y; \theta_t)\Big) ,
    \end{align*}
    Summing the equation above over all $z \in \Ybad$, we arrive at:
    \begin{align*}
        \sum_{z \in \Ybad} \frac{d}{dt}\pi_{\theta_t}(z) &= \sum_{z \in \Ybad} \pi_{\theta_t}(z)^2 \advgt(z; \theta_t) - \brk2{ \sum_{z \in \Ybad} \pi_{\theta_t}(z) } \brk2{ \sum_{y \in \Y} \pi_{\theta_t}(y)^2 \advgt(y; \theta_t) } .
    \end{align*}
    Thus,
    \begin{align*}
        &\frac{d}{dt}\pi_{\theta_t}(\Ybad) \\
        & = \sum_{z \in \Ybad} \pi_{\theta_t}(z)^2 \advgt(z; \theta_t)  \\
        & \hspace{4mm} - \Big(\!\sum_{z \in \Ybad} \!\! \pi_{\theta_t}(z)\Big) \Big(\! \sum_{z \in \Ybad} \!\! \pi_{\theta_t}(z)^2 \advgt(z; \theta_t) + \pi_{\theta_t}(\ymed)^2 \advgt(\ymed; \theta_t) + \pi_{\theta_t}(\ystar)^2 \advgt(\ystar; \theta_t) \Big) \\
        & = \underbrace{\Big(\sum\nolimits_{z \in \Ybad} \pi_{\theta_t}(z)^2 \advgt(z; \theta_t)\Big)\Big(1 - \sum\nolimits_{z \in \Ybad} \pi_{\theta_t}(z)\Big)}_{I_1} \\
        & \hspace{4mm} - \underbrace{\Big(\sum\nolimits_{z \in \Ybad} \pi_{\theta_t}(z)\Big) \Big(\pi_{\theta_t}(\ymed)^2 \advgt(\ymed; \theta_t) + \pi_{\theta_t}(\ystar)^2 \advgt(\ystar; \theta_t) \Big)}_{I_2} .
    \end{align*}
    When maximizing $\gtreward$ via gradient flow, $\Vgt (\theta_t)$ is monotonically increasing since 
    \[
        \frac{d}{dt} \Vgt (\theta_t) = \norm{ \nabla \Vgt (\theta_t) }^2 \geq 0 ,
    \]
    for all $t \geq 0$.
    Hence, $\advgt(z; \theta_t) \le 0$ for all $z \in \Ybad$ and $t \geq 0$, because outputs in $\Ybad$ have minimal ground truth reward. 
    Furthermore, as long as $\Vgt (\theta_t) < \gtreward (\ymed)$, it holds that $\advgt(\ymed; \theta_t) > 0$ and $\advgt(\ystar; \theta_t) > 0$.
    This implies that $I_1 < 0$ and $I_2 > 0$, so $\frac{d}{dt} \pi_{\theta_t}(\Ybad) = I_1 - I_2 < 0$.
\end{proof}

\medskip

Using \zcref{lemma: decrease_y_bad_multi_s}, we can now prove that until $t_\gamma$ the probability of $\ymed$ monotonically increases while the probability of $\ystar$ monotonically decreases, and characterize how small $\pi_{\theta_t} (\ystar)$ becomes.
This is a core component in the proof of \zcref{thm:attraction_to_mediocre_outputs_beneficial_error_formal}.

\begin{proposition}
\label{prop: monotonical_multi_s}
For all $t \in [0, t_{\gamma}]$ it holds that:
\begin{itemize}
    \item $\frac{d}{dt}\pi_{\theta_t}(\ystar) < 0$; and
    \item $\frac{d}{dt}\pi_{\theta_t}(\ymed) > 0$.
\end{itemize}
Furthermore,
\[
\min\nolimits_{t \in [0, t_\gamma]} \pi_{\theta_t}(\ystar) = \pi_{\theta_{t_\gamma}}(\ystar) \le \frac{\pi_{\theta_0}(\ymed)^2 \gamma}{8\advgt(\ystar; \theta_0)} .
\]
\end{proposition}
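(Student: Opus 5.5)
The plan is to work entirely with the orthonormal-feature dynamics. By \zcref{prop:logit_dynamics}, the logit $f_y(\theta_t) := \inprod{\phi(y)}{\theta_t}$ evolves as $\frac{d}{dt} f_y(\theta_t) = \pi_{\theta_t}(y)\advgt(y;\theta_t)$. By \zcref{lem:probability_derivative_expression},
\[
\tfrac{d}{dt}\pi_{\theta_t}(y) = \pi_{\theta_t}(y)\brk*{\pi_{\theta_t}(y)\advgt(y;\theta_t) - S_t}, \qquad S_t := \sum\nolimits_{z\in\Y}\pi_{\theta_t}(z)^2\advgt(z;\theta_t) .
\]
On $[0,t_\gamma]$ we have $\Vgt(\theta_t) < \gtreward(\ymed)-\gamma$, so $\advgt(\ymed;\theta_t) > \gamma$ and $\advgt(\ystar;\theta_t) > \Deltaone$. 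Also $\Vgt(\theta_t)$ is non-decreasing, so every $z \in \Ybad$ has $\advgt(z;\theta_t)\le 0$. By \zcref{lemma: decrease_y_bad_multi_s}, $\pi_{\theta_t}(\Ybad)$ is decreasing. I would also use the identity $\advgt(\ymed;\theta_t) = \Deltatwo\,\pi_{\theta_t}(\Ybad) - \Deltaone\,\pi_{\theta_t}(\ystar)$. Since this exceeds $\gamma$ on the interval, it gives $\pi_{\theta_t}(\Ybad) \ge \gamma/\Deltatwo$ throughout.

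The two monotonicity claims reduce to two inequalities:
\[
\pi_{\theta_t}(\ystar)\advgt(\ystar;\theta_t) < S_t < \pi_{\theta_t}(\ymed)\advgt(\ymed;\theta_t) .
\]
I would prove both jointly by a continuous-induction (bootstrap) argument. Let $\tau$ be the supremum of times in $[0,t_\gamma]$ up to which both hold. At $t=0$ they hold by \zcref{assump: multi_s}. There $\pi_{\theta_0}(\ystar)$ is tiny, $\pi_{\theta_0}(\Ybad)\in[0.05\,\gtreward(\ymed),\,0.1\,\gtreward(\ymed)]$, and hence $\pi_{\theta_0}(\ymed)$ is close to one and $\advgt(\ymed;\theta_0)\approx\Deltatwo\pi_{\theta_0}(\Ybad)$ is of order $\gtreward(\ymed)$.

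On $[0,\tau)$ we then know three things: $\pi_{\theta_t}(\ystar)\le\pi_{\theta_0}(\ystar)$, $\pi_{\theta_t}(\ymed)\ge\pi_{\theta_0}(\ymed)$, and $\pi_{\theta_t}(\Ybad)\le\pi_{\theta_0}(\Ybad)$. These give the needed bounds on $S_t$. For the upper bound, drop the negative $\Ybad$ terms. The requirement $\pi_{\theta_t}(\ymed)\advgt(\ymed;\theta_t)(1-\pi_{\theta_t}(\ymed)) > \pi_{\theta_t}(\ystar)^2\advgt(\ystar;\theta_t)$ then follows from $1-\pi_{\theta_t}(\ymed)\ge\pi_{\theta_t}(\Ybad)\ge\gamma/\Deltatwo$ and $\advgt(\ymed;\theta_t)>\gamma$, provided $\pi_{\theta_t}(\ystar)$ has already decayed enough. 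For the lower bound, I would bound the $\Ybad$ terms below by $-(\Deltaone+\Deltatwo)\pi_{\theta_t}(\Ybad)^2$ and compare with $\pi_{\theta_t}(\ymed)^2\advgt(\ymed;\theta_t)$. Strict inequalities at $\tau$ then extend past $\tau$ unless $\tau=t_\gamma$.

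\textbf{Main obstacle.} With only the crude bound $\pi_{\theta_t}(\ystar)\le\pi_{\theta_0}(\ystar) = M\gamma^{13/14}$, the comparison $\gamma^2/\Deltatwo$ versus $\pi_{\theta_t}(\ystar)^2(\Deltaone+\Deltatwo)$ fails near $t_\gamma$. So the bootstrap must carry a quantitative decay of $\pi_{\theta_t}(\ystar)$ tied to the decay of $\pi_{\theta_t}(\Ybad)$; this is presumably where the exponents $13/14$ and $7/3$ in \zcref{assum:initial_probs_orthonormal} come from. I would try the ratio
\[
\tfrac{d}{dt}\ln\frac{\pi_{\theta_t}(\ystar)}{\pi_{\theta_t}(\ymed)} = \pi_{\theta_t}(\ystar)\advgt(\ystar;\theta_t) - \pi_{\theta_t}(\ymed)\advgt(\ymed;\theta_t) .
\]
Under the bootstrap this is at most $-c\,\pi_{\theta_t}(\ymed)\advgt(\ymed;\theta_t) = -c\,\frac{d}{dt}f_{\ymed}(\theta_t)$. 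Meanwhile $\ln\brk{\pi_{\theta_t}(\ymed)/\pi_{\theta_t}(\Ybad)}$ grows at rate $\frac{d}{dt}f_{\ymed}$ plus a controlled contribution from the $\Ybad$ logits, which are non-increasing and can only help. Hence the growth of $f_{\ymed}$ is at least proportional to $\ln\brk{\pi_{\theta_0}(\Ybad)/\pi_{\theta_t}(\Ybad)}$. This yields a power-law relation of the form $\pi_{\theta_t}(\ystar)\lesssim\pi_{\theta_0}(\ystar)\brk{\pi_{\theta_t}(\Ybad)/\pi_{\theta_0}(\Ybad)}^{c'}$, which is enough to close both inequalities.

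Finally, the minimum claim follows from monotonicity: the minimum of $\pi_{\theta_t}(\ystar)$ over $[0,t_\gamma]$ is attained at $t_\gamma$. At that time $\advgt(\ymed;\theta_{t_\gamma})=\gamma$. For the explicit constant I would use that, throughout the interval, $\frac{d}{dt}\pi_{\theta_t}(\ystar)\le-\pi_{\theta_t}(\ystar)\brk{S_t-\pi_{\theta_t}(\ystar)\advgt(\ystar;\theta_t)}$, together with $S_t\gtrsim\pi_{\theta_0}(\ymed)^2\advgt(\ymed;\theta_t)$ and $\advgt(\ystar;\theta_t)\le\advgt(\ystar;\theta_0)$. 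Evaluating this relation at $t_\gamma$, via the identity $\advgt(\ymed)=\Deltatwo\pi(\Ybad)-\Deltaone\pi(\ystar)=\gamma$, should give $\pi_{\theta_{t_\gamma}}(\ystar)\le\pi_{\theta_0}(\ymed)^2\gamma/(8\advgt(\ystar;\theta_0))$, with the factor $8$ absorbing the slack from the lower bound on $S_t$.
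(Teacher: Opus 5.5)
Your route differs from the paper's and can be made to work. It runs one continuous induction on $[0,t_\gamma]$ and measures the decay of $\pi_{\theta_t}(\ystar)$ through the logit gap $f_{\ystar}-f_{\ymed}$. However, two steps fail as written.

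\textbf{First gap: the comparison with $\pi_{\theta_t}(\Ybad)$ has the wrong sign.} That the $\Ybad$ logits are non-increasing only shows $\ln\bigl(\pi_{\theta_t}(\ymed)/\pi_{\theta_t}(\Ybad)\bigr)$ grows \emph{at least} as fast as $f_{\ymed}$. This upper-bounds the growth of $f_{\ymed}$ by $\ln\bigl(\pi_{\theta_0}(\Ybad)/\pi_{\theta_t}(\Ybad)\bigr)$, the opposite of what you need: $\pi_{\theta_t}(\Ybad)$ could fall through the $\Ybad$ logits alone. The missing ingredient is a bound on the rate at which those logits fall:
$\abs{\tfrac{d}{dt}f_z(\theta_t)}=\pi_{\theta_t}(z)\abs{\advgt(z;\theta_t)}\le\Deltatwo\pi_{\theta_t}(\Ybad)=\advgt(\ymed;\theta_t)+\Deltaone\pi_{\theta_t}(\ystar)$.
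Once $\Deltaone\pi_{\theta_t}(\ystar)\le\tfrac18\advgt(\ymed;\theta_t)$, this is at most $\tfrac{45}{32}\pi_{\theta_t}(\ymed)\advgt(\ymed;\theta_t)$. With $\pi_{\theta_t}(\ymed)$ non-decreasing, it gives $\pi_{\theta_t}(\Ybad)\ge\pi_{\theta_0}(\Ybad)\exp\bigl(-\tfrac{77}{32}(f_{\ymed}(\theta_t)-f_{\ymed}(\theta_0))\bigr)$.

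Your qualitative bootstrap also cannot supply the margin $c$. The inequality $\pi(\ystar)\advgt(\ystar)<S_t$ only gives $c$ of order $\pi_{\theta_t}(\Ybad)$, which shrinks to about $\gamma/\Deltatwo$. Propagate this hypothesis instead:
\[
H(t):\qquad \pi_{\theta_t}(\ystar)\,\advgt(\ystar;\theta_0)<\tfrac18\,\pi_{\theta_0}(\ymed)^2\,\advgt(\ymed;\theta_t) .
\]
$H(0)$ holds because $\advgt(\ymed;\theta_0)>\sqrt{\gamma}$. $H$ forces $\tfrac{d}{dt}\pi_{\theta_t}(\ystar)<0$ after your bound on the $\Ybad$ terms, and gives $c=7/8$. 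Hence $\pi_{\theta_t}(\ystar)\le1.25\,\pi_{\theta_0}(\ystar)\bigl(\pi_{\theta_t}(\Ybad)/\pi_{\theta_0}(\Ybad)\bigr)^{4/11}$. Since $\pi_{\theta_0}(\Ybad)\ge2\sqrt{\gamma}/\Deltatwo$ and $\pi_{\theta_0}(\ystar)\advgt(\ystar;\theta_0)\le0.05\gamma^{13/14}$, any exponent $\ge1/7$ re-establishes $H$ strictly whenever $\advgt(\ymed;\theta_t)\ge\gamma$, so the induction closes. The increase of $\pi_{\theta_t}(\ymed)$ needs no upper bound on $S_t$: it follows from $\sum_y\pi_{\theta_t}(y)=1$ and \zcref{lemma: decrease_y_bad_multi_s}.

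\textbf{Second gap: the explicit bound cannot come from the sign of $\tfrac{d}{dt}\pi_{\theta_t}(\ystar)$ at $t_\gamma$.} That sign gives only $\pi(\ystar)(1-\pi(\ystar))\advgt(\ystar;\theta_{t_\gamma})<\pi(\ymed)^2\gamma$, i.e.\ $\pi_{\theta_{t_\gamma}}(\ystar)\lesssim\gamma/(\Deltaone+\gamma)$. This misses the factor $1/8$. Also, $\advgt(\ystar;\theta_{t_\gamma})\le\advgt(\ystar;\theta_0)$ points the wrong way for replacing one by the other in the denominator. A lower bound on $S_t$ does not help upper-bound $\pi(\ystar)$. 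With $H$ as the hypothesis, the claim is just $H(t_\gamma)$, because $\advgt(\ymed;\theta_{t_\gamma})=\gamma$. This is also where the constant $8$ and $\advgt(\ystar;\theta_0)$ come from.

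\textbf{Comparison with the paper.} The paper avoids logit bookkeeping by splitting at $t_{\sqrt{\gamma}}$. Before $t_{\sqrt{\gamma}}$, the crude bound $\pi_{\theta_t}(\ystar)\le\pi_{\theta_0}(\ystar)\ll\sqrt{\gamma}$ already gives monotonicity. Dividing $\tfrac{d}{dt}\pi(\ystar)\le-0.3\,\pi(\ystar)\pi(\ymed)^2\advgt(\ymed)$ by $\tfrac{d}{dt}\pi(\ymed)\le2.1\,\pi(\ymed)^2(1-\pi(\ymed))\advgt(\ymed)$ yields $\pi_{\theta_{t_{\sqrt{\gamma}}}}(\ystar)\le\pi_{\theta_0}(\ystar)\bigl(26\sqrt{\gamma}/\gtreward(\ymed)^2\bigr)^{1/7}$. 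The constant $M$ is tuned, almost tightly, to make this $\le\pi_{\theta_0}(\ymed)^2\gamma/(8\advgt(\ystar;\theta_0))$. The same first-failure argument then covers $[t_{\sqrt{\gamma}},t_\gamma]$. Your one-phase version replaces the intermediate threshold with the logit-rate bound above, and has more room in the constants.
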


\begin{proof}
Notice that when $\frac{d}{dt} \pi_{\theta_t}(\ystar) < 0$, we must have $\frac{d}{dt} \pi_{\theta_t}(\ymed) > 0$.
This is because $\frac{d}{dt} \pi_{\theta_t} (\ystar) + \frac{d}{dt} \pi_{\theta_t} (\ymed) + \frac{d}{dt} \pi_{\theta_t} (\Ybad) = \frac{d}{dt} \pi_{\theta_t} (\Y) = \frac{d}{dt} 1 = 0$, and so when $\frac{d}{dt} \pi_{\theta_t}(\ystar) < 0$:
\[
\frac{d}{dt} \pi_{\theta_t}(\ymed) = - \frac{d}{dt} \pi_{\theta_t}(\ystar) - \frac{d}{dt} \pi_{\theta_t}(\Ybad) > - \frac{d}{dt} \pi_{\theta_t}(\Ybad) > 0 ,
\]
where the second inequality follows from \zcref{lemma: decrease_y_bad_multi_s}. 
Hence, for proving monotonicity of $\pi_{\theta_t} (\ystar)$ and $\pi_{\theta_t} (\ymed)$, it suffices to show that $\frac{d}{dt} \pi_{\theta_t}(\ystar) < 0$ for all $t \in [0, t_\gamma]$.

To do this, we let $t_{\sqrt{\gamma}}$ be the initial time at which $\Vgt (\theta_t) \geq \gtreward(\ymed) - \sqrt{\gamma}$, \ie:
\[
t_{\sqrt{\gamma}} := \min \brk[c]*{ t \geq 0 : \Vgt (\theta_t) \geq \gtreward(\ymed) - \sqrt{\gamma} } .
\]
From \zcref{lem:mediocre_output_reward_larger_than_initial_expected_reward}, $\Vgt (\theta_0) < \gtreward (\ymed) - \sqrt{\gamma}$.
Hence, $t_{\sqrt{\gamma}} > 0$.
Furthermore, since $\Vgt (\theta_t)$ is continuous in $t$, at $t_{\sqrt{\gamma}}$ it holds that $\Vgt (\theta_{t_{\sqrt{\gamma}}}) = \gtreward(\ymed) - \sqrt{\gamma}$.
The proof proceeds by showing that $\frac{d}{dt} \pi_{\theta_t}(\ystar) < 0$ up to time $t_{\sqrt{\gamma}}$ and upper bounding $\pi_{\theta_{t_{\sqrt{\gamma}}}}(\ystar)$ by $\pi_{\theta_0}(\ymed)^2 \gamma / 8\advgt(\ystar; \theta_0)$.
We will then establish that $\pi_{\theta_t} (\ystar)$ continues decreasing until time $t_\gamma$, from which it follows that the upper bound on $\pi_{\theta_{t_{\sqrt{\gamma}}}}(\ystar)$ transfers to $\pi_{\theta_{t_\gamma}}(\ystar)$.

\medskip

\textbf{Step 1: monotonicity in $[0, t_{\sqrt{\gamma}}]$.}
 We show that $\frac{d}{dt} \pi_{\theta_t}(\ystar) < 0$ and hence $\frac{d}{dt} \pi_{\theta_t}(\ymed) > 0$ for all $t \in [0, t_{\sqrt{\gamma}}]$.
Assume by way of contradiction that there exists a time $t' \le t_{\sqrt{\gamma}}$ at which $\left.\frac{d}{dt} \pi_{\theta_t}(\ystar)\right|_{t=t'} \ge 0$ and denote by $\tau$ the initial such time, \ie:
\[
\tau := \min \brk[c]*{ t \in [0, t_{\sqrt{\gamma}}] : \tfrac{d}{dt} \pi_{\theta_t}(\ystar) \geq 0 } .
\]
For all $t \in [0, \tau)$, we therefore have that $\frac{d}{dt} \pi_{\theta_t}(\ystar) < 0$ and $\frac{d}{dt} \pi_{\theta_t} (\ymed) > 0$, and so $\pi_{\theta_\tau}(\ystar) \le \pi_{\theta_0}(\ystar)$ and $\pi_{\theta_\tau}(\ymed) \ge \pi_{\theta_0}(\ymed)$. 
We now upper bound $\pi_{\theta_{\tau}} (\Ybad)^2$ under this assumption.
Notice that
\begin{align*}
        \advgt(\ymed; \theta_\tau) &= (1 - \pi_{\theta_\tau}(\ymed)) \gtreward(\ymed) - \pi_{\theta_\tau}(\ystar) \gtreward(\ystar) - \sum\nolimits_{z \in \Ybad} \pi_{\theta_\tau}(z) \gtreward(z)\\
        &\ge -\Delta_1 \pi_{\theta_\tau}(\ystar) + \Deltatwo \pi_{\theta_\tau}(\Ybad).
\end{align*}
Thus,
\begin{align*}
    \pi_{\theta_\tau}(\Ybad) \le \frac{\advgt(\ymed; \theta_\tau) + \Delta_1 \pi_{\theta_\tau}(\ystar)}{\Deltatwo} \le \frac{1.3 \advgt(\ymed; \theta_\tau)}{\Deltatwo}.
\end{align*}
The second inequality is from $\Delta_1 \pi_{\theta_\tau}(\ystar) \le 2\pi_{\theta_0}(\ystar) \le 2\gamma^{13/14} \le 0.3\sqrt{\gamma} \le 0.3\advgt(\ymed; \theta_\tau)$ for $\gamma \le 0.01$ (\zcref{assump: multi_s}).
Then, we have:
\begin{align*}
    \pi_{\theta_\tau}(\Ybad)^2 &\le \frac{1.69 \advgt(\ymed; \theta_\tau)^2}{\Deltatwo^2}\\
    &< \frac{7\pi_{\theta_0}(\ymed)^2 \advgt(\ymed; \theta_\tau)}{8 \Delta_2} \cdot \frac{2 \advgt(\ymed; \theta_\tau)}{\Deltatwo \pi_{\theta_0}(\ymed)^2}\\
    &\le \frac{7\pi_{\theta_0}(\ymed)^2 \advgt(\ymed; \theta_\tau)}{8 \Delta_2} \cdot \frac{2\advgt(\ymed; \theta_0)}{\Deltatwo \pi_{\theta_0}(\ymed)^2} ,
\end{align*}
where the second inequality is due to $2 \times 7 > 1.69 \times 8$ and the third inequality is from $\advgt(\ymed; \theta_\tau) \leq \advgt(\ymed; \theta_0)$ (under gradient flow $\Vgt (\theta_t)$ is monotonically non-decreasing). 
Focusing on the second term on the right-hand side, we prove that $\frac{2\advgt(\ymed; \theta_0)}{\Deltatwo \pi_{\theta_0}(\ymed)^2} \le 1$. 
This follows from:
\begin{align*}
    \advgt(\ymed; \theta_0) &= (1 - \pi_{\theta_0}(\ymed))\gtreward(\ymed) - \pi_{\theta_0}(\ystar) \gtreward(\ystar) - \sum\nolimits_{z \in \Ybad}\pi_{\theta_0}(z) \gtreward(z)\\
    &\le (1 - \pi_{\theta_0}(\ymed))\gtreward(\ymed) + \pi_{\theta_0}(\Ybad)\\
    &\le 0.2 \gtreward(\ymed) + 0.1 \gtreward(\ymed)\\
    &= 0.3 \gtreward(\ymed)\\
    &< \frac{0.8^2 \gtreward(\ymed)}{2}\\
    &\le \frac{\Deltatwo\pi_{\theta_0}(\ymed)^2}{2} .
\end{align*}
Here, the first inequality is from $\gtreward(z) \ge -1$ for $z \in \Ybad$ and $\gtreward(\ystar) \geq 0$; the second inequality is from \zcref{assumpt: multi_ystar,assumpt: multi_ybad} in \zcref{assump: multi_s}, which imply $\pi_{\theta_0}(\ymed) \ge 0.8$; and the last inequality is from $\gtreward(\ybad) \le 0$ and $\pi_{\theta_0}(\ymed) \ge 0.8$. 
Therefore, $\frac{2\advgt(\ymed; \theta_0)}{\Deltatwo \pi_{\theta_0}(\ymed)^2} \le 1$, which yields
\begin{equation}
\label{eq: ybad2_upper}
    \pi_{\theta_\tau}(\Ybad)^2 \le \frac{7\pi_{\theta_0}(\ymed)^2 \advgt(\ymed; \theta_\tau)}{8\Delta_2} .
\end{equation}

We now upper bound $\left.\frac{d}{dt} \pi_{\theta_t}(\ystar)\right|_{t=\tau}$, based on the expression derived in \zcref{lem:probability_derivative_expression}:
\begin{align*}
    \left.\frac{d}{dt} \pi_{\theta_t}(\ystar)\right|_{t=\tau}
    &= \pi_{\theta_\tau}(\ystar) \Big(\pi_{\theta_\tau}(\ystar) \advgt(\ystar; \theta_\tau) - \sum\nolimits_{y \in \Y} \pi_{\theta_\tau}(y)^2 \advgt(y; \theta_\tau)\Big)\\
    &= \pi_{\theta_\tau}(\ystar) \Big(\big(\pi_{\theta_\tau}(\ystar) - \pi_{\theta_\tau}(\ystar)^2\big) \advgt(\ystar; \theta_\tau) - \pi_{\theta_\tau}(\ymed)^2 \advgt(\ymed; \theta_\tau) \\
    & \hspace{18mm} - \sum\nolimits_{z \in \Ybad} \pi_{\theta_\tau}(z)^2 \advgt(z; \theta_\tau)\Big)\\
    &\le \pi_{\theta_\tau}(\ystar) \Big(\big(\pi_{\theta_\tau}(\ystar) - \pi_{\theta_\tau}(\ystar)^2\big) \advgt(\ystar; \theta_\tau) - \pi_{\theta_\tau}(\ymed)^2 \advgt(\ymed; \theta_\tau) \\
    & \hspace{18mm} + (\Delta_2 - \sqrt{\gamma}) \sum\nolimits_{z \in \Ybad} \pi_{\theta_\tau}(z)^2\Big),
\end{align*}
where the inequality is by $-\advgt(z; \theta_\tau) \le -\advgt(z;\theta_{t_{\sqrt{\gamma}}}) \le \gtreward(\ymed)-\sqrt{\gamma} - \gtreward(\ybad) = \Delta_2 - \sqrt{\gamma}$.
We continue upper bounding $\left.\frac{d}{dt} \pi_{\theta_t}(\ystar)\right|_{t=\tau}$ as follows:
\begin{align*}
    \left.\frac{d}{dt} \pi_{\theta_t}(\ystar)\right|_{t=\tau}
    & \le \pi_{\theta_\tau}(\ystar) \Big(\big(\pi_{\theta_\tau}(\ystar) - \pi_{\theta_\tau}(\ystar)^2\big) \advgt(\ystar; \theta_\tau) - \pi_{\theta_\tau}(\ymed)^2 \advgt(\ymed; \theta_\tau) \\
    & \hspace{18mm} + (\Delta_2 - \sqrt{\gamma}) \sum\nolimits_{z \in \Ybad} \pi_{\theta_\tau}(z)^2\Big)\\
    &\le \pi_{\theta_\tau}(\ystar) \Big(\big(\pi_{\theta_0}(\ystar) - \pi_{\theta_0}(\ystar)^2\big) \advgt(\ystar; \theta_\tau) - \pi_{\theta_0}(\ymed)^2 \advgt(\ymed; \theta_\tau) \\
    & \hspace{18mm} + (\Delta_2 - \sqrt{\gamma}) \pi_{\theta_\tau}(\Ybad)^2\Big)\\
    &\le \pi_{\theta_\tau}(\ystar) \Big(\big(\pi_{\theta_0}(\ystar) - \pi_{\theta_0}(\ystar)^2\big) \advgt(\ystar; \theta_\tau) - (1/8)\pi_{\theta_0}(\ymed)^2 \advgt(\ymed; \theta_\tau)\Big)\\
    &\le \pi_{\theta_\tau}(\ystar) \Big(\big(\pi_{\theta_0}(\ystar) - \pi_{\theta_0}(\ystar)^2\big) \advgt(\ystar; \theta_0) - (1/8)\pi_{\theta_0}(\ymed)^2 \sqrt{\gamma} \Big) .
\end{align*}
The second inequality is from $\pi_{\theta_\tau}(\ystar) - \pi_{\theta_\tau}(\ystar)^2 \leq \pi_{\theta_0}(\ystar) - \pi_{\theta_0}(\ystar)^2$ (this holds since $\pi_{\theta_\tau} (\ystar) \leq \pi_{\theta_0} (\ystar) < 0.5$), $\pi_{\theta_\tau}(\ymed) \ge \pi_{\theta_0}(\ymed)$, and $\sum\nolimits_{z \in \Ybad} \pi_{\theta_t}(z)^2 \le \big(\sum\nolimits_{z \in \Ybad} \pi_{\theta_t}(z)\big)^2$.
The third inequality is from $(\Delta_2-\sqrt{\gamma}) \pi_{\theta_\tau}(\Ybad)^2 \le \Delta_2 \pi_{\theta_\tau}(\Ybad)^2 \le \frac{7}{8} \cdot \pi_{\theta_0}(\ymed)^2 \advgt(\ymed; \theta_\tau)$ (\zcref{eq: ybad2_upper}).
The last inequality is from the fact that $\Vgt (\theta_t)$ is monotonically increasing, so $\advgt(\ystar; \theta_\tau) \le \advgt(\ystar; \theta_0)$ and $\advgt(\ymed; \theta_\tau) \ge \advgt(\ymed;\theta_{t_{\sqrt{\gamma}}}) = \sqrt{\gamma}$.

From \zcref{assump: multi_s}, we further obtain that:
\begin{align*}
    \pi_{\theta_0}(\ystar) \le \frac{\gtreward(\ymed)^{2/7} \gamma^{13/14}}{20(\Delta_1+\Delta_2)} \le \frac{\sqrt{\gamma} \cdot \gamma^{3/7}}{20 \advgt(\ystar; \theta_0)} < \frac{0.8^2\sqrt{\gamma}}{8\advgt(\ystar; \theta_0)} \le \frac{\pi_{\theta_0}(\ymed)^2\sqrt{\gamma}}{8 \advgt(\ystar; \theta_0)}.
\end{align*}
Here, the first inequality is from \zcref{assumpt: multi_ystar} in \zcref{assump: multi_s}; the second inequality is from $\gtreward(\ymed) \le 1$ and $\advgt(\ystar; \theta_0) \le \Delta_1 + \Delta_2$; the third inequality is from $\gamma^{3/7} < 1.6$; and the last inequality is from $\pi_{\theta_0}(\ymed) = 1 - \pi_{\theta_0}(\ystar) - \pi_{\theta_0}(\Ybad) \ge 1 - \gamma^{13/14} - 0.1\gtreward(\ymed) \ge 0.8$. 
Therefore, 
\begin{align*}
    \big(\pi_{\theta_0}(\ystar) - \pi_{\theta_0}(\ystar)^2\big) \advgt(\ystar; \theta_0) - (1/8)\pi_{\theta_0}(\ymed)^2 \sqrt{\gamma} < 0,
\end{align*}
which implies that $\left.\frac{d}{dt} \pi_{\theta_t}(\ystar)\right|_{t=\tau} < 0$.
However, this contradicts the definition of $\tau$, according to which $\left.\frac{d}{dt} \pi_{\theta_t}(\ystar)\right|_{t=\tau} \geq 0$. 
Hence, $\frac{d}{dt} \pi_{\theta_t}(\ystar) < 0$ and $\frac{d}{dt} \pi_{\theta_t} (\ymed) > 0$ for all $t \in [0, t_{\sqrt{\gamma}}]$.

\medskip

\textbf{Step 2: upper bound on $\pi_{\theta_{t_{\sqrt{\gamma}}}}(\ystar)$ and monotonicity in $[ t_{\sqrt{\gamma}}, t_\gamma]$.}
Next, we prove that $\pi_{\theta_t} (\ystar)$ keeps decreasing and $\pi_{\theta_t} (\ymed)$ keeps increasing on the time interval $[t_{\sqrt{\gamma}}, t_\gamma]$.
This requires deriving upper bounds on $\frac{d}{dt} \pi_{\theta_t}(\ystar)$ and $\frac{d}{dt} \pi_{\theta_t}(\ymed)$ for $t \in [0, t_{\sqrt{\gamma}}]$ that allow quantifying how small the probability of $\ystar$ becomes until time $t_{\sqrt{\gamma}}$.

\medskip

\textbf{Step 2.1: upper bound on $\frac{d}{dt} \pi_{\theta_t}(\ystar)$ in $[0, t_{\sqrt{\gamma}}]$.}
In this step, we prove that for all $t \in [0, t_{\sqrt{\gamma}}]$:
\[
    \frac{d}{dt} \pi_{\theta_t}(\ystar) \le -0.3 \cdot \pi_{\theta_t}(\ystar)\pi_{\theta_t}(\ymed)^2 \advgt(\ymed; \theta_t) .
\]

First, by \zcref{lem:probability_derivative_expression}, for $t \in [0, t_{\sqrt{\gamma}}]$ it holds that:
\be
\begin{split}
    & \frac{d}{dt} \pi_{\theta_t}(\ystar) \\
    & = \pi_{\theta_t}(\ystar) \Big(\pi_{\theta_t}(\ystar) \advgt(\ystar; \theta_t) - \sum\nolimits_{y \in \Y} \pi_{\theta_t}(y)^2 \advgt(y; \theta_t)\Big)\\
        &= \pi_{\theta_t}(\ystar) \brk2{\! \big(\pi_{\theta_t}(\ystar) - \pi_{\theta_t}(\ystar)^2\big) \advgt(\ystar; \theta_t) - \pi_{\theta_t}(\ymed)^2 \advgt(\ymed; \theta_t) - \!\!\! \sum_{z \in \Ybad} \!\!\! \pi_{\theta_t}(z)^2 \advgt(z; \theta_t) } \\
        &\le \pi_{\theta_t}(\ystar) \brk2{ \! \underbrace{ \big(\pi_{\theta_0}(\ystar) - \pi_{\theta_0}(\ystar)^2\big) \advgt(\ystar; \theta_t) - \pi_{\theta_t}(\ymed)^2 \advgt(\ymed; \theta_t) - \!\!\! \sum_{z \in \Ybad} \!\!\! \pi_{\theta_t}(z)^2 \advgt(z; \theta_t)}_{G} } ,
\end{split}
\label{eq:bound_ystar_time_derive_sqrt_gamma}
\ee
where the inequality is due to $\pi_{\theta_t}(\ystar)$ being monotonically decreasing in $[0, t_{\sqrt{\gamma}}]$ and $\pi_{\theta_0} (\ystar) < 0.5$ (the function $g(p) = p - p^2$ is monotonically increasing over $[0, 0.5]$). 
We proceed by establishing that $G \leq -0.3 \pi_{\theta_t}(\ymed)^2 \advgt(\ymed; \theta_t)$ through the following computations:
\begin{align*}
    & G + 0.3 \pi_{\theta_t}(\ymed)^2 \advgt(\ymed; \theta_t) \\
    & = -0.7\pi_{\theta_t}(\ymed)^2 \advgt(\ymed; \theta_t) + \big(\pi_{\theta_0}(\ystar) - \pi_{\theta_0}(\ystar)^2\big) \advgt(\ystar; \theta_t) - \sum_{z \in \Ybad} \pi_{\theta_t}(z)^2 \advgt(z; \theta_t)\\
    & \le -0.7\pi_{\theta_t}(\ymed)^2 \advgt(\ymed; \theta_t) + \pi_{\theta_0}(\ystar)\advgt(\ystar; \theta_t) - \sum_{z \in \Ybad} \pi_{\theta_t}(z)^2 \advgt(z; \theta_t)\\
    & = -0.7\pi_{\theta_t}(\ymed)^2 (\gtreward(\ymed) - \Vgt (\theta_t)) + \pi_{\theta_0}(\ystar)\advgt(\ystar; \theta_t) + \sum_{z \in \Ybad} \pi_{\theta_t}(z)^2 (\Vgt (\theta_t) - \gtreward(z))\\
    & = -0.7 \pi_{\theta_t}(\ymed)^2 \Big( (1-\pi_{\theta_t}(\ymed))\gtreward(\ymed) - \pi_{\theta_t}(\ystar)\gtreward(\ystar) - \sum\nolimits_{z \in \Ybad} \pi_{\theta_t}(z) \gtreward(\ybad) \Big ) \\
    & \hspace{4mm} + \pi_{\theta_0}(\ystar)\advgt(\ystar; \theta_t) \\
    & \hspace{4mm} + \sum\nolimits_{z \in \Ybad} \pi_{\theta_t}(z)^2\big((1-\pi_{\theta_t}(z))(-\gtreward(z)) + \pi_{\theta_t}(\ystar) \gtreward(\ystar) + \pi_{\theta_t}(\ymed) \gtreward(\ymed) \big ) \\
    & \hspace{4mm} + \sum\nolimits_{z \in \Ybad} \pi_{\theta_t}(z)^2 \brk*{ \sum\nolimits_{z^\prime \in \Ybad \setminus \{z\}} \pi_{\theta_t}(z^\prime)\gtreward(z^\prime) } \\
     & = -0.7\pi_{\theta_t}(\ymed)^2\big(\pi_{\theta_t}(\Ybad) \Deltatwo - \pi_{\theta_t}(\ystar)\Delta_1\big) + \pi_{\theta_0}(\ystar)\advgt(\ystar; \theta_t) \\
     & \hspace{4mm} + \sum\nolimits_{z \in \Ybad} \pi_{\theta_t}(z)^2\big(\pi_{\theta_t}(\ymed)\Delta_2 + \pi_{\theta_t}(\ystar)(\Delta_1+\Delta_2) \big)\\
     & \le -0.7\pi_{\theta_t}(\ymed)^2\big(\pi_{\theta_t}(\Ybad) \Deltatwo - \pi_{\theta_t}(\ystar)\Delta_1\big) + \pi_{\theta_0}(\ystar)\advgt(\ystar; \theta_t) \\
     & \hspace{4mm} + \pi_{\theta_t}(\Ybad)^2\big(\pi_{\theta_t}(\ymed)\Delta_2 + \pi_{\theta_t}(\ystar)(\Delta_1+\Delta_2) \big)\\
     & \le \underbrace{\pi_{\theta_t}(\Ybad) \Deltatwo \brk[s]*{ -0.7 \pi_{\theta_t}(\ymed)^2 + \pi_{\theta_t}(\Ybad) \pi_{\theta_t}(\ymed) } }_{I_1}\\
     & \hspace{4mm} + \underbrace{\pi_{\theta_0}(\ystar)\big(0.7 \pi_{\theta_t}(\ymed)^2 \Delta_1 + \pi_{\theta_t}(\Ybad)^2 (\Delta_1 + \Delta_2) + \advgt(\ystar; \theta_t)\big) }_{I_2}.
\end{align*}
The first inequality is from $1 - \pi_{\theta_0}(\ystar) \le 1$; the second inequality is due to $\sum_{z \in \Ybad} \pi_{\theta_t}(z)^2 \le \pi_{\theta_t} (\Ybad)^2$; and the last inequality is from $\pi_{\theta_t}(\ystar) \le \pi_{\theta_0}(\ystar)$.
We now upper bound $I_1$ and $I_2$ separately.
Specifically, we will prove the following two inequalities:
\[
\begin{split}
I_1 & \le -0.2 \pi_{\theta_t}(\Ybad) \pi_{\theta_t}(\ymed)^2\Deltatwo , \\
I_2 & \le 0.2 \pi_{\theta_t}(\Ybad) \pi_{\theta_t}(\ymed)^2 \Deltatwo .
\end{split}
\]
For $I_1$, note that it suffices to show that $0.5 \pi_{\theta_t}(\ymed) \ge  \pi_{\theta_t}(\Ybad)$. 
Indeed, from \zcref{assump: multi_s} we know that $\pi_{\theta_0}(\Ybad) < 0.2$ and $\gamma < 0.05$, so $\pi_{\theta_0}(\ystar) \le \gamma^{13/14} < 0.13$ and $\pi_{\theta_0}(\ymed) = 1 - \pi_{\theta_0}(\ystar) - \pi_{\theta_0}(\Ybad) > 0.67$.
In turn, this implies that $\pi_{\theta_t}(\Ybad) \le \pi_{\theta_0}(\Ybad) < 0.5 \pi_{\theta_0}(\ymed) \le 0.5 \pi_{\theta_t}(\ymed)$. 
Hence, $I_1\leq -0.2 \pi_{\theta_t}(\Ybad) \pi_{\theta_t}(\ymed)^2\Deltatwo$.

As for $I_2$, we have:
\begin{align*}
    0.7 \pi_{\theta_t}(\ymed)^2 \Delta_1 + \pi_{\theta_t}(\Ybad)^2 (\Delta_1 + \Delta_2) + \advgt(\ystar; \theta_t) & \le 0.7 \Delta_1 + 0.1(\Delta_1 + \Delta_2) + \advgt(\ystar; \theta_0)\\
    & \le 2(\Delta_1 + \Delta_2) .
\end{align*}
Here, the first inequality is from $\pi_{\theta_t}(\Ybad)^2 \le \pi_{\theta_0}(\Ybad)^2 \le \pi_{\theta_0}(\Ybad) \le 0.1$ and $\Vgt (\theta_t) \ge \Vgt (\theta_0)$, and the second inequality is from $\advgt(\ystar; \theta_0) \le \Delta_1 + \Delta_2$. 
Meanwhile, since $\advgt(\ymed; \theta_t) \ge \sqrt{\gamma}$, it follows that:
\begin{align*}
        \advgt(\ymed; \theta_t) &= (1 - \pi_{\theta_t}(\ymed)) \gtreward(\ymed) - \pi_{\theta_t}(\ystar) \gtreward(\ystar) - \sum\nolimits_{z \in \Ybad} \pi_{\theta_t}(z) \gtreward(z)\\
        &= -\Delta_1 \pi_{\theta_t}(\ystar) + \Delta_2 \pi_{\theta_t}(\Ybad).
    \end{align*}
Thus, $-\Delta_1 \pi_{\theta_t}(\ystar) + \Delta_2 \pi_{\theta_t}(\Ybad) \ge \sqrt{\gamma}$, which implies that
\be
\pi_{\theta_t}(\Ybad) \ge \frac{\sqrt{\gamma}+ \Delta_1 \pi_{\theta_t}(\ystar)}{\Delta_2} \ge \frac{\sqrt{\gamma}}{\Delta_2} .
\label{eq:y_bad_prob_lower_bound_t_sqrt_gamma_orth}
\ee
Since $\pi_{\theta_0} (\ymed) > 0.67$ (as we showed above when bounding $I_1$), from \zcref{assump: multi_s} we know that $\gamma \le \pi_{\theta_0}(\ymed)^{14 / 3} / \brk{ 10(\Delta_1+\Delta_2) }^{7/3}$.
This leads to:
    $$
    \pi_{\theta_0}(\ystar) \le \gamma^{13/14} \le \frac{\sqrt{\gamma} \pi_{\theta_0}(\ymed)^2}{10(\Delta_1 + \Delta_2)},
    $$
    which yields:
    \begin{align*}
        I_2 \le 2(\Delta_1 + \Delta_2) \pi_{\theta_0}(\ystar) \le 0.2 \pi_{\theta_t}(\Ybad) \pi_{\theta_t}(\ymed)^2 \Deltatwo.
    \end{align*}
Therefore, $I_1 + I_2\leq 0$ and we can conclude that $G \leq -0.3 \pi_{\theta_t}(\ymed)^2 \advgt(\ymed; \theta_t)$. 
Going back to \zcref{eq:bound_ystar_time_derive_sqrt_gamma}, we obtain the desired upper bound on $\frac{d}{dt} \pi_{\theta_t}(\ystar)$ for $t \in [0, t_{\sqrt{\gamma}}]$:
\begin{align*}
    \frac{d}{dt} \pi_{\theta_t}(\ystar) &\le -0.3 \cdot \pi_{\theta_t}(\ystar)\pi_{\theta_t}(\ymed)^2 \advgt(\ymed; \theta_t).
\end{align*}

\medskip

\textbf{Step 2.2: upper bound on $\frac{d}{dt} \pi_{\theta_t}(\ymed)$ in $[0, t_{\sqrt{\gamma}}]$.}
Step 1 showed that $\frac{d}{dt}\pi_{\theta_t}(\ymed) > 0$ for all $t\in[0,t_{\sqrt{\gamma}}]$.
Here, we prove that $\pi_{\theta_t} (\ymed)$ does not grow too fast by establishing that for any such $t$:
\[
    \frac{d}{dt}\pi_{\theta_t}(\ymed) \le 2.1 \pi_{\theta_t}(\ymed)^2 (1-\pi_{\theta_t}(\ymed))\advgt(\ymed; \theta_t) .
\]

We first upper bound $\frac{d}{dt}\pi_{\theta_t}(\ymed)$, based on the expression derived in \zcref{lem:probability_derivative_expression}:
\be
\begin{split}
    & \frac{d}{dt}\pi_{\theta_t}(\ymed) \\
    & = \pi_{\theta_t}(\ymed) \brk2{ \pi_{\theta_t}(\ymed)\advgt(\ymed; \theta_t) -\sum\nolimits_{y \in \Y}\pi_{\theta_t}(y)^2\advgt(y; \theta_t) }\\
    & = \pi_{\theta_t}(\ymed)\Big ( \brk1{ \pi_{\theta_t}(\ymed) - \pi_{\theta_t}(\ymed)^2 }\advgt(\ymed; \theta_t) - \pi_{\theta_t}(\ystar)^2 \advgt(\ystar; \theta_t) \\
    & \hspace{22mm} - \sum\nolimits_{z \in \Ybad} \pi_{\theta_t}(z)^2 \advgt(z; \theta_t) \Big ) \\
    & \le \pi_{\theta_t}(\ymed) \brk2{ \big(\pi_{\theta_t}(\ymed) - \pi_{\theta_t}(\ymed)^2\big)\advgt(\ymed; \theta_t) - \sum\nolimits_{z \in \Ybad} \pi_{\theta_t}(z)^2 \advgt(z; \theta_t) }\\
    & = \pi_{\theta_t}(\ymed)\Big(\big(\pi_{\theta_t}(\ymed) - \pi_{\theta_t}(\ymed)^2\big)\advgt(\ymed; \theta_t) + (\Delta_2 - \advgt(\ymed; \theta_t)) \sum_{z \in \Ybad} \pi_{\theta_t}(z)^2 \Big)\\
    & \le \pi_{\theta_t}(\ymed)\Big(\big(\pi_{\theta_t}(\ymed) - \pi_{\theta_t}(\ymed)^2\big)\advgt(\ymed; \theta_t) + (\Delta_2 - \advgt(\ymed; \theta_t)) \pi_{\theta_t}(\Ybad)^2 \Big)\\
    & \le \pi_{\theta_t}(\ymed)\Big(\big(1 - \pi_{\theta_t}(\ymed)\big)\big(\pi_{\theta_t}(\ymed) \advgt(\ymed; \theta_t) + \underbrace{(\Delta_2 - \advgt(\ymed; \theta_t)) \pi_{\theta_t}(\Ybad)}_{G} \big)\Big).
\end{split}
\label{eq:y_med_deriv_upper_bound_sqrt_gamma_orth}
\ee
The first inequality is from $\advgt(\ystar; \theta_t) \ge 0$; 
the second inequality is due to $\sum_{z \in \Ybad} \pi_{\theta_t}(z)^2 \le \pi_{\theta_t} (\Ybad)^2$; and the last inequality is by $\pi_{\theta_t}(\Ybad) \le 1 - \pi_{\theta_t}(\ymed)$. 
Next, we upper bound the term $G$ from the inequality above, proving that $G \leq 1.1  \pi_{\theta_t}(\ymed) \advgt(\ymed; \theta_t)$.
Equivalently, we show that $1.1 \pi_{\theta_t}(\ymed) \advgt(\ymed; \theta_t) - G \ge 0$. 
This follows from the following computations:
\begin{align*}
    & 1.1 \pi_{\theta_t}(\ymed) \advgt(\ymed; \theta_t) - G \\
    & = 1.1 \pi_{\theta_t}(\ymed) \advgt(\ymed; \theta_t) - (\Delta_2 - \advgt(\ymed; \theta_t)) \pi_{\theta_t}(\Ybad) \\
    & = 1.1 \pi_{\theta_t}(\ymed)\big(\gtreward(\ymed)-\Vgt (\theta_t)\big) - \pi_{\theta_t}(\Ybad)\big(\Vgt (\theta_t) - \gtreward(\ybad)\big)\\
    & = 1.1 \pi_{\theta_t}(\ymed)\Big((1-\pi_{\theta_t}(\ymed))\gtreward(\ymed) - \pi_{\theta_t}(\ystar)\gtreward(\ystar)- \sum_{z \in \Ybad} \pi_{\theta_t}(z)\gtreward(z)\Big)\\
    &\hspace{4mm} - \pi_{\theta_t}(\Ybad)\Big((1-\pi_{\theta_t}(\Ybad))(-\gtreward(\ybad)) + \pi_{\theta_t}(\ystar) \gtreward(\ystar) + \pi_{\theta_t}(\ymed)\gtreward(\ymed) \Big)\\
    & = 1.1 \pi_{\theta_t}(\ymed)\big(\pi_{\theta_t}(\Ybad)\Deltatwo - \pi_{\theta_t}(\ystar) \Delta_1 \big) - \pi_{\theta_t}(\Ybad)\Big(\pi_{\theta_t}(\ymed)\Delta_2 + \pi_{\theta_t}(\ystar)(\Delta_1 + \Delta_2) \Big) \\
    & = \pi_{\theta_t}(\Ybad) \big(1.1 \pi_{\theta_t}(\ymed) \Deltatwo - \pi_{\theta_t}(\ymed) \Delta_2 \big) \\
    & \hspace{4mm} - \pi_{\theta_t}(\ystar) \big(1.1 \pi_{\theta_t}(\ymed) \Delta_1 + \pi_{\theta_t}(\Ybad)  (\Delta_1 + \Delta_2) \big)\\
    & = \underbrace{0.1 \pi_{\theta_t}(\Ybad)\Deltatwo \pi_{\theta_t}(\ymed)}_{I_1} - \underbrace{\pi_{\theta_t}(\ystar) \big(1.1 \pi_{\theta_t}(\ymed) \Delta_1 + \pi_{\theta_t}(\Ybad)  (\Delta_1 + \Delta_2) \big)}_{I_2}.
\end{align*}
It remains to show that $I_2 \le 0.1 \pi_{\theta_t}(\ymed) \pi_{\theta_t}(\Ybad) \Deltatwo = I_1$.
Focusing on $I_2$, we have:
\begin{align*}
    1.1 \pi_{\theta_t}(\ymed) \Delta_1 + \pi_{\theta_t}(\Ybad)  (\Delta_1 + \Delta_2)
    &< 1.1 \pi_{\theta_t}(\ymed) \Delta_1 + 0.4 \pi_{\theta_t}(\ymed)(\Delta_1 + \Delta_2)\\
    &\le 1.5 \pi_{\theta_t}(\ymed)(\Delta_1 + \Delta_2) ,
\end{align*}
where the first inequality follows from \zcref{assump: multi_s}, which along with the monotonicity of $\pi_{\theta_t} (\Ybad)$ (\zcref{lemma: decrease_y_bad_multi_s}) and $\pi_{\theta_t} (\ymed)$, implies that $\pi_{\theta_t}(\Ybad) \le \pi_{\theta_0}(\Ybad) < 0.4 \pi_{\theta_0}(\ymed) \le 0.4 \pi_{\theta_t}(\ymed)$. 
Furthermore, because $\gamma \le \big(\frac{1}{50 (\Delta_1 + \Delta_2)}\big)^{7/3}$ (\zcref{assump: multi_s}) and $\pi_{\theta_t} (\ystar)$ monotonically decreases over $[0, t_{\sqrt{\gamma}}]$, we can bound $\pi_{\theta_t}(\ystar)$ as:
$$
\pi_{\theta_t}(\ystar) \le \pi_{\theta_0}(\ystar) \le \gamma^{13/14} \le \frac{\sqrt{\gamma} }{50 (\Delta_1 + \Delta_2)} .
$$
By \zcref{eq:y_bad_prob_lower_bound_t_sqrt_gamma_orth}, $\pi_{\theta_t}(\Ybad) \ge \frac{\sqrt{\gamma}}{\Delta_2}$ whenever $\advgt(\ymed; \theta_t) \ge \sqrt{\gamma}$. 
Using this fact, we obtain:
$$
I_2 \le 1.5 \pi_{\theta_t}(\ymed) \pi_{\theta_t}(\ystar)(\Delta_1 + \Delta_2) \le 0.1 \pi_{\theta_t}(\ymed) \pi_{\theta_t}(\Ybad)\Deltatwo = I_1 .
$$
Thus, $G \leq 1.1  \pi_{\theta_t}(\ymed) \advgt(\ymed; \theta_t)$ for $G$ defined in \zcref{eq:y_med_deriv_upper_bound_sqrt_gamma_orth} and we can conclude that:
\begin{align*}
    \frac{d}{dt}\pi_{\theta_t}(\ymed) &\le 2.1 \pi_{\theta_t}(\ymed)^2 (1-\pi_{\theta_t}(\ymed))\advgt(\ymed; \theta_t).
\end{align*}

\medskip

\textbf{Step 2.3: upper bound on $\pi_{\theta_{t_{\sqrt{\gamma}}}}(\ystar)$.}
To summarize, we showed that for all $t \in [0, t_{\sqrt{\gamma}}]$:
\begin{align*}
    \frac{d}{dt} \pi_{\theta_t}(\ystar) &\le -0.3 \cdot \pi_{\theta_t}(\ystar)\pi_{\theta_t}(\ymed)^2 \advgt(\ymed; \theta_t),\\
    \frac{d}{dt}\pi_{\theta_t}(\ymed) &\le 2.1 \cdot \pi_{\theta_t}(\ymed)^2(1-\pi_{\theta_t}(\ymed))\advgt(\ymed; \theta_t).
\end{align*}
Since both sides of the first inequality are negative and both sides of the second inequality are positive, we can divide them to get:
\begin{align*}
    \frac{\frac{d}{dt} \pi_{\theta_t}(\ystar)}{\frac{d}{dt}\pi_{\theta_t}(\ymed)} \le - \frac{1}{7} \cdot \frac{\pi_{\theta_t}(\ystar)}{1-\pi_{\theta_t}(\ymed)}.
\end{align*}
We can rewrite this inequality as:
\begin{align*}
    \frac{\frac{d}{dt} \pi_{\theta_t}(\ystar)}{\pi_{\theta_t}(\ystar)} &\le \frac{1}{7} \cdot \frac{-\frac{d}{dt}\pi_{\theta_t}(\ymed)}{1-\pi_{\theta_t}(\ymed)} = \frac{1}{7} \cdot \frac{\frac{d}{dt}(1-\pi_{\theta_t}(\ymed))}{1-\pi_{\theta_t}(\ymed)},
\end{align*}
from which it follows that:
\begin{align*}
    \frac{d}{dt}\ln \pi_{\theta_t}(\ystar) \le \frac{1}{7} \cdot \frac{d}{dt} \ln (1-\pi_{\theta_t}(\ymed) )
    .
\end{align*}
Integrating both sides then leads to:
\begin{align*}
    \ln \pi_{\theta_t}(\ystar) \Big|_{0}^{t_{\sqrt{\gamma}}} \le \frac{1}{7} \cdot \ln (1 - \pi_{\theta_t}(\ymed))\Big|_{0}^{t_{\sqrt{\gamma}}} .
\end{align*}
Thus,
\begin{align*}
    \frac{\pi_{\theta_{t_{\sqrt{\gamma}}}}(\ystar)}{\pi_{\theta_0}(\ystar)} &\le \brk*{ \frac{1-\pi_{\theta_{t_{\sqrt{\gamma}}}}(\ymed)}{1-\pi_{\theta_0}(\ymed)} }^{1/7}, 
\end{align*}
and so, 
\begin{align}
\label{eq: ystar_upper_tgamma}
    \pi_{\theta_{t_{\sqrt{\gamma}}}}(\ystar) \le \frac{\pi_{\theta_0}(\ystar) (1-\pi_{\theta_{t_{\sqrt{\gamma}}}}(\ymed))^{1/7}}{(1-\pi_{\theta_0}(\ymed))^{1/7}}.
\end{align}
Next, we prove that $1 - \pi_{\theta_{t_{\sqrt{\gamma}}}}(\ymed) \le \frac{\sqrt{\gamma}+ \pi_{\theta_0}(\ystar)}{\gtreward(\ymed)}$. 
Assume by way of contradiction that this is not the case.
We would therefore have that:
\begin{align*}
    \Vgt(\theta_{t_{\sqrt{\gamma}}}) &= \pi_{\theta_{t_{\sqrt{\gamma}}}}(\ymed) \gtreward(\ymed) + \pi_{\theta_{t_{\sqrt{\gamma}}}}(\ystar) \gtreward(\ystar) + \sum\nolimits_{z \in \Ybad} \pi_{\theta_{t_{\sqrt{\gamma}}}}(z) \gtreward(z)\\
    &< \bigg(1 - \frac{\sqrt{\gamma}+ \pi_{\theta_0}(\ystar)}{\gtreward(\ymed)}\bigg)\gtreward(\ymed) + \pi_{\theta_0}(\ystar) \le \gtreward(\ymed) - \sqrt{\gamma} ,
\end{align*}
where the inequality is by $\gtreward(z) \le 0$ for all $z \in \Ybad$ and $\gtreward(\ystar) \le 1$.
However, this contradicts the definition of $t_{\sqrt{\gamma}}$, which is the initial time at which $\Vgt (\theta_t) \geq \gtreward(\ymed) - \sqrt{\gamma}$.
Hence, it must be that:
\begin{align*}
    1 - \pi_{\theta_{t_{\sqrt{\gamma}}}}(\ymed) \le \frac{\sqrt{\gamma}+ \pi_{\theta_0}(\ystar)}{\gtreward(\ymed)} .
\end{align*}

Plugging this upper bound into \zcref{eq: ystar_upper_tgamma}, we get:
\be
\begin{split}
    \pi_{\theta_{t_{\sqrt{\gamma}}}}(\ystar) & \le \frac{\pi_{\theta_0}(\ystar)\big(\sqrt{\gamma}+\pi_{\theta_0}(\ystar)\big)^{1/7}}{\big[\gtreward(\ymed)(1-\pi_{\theta_0}(\ymed))\big]^{1/7}} \\
    & \le \frac{\pi_{\theta_0}(\ystar)(1.3 \sqrt{\gamma})^{1/7}}{(0.05 \cdot \gtreward(\ymed)^2)^{1/7}} \\
    & = \pi_{\theta_0}(\ystar) \cdot \brk*{ \frac{ 26\sqrt{\gamma} }{ \gtreward(\ymed)^2} }^{1/7},
\end{split}
\label{eq: ystar_upper_sqrtgamma}
\ee
where the second inequality is from $1 - \pi_{\theta_0}(\ymed) \ge \pi_{\theta_0}(\Ybad) \ge 0.05 \cdot \gtreward(\ymed)$ and $\pi_{\theta_0}(\ystar) \le \gamma^{13/14} \le 0.3 \sqrt{\gamma}$ (see \zcref{assump: multi_s}).

\medskip

\textbf{Step 2.4: completing the proof of monotonicity in $[t_{\sqrt{\gamma}}, t_\gamma]$}
Finally, we conclude the proof of \zcref{prop: monotonical_multi_s} by considering the optimization dynamics over $[t_{\sqrt{\gamma}}, t_\gamma]$ and showing that $\pi_{\theta_t}(\ystar)$ continues to decrease during this time interval.

Assume by way of contradiction that there exists $t^\prime \in [0 , t_{\gamma}]$ such that $\left.\frac{d}{dt} \pi_{\theta_t}(\ystar)\right|_{t=t^\prime} \ge 0$. 
Let $\tau$ be the initial time in $[0, t_\gamma]$ at which $\frac{d}{dt} \pi_{\theta_t}(\ystar) \geq 0$, \ie:
\[
\tau := \min \brk[c]*{ t \in [ 0 , t_\gamma] : \tfrac{d}{dt} \pi_{\theta_t}(\ystar) \geq 0 }
.
\]
By definition, $\frac{d}{dt}\pi_{\theta_t}(\ystar) < 0$, and so $\frac{d}{dt}\pi_{\theta_t}(\ymed) > 0$, for all $t \in [0, \tau)$.
We also know that if $\tau$ exists, then $\tau \ge t_{\sqrt{\gamma}}$ since $\frac{d}{dt} \pi_{\theta_t}(\ystar) < 0$ for all $t \in [0, t_{\sqrt{\gamma}}]$.
This implies that $\pi_{\theta_\tau}(\ystar) \le \pi_{\theta_{t_{\sqrt{\gamma}}}}(\ystar)$ and $\pi_{\theta_\tau}(\ymed) \ge \pi_{\theta_0}(\ymed)$.
Next, we establish that under the assumption above 
\[
\pi_{\theta_\tau}(\Ybad)^2 \le \frac{7 \pi_{\theta_0}(\ymed)^2 \advgt(\ymed; \theta_\tau)}{8\Delta_2} .
\]
To see this, notice that:
\begin{align*}
        \advgt(\ymed; \theta_\tau) &= (1 - \pi_{\theta_\tau}(\ymed)) \gtreward(\ymed) - \pi_{\theta_\tau}(\ystar) \gtreward(\ystar) - \sum\nolimits_{z \in \Ybad} \pi_{\theta_\tau}(z) \gtreward(z)\\
        &= -\Delta_1 \pi_{\theta_\tau}(\ystar) + \Deltatwo \pi_{\theta_\tau}(\Ybad) .
\end{align*}
Thus,
\begin{align*}
    \pi_{\theta_\tau}(\Ybad) = \frac{\advgt(\ymed; \theta_\tau) + \Delta_1 \pi_{\theta_\tau}(\ystar)}{\Deltatwo} \le \frac{1.3 \advgt(\ymed; \theta_\tau)}{\Deltatwo} .
\end{align*}
The inequality above follows from \zcref{assumpt: multi_ystar} in \zcref{assump: multi_s} since
\begin{align*}
    \pi_{\theta_0}(\ystar) \le \frac{\gtreward(\ymed)^{2/7}\gamma^{13/14}}{20(\Delta_1 + \Delta_2)} \le \frac{\gtreward(\ymed)^{2/7}\gamma^{13/14}}{20\Delta_1} \le \frac{0.3 \gamma^{13/14}}{\Delta_1 (26/\gtreward(\ymed)^2)^{1/7}}, 
\end{align*}
which combined with \zcref{eq: ystar_upper_sqrtgamma} yields:
\begin{align*}
    \Delta_1 \pi_{\theta_\tau}(\ystar) \le \Delta_1 \pi_{\theta_{t_{\sqrt{\gamma}}}}(\ystar) \le \Delta_1 \pi_{\theta_0}(\ystar) \cdot (26 \sqrt{\gamma}/\gtreward(\ymed)^2)^{1/7}\le 0.3 \gamma \le 0.3\advgt(\ymed; \theta_\tau).    
\end{align*}
It therefore holds that:
\begin{align*}
    \pi_{\theta_\tau}(\Ybad)^2 &\le \frac{1.69 \advgt(\ymed; \theta_\tau)^2}{\Deltatwo^2}\\
    &< \frac{7\pi_{\theta_0}(\ymed)^2 \advgt(\ymed; \theta_\tau)}{8 \Delta_2} \cdot \frac{2 \advgt(\ymed; \theta_\tau)}{\Deltatwo \pi_{\theta_0}(\ymed)^2}\\
    &\le \frac{7\pi_{\theta_0}(\ymed)^2 \advgt(\ymed; \theta_\tau)}{8 \Delta_2} \cdot \frac{2\advgt(\ymed; \theta_0)}{\Deltatwo \pi_{\theta_0}(\ymed)^2},
\end{align*}
where the third inequality is from $\advgt(\ymed; \theta_\tau) \leq \advgt(\ymed; \theta_0)$ (the advantage of an output is monotonically non-increasing since $\Vgt (\theta_t)$ is non-decreasing under gradient flow). 
We now upper bound the second term on the right-hand side in the inequality above by one.
Examining $\advgt(\ymed; \theta_0)$:
\begin{align*}
    \advgt(\ymed; \theta_0) &= (1 - \pi_{\theta_0}(\ymed))\gtreward(\ymed) - \pi_{\theta_0}(\ystar) \gtreward(\ystar) - \sum\nolimits_{z \in \Ybad}\pi_{\theta_0}(z) \gtreward(z)\\
    &\le (1 - \pi_{\theta_0}(\ymed))\gtreward(\ymed) + \pi_{\theta_0}(\Ybad)\\
    &\le 0.2 \gtreward(\ymed) + 0.1 \gtreward(\ymed)\\
    &= 0.3 \gtreward(\ymed)\\
    &< \frac{ 0.8^2 \gtreward(\ymed)}{2}\\
    &\le \frac{21 \Deltatwo \pi_{\theta_0}(\ymed)^2}{44}.
\end{align*}
Here, the first inequality is from $\gtreward(z) \ge -1$; the second inequality is from \zcref{assump: multi_s} since $\pi_{\theta_0}(\Ybad) \le 0.1\gtreward(\ymed) \le 0.1$ and $\pi_{\theta_0}(\ymed) \ge 0.8$; and the last inequality is from $\gtreward(\ybad) \le 0$ and $\pi_{\theta_0}(\ymed) \ge 0.8$. 
This implies that $\frac{2\advgt(\ymed; \theta_0)}{\Deltatwo \pi_{\theta_0}(\ymed)^2} \le 1$, and so: 
\begin{equation}
\label{eq: ybad2_upper2}
    \pi_{\theta_\tau}(\Ybad)^2 \le \frac{7\pi_{\theta_0}(\ymed)^2 \advgt(\ymed; \theta_\tau)}{8\Delta_2}.
\end{equation}

Let us now consider $\left.\frac{d}{dt} \pi_{\theta_t}(\ystar)\right|_{t=\tau}$ and obtain a contradiction by showing that it is negative.
We start with the following computations:
\begin{align*}
        & \left.\frac{d}{dt} \pi_{\theta_t}(\ystar)\right|_{t=\tau} \\
        & = \pi_{\theta_\tau}(\ystar) \Big(\pi_{\theta_\tau}(\ystar) \advgt(\ystar; \theta_\tau) - \sum_y \pi_{\theta_\tau}(y)^2 \advgt(y; \theta_\tau)\Big)\\
        &= \pi_{\theta_\tau}(\ystar) \Big( \big(\pi_{\theta_\tau}(\ystar) - \pi_{\theta_\tau}(\ystar)^2\big) \advgt(\ystar; \theta_\tau) - \pi_{\theta_\tau}(\ymed)^2 \advgt(\ymed; \theta_\tau) \\
        & \hspace{18mm} - \sum\nolimits_{z \in \Ybad}  \pi_{\theta_\tau}(z)^2 \advgt(z; \theta_\tau)\Big)\\
        &\le \pi_{\theta_\tau}(\ystar) \Big(\! \big(\pi_{\theta_{t_{\sqrt{\gamma}}}}(\ystar) - \pi_{\theta_{t_{\sqrt{\gamma}}}}(\ystar)^2\big) \advgt(\ystar; \theta_\tau) - \pi_{\theta_0}(\ymed)^2 \advgt(\ymed; \theta_\tau) \\
        & \hspace{18mm} - \sum\nolimits_{z \in \Ybad} \pi_{\theta_\tau}(z)^2 \advgt(z; \theta_\tau)\Big)\\
        & \le \pi_{\theta_\tau}(\ystar) \Big( \big(\pi_{\theta_{t_{\sqrt{\gamma}}}}(\ystar) - \pi_{\theta_{t_{\sqrt{\gamma}}}}(\ystar)^2\big) \advgt(\ystar; \theta_0) - \pi_{\theta_0}(\ymed)^2 \advgt(\ymed; \theta_\tau) \\
        & \hspace{18mm} + (\Delta_2 - \gamma) \sum\nolimits_{z \in \Ybad} \pi_{\theta_\tau}(z)^2 \Big)\\
        &\le \pi_{\theta_\tau}(\ystar) \Big( \big(\pi_{\theta_{t_{\sqrt{\gamma}}}}(\ystar) - \pi_{\theta_{t_{\sqrt{\gamma}}}}(\ystar)^2\big) \advgt(\ystar; \theta_0) - \pi_{\theta_0}(\ymed)^2 \advgt(\ymed; \theta_\tau) \\
        & \hspace{18mm} + (\Delta_2 - \gamma) \pi_{\theta_\tau}(\Ybad)^2 \Big)\\
        &\le \pi_{\theta_\tau}(\ystar) \Big(\big(\pi_{\theta_{t_{\sqrt{\gamma}}}}(\ystar) - \pi_{\theta_{t_{\sqrt{\gamma}}}}(\ystar)^2\big) \advgt(\ystar; \theta_0) - (1/8) \pi_{\theta_0}(\ymed)^2 \advgt(\ymed; \theta_\tau) \Big)\\
        &\le \pi_{\theta_\tau}(\ystar) \Big(\big(\pi_{\theta_{t_{\sqrt{\gamma}}}}(\ystar) - \pi_{\theta_{t_{\sqrt{\gamma}}}}(\ystar)^2\big) \advgt(\ystar; \theta_0) - (1/8)\pi_{\theta_0}(\ymed)^2 \gamma \Big) \\
        &\le \pi_{\theta_\tau}(\ystar) \Big( \pi_{\theta_{t_{\sqrt{\gamma}}}}(\ystar)  \advgt(\ystar; \theta_0) - (1/8)\pi_{\theta_0}(\ymed)^2 \gamma \Big)
    \end{align*}
The first inequality is from $\pi_{\theta_\tau}(\ystar) - \pi_{\theta_\tau}(\ystar)^2 \leq \pi_{\theta_{t_{\sqrt{\gamma}}}}(\ystar) - \pi_{\theta_{t_{\sqrt{\gamma}}}}(\ystar)^2$ (this holds since $\pi_{\theta_\tau} (\ystar) \leq \pi_{\theta_{t_{\sqrt{\gamma}}}} (\ystar) \leq \pi_{\theta_{0}} (\ystar) < 0.5$); the second inequality is from $\advgt(\ystar; \theta_\tau) \le \advgt(\ystar; \theta_0)$ and $- \advgt (z ; \theta_\tau) \leq \Delta_2 - \gamma$ for all $z \in \Ybad$; the third inequality is due to $\sum_{z \in \Ybad} \pi_{\theta_\tau}(z)^2 \leq \pi_{\theta_\tau} (\Ybad)^2$; the fourth inequality is from \zcref{eq: ybad2_upper2}; the fifth inequality is from $\advgt(\ymed; \theta_\tau) \ge \advgt(\ymed; \theta_{t_\gamma}) = \gamma$; and the last inequality is by $1 - \pi_{\theta_{t_{\sqrt{\gamma}}}}(\ystar) \le 1$.

Now, by \zcref{eq: ystar_upper_sqrtgamma} and \zcref{assumpt: multi_ystar} in \zcref{assump: multi_s}, we have 
\begin{align*}
    \pi_{\theta_{t_{\sqrt{\gamma}}}}(\ystar) &\le \pi_{\theta_0}(\ystar) \cdot (26\sqrt{\gamma}/\gtreward(\ymed)^2)^{1/7}\\
    &\le \frac{\gamma^{13/14} \gtreward(\ymed)^{2/7}}{20(\Delta_1+\Delta_2)} \cdot \bigg(\frac{26\sqrt{\gamma}}{\gtreward(\ymed)^2}\bigg)^{1/7}\\
    &< \frac{0.8^2\gamma^{13/14} \gtreward(\ymed)^{2/7}}{8 \cdot 26^{1/7}(\Delta_1+\Delta_2)} \cdot \bigg(\frac{26\sqrt{\gamma}}{\gtreward(\ymed)^2}\bigg)^{1/7}\\
    &\le \frac{\pi_{\theta_0}(\ymed)^2 \gamma}{8\advgt(\ystar; \theta_0)}. 
\end{align*}
Therefore:
\[
\begin{split}
\left.\frac{d}{dt} \pi_{\theta_t}(\ystar)\right|_{t=\tau} & \le \pi_{\theta_\tau}(\ystar) \Big( \pi_{\theta_{t_{\sqrt{\gamma}}}}(\ystar)  \advgt(\ystar; \theta_0) - (1/8)\pi_{\theta_0}(\ymed)^2 \gamma \Big) \\
& < \pi_{\theta_\tau}(\ystar) \brk*{ \frac{\pi_{\theta_0}(\ymed)^2 \gamma}{8\advgt(\ystar; \theta_0)} \cdot \advgt(\ystar; \theta_0) - (1/8)\pi_{\theta_0}(\ymed)^2 \gamma }\\
& = 0,
\end{split}
\]
which contradicts the fact that $\tau$ is the initial time at which $\left.\frac{d}{dt} \pi_{\theta_t}(\ystar)\right|_{t=\tau} \geq 0$. 
Hence, it must be that $\frac{d}{dt} \pi_{\theta_t}(\ystar) < 0$ for all $t \in [ 0,  t_\gamma]$. 
Additionally, by \zcref{lemma: decrease_y_bad_multi_s} we know that $\pi_{\theta_t}(\Ybad)$ is also decreasing in the time interval $[0, t_\gamma]$, and so $\frac{d}{dt}\pi_{\theta_t}(\ymed) > 0$ over $[0, t_\gamma]$. 

To conclude, we showed that $\frac{d}{dt} \pi_{\theta_t}(\ystar) < 0$ and $\frac{d}{dt} \pi_{\theta_t}(\ymed) > 0$ for all $t \in [0, t_\gamma]$.
Thus, $\min\nolimits_{t \in [0, t_\gamma]} \pi_{\theta_t}(\ystar) = \pi_{\theta_{t_\gamma}}(\ystar)$.
Furthermore, since
\[
\pi_{\theta_{t_{\sqrt{\gamma}}}} (\ystar) \le \frac{\pi_{\theta_0}(\ymed)^2 \gamma}{8\advgt(\ystar; \theta_0)}
\]
and $t_{\sqrt{\gamma}} < t_{\gamma}$, it holds that
\[
\pi_{\theta_{t_\gamma}} (\ystar) < \pi_{\theta_{t_{\sqrt{\gamma}}}} (\ystar) \le \frac{\pi_{\theta_0}(\ymed)^2 \gamma}{8\advgt(\ystar; \theta_0)} ,
\]
completing the proof of \zcref{prop: monotonical_multi_s}.
\end{proof}

\medskip

Based on \zcref{prop: monotonical_multi_s}, we can now complete the proof of Case I.

\begin{proof}[Proof of \zcref{thm:attraction_to_mediocre_outputs_beneficial_error_formal}, Case I]
    By \zcref{prop: monotonical_multi_s}, as long as $\Vgt (\theta_t) \le \gtreward(\ymed) - \gamma$, we know that $\pi_{\theta_t}(\ystar)$ is monotonically decreasing and $\pi_{\theta_t}(\ymed)$ is monotonically increasing. 
    Recall that $t_\gamma$ denotes the initial time at which $\Vgt (\theta_t) \ge \gtreward(\ymed) - \gamma$.
    Since $\Vgt (\theta_t)$ is continuous in $t$, at time $t_\gamma$ it must be that $\Vgt (\theta_{t_\gamma}) = \gtreward(\ymed) - \gamma$ and $t_\gamma$ is the initial time at which this equality holds.
    
    We first prove that at $t_\gamma$ the policy is highly concentrated on $\ymed$ by upper bounding $1 - \pi_{\theta_{t_\gamma}}(\ymed)$.
    This follows from
    \begin{align*}
        \Vgt (\theta_{t_\gamma}) = \pi_{\theta_{t_\gamma}}(\ymed) \gtreward(\ymed) + \pi_{\theta_{t_\gamma}}(\ystar) \gtreward(\ystar) + \sum\nolimits_{z \in \Ybad}\pi_{\theta_{t_\gamma}}(z) \gtreward(z) = \gtreward(\ymed) - \gamma ,
    \end{align*}
    which implies:
    \be
        \begin{split}
        1 - \pi_{\theta_{t_\gamma}}(\ymed) &= \frac{1}{\gtreward(\ymed)} \brk*{ \gamma + \pi_{\theta_{t_\gamma}}(\ystar)\gtreward(\ystar) + \sum\nolimits_{z \in \Ybad}\pi_{\theta_{t_\gamma}}(z) \gtreward(z) } \\
        & \le \frac{1}{\gtreward(\ymed)}\bigg(\gamma + \frac{\gamma}{8 \Delta_1}\bigg) \\
        & = \frac{(1+8\Delta_1)\gamma}{8\Delta_1 \gtreward(\ymed)} .
        \end{split}
        \label{eq:upper_bound_1_minus_pi_med}
    \ee
    The inequality is due to $\pi_{\theta_{t_\gamma}}(\ystar) \le \frac{\pi_{\theta_0}(\ymed)^2 \gamma}{8 \advgt(\ystar; \theta_0)} \le \frac{\gamma}{8 \Delta_1}$ (\zcref{prop: monotonical_multi_s}).

    Now, the fact that the policy assigns high probability to $\ymed$ at $t_{\gamma}$ allows us to lower bound $\teps$---the initial time at which $\Vgt (\theta_t) \ge \gtreward(\ystar) - \epsilon$.
    For any time $T \ge t_\gamma$, we have:
    \begin{align}
        \|\theta_T - \theta_{t_\gamma}\| &= \bigg\| \int_{t_\gamma}^T \frac{d}{dt} \theta_t dt \bigg\|\nonumber\\
        &\le \int_{t_\gamma}^T \left\| \frac{d}{dt} \theta_t \right\| dt\nonumber\\
        &= \int_{t_\gamma}^T\|\nabla \Vgt(\theta_t)\| dt\nonumber\\
        &\le \int_{t_\gamma}^T 2(\Delta_1+\Delta_2)(1 - \pi_{\theta_t}(\ymed)) dt, 
        \label{eq: norm_theta_upper}
    \end{align}
    where the last inequality is by \zcref{lemma:l2_grad_bound}, with $B = 1$ since the feature vectors are orthonormal.
    We proceed by upper bounding the rate at which $1 - \pi_{\theta_t}(\ymed)$ increases (\ie, the rate at which $\pi_{\theta_t} (\ymed)$ decreases). 
    From the chain rule we get:
    \begin{align*}
        \frac{d}{dt}(1 - \pi_{\theta_t}(\ymed)) &= \inprod{\nabla (1 - \pi_{\theta_t}(\ymed))}{\frac{d}{dt} \theta_t}\\
        &= \inprod{\nabla (1 - \pi_{\theta_t}(\ymed))}{\nabla \Vgt(\theta_t)}\\
        &= \inprod{\nabla(\pi_{\theta_t}(\ystar) + \sum\nolimits_{z \in \Ybad} \pi_{\theta_t}(z))}{\nabla \Vgt(\theta_t)}.
    \end{align*}
    For all $y \in \Y$, the gradient of $\pi_{\theta_t}(y)$ with respect to $\theta_t$ can be computed as follows:
    \begin{align*}
        \nabla \pi_{\theta_t}(y) = \pi_{\theta_t}(y) \brk*{\phi(y) - \sum\nolimits_{z \in \Y}\pi_{\theta_t}(z)\phi(z)} = \pi_{\theta_t}(y)\brk*{\phi(y) - \bar{\phi}_{\theta_t}} ,
    \end{align*}
    where $\bar{\phi}_{\theta_t} := \sum\nolimits_{z \in \Y}\pi_{\theta_t}(z)\phi(z)$.
    Thus:
    \begin{align}
        \frac{d}{dt}(1 - \pi_{\theta_t}(\ymed)) &= \inprod{\nabla\pi_{\theta_t}(\ystar)}{\nabla \Vgt(\theta_t) } + \sum\nolimits_{z \in \Ybad} \inprod{\nabla\pi_{\theta_t}(z)}{\nabla \Vgt(\theta_t)}  \nonumber\\
        &= \pi_{\theta_t}(\ystar) \inprod{\phi(\ystar) - \bar{\phi}_{\theta_t}}{\nabla \Vgt(\theta_t)} + \sum\nolimits_{z \in \Ybad} \pi_{\theta_t}(z) \inprod{\phi(z) -  \bar{\phi}_{\theta_t}}{\nabla \Vgt(\theta_t)} \nonumber\\
        &\le (1-\pi_{\theta_t}(\ymed)) \sup\nolimits_{y \in \Y}\| \phi(y) -  \bar{\phi}_{\theta_t}\| \|\nabla \Vgt(\theta_t) \|.\label{eq: frac_yprime_upper}
    \end{align}
    Notice that since the feature vectors are orthonormal, for any $y \in \Y$:
    \[
    \begin{split}
       \|\phi(y) -  \bar{\phi}_{\theta_t}\|^2 & = 1 - 2\pi_{\theta_t}(y) + \sum\nolimits_{z \in \Y}\pi_{\theta_t}(z)^2 \\
        & = (1 - \pi_{\theta_t}(y))^2 + \sum\nolimits_{z \in \Y \setminus \{ y \}}\pi_{\theta_t}(z)^2 \\
        & \le (1 -\pi_{\theta_t}(y))^2 + \brk*{ \sum\nolimits_{z \in \Y \setminus \{ y \}}\pi_{\theta_t}(z)}^2 \\
        & = 2(1 - \pi_{\theta_t}(y))^2 .
    \end{split}
    \]
    Thus, $\|\phi(y) -  \bar{\phi}_{\theta_t}\| \le \sqrt{2} (1 - \pi_{\theta_t}(y)) \le \sqrt{2}$.
    Plugging this bound into \zcref{eq: frac_yprime_upper} and applying \zcref{lemma:l2_grad_bound} with $B=1$ to upper bound $\| \nabla \Vgt (\theta_t) \|$ then yields:
    \begin{align*}
        \frac{d}{dt}(1 - \pi_{\theta_t}(\ymed)) &\le 2\sqrt{2}(\Delta_1 + \Delta_2) (1 - \pi_{\theta_t}(\ymed))^2
        .
    \end{align*}
This implies that
\begin{align*}
\frac{d}{dt} \bigg(\frac{1}{1-\pi_{\theta_t}(\ymed)}\bigg) &= - \frac{ \frac{d}{dt} (1-\pi_{\theta_t}(\ymed)) }{(1-\pi_{\theta_t}(\ymed))^2} \ge -2\sqrt{2}(\Delta_1 + \Delta_2).
\end{align*}
	Integrating both sides from $t_\gamma$ to $t$, for any $t \geq t_\gamma$, leads to:
	\begin{align*}
	    \frac{1}{1-\pi_{\theta_t}(\ymed)} - \frac{1}{1-\pi_{\theta_{t_\gamma}}(\ymed)} \ge -2\sqrt{2}(\Delta_1 + \Delta_2)(t-t_\gamma) .
	\end{align*}
	Hence, for any $t \ge t_\gamma$ such that
	\[
	2\sqrt{2}(\Delta_1 + \Delta_2)(1 - \pi_{\theta_{t_\gamma}}(\ymed)) (t - t_\gamma) < 1,
	\]
	it holds that
	$$
	1 - \pi_{\theta_t}(\ymed) \le \frac{1-\pi_{\theta_{t_\gamma}}(\ymed)}{1 - 2\sqrt{2}(\Delta_1 + \Delta_2)(1 - \pi_{\theta_{t_\gamma}}(\ymed)) (t - t_\gamma)} .
	$$
	Consequently, for any $T \ge t_\gamma$ satisfying
	\[
	2\sqrt{2}(\Delta_1 + \Delta_2)(1 - \pi_{\theta_{t_\gamma}}(\ymed)) (T - t_\gamma) < 1,
	\]
	we can plug this upper bound into \zcref{eq: norm_theta_upper} and obtain:
	\[
    \begin{split}
	    \|\theta_T - \theta_{t_\gamma}\| &\le 2(\Delta_1 + \Delta_2) \int_{t_\gamma}^T (1 - \pi_{\theta_t}(\ymed))dt\\
	    &= 2(\Delta_1 + \Delta_2) \int_0^{T-t_\gamma} \frac{1 - \pi_{\theta_{t_\gamma}}(\ymed)}{1 - 2\sqrt{2}(\Delta_1 + \Delta_2)(1 - \pi_{\theta_{t_\gamma}}(\ymed)) s} ds\\
	    &= \frac{1}{\sqrt{2}} \ln\bigg(\frac{1}{1-2\sqrt{2}(\Delta_1 + \Delta_2)(1 - \pi_{\theta_{t_\gamma}}(\ymed)) (T - t_\gamma)}\bigg) .
    \end{split}
    \]

    By \zcref{lemma:vgt_lipschitzness}, $\Vgt$ is $1$-Lipschitz with respect to $\theta$, and so for any $T \ge t_\gamma$:
    \be
        |\Vgt (\theta_T) - \Vgt(\theta_{t_\gamma})| \le \|\theta_T - \theta_{t_\gamma}\| \le \frac{1}{\sqrt{2}} \ln\bigg(\frac{1}{1-2\sqrt{2}(\Delta_1 + \Delta_2)(1 - \pi_{\theta_{t_\gamma}}(\ymed)) (T - t_\gamma)}\bigg) .
        \label{eq:bound_dist_V_T_V_t_gamma}
    \ee
    If $\teps = \infty$, then the desired lower bound holds trivially.
    Hence, assume $\teps < \infty$.
    By the definition of $\teps$ and continuity of $\Vgt(\theta_t)$, $\Vgt(\theta_{\teps}) = \gtreward(\ystar) - \epsilon$.
    In particular, $\teps > t_\gamma$ since $\Vgt(\theta_{\teps}) = \gtreward (\ystar) - \epsilon > \gtreward (\ymed) > \Vgt (\theta_{t_\gamma})$.
    Therefore,
	    \[
	    \Vgt(\theta_{\teps}) - \Vgt(\theta_{t_\gamma}) = \gtreward (\ystar) - \epsilon - \gtreward (\ymed) + \gamma = \Delta_1 - \epsilon + \gamma \ge \Delta_1 - \epsilon .
	    \]
	    If $2\sqrt{2}(\Delta_1 + \Delta_2)(1 - \pi_{\theta_{t_\gamma}}(\ymed)) (\teps - t_\gamma) \ge 1$, then directly
	    \begin{align*}
	        \teps - t_\gamma \ge \frac{1}{2\sqrt{2}(\Delta_1 + \Delta_2)(1 - \pi_{\theta_{t_\gamma}}(\ymed))} \ge \frac{1 - e^{-\sqrt{2}(\Delta_1 - \epsilon)}}{2\sqrt{2}(\Delta_1 + \Delta_2)(1 - \pi_{\theta_{t_\gamma}}(\ymed))},
	    \end{align*}
	    since $1 - e^{-\sqrt{2}(\Delta_1 - \epsilon)} \le 1$.
	    Otherwise, we apply the bound in \zcref{eq:bound_dist_V_T_V_t_gamma} with $T = \teps$ to obtain:
	    \begin{align*}
	       \Delta_1 - \epsilon \le \Vgt (\theta_{\teps}) - \Vgt (\theta_{t_\gamma}) \leq \frac{1}{\sqrt{2}} \ln\bigg(\frac{1}{1-2\sqrt{2}(\Delta_1 + \Delta_2)(1 - \pi_{\theta_{t_\gamma}}(\ymed)) (\teps - t_\gamma)}\bigg),
	    \end{align*}
	    which implies
	    \begin{align*}
	        \teps - t_\gamma \ge \frac{1 - e^{-\sqrt{2}(\Delta_1 - \epsilon)}}{2\sqrt{2}(\Delta_1 + \Delta_2)(1 - \pi_{\theta_{t_\gamma}}(\ymed))}.
	    \end{align*}
	    Therefore, in either case:
	    \begin{align*}
	        \teps & \geq \teps - t_\gamma \\
            &\ge \frac{1 - e^{-\sqrt{2}(\Delta_1 - \epsilon)}}{2\sqrt{2}(\Delta_1 + \Delta_2)(1 - \pi_{\theta_{t_\gamma}}(\ymed))} \\
	        &\ge \frac{2\sqrt{2}\Delta_1 \gtreward(\ymed)(1 - e^{-\sqrt{2}(\Delta_1 - \epsilon)})}{(\Delta_1 + \Delta_2)(1 + 8\Delta_1) \gamma} \\
	        &\ge \frac{2\sqrt{2}M^{14/13}\Delta_1 \gtreward (\ymed) \brk{ 1 - e^{-\sqrt{2}(\Delta_1 - \epsilon)} } }{(\Delta_1 + \Delta_2)(1 + 8\Delta_1)} \cdot \pi_{\theta_0} (\ystar)^{- 14 / 13} \\
	        &= \Omega \brk*{ \pi_{\theta_0} (\ystar)^{- 14 / 13} } ,
    \end{align*}
    where the second inequality follows from \zcref{eq:upper_bound_1_minus_pi_med}.
    This completes the proof for Case I of \zcref{thm:attraction_to_mediocre_outputs_beneficial_error_formal}.
\end{proof}

\subsubsection{Proof of Case II}

Suppose that gradient flow is used to maximize the expected reward with respect to $\proxyreward$, which assigns $\ymed$ a low reward $\proxyreward(\ymed) = \min\nolimits_{y \in \Ybad} \gtreward (y)$.
In this case, we show that $\pi_{\theta_t}(\ystar)$ increases already from $t = 0$ and that $\tepsnomed = \OO (\pi_{\theta_0}(\ystar)^{-1})$, where recall that $\tepsnomed$ is the initial time at which $\Vgt (\theta_t) \ge \gtreward(\ystar) - \epsilon$ when maximizing $\proxyreward$.

\begin{proof}[Proof of \zcref{thm:attraction_to_mediocre_outputs_beneficial_error_formal}, Case II]
Notice that $\Vgt(\theta) \ge \Vproxy(\theta)$ for all $\theta \in \R^D$ since $\gtreward(y) \ge \proxyreward(y)$ for all $y \in \Y$.
Thus, when $\Vproxy(\theta_t) \ge \gtreward(\ystar) - \epsilon$ it also holds that $\Vgt(\theta_t) \ge \gtreward(\ystar) - \epsilon$. 
We can therefore focus on $\Vproxy$ and consider the time it takes until it reaches a value of $\gtreward(\ystar) - \epsilon$.
In particular, denote by $\tepsnomedP$ the initial time at which $\Vproxy(\theta_t) \ge \gtreward(\ystar) - \epsilon$, \ie:
\[
\tepsnomedP := \min \brk[c]*{ t \geq 0 :\Vproxy (\theta_t) \ge \gtreward (\ystar) - \epsilon }.
\]
The discussion above implies that $\tepsnomed \le \tepsnomedP$, and so it suffices to upper bound $\tepsnomedP$.
For simplicity of notation, denote $\rho := \frac{\gtreward(\ystar) + 1 - \epsilon}{\gtreward(\ystar) + 1}$. 
Our first claim is that when $\pi_{\theta_t}(\ystar) \ge \rho$, we must have $\Vproxy (\theta_t) \ge \gtreward (\ystar) - \epsilon$. 
This can be seen via the following calculation:
\begin{align*}
    \Vproxy (\theta_t) &\ge \pi_{\theta_t}(\ystar) \proxyreward(\ystar) - (1 - \pi_{\theta_t}(\ystar))\\
    &= \pi_{\theta_t}(\ystar) (\gtreward(\ystar) + 1) -1\\
    &\ge \gtreward(\ystar) - \epsilon .
\end{align*}
This implies that $\pi_{\theta_t}(\ystar) \leq \rho$ for all $t \in [0, \tepsnomedP]$.

We prove that 
\begin{align*}
    T:= \frac{1}{(1-\rho)^2\advproxy(\ystar; \theta_0)}\bigg(\frac{1}{\pi_{\theta_0}(\ystar)} - \frac{1}{\rho}\bigg) \geq \tepsnomedP ,
\end{align*}
from which the desired upper bound on $\tepsnomed$ immediately follows by $\tepsnomed \leq \tepsnomedP$ and substituting the value of $\rho$.

Assume by way of contradiction that $\tepsnomedP > T$.
Let us lower bound the rate at which $\pi_{\theta_t}(\ystar)$ increases over $[0, \tepsnomedP]$.
Notice first that $\Vproxy (\theta_0) > \proxyreward (y)$ for all $y \in \Y \setminus \{ \ystar \}$ since all such $y$ have the same proxy reward value, which is lower than $\proxyreward( \ystar ) = \gtreward (\ystar)$. 
Now, consider the time derivative of $\pi_{\theta_t}(\ystar)$, starting from the expression derived in \zcref{lem:probability_derivative_expression}:
\begin{align*}
    & \frac{d}{dt} \pi_{\theta_t}(\ystar) \\
    & = \pi_{\theta_t}(\ystar) \Big(\pi_{\theta_t}(\ystar) \advproxy(\ystar; \theta_t) - \sum\nolimits_{y \in \Y} \pi_{\theta_t}(y)^2 \advproxy(y; \theta_t)\Big)\\
        &= \pi_{\theta_t}(\ystar) \Big(\big(\pi_{\theta_t}(\ystar) - \pi_{\theta_t}(\ystar)^2\big) \advproxy(\ystar; \theta_t) - \pi_{\theta_t}(\ymed)^2 \advproxy(\ymed; \theta_t) - \!\!\! \sum_{z \in \Ybad} \!\! \pi_{\theta_t}(z)^2 \advproxy(z; \theta_t)\Big)\\
        &\ge \pi_{\theta_t}(\ystar)^2 (1 - \pi_{\theta_t}(\ystar)) \advproxy(\ystar; \theta_t).
\end{align*}
Here, the inequality is from $\advproxy(y; \theta_t)  \le \advproxy(y; \theta_0) \le 0$ for all $y \in \Y \setminus\{\ystar\}$. 
As we proved above, $\pi_{\theta_t}(\ystar) \leq \rho$ for all $t \in [0, \tepsnomedP]$.
Thus for such $t$ it holds that:
    \[
\frac{d}{dt} \pi_{\theta_t}(\ystar) \geq (1-\rho)\advproxy(\ystar; \theta_t) \pi_{\theta_t}(\ystar)^2.
\]
Next, we show that $\advproxy(\ystar; \theta_t) \ge (1-\rho)\advproxy(\ystar; \theta_0)$. 
To prove this, observe that since $\proxyreward(\ystar) = \gtreward(\ystar)$ and $\Vproxy(\theta_0)\ge -1$,
\[
\frac{\gtreward(\ystar) - \epsilon - \Vproxy (\theta_0)}{\proxyreward(\ystar) - \Vproxy (\theta_0)}
= 1 - \frac{\epsilon}{\proxyreward(\ystar) - \Vproxy (\theta_0)}
\le 1 - \frac{\epsilon}{\proxyreward(\ystar) + 1}
= \frac{\gtreward(\ystar) + 1 - \epsilon}{\gtreward(\ystar) + 1}
= \rho.
\]
Using this fact, we have that:
\begin{align*}
    \frac{\advproxy(\ystar; \theta_t)}{\advproxy(\ystar; \theta_0)} = \frac{\proxyreward(\ystar) - \Vproxy (\theta_t)}{\proxyreward(\ystar) - \Vproxy (\theta_0)}=1 - \frac{\Vproxy (\theta_t) - \Vproxy (\theta_0)}{\proxyreward(\ystar) - \Vproxy (\theta_0)} \ge 1 - \frac{\gtreward(\ystar) - \epsilon -\Vproxy (\theta_0)}{\proxyreward(\ystar) - \Vproxy (\theta_0)} \geq 1-\rho .
\end{align*}
Therefore, we can plug in this bound to get:
\[
    \frac{d}{dt}\pi_{\theta_t}(\ystar) \ge (1-\rho)^2 \advproxy(\ystar; \theta_0) \pi_{\theta_t}(\ystar)^2 .
\]
Dividing both sides by $\pi_{\theta_t}(\ystar)^2$, integrating from $0$ to $t$, and rearranging terms results in the following lower bound:
$$
\pi_{\theta_t}(\ystar) \ge \frac{\pi_{\theta_0}(\ystar)}{1 - (1-\rho)^2\advproxy(\ystar; \theta_0)\pi_{\theta_0}(\ystar) \cdot t} .
$$
At time $t = T$ for
\[
T = \frac{1}{(1-\rho)^2\advproxy(\ystar; \theta_0)}\bigg(\frac{1}{\pi_{\theta_0}(\ystar)} - \frac{1}{\rho}\bigg) ,
\]
this lower bound on $\pi_{\theta_T} (\ystar)$ implies  that $\pi_{\theta_T}(\ystar) \ge \rho$, and so $\Vproxy (\theta_T) \geq \gtreward( \ystar) - \epsilon$, in contradiction to our assumption that $\tepsnomedP > T$. 
Therefore,
\[
\tepsnomed \le \tepsnomedP \leq T \leq \frac{ \brk*{ \gtreward (\ystar) + 1 }^2 }{\epsilon^2  \brk*{ \proxyreward (\ystar) - \Vproxy (\theta_0)  } }  \cdot \pi_{\theta_0} (\ystar)^{-1} = \OO \brk*{ \pi_{\theta_0} (\ystar)^{-1} } .
\]
Moreover, notice that $\Vproxy (\theta_t)$ is monotonically non-decreasing when maximizing $\proxyreward$ via gradient flow since $\frac{d}{dt} \Vproxy (\theta_t) = \norm{ \nabla \Vproxy (\theta_t) }^2 \geq 0$ for all $t \geq 0$.
Thus, for all $t \geq T \geq \tepsnomedP$:
\begin{align*}
    \epsilon \ge \gtreward(\ystar) - \Vproxy(\theta_{t}) = (1 - \pi_{\theta_{t}}(\ystar))(\Deltaone + \Deltatwo),
\end{align*}
from which we can conclude that for all $t \geq T$:
\[
   \pi_{\theta_{t}}(\ystar) \ge 1 - \frac{\epsilon}{\Deltaone + \Deltatwo}.
\]
\end{proof}

\subsection{Proof of \zcref{thm:attraction_to_mediocre_outputs_beneficial_error_neg_inner_product}}
\label{app:proofs:attraction_to_mediocre_outputs_beneficial_error_neg_inner_product}

For convenience of notation, we start by introducing the following helper variables:
\be
\gamma := \left(\frac{\pi_{\theta_0}(\ystar)}{M'}\right)^{14/13} \quad , \quad s := \inprod{\phi(\ystar)}{\phi(\ymed)}
    \quad , \quad
    h(s) := \frac{\norm{\phi(\ymed)}^2 - s}{7\brk1{\norm{\phi(\ymed)}^2 - s/2}} ,
    \label{eq:neg_def_helper_variables}
\ee
where recall that $M' := \min \brk[c]1{ 1 ,  \norm{ \phi(\ymed) }^2 \gtreward (\ymed)^{2/7} ( 40B^2 (\Delta_1+\Delta_2) )^{-1} }$, with $\Deltaone := \gtreward (\ystar) - \gtreward (\ymed)$, $\Deltatwo := \gtreward (\ymed) - \gtreward (\ybad)$ for some $\ybad \in \Ybad$, and $B := \max_{y \in \Y} \norm{ \phi(y) }$.
We can then straightforwardly state the conditions of \zcref{assum:initial_probs_neg_inner_product} in terms of $\gamma$.

\begin{lemma}
\label{assump: loglin_neg_minor}
Under \zcref{assum:reward_structure_orthonormal,assum:initial_probs_neg_inner_product}, the following properties hold:
\begin{enumerate}
    \item $\gamma \le \min\bigg\{0.01, \bigg(\frac{21\|\phi(\ymed)\|^2}{1100 B^2 (\Delta_1+\Delta_2)}\bigg)^{7/3} \bigg\}$;
    \label{assump: gamma_loglin_neg}
    \item $\pi_{\theta_0}(\ystar) \leq \min\Bigg\{ \gamma^{13 / 14} ,  \frac{\|\phi(\ymed)\|^2\gtreward(\ymed)^{2/7} }{40B^2(\Delta_1+\Delta_2)} \cdot \gamma^{13/14},  \frac{ \norm{\phi(\ymed)} }{ \norm{\phi(\ystar)} } \pi_{\theta_0}(\ymed) \Bigg\}$; and
    \label{assump: loglin_neg_ystar}
    \item $\max\Big\{ \frac{2\sqrt{\gamma}}{\Delta_2}, 0.05 \cdot \gtreward(\ymed) \Big\} \le \pi_{\theta_0}(\Ybad) \le 0.1 \cdot \gtreward(\ymed)$. 
    \label{assumpt: loglin_neg_ybad}
\end{enumerate}
\end{lemma}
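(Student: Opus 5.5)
The three properties are direct reformulations of \zcref{assum:initial_probs_neg_inner_product}, so the plan is to substitute the definition of $\gamma$ in \zcref{eq:neg_def_helper_variables}, mirroring the proof of \zcref{assump: multi_s}. The starting identity is that the definition of $\gamma$ is equivalent to $\pi_{\theta_0}(\ystar) = M' \gamma^{13/14}$, or equivalently $\pi_{\theta_0}(\ystar)^{7/13} / M'^{7/13} = \gamma^{1/2}$. Before substituting, I will note that \zcref{assum:reward_structure_orthonormal} gives $\Deltaone, \Deltatwo > 0$ and $\gtreward(\ymed) > 0$. Since feature vectors are nonzero, $\norm{\phi(\ymed)} > 0$ and $B > 0$ as well, so every quantity involved is well defined and positive.

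\textbf{Property 1.} The first bullet of \zcref{assum:initial_probs_neg_inner_product} gives
\[
\pi_{\theta_0}(\ystar) < M' \cdot m^{13/14},
\qquad
m := \min\brk[c]*{0.01, \brk*{\tfrac{21\norm{\phi(\ymed)}^2}{1100B^2(\Deltaone+\Deltatwo)}}^{7/3}}.
\]
Dividing by $M' > 0$ and raising both sides to the power $14/13$ is monotone, so it yields $\gamma < m$. This is Property 1.

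\textbf{Property 2.} The first entry, $\pi_{\theta_0}(\ystar) \le \gamma^{13/14}$, follows from $\pi_{\theta_0}(\ystar) = M'\gamma^{13/14}$ and $M' \le 1$. The second entry uses the other branch of the minimum defining $M'$:
\[
M' \le \frac{\norm{\phi(\ymed)}^2 \gtreward(\ymed)^{2/7}}{40B^2(\Deltaone+\Deltatwo)},
\]
and multiplying this bound by $\gamma^{13/14}$ gives the claim. The third entry, $\pi_{\theta_0}(\ystar) \le \frac{\norm{\phi(\ymed)}}{\norm{\phi(\ystar)}}\pi_{\theta_0}(\ymed)$, is exactly the second term in the outer minimum of the first bullet of \zcref{assum:initial_probs_neg_inner_product}.

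\textbf{Property 3.} This is the second bullet of \zcref{assum:initial_probs_neg_inner_product} rewritten with the identity above:
\[
\frac{2\pi_{\theta_0}(\ystar)^{7/13}}{\Deltatwo M'^{7/13}} = \frac{2\sqrt{\gamma}}{\Deltatwo}.
\]

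There is no real obstacle; the only points needing care are the monotonicity of the power maps, which holds since all bases are positive, and confirming that each constant matches the corresponding one in the assumption.
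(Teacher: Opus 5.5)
Your proposal is correct and takes the same route as the paper: substitute the definition of $M'$ and use the identity $\pi_{\theta_0}(\ystar) = M'\gamma^{13/14}$ to rewrite each bullet of the initial-policy assumption, with the reward-structure assumption guaranteeing that $\Deltaone, \Deltatwo > 0$ so every quantity is well defined. Your item-by-item checks, such as reading the third entry of Property 2 directly off the outer minimum, simply spell out what the paper's one-line proof leaves implicit.
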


\begin{proof}
All three conditions are obtained directly from \zcref{assum:initial_probs_neg_inner_product} by substituting the definition of $M'$ and noticing that $\pi_{\theta_0} (\ystar) = M' \gamma^{13/14}$.
Note that \zcref{assum:reward_structure_orthonormal} on the ground truth reward structure ensures $\Deltaone$ and $\Deltatwo$ are positive, so all quantities in the conditions above are well-defined.
\end{proof}

\subsubsection{Proof of Case I}

Suppose that gradient flow is used to maximize the expected reward with respect to $\gtreward$.
We define $t_\gamma$ as the initial time at which $\Vgt(\theta_t)$ reaches $\gtreward(\ymed) - C\gamma^{13/14 + h(s)/2}$, \ie:
\be
    t_\gamma := \min \brk[c]*{ t \geq 0 : \Vgt(\theta_t) \geq \gtreward(\ymed) - C\gamma^{13/14 + h(s)/2} },
    \label{eq:t_gamma_def}
\ee
where $C := \big( 26 / \gtreward(\ymed)^2 \big)^{13\alpha/7}$ for $\alpha := - s /  \big ( 26 \brk{ \norm{ \phi (\ymed) }^2 - 0.5 s } \big ) \in \brk*{ 0 , 1 / 13 }$.
As in the proof of \zcref{thm:attraction_to_mediocre_outputs_beneficial_error_formal}, the key step is to show that on $[0, t_\gamma]$ the probability of $\ystar$ decreases while the probability of $\ymed$ increases.
We begin with a monotonicity lemma showing that, whenever $\Vgt(\theta_t) < \gtreward(\ymed)$, a decrease in $\pi_{\theta_t}(\ystar)$ implies an increase in $\pi_{\theta_t}(\ymed)$.
We then prove that $\pi_{\theta_t}(\ystar)$ indeed decreases throughout $[0, t_\gamma]$, which in turn yields monotonic growth of $\pi_{\theta_t}(\ymed)$ on this interval.
The negative inner product between $\phi (\ystar)$ and $\phi (\ymed)$ strengthens this effect relative to the case of orthonormal features (\zcref{thm:attraction_to_mediocre_outputs_beneficial_error_formal}) by pulling probability away from $\ystar$ more aggressively and concentrating the policy more strongly around $\ymed$.
Combining this stronger concentration with the gradient-norm bound from \zcref{lemma:l2_grad_bound} will allow establishing the desired lower bound on $\teps$.

Note that if $\Vgt(\theta_t) < \gtreward(\ymed)$ for all $t \geq 0$, then the lower bound on $\teps$ holds trivially, since $\Vgt(\theta_t)$ never reaches $\gtreward(\ystar) - \epsilon$.
Thus, throughout the proof of this case, we may assume that there exists a time at which $\Vgt(\theta_t) \geq \gtreward(\ymed)$.

\begin{lemma}
\label{lemma: conditional_monotonicity_loglin_neg}
At any time $t \geq 0$ such that $\Vgt(\theta_t) < \gtreward(\ymed)$, if $\frac{d}{dt}\pi_{\theta_t}(\ystar) < 0$, then $\frac{d}{dt}\pi_{\theta_t}(\ymed) > 0$.
\end{lemma}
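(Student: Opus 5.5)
The plan is to mimic the orthonormal-case argument: the probabilities sum to one, so
\[
\tfrac{d}{dt}\pi_{\theta_t}(\ymed) = -\tfrac{d}{dt}\pi_{\theta_t}(\ystar) - \tfrac{d}{dt}\pi_{\theta_t}(\Ybad)\text{\,.}
\]
Given $\tfrac{d}{dt}\pi_{\theta_t}(\ystar)<0$, it therefore suffices to show $\tfrac{d}{dt}\pi_{\theta_t}(\Ybad)\le 0$. Unlike \zcref{lemma: decrease_y_bad_multi_s}, I do not expect this sign to hold unconditionally under a negative inner product. So I will use the hypothesis $\tfrac{d}{dt}\pi_{\theta_t}(\ystar)<0$ inside the argument.

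\textbf{Setup.} Write $g := \nabla \Vgt(\theta_t) = \sum_{v}\pi_{\theta_t}(v)\advgt(v;\theta_t)\phi(v)$ (\zcref{lem:gradient_expression}) and $f_u := \inprod{\phi(u)}{\theta_t}$. Then $\dot f_u = \inprod{\phi(u)}{g}$. For softmax, $\tfrac{d}{dt}\pi_{\theta_t}(u) = \pi_{\theta_t}(u)\brk{\dot f_u - \bar E}$ with $\bar E := \sum_u \pi_{\theta_t}(u)\dot f_u$. By \zcref{assum:feature_structure_neg_inn_prod}, each $z\in\Ybad$ has $\phi(z)$ orthogonal to all other features. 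Hence $\dot f_z = \pi_{\theta_t}(z)\advgt(z;\theta_t)\norm{\phi(z)}^2 \le 0$, since $\Vgt(\theta_t)$ is non-decreasing and $\Ybad$ has minimal reward. Abbreviate $a := \pi_{\theta_t}(\ystar)\advgt(\ystar;\theta_t)$, $b := \pi_{\theta_t}(\ymed)\advgt(\ymed;\theta_t)$ and $s := \inprod{\phi(\ystar)}{\phi(\ymed)}$. Both $a$ and $b$ are positive because $\Vgt(\theta_t)<\gtreward(\ymed)$. Then
\[
\dot f_{\ystar} = a\norm{\phi(\ystar)}^2 + b s \text{\,,} \qquad \dot f_{\ymed} = a s + b\norm{\phi(\ymed)}^2 \text{\,.}
\]
Summing the softmax derivative over $\Ybad$ gives
\[
\tfrac{d}{dt}\pi_{\theta_t}(\Ybad) = \brk1{1-\pi_{\theta_t}(\Ybad)}\textstyle\sum_{z\in\Ybad}\pi_{\theta_t}(z)\dot f_z \;-\; \pi_{\theta_t}(\Ybad)\, Q \text{\,,}
\]
where $Q := \pi_{\theta_t}(\ystar)\dot f_{\ystar} + \pi_{\theta_t}(\ymed)\dot f_{\ymed}$. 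The first term is $\le 0$, so it remains to show $Q>0$.

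\textbf{Key step: $\dot f_{\ymed}>0$.} Suppose for contradiction that $\dot f_{\ymed}\le 0$. Positive semidefiniteness of the Gram matrix gives
\[
a\dot f_{\ystar} + b\dot f_{\ymed} = \norm{a\phi(\ystar)+b\phi(\ymed)}^2 \ge 0 \text{\,,}
\]
so $\dot f_{\ystar}\ge 0$. Because $\dot f_z\le 0$ on $\Ybad$, we get $\bar E \le \pi_{\theta_t}(\ystar)\dot f_{\ystar} + \pi_{\theta_t}(\ymed)\dot f_{\ymed} \le \pi_{\theta_t}(\ystar)\dot f_{\ystar} \le \dot f_{\ystar}$. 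This says $\tfrac{d}{dt}\pi_{\theta_t}(\ystar)\ge 0$, contradicting the hypothesis. Hence $\dot f_{\ymed}>0$.

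\textbf{Conclusion.} Since $\gtreward(\ystar)>\gtreward(\ymed)$, we have $\advgt(\ystar;\theta_t)>\advgt(\ymed;\theta_t)>0$. Dividing the two terms of $Q$ by the respective advantages, decompose
\[
Q = \frac{1}{\advgt(\ystar;\theta_t)}\brk1{a\dot f_{\ystar}+b\dot f_{\ymed}} + \brk*{\frac{1}{\advgt(\ymed;\theta_t)}-\frac{1}{\advgt(\ystar;\theta_t)}} b\,\dot f_{\ymed} \text{\,.}
\]
The first term is nonnegative by the Gram identity, and the second is strictly positive by the key step. So $Q>0$, whence $\tfrac{d}{dt}\pi_{\theta_t}(\Ybad)<0$ (assuming $\pi_{\theta_t}(\Ybad)>0$, which holds for softmax). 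Then $\tfrac{d}{dt}\pi_{\theta_t}(\ymed) > -\tfrac{d}{dt}\pi_{\theta_t}(\ystar) > 0$.

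The main obstacle is the sign of $Q$, since with $s<0$ the term $\dot f_{\ymed}$ could a priori be negative. The contradiction argument using the hypothesis on $\ystar$ is what I expect to resolve it.
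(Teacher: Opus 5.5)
Your proof is correct, and it takes a genuinely different route from the paper's.

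\textbf{The paper's route.} The paper never passes through $\pi_{\theta_t}(\Ybad)$. It writes out $\frac{d}{dt}\pi_{\theta_t}(\ystar)$ and $\frac{d}{dt}\pi_{\theta_t}(\ymed)$ explicitly and discards the $\Ybad$ contributions, which are nonnegative in both. It then turns the hypothesis $\frac{d}{dt}\pi_{\theta_t}(\ystar)<0$ into an upper bound on the ratio $\pi_{\theta_t}(\ystar)\advgt(\ystar;\theta_t)/(\pi_{\theta_t}(\ymed)\advgt(\ymed;\theta_t))$. Finally, it shows by cross-multiplication that this ratio bound makes the lower bound on $\frac{d}{dt}\pi_{\theta_t}(\ymed)$ positive; the cross-multiplication reduces to the Cauchy--Schwarz bound $s^2\le\norm{\phi(\ystar)}^2\norm{\phi(\ymed)}^2$. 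That argument needs only positivity of the two advantages, but it uses $s<0$ to keep the denominators it divides by positive.

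\textbf{Your route.} You use conservation of probability and prove the stronger fact $\frac{d}{dt}\pi_{\theta_t}(\Ybad)<0$, working at the level of logit velocities. The Gram identity $a\dot f_{\ystar}+b\dot f_{\ymed}=\norm{a\phi(\ystar)+b\phi(\ymed)}^2$ plays the role of Cauchy--Schwarz. Your contradiction step forces $\dot f_{\ymed}>0$. Your decomposition of $Q$ then additionally exploits $\advgt(\ystar;\theta_t)>\advgt(\ymed;\theta_t)$, which the paper does not need.

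\textbf{What each buys.} Your argument never uses the sign of $s$, only the orthogonality of the $\Ybad$ features. Moreover, the quantity you control is exactly one the paper must handle again in Step 1 of the proof of \zcref{prop: monotonical_loglin_neg}. There the term $I_2$ equals $\pi_{\theta_t}(\Ybad)\,Q$, and its positivity is obtained through a case split. That case split also invokes the initial-condition bound $\pi_{\theta_0}(\ystar)\le\frac{\norm{\phi(\ymed)}}{\norm{\phi(\ystar)}}\pi_{\theta_0}(\ymed)$ and monotonicity on $[0,\tau)$. Your argument delivers that step directly from $\frac{d}{dt}\pi_{\theta_t}(\ystar)<0$ and $\Vgt(\theta_t)<\gtreward(\ymed)$.
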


\begin{proof}
    Fix any time $t \geq 0$ such that $\Vgt(\theta_t) < \gtreward(\ymed)$.
    By \zcref{lem:probability_derivative_expression}, instantiated with $r = \gtreward$ and $y = \ystar$, we have
    \[
        \frac{d}{dt}\pi_{\theta_t}(\ystar)
        = \pi_{\theta_t}(\ystar)
        \inprod{
            \phi(\ystar) - \sum\nolimits_{y \in \Y} \pi_{\theta_t}(y)\phi(y)
        }{
            \sum\nolimits_{y \in \Y} \pi_{\theta_t}(y)\advgt(y;\theta_t)\phi(y)
        } ,
    \]
    Since $\inprod{ \phi (z) }{ \phi (y) } = 0$ for all $z \in \Ybad, y \in \Y \setminus \{ z \}$ (\zcref{assum:feature_structure_neg_inn_prod}), we may write:
    \begin{align*}
        \frac{d}{dt} \pi_{\theta_t}(\ystar) ={}& \pi_{\theta_t}(\ystar) \Big [ \pi_{\theta_t}(\ystar) \advgt(\ystar; \theta_t) \big(\|\phi(\ystar)\|^2 - \|\phi(\ystar)\|^2\pi_{\theta_t}(\ystar) - s \pi_{\theta_t}(\ymed)\big)\\
        & \hspace{12mm} + \pi_{\theta_t}(\ymed) \advgt(\ymed; \theta_t) \big(s - \|\phi(\ymed)\|^2\pi_{\theta_t}(\ymed) - s \pi_{\theta_t}(\ystar) \big) \\
        & \hspace{12mm} - \sum\nolimits_{z \in \Ybad} \pi_{\theta_t}(z)^2 \advgt(z; \theta_t) \|\phi(z)\|^2 \Big ]\\
        \ge {}& \pi_{\theta_t}(\ystar) \Big [ \pi_{\theta_t}(\ystar) \advgt(\ystar; \theta_t) \big(\|\phi(\ystar)\|^2 - \|\phi(\ystar)\|^2\pi_{\theta_t}(\ystar) - s \pi_{\theta_t}(\ymed)\big) \\
        & \hspace{12mm} + \pi_{\theta_t}(\ymed) \advgt(\ymed; \theta_t) \big(s - \|\phi(\ymed)\|^2\pi_{\theta_t}(\ymed) - s \pi_{\theta_t}(\ystar) \big) \Big].
    \end{align*}
    Here, the inequality is due to $\advgt(z; \theta_t) \le 0$ for all $z \in \Ybad$. 
    Thus, when $\frac{d}{dt}\pi_{\theta_t}(\ystar) < 0$, it must hold that:
    \begin{align*}
        &\pi_{\theta_t}(\ystar) \advgt (\ystar; \theta_t) \big(\|\phi(\ystar)\|^2 - \|\phi(\ystar)\|^2 \pi_{\theta_t}(\ystar) - s \pi_{\theta_t}(\ymed)\big)\\
        & < \pi_{\theta_t}(\ymed) \advgt(\ymed; \theta_t) \big(- s + \|\phi(\ymed)\|^2\pi_{\theta_t}(\ymed) + s \pi_{\theta_t}(\ystar) \big),
    \end{align*}
    or equivalently,
    \begin{equation}
    \label{eq: loglin_neg_minor_cond}
        \frac{\pi_{\theta_t}(\ystar) \advgt(\ystar; \theta_t)}{\pi_{\theta_t}(\ymed) \advgt(\ymed; \theta_t)} < \frac{- s + \|\phi(\ymed)\|^2\pi_{\theta_t}(\ymed) + s \pi_{\theta_t}(\ystar)}{\|\phi(\ystar)\|^2 - \|\phi(\ystar)\|^2\pi_{\theta_t}(\ystar) - s \pi_{\theta_t}(\ymed)},
    \end{equation}
    since $\|\phi(\ystar)\|^2 - \|\phi(\ystar)\|^2\pi_{\theta_t}(\ystar) - s \pi_{\theta_t}(\ymed) > 0$ and $\pi_{\theta_t}(\ymed) \advgt(\ymed; \theta_t) > 0$. 
    We continue by upper bounding the right-hand side in the inequality above via the following claim:
    \begin{align}
    \label{eq: loglin_neg_minor_cond_2}
        \frac{- s + \|\phi(\ymed)\|^2\pi_{\theta_t}(\ymed) + s \pi_{\theta_t}(\ystar)}{\|\phi(\ystar)\|^2 - \|\phi(\ystar)\|^2\pi_{\theta_t}(\ystar) - s \pi_{\theta_t}(\ymed)} \le \frac{\|\phi(\ymed)\|^2 - \|\phi(\ymed)\|^2\pi_{\theta_t}(\ymed) - s \pi_{\theta_t}(\ystar)}{\|\phi(\ystar)\|^2\pi_{\theta_t}(\ystar) - s + s\pi_{\theta_t}(\ymed)}.
    \end{align}
    Observe that the denominators of both sides are positive. Hence, upon multiplying both sides by the denominators and subtracting common terms, to show that \zcref{eq: loglin_neg_minor_cond_2} holds it suffices to prove:
    $$
    s^2(1-\pi_{\theta_t}(\ymed) - \pi_{\theta_t}(\ystar)) \le \|\phi(\ystar)\|^2\|\phi(\ymed)\|^2 (1-\pi_{\theta_t}(\ymed) - \pi_{\theta_t}(\ystar)).
    $$
    Indeed, this is true because $s^2 = \inprod{\phi(\ystar)}{\phi(\ymed)}^2 \le \norm{ \phi(\ymed) }^2 \norm{ \phi(\ystar) }^2$.
    Combining \zcref{eq: loglin_neg_minor_cond} and \zcref{eq: loglin_neg_minor_cond_2}, we thus get:
    \begin{align*}       
    \frac{ \pi_{\theta_t}(\ystar) \advgt(\ystar; \theta_t)}{\pi_{\theta_t}(\ymed) \advgt(\ymed; \theta_t)} & < \frac{- s + \|\phi(\ymed)\|^2\pi_{\theta_t}(\ymed) + s \pi_{\theta_t}(\ystar) }{ \|\phi(\ystar)\|^2 - \|\phi(\ystar)\|^2\pi_{\theta_t}(\ystar) - s \pi_{\theta_t}(\ymed)} \\
    &\le \frac{\|\phi(\ymed)\|^2 - \|\phi(\ymed)\|^2\pi_{\theta_t}(\ymed) - s \pi_{\theta_t}(\ystar)}{\|\phi(\ystar)\|^2\pi_{\theta_t}(\ystar) - s + s\pi_{\theta_t}(\ymed)},
    \end{align*}
    which implies:
    \begin{align}
        &\pi_{\theta_t}(\ymed) \advgt(\ymed; \theta_t)(\|\phi(\ymed)\|^2 - \|\phi(\ymed)\|^2\pi_{\theta_t}(\ymed) - s\pi_{\theta_t}(\ystar))\nonumber\\
        &- \pi_{\theta_t}(\ystar) \advgt(\ystar; \theta_t)(\|\phi(\ystar)\|^2 \pi_{\theta_t}(\ystar) - s +s\pi_{\theta_t}(\ymed)) > 0.\label{eq: loglin_neg_minor_cond_3}
    \end{align} 

    Next, we analyze the time derivative of $\pi_{\theta_t}(\ymed)$, which is (see~\zcref{lem:probability_derivative_expression}):
     \begin{align*}
         \frac{d}{dt}\pi_{\theta_t}(\ymed) ={}& \pi_{\theta_t}(\ymed) \Big[\pi_{\theta_t}(\ymed) \advgt(\ymed; \theta_t) \big(\|\phi(\ymed)\|^2 - \|\phi(\ymed)\|^2\pi_{\theta_t}(\ymed) - s \pi_{\theta_t}(\ystar)\big)\\
    &\hspace{16mm} + \pi_{\theta_t}(\ystar) \advgt(\ystar; \theta_t) \big(s - \|\phi(\ystar)\|^2\pi_{\theta_t}(\ystar) - s \pi_{\theta_t}(\ymed) \big) \\
    & \hspace{16mm} - \sum\nolimits_{z \in \Ybad} \pi_{\theta_t}(z)^2 \advgt(z; \theta_t) \|\phi(z)\|^2 \Big]\\
    \ge{}& \pi_{\theta_t}(\ymed) \Big[\pi_{\theta_t}(\ymed) \advgt(\ymed; \theta_t) \big(\|\phi(\ymed)\|^2 - \|\phi(\ymed)\|^2\pi_{\theta_t}(\ymed) - s \pi_{\theta_t}(\ystar) \big)\\
    &+ \pi_{\theta_t}(\ystar) \advgt(\ystar; \theta_t) \big(s - \|\phi(\ystar)\|^2\pi_{\theta_t}(\ystar) - s \pi_{\theta_t}(\ymed) \big) \Big]\\
    >{}& 0.
    \end{align*}
    Here, the first inequality is due to $\advgt(z; \theta_t) \le 0$ for all $ z \in \Ybad$, and the second inequality is from \zcref{eq: loglin_neg_minor_cond_3}. 
    Putting it all together, we have shown that if $\Vgt(\theta_t) < \gtreward(\ymed)$ and $\frac{d}{dt}\pi_{\theta_t}(\ystar) < 0$, then necessarily $\frac{d}{dt}\pi_{\theta_t}(\ymed) > 0$.
\end{proof}

Using \zcref{lemma: conditional_monotonicity_loglin_neg}, we can now prove that until $t_\gamma$ the probability of $\ymed$ monotonically increases while the probability of $\ystar$ monotonically decreases, and characterize how small $\pi_{\theta_t}(\ystar)$ becomes.
This is a core component in the proof of \zcref{thm:attraction_to_mediocre_outputs_beneficial_error_neg_inner_product}.

\begin{proposition}
\label{prop: monotonical_loglin_neg}
    For all $t \in [0, t_{\gamma}]$ it holds that:
    \begin{itemize}
        \item $\frac{d}{dt}\pi_{\theta_t}(\ystar) < 0$; and
        \item $\frac{d}{dt}\pi_{\theta_t}(\ymed) > 0$.
    \end{itemize}
    Furthermore,
    \[
    \min\nolimits_{t\in[0, t_\gamma]} \pi_{\theta_t}(\ystar) = \pi_{\theta_{t_{\gamma}}}(\ystar)  \le \frac{C\|\phi(\ymed)\|^2\pi_{\theta_0}(\ymed)^2\gamma^{\frac{13}{14}+\frac{h(s)}{2}}}{8(\|\phi(\ystar)\|^2-s) \advgt(\ystar; \theta_0)}  ,
    % \min\nolimits_{t\in[0, t_\gamma]} \pi_{\theta_t}(\ystar) = \pi_{\theta_{t_{\gamma}}}(\ystar)  \le \frac{\pi_{\theta_0}(\ystar)\big(\sqrt{\gamma}+\pi_{\theta_0}(\ystar)\big)^{h(s)}}{\big[\gtreward(\ymed)(1-\pi_{\theta_0}(\ymed))\big]^{h(s)}} ,
    \]
    where $C := \big( 26 / \gtreward(\ymed)^2 \big)^{13\alpha/7}$, as introduced in \zcref{eq:t_gamma_def}, and $s := \inprod{\phi(\ystar)}{\phi(\ymed)}$ and $h(s) := \frac{\norm{\phi(\ymed)}^2 - s}{7\brk1{\norm{\phi(\ymed)}^2 - s/2}}$ are as defined in \zcref{eq:neg_def_helper_variables}.
\end{proposition}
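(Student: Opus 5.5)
I would follow the template of the proof of \zcref{prop: monotonical_multi_s}, adapting each step to the extra cross term created by $s = \inprod{\phi(\ystar)}{\phi(\ymed)} < 0$. By \zcref{lemma: conditional_monotonicity_loglin_neg}, on $[0,t_\gamma]$ (where $\Vgt(\theta_t) < \gtreward(\ymed)$) it suffices to show $\frac{d}{dt}\pi_{\theta_t}(\ystar) < 0$; the increase of $\pi_{\theta_t}(\ymed)$ then follows automatically. I would again introduce an intermediate time $t_{\sqrt{\gamma}}$, the first time $\Vgt(\theta_t) \geq \gtreward(\ymed) - \sqrt{\gamma}$, which is positive by \zcref{lem:mediocre_output_reward_larger_than_initial_expected_reward_neg_inner_product}. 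I would then argue in two phases, each by contradiction at the first time $\tau$ where $\frac{d}{dt}\pi_{\theta_t}(\ystar) \ge 0$.

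\textbf{Phase one: $[0,t_{\sqrt{\gamma}}]$.} I would write $\frac{d}{dt}\pi_{\theta_t}(\ystar)$ via \zcref{lem:probability_derivative_expression}, using the orthogonality of $\Ybad$ from \zcref{assum:feature_structure_neg_inn_prod}. This gives
\[
\pi_{\theta_t}(\ystar)\Big[\pi_{\theta_t}(\ystar)\advgt(\ystar;\theta_t)\big(\|\phi(\ystar)\|^2(1-\pi_{\theta_t}(\ystar)) - s\pi_{\theta_t}(\ymed)\big) + \pi_{\theta_t}(\ymed)\advgt(\ymed;\theta_t)\big(s - \|\phi(\ymed)\|^2\pi_{\theta_t}(\ymed) - s\pi_{\theta_t}(\ystar)\big) - \textstyle\sum_{z\in\Ybad}\pi_{\theta_t}(z)^2\advgt(z;\theta_t)\|\phi(z)\|^2\Big].
\]
Since $s<0$ and $\advgt(\ymed;\theta_t)>0$, the $\ymed$ term is more negative than in the orthonormal case. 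I would bound the $\Ybad$ term using $\|\phi(z)\|\le\|\phi(\ymed)\|$ and the analogue of \zcref{eq: ybad2_upper}, namely $\pi_{\theta_\tau}(\Ybad)^2\lesssim \tfrac{7}{8}\pi_{\theta_0}(\ymed)^2\advgt(\ymed;\theta_\tau)/\Delta_2$. The $\ystar$ term is bounded by $\pi_{\theta_0}(\ystar)(\|\phi(\ystar)\|^2 - s)\advgt(\ystar;\theta_0)$, which \zcref{assump: loglin_neg_minor} makes smaller than a constant times $\|\phi(\ymed)\|^2\pi_{\theta_0}(\ymed)^2\sqrt{\gamma}$. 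The factors $B^2$ and $\|\phi(\ymed)\|^2$ built into $M'$ are exactly what absorb the feature norms here. This yields monotonicity on $[0,t_{\sqrt\gamma}]$.

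\textbf{Ratio estimate.} Next, mirroring Steps 2.1 and 2.2, I would prove $\frac{d}{dt}\pi_{\theta_t}(\ystar) \le -c_1(s)\,\pi_{\theta_t}(\ystar)\pi_{\theta_t}(\ymed)^2\advgt(\ymed;\theta_t)$ and $\frac{d}{dt}\pi_{\theta_t}(\ymed)\le c_2(s)\,\pi_{\theta_t}(\ymed)^2(1-\pi_{\theta_t}(\ymed))\advgt(\ymed;\theta_t)$. The constants should be chosen so that $c_1/c_2 = h(s)$; the cross term contributes roughly $-s$ to $c_1$ and $-s/2$ to $c_2$, which explains the form $h(s) = (\|\phi(\ymed)\|^2 - s)/(7(\|\phi(\ymed)\|^2 - s/2))$. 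Dividing and integrating gives $\pi_{\theta_t}(\ystar)/\pi_{\theta_0}(\ystar) \le \big((1-\pi_{\theta_t}(\ymed))/(1-\pi_{\theta_0}(\ymed))\big)^{h(s)}$. Combining this with $1-\pi_{\theta_{t_{\sqrt\gamma}}}(\ymed)\lesssim(\sqrt\gamma+\pi_{\theta_0}(\ystar))/\gtreward(\ymed)$ and $1-\pi_{\theta_0}(\ymed)\ge 0.05\gtreward(\ymed)$ gives
\[
\pi_{\theta_{t_{\sqrt\gamma}}}(\ystar)\le\pi_{\theta_0}(\ystar)\big(26\sqrt\gamma/\gtreward(\ymed)^2\big)^{h(s)} .
\]
Writing $\pi_{\theta_0}(\ystar)=M'\gamma^{13/14}$ and using $h(s)=1/7+2\alpha$ (so that $(26/\gtreward(\ymed)^2)^{h(s)}$ produces the constant $C$), this is of order $C\gamma^{13/14+h(s)/2}$.

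\textbf{Phase two: $[t_{\sqrt\gamma},t_\gamma]$.} I would repeat the contradiction argument of Step 2.4, now with $\advgt(\ymed;\theta_\tau)\ge C\gamma^{13/14+h(s)/2}$ (by the definition of $t_\gamma$) in place of $\gamma$. The $\ystar$ term is at most $\pi_{\theta_{t_{\sqrt\gamma}}}(\ystar)(\|\phi(\ystar)\|^2-s)\advgt(\ystar;\theta_0)$, and it must be beaten by $\tfrac18\|\phi(\ymed)\|^2\pi_{\theta_0}(\ymed)^2 C\gamma^{13/14+h(s)/2}$. That requirement is precisely the claimed bound on $\pi_{\theta_{t_{\sqrt\gamma}}}(\ystar)$, and hence on $\pi_{\theta_{t_\gamma}}(\ystar)$ by monotonicity.

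\textbf{Main obstacle.} I expect the hardest part to be the ratio step, where the constants $c_1,c_2$ must come out to exactly $h(s)$ while all the $s$-dependent cross terms are controlled. This requires using $\pi_{\theta_t}(\ystar)\le\frac{\|\phi(\ymed)\|}{\|\phi(\ystar)\|}\pi_{\theta_0}(\ymed)$ and Cauchy--Schwarz ($s^2\le\|\phi(\ystar)\|^2\|\phi(\ymed)\|^2$) to keep the sign-indefinite pieces small. A second delicate point is checking that the exponent bookkeeping, with $\alpha<1/13$, is strong enough to make the final Phase-two inequality strict.
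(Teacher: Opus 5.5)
Your plan follows the paper's proof essentially step for step. It reduces the claim to $\frac{d}{dt}\pi_{\theta_t}(\ystar)<0$ via the conditional monotonicity lemma, argues by contradiction at a first bad time on $[0,t_{\sqrt{\gamma}}]$, derives rate bounds with constants $0.3(\|\phi(\ymed)\|^2-s)$ and $2.1(\|\phi(\ymed)\|^2-s/2)$ whose ratio is exactly $h(s)$, and closes with a second contradiction on $[t_{\sqrt{\gamma}},t_\gamma]$ driven by $\advgt(\ymed;\theta_\tau)\geq C\gamma^{13/14+h(s)/2}$.

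Three small repairs are needed.
\begin{itemize}
\item The correct identity is $h(s)=1/7+13\alpha/7$, not $1/7+2\alpha$. This is what gives $\gtreward(\ymed)^{2/7}(26/\gtreward(\ymed)^2)^{h(s)}=26^{1/7}C$.
\item Steps~2.1--2.2 rely on $\pi_{\theta_t}(\Ybad)\leq\pi_{\theta_0}(\Ybad)$, which no longer comes for free when $s<0$. The paper reproves it inside Step~1 using $\pi_{\theta_0}(\ystar)\leq\frac{\|\phi(\ymed)\|}{\|\phi(\ystar)\|}\pi_{\theta_0}(\ymed)$ and Cauchy--Schwarz, which is where the paper actually uses the tools you assign to the ratio step. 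The weaker bound $\pi_{\theta_t}(\Ybad)\leq 1-\pi_{\theta_0}(\ymed)$ would also suffice there.
\item You must check $C\gamma^{13/14+h(s)/2}<\sqrt{\gamma}$, so that $t_{\sqrt{\gamma}}<t_\gamma$ and the bound at $t_{\sqrt{\gamma}}$ transfers to $t_\gamma$.
\end{itemize}
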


\begin{proof}
Since $t_\gamma$ is the initial time at which $\Vgt (\theta_t) \geq \gtreward (\ymed) - C \gamma^{13/14 + h(s) / 2}$, \zcref{lemma: conditional_monotonicity_loglin_neg} implies that for all $t \in [0, t_\gamma]$, if $\frac{d}{dt}\pi_{\theta_t}(\ystar) < 0$, then $\frac{d}{dt} \pi_{\theta_t}(\ymed) > 0$. 
Hence, for establishing that $\frac{d}{dt} \pi_{\theta_t} (\ystar) < 0$ and $\frac{d}{dt} \pi_{\theta_t} (\ymed) > 0$ over $[0, t_\gamma]$, it suffices to prove that $\frac{d}{dt} \pi_{\theta_t}(\ystar) < 0$.

To that end, we let $t_{\sqrt{\gamma}}$ be the initial time at which $\Vgt (\theta_t) \geq \gtreward(\ymed) - \sqrt{\gamma}$, \ie:
\[
    t_{\sqrt{\gamma}} := \min \brk[c]*{ t \geq 0 : \Vgt(\theta_t) \geq \gtreward(\ymed) - \sqrt{\gamma} } .
\]
From \zcref{lem:mediocre_output_reward_larger_than_initial_expected_reward_neg_inner_product}, $\Vgt (\theta_0) < \gtreward (\ymed) - \sqrt{\gamma}$, so $t_{\sqrt{\gamma}} > 0$.
Furthermore, since $\Vgt (\theta_t)$ is continuous in $t$, at $t_{\sqrt{\gamma}}$ it holds that $\Vgt (\theta_{t_{\sqrt{\gamma}}}) = \gtreward (\ymed) - \sqrt{\gamma}$.
The proof proceeds by showing that $\frac{d}{dt} \pi_{\theta_t}(\ystar) < 0$ up to time $t_{\sqrt{\gamma}}$ and upper bounding $\pi_{\theta_{t_{\sqrt{\gamma}}}}(\ystar)$. 
Lastly, we will establish that $\pi_{\theta_t} (\ystar)$ continues decreasing until time $t_\gamma$.
   
\medskip

\phantomsection\label{proof:monotonical_loglin_neg:step1}
\textbf{Step 1: monotonicity in $[0, t_{\sqrt{\gamma}}]$.}
We show that $\frac{d}{dt} \pi_{\theta_t} (\ystar) < 0$, and so $\frac{d}{dt} \pi_{\theta_t}(\ymed) > 0$, for all $t \in [0, t_{\sqrt{\gamma}}]$.
Assume by way of contradiction that there exists a time $t^\prime \le t_{\sqrt{\gamma}}$ at which $\left.\frac{d}{dt} \pi_{\theta_t}(\ystar)\right|_{t=t^\prime} \ge 0$ and denote by $\tau$ the initial such time, \ie:
\[
\tau := \inf \brk[c]*{ t \in [0, t_{\sqrt{\gamma}}] : \tfrac{d}{dt} \pi_{\theta_t}(\ystar) \geq 0 } .
\]
For all $t \in [0, \tau)$, we therefore have that $\frac{d}{dt} \pi_{\theta_t}(\ystar) < 0$ and $\frac{d}{dt} \pi_{\theta_t} (\ymed) > 0$, which imply $\pi_{\theta_\tau}(\ystar) \le \pi_{\theta_0}(\ystar)$ and $\pi_{\theta_\tau}(\ymed) \ge \pi_{\theta_0}(\ymed)$, respectively.

\medskip

We now show that $\frac{d}{dt} \pi_{\theta_t}(\Ybad) <0$ for all $ t \in [0, \tau)$. 
By \zcref{lem:probability_derivative_expression}, the time derivative of $\pi_{\theta_t}(z)$ for $z \in \Ybad$ can be written as:
\begin{align*}
    \frac{d}{dt}\pi_{\theta_t}(z) & = \pi_{\theta_t}(z)^2 \advgt(z; \theta_t) \|\phi(z)\|^2  -  \pi_{\theta_t}(z) \Big(\pi_{\theta_t}(z)^2 \advgt(z; \theta_t) \|\phi(z)\|^2  \\
    & \hspace{28mm} + \pi_{\theta_t}(\ymed)\advgt(\ymed; \theta_t) \big ( \|\phi(\ymed)\|^2\pi_{\theta_t}(\ymed) + s\pi_{\theta_t}(\ystar) \big )\\
    & \hspace{28mm} + \pi_{\theta_t}(\ystar) \advgt(\ystar; \theta_t)\big ( \|\phi(\ystar)\|^2\pi_{\theta_t}(\ystar) + s \pi_{\theta_t}(\ymed) \big) \Big) .
\end{align*}
Summing over all $z \in \Ybad$, we have:
\begin{align*}
    \frac{d}{dt} \pi_{\theta_t}(\Ybad) & = \underbrace{\sum\nolimits_{z \in \Ybad} \big( \pi_{\theta_t}(z)^2 \advgt(z; \theta_t) \|\phi(z)\|^2\big)\big(1 - \pi_{\theta_t}(z)\big)}_{I_1} \\
    & \hspace{4mm}- \pi_{\theta_t}(\Ybad) \Big(\pi_{\theta_t}(\ymed)\advgt(\ymed; \theta_t)(\|\phi(\ymed)\|^2\pi_{\theta_t}(\ymed) + s\pi_{\theta_t}(\ystar)) \\
    & \hspace{24mm} + \pi_{\theta_t}(\ystar) \advgt(\ystar; \theta_t)(\|\phi(\ystar)\|^2\pi_{\theta_t}(\ystar) + s \pi_{\theta_t}(\ymed)) \Big) ,
\end{align*}
where we denote by $I_2$ the second term in the expression above (excluding the minus sign).
Note that $I_1 \le 0$ from the fact that $\advgt(z; \theta_t) \le 0$ for all $z \in \Ybad$ (outputs in $\Ybad$ have the lowest ground truth reward among all outputs).
We now show that $I_2 > 0$.
It suffices to prove that:
\be
\begin{split}
0 & < \pi_{\theta_t}(\ymed)\advgt(\ymed; \theta_t)(\|\phi(\ymed)\|^2\pi_{\theta_t}(\ymed) + s\pi_{\theta_t}(\ystar)) 
\\ & \hspace{4mm} + \pi_{\theta_t}(\ystar) \advgt(\ystar; \theta_t)(\|\phi(\ystar)\|^2\pi_{\theta_t}(\ystar) + s \pi_{\theta_t}(\ymed)) .
\end{split}
\label{eq: loglin_neg_minor_cond_4}
\ee
If $\|\phi(\ystar)\|^2\pi_{\theta_t}(\ystar) + s \pi_{\theta_t}(\ymed) \ge 0$, which is equivalent to the second term on the right-hand side being non-negative, then:
\begin{align*}
    &\pi_{\theta_t}(\ymed) \advgt(\ymed; \theta_t)(\|\phi(\ymed)\|^2\pi_{\theta_t}(\ymed) + s\pi_{\theta_t}(\ystar)) \\
    & \hspace{4mm} + \pi_{\theta_t}(\ystar) \advgt(\ystar; \theta_t)(\|\phi(\ystar)\|^2\pi_{\theta_t}(\ystar) + s \pi_{\theta_t}(\ymed))\\
    & \geq \pi_{\theta_t}(\ymed)\advgt(\ymed; \theta_t)(\|\phi(\ymed)\|^2\pi_{\theta_t}(\ymed) + s\pi_{\theta_t}(\ystar))\\
    & \geq \pi_{\theta_t}(\ymed) \advgt(\ymed; \theta_t)(\|\phi(\ymed)\|^2\pi_{\theta_0}(\ymed) + s\pi_{\theta_0}(\ystar))\\
    & \geq \pi_{\theta_t}(\ymed) \advgt(\ymed; \theta_t)\|\phi(\ymed)\|\big(\|\phi(\ymed)\|\pi_{\theta_0}(\ymed) - \|\phi(\ystar)\|\pi_{\theta_0}(\ystar)\big)\\
    & > 0 ,
\end{align*}
where the second inequality is from $\pi_{\theta_t}(\ystar) \le \pi_{\theta_0}(\ystar)$ and $\pi_{\theta_t}(\ymed) \ge \pi_{\theta_0}(\ymed)$ for all $t \in [0, \tau)$, the third inequality is from Cauchy-Schwarz since $-s \le \|\phi(\ystar)\| \|\phi(\ymed)\|$, and the last inequality is from \zcref{assump: loglin_neg_ystar} in \zcref{assump: loglin_neg_minor}. 
Overall, this implies that if $\|\phi(\ystar)\|^2\pi_{\theta_t}(\ystar) + s \pi_{\theta_t}(\ymed) \ge 0$, then $I_1 - I_2 < 0$.

We now consider the case of $\|\phi(\ystar)\|^2\pi_{\theta_t}(\ystar) + s\pi_{\theta_t}(\ymed) < 0$, noting that this inequality is equivalent to the second term in \zcref{eq: loglin_neg_minor_cond_4} being negative.
We show that \zcref{eq: loglin_neg_minor_cond_4} holds by proving the following equivalent inequality:
$$
\frac{\pi_{\theta_t}(\ystar) \advgt(\ystar; \theta_t)}{\pi_{\theta_t}(\ymed) \advgt(\ymed; \theta_t)} \le \frac{\|\phi(\ymed)\|^2\pi_{\theta_t}(\ymed) + s\pi_{\theta_t}(\ystar)}{-\|\phi(\ystar)\|^2\pi_{\theta_t}(\ystar) - s \pi_{\theta_t}(\ymed)}.
$$
To see that this inequality holds, we lower bound the right-hand side by:
$$
\frac{\|\phi(\ymed)\|^2\pi_{\theta_t}(\ymed) + s\pi_{\theta_t}(\ystar)}{-\|\phi(\ystar)\|^2\pi_{\theta_t}(\ystar) - s \pi_{\theta_t}(\ymed)} \ge \frac{- s + \|\phi(\ymed)\|^2\pi_{\theta_t}(\ymed) + s \pi_{\theta_t}(\ystar)}{\|\phi(\ystar)\|^2 - \|\phi(\ystar)\|^2\pi_{\theta_t}(\ystar) - s \pi_{\theta_t}(\ymed)}.
$$
Since both denominators are positive in this case, multiplying both sides of the inequality by them and canceling common terms, we arrive at the following equivalent form of the inequality above:
$$
s^2 \pi_{\theta_t}(\ymed) \le \|\phi(\ystar)\|^2\|\phi(\ymed)\|^2 \pi_{\theta_t}(\ymed),
$$
which straightforwardly holds because $s^2 = \inprod{\phi(\ystar)}{\phi(\ymed)}^2 \le \norm{ \phi(\ymed) }^2 \norm{ \phi(\ystar) }^2$. 
Together with
\zcref{eq: loglin_neg_minor_cond}, this implies that:
\begin{align}
\frac{\pi_{\theta_t}(\ystar) \advgt(\ystar; \theta_t)}{\pi_{\theta_t}(\ymed) \advgt(\ymed; \theta_t)} & < \frac{- s + \|\phi(\ymed)\|^2\pi_{\theta_t}(\ymed) + s \pi_{\theta_t}(\ystar) }{ \|\phi(\ystar)\|^2 - \|\phi(\ystar)\|^2\pi_{\theta_t}(\ystar) - s \pi_{\theta_t}(\ymed)}\nonumber\\
&\le \frac{\|\phi(\ymed)\|^2\pi_{\theta_t}(\ymed) + s\pi_{\theta_t}(\ystar)}{-\|\phi(\ystar)\|^2\pi_{\theta_t}(\ystar) - s \pi_{\theta_t}(\ymed)} .
\label{eq: loglin_neg_cond}
\end{align}
Thus, \zcref{eq: loglin_neg_minor_cond_4} holds, and so $I_2 > 0$, also when $\|\phi(\ystar)\|^2\pi_{\theta_t}(\ystar) + s \pi_{\theta_t}(\ymed) < 0$.
Altogether, we may conclude that $\frac{d}{dt}\pi_{\theta_t}(\Ybad) =I_1-I_2 < 0$ for all $t \in [0, \tau)$.

\medskip

Next, we prove that $\pi_{\theta_\tau}(\Ybad)^2 \le \frac{7\|\phi(\ymed)\|^2\pi_{\theta_0}(\ymed)^2 \advgt(\ymed; \theta_\tau)}{8\max\nolimits_{z \in \Ybad} \|\phi(z)\|^2 \Delta_2}$. 
We can write $\advgt(\ymed; \theta_\tau)$ as: 
\begin{align*}
        \advgt(\ymed; \theta_\tau) &= (1 - \pi_{\theta_\tau}(\ymed)) \gtreward(\ymed) - \pi_{\theta_\tau}(\ystar) \gtreward(\ystar) - \sum\nolimits_{z \in \Ybad} \pi_{\theta_\tau}(z) \gtreward(z) \\
        &= -\Delta_1 \pi_{\theta_\tau}(\ystar) + \Deltatwo \pi_{\theta_\tau}(\Ybad).
\end{align*}
Thus,
\begin{align*}
    \pi_{\theta_\tau}(\Ybad) = \frac{\advgt(\ymed; \theta_\tau) + \Delta_1 \pi_{\theta_\tau}(\ystar)}{\Deltatwo} \le \frac{1.3 \advgt(\ymed; \theta_\tau)}{\Deltatwo}.
\end{align*}
The inequality is due to $\Delta_1 \pi_{\theta_\tau}(\ystar) \le 2\pi_{\theta_0}(\ystar) \le 2\gamma^{13/14} \le 0.3\sqrt{\gamma} \le 0.3\advgt(\ymed; \theta_\tau)$, where $2\gamma^{13 / 14} \leq 0.3 \sqrt{\gamma}$ since $\gamma \le 0.01$. 
Then,
\begin{align*}
    \pi_{\theta_\tau}(\Ybad)^2 &\le \frac{1.69 \advgt(\ymed; \theta_\tau)^2}{\Deltatwo^2}\\
    &< \frac{7\|\phi(\ymed)\|^2\pi_{\theta_0}(\ymed)^2 \advgt(\ymed; \theta_\tau)}{8\max\nolimits_{z \in \Ybad} \|\phi(z)\|^2 \Delta_2} \cdot \frac{2\max\nolimits_{z \in \Ybad} \|\phi(z)\|^2  \advgt(\ymed; \theta_\tau)}{\|\phi(\ymed)\|^2 \Deltatwo \pi_{\theta_0}(\ymed)^2}\\
    &\le \frac{7\|\phi(\ymed)\|^2\pi_{\theta_0}(\ymed)^2 \advgt(\ymed; \theta_\tau)}{8 \max\nolimits_{z \in \Ybad} \|\phi(z)\|^2 \Delta_2} \cdot \frac{2  \advgt(\ymed; \theta_0)}{\Deltatwo \pi_{\theta_0}(\ymed)^2},
\end{align*}
where the second inequality is by $1.69 \times 8 < 2 \times 7$ and the last inequality is by $\max\nolimits_{z \in \Ybad}\|\phi(z)\|^2 \le \|\phi(\ymed)\|^2$ (\zcref{assum:feature_structure_neg_inn_prod}) and $\advgt(\ymed; \theta_\tau) \le \advgt(\ymed; \theta_0)$.
Hence, it suffices to prove that $\frac{2 \advgt(\ymed; \theta_0)}{\Deltatwo \pi_{\theta_0}(\ymed)^2} \le 1$.
Indeed, it holds that:
\begin{align*}
    \advgt(\ymed; \theta_0) &= (1 - \pi_{\theta_0}(\ymed))\gtreward(\ymed) - \pi_{\theta_0}(\ystar) \gtreward(\ystar) - \sum\nolimits_{z \in \Ybad}\pi_{\theta_0}(z) \gtreward(z)\\
    &\le (1 - \pi_{\theta_0}(\ymed))\gtreward(\ymed) + \pi_{\theta_0}(\Ybad)\\
    &\le 0.2 \gtreward(\ymed) + 0.1 \gtreward(\ymed)\\
    &< \frac{ 0.8^2 \gtreward(\ymed)}{2}\\
    &\le \frac{ \Deltatwo \pi_{\theta_0}(\ymed)^2}{2},
\end{align*}
where the first inequality is from $\gtreward(z) \ge -1$; the second inequality is from \zcref{assump: loglin_neg_minor}, according to which $\pi_{\theta_0}(\Ybad) \le 0.1\gtreward(\ymed) \le 0.1$ and $\pi_{\theta_0}(\ymed) \ge 0.8$; and the last inequality is from $\gtreward(\ybad) \le 0$ and $\pi_{\theta_0}(\ymed) \ge 0.8$.
Thus, $\frac{2\advgt(\ymed; \theta_0)}{\Deltatwo \pi_{\theta_0}(\ymed)^2} \le 1$, from which it follows that:
\begin{equation}
\label{eq: ybad2_upper_neg}
    \pi_{\theta_\tau}(\Ybad)^2 \le \frac{7\|\phi(\ymed)\|^2\pi_{\theta_0}(\ymed)^2 \advgt(\ymed; \theta_\tau)}{8\max\nolimits_{z \in \Ybad} \|\phi(z)\|^2\Delta_2},
\end{equation}
as desired.

\medskip

We proceed by analyzing $\left.\frac{d}{dt} \pi_{\theta_t}(\ystar)\right|_{t=\tau}$, based on the expression derived in \zcref{lem:probability_derivative_expression}:
    \begin{align*}
        & \left.\frac{d}{dt} \pi_{\theta_t}(\ystar)\right|_{t=\tau} \\
        & = \pi_{\theta_\tau}(\ystar) \Big[ \pi_{\theta_\tau}(\ystar) \advgt(\ystar; \theta_\tau) \big(\|\phi(\ystar)\|^2 - \|\phi(\ystar)\|^2\pi_{\theta_\tau}(\ystar) - s \pi_{\theta_\tau}(\ymed)\big)\\
        & \hspace{16mm} + \pi_{\theta_\tau}(\ymed) \advgt(\ymed; \theta_\tau) \big(s - \|\phi(\ymed)\|^2\pi_{\theta_\tau}(\ymed) - s \pi_{\theta_\tau}(\ystar) \big) \\
        & \hspace{16mm} - \sum\nolimits_{z \in \Ybad} \pi_{\theta_\tau}(z)^2 \advgt(z; \theta_\tau) \|\phi(z)\|^2 \Big] \\
        & \le \pi_{\theta_\tau}(\ystar) \Big[  \pi_{\theta_\tau}(\ystar) \advgt(\ystar; \theta_\tau) \big(\|\phi(\ystar)\|^2 - \|\phi(\ystar)\|^2\pi_{\theta_\tau}(\ystar) - s \pi_{\theta_\tau}(\ymed)\big)\\
        &\hspace{16mm} + \pi_{\theta_\tau}(\ymed) \advgt(\ymed; \theta_\tau) \big(s - \|\phi(\ymed)\|^2\pi_{\theta_\tau}(\ymed) - s \pi_{\theta_\tau}(\ystar) \big ) \\
        & \hspace{16mm} + (\Delta_2 - \sqrt{\gamma}) \sum\nolimits_{z \in \Ybad} \pi_{\theta_\tau}(z)^2 \|\phi(z)\|^2 \Big]\\
        & \le \pi_{\theta_\tau}(\ystar) \Big[ (\|\phi(\ystar)\|^2-s)\pi_{\theta_\tau}(\ystar) \advgt(\ystar; \theta_\tau) \big(1 - \pi_{\theta_\tau}(\ystar)\big) \\
        & \hspace{16mm} - \|\phi(\ymed)\|^2\pi_{\theta_\tau}(\ymed)^2 \advgt(\ymed; \theta_\tau) \\
        & \hspace{16mm} + (\Delta_2 - \sqrt{\gamma}) \sum\nolimits_{z \in \Ybad} \pi_{\theta_\tau}(z)^2 \|\phi(z)\|^2 \Big] \\
        & \le \pi_{\theta_\tau}(\ystar) \Big[ (\|\phi(\ystar)\|^2-s)\pi_{\theta_0}(\ystar) \advgt(\ystar; \theta_0) \big(1 - \pi_{\theta_0}(\ystar)\big) \\
        & \hspace{16mm} - \|\phi(\ymed)\|^2\pi_{\theta_\tau}(\ymed)^2 \advgt(\ymed; \theta_\tau)\\
        &\hspace{16mm} + \max\nolimits_{z \in \Ybad} \|\phi(z)\|^2(\Delta_2 - \sqrt{\gamma})\pi_{\theta_\tau}(\Ybad)^2  \Big]\\
    \end{align*}
The first inequality is from $-\advgt(z; \theta_\tau) \le -\advgt(z; \theta_{t_{\sqrt{\gamma}}}) \le \Deltatwo -\sqrt{\gamma}$; the second inequality is from $s \le 0$ and $\pi_{\theta_\tau}(\ymed) \le 1 - \pi_{\theta_\tau}(\ystar)$; and the third inequality is from $\advgt(\ystar; \theta_\tau) \le \advgt(\ystar; \theta_0)$, $\pi_{\theta_\tau}(\ystar) \le \pi_{\theta_0}(\ystar)$, $\pi_{\theta_\tau}(\ymed) \ge \pi_{\theta_0}(\ymed)$ and $\sum\nolimits_{z \in \Ybad} \pi_{\theta_\tau}(z)^2 \le \pi_{\theta_\tau} (\Ybad)^2$.
We continue bounding $\left.\frac{d}{dt} \pi_{\theta_t}(\ystar)\right|_{t=\tau}$ as follows:
\be
\begin{split}
    & \left.\frac{d}{dt} \pi_{\theta_t}(\ystar)\right|_{t=\tau} \\
    & \le \pi_{\theta_\tau}(\ystar) \Big[ (\|\phi(\ystar)\|^2-s)\pi_{\theta_0}(\ystar) \advgt(\ystar; \theta_0) \big(1 - \pi_{\theta_0}(\ystar)\big) \\
    & \hspace{16mm} - \|\phi(\ymed)\|^2\pi_{\theta_\tau}(\ymed)^2 \advgt(\ymed; \theta_\tau) + \max\nolimits_{z \in \Ybad} \|\phi(z)\|^2(\Delta_2 - \sqrt{\gamma})\pi_{\theta_\tau}(\Ybad)^2  \Big]\\
    & \le \pi_{\theta_\tau}(\ystar) \Big[ (\|\phi(\ystar)\|^2-s)\pi_{\theta_0}(\ystar) \advgt(\ystar; \theta_0) \big(1 - \pi_{\theta_0}(\ystar)\big) \\
    & \hspace{16mm} - (\|\phi(\ymed)\|^2/8) \pi_{\theta_\tau}(\ymed)^2 \advgt(\ymed; \theta_\tau) \Big]\\
    & \le \pi_{\theta_\tau}(\ystar) \Big[(\|\phi(\ystar)\|^2-s)\pi_{\theta_0}(\ystar) \advgt(\ystar; \theta_0) \big(1 - \pi_{\theta_0}(\ystar)\big) - (\|\phi(\ymed)\|^2/8) \pi_{\theta_\tau}(\ymed)^2 \sqrt{\gamma} \Big] .
\end{split}
\label{eq:pi_ystar_derive_tau_upper_bound_neg}
\ee
Here, the second inequality is from \zcref{eq: ybad2_upper_neg} and $\pi_{\theta_0} (\ymed) \leq \pi_{\theta_\tau} (\ymed)$, according to which
\[
\begin{split}
    (\Delta_2-\sqrt{\gamma}) \pi_{\theta_\tau}(\Ybad)^2 & \le \Delta_2 \pi_{\theta_\tau}(\Ybad)^2 \\
    & \le \frac{7\|\phi(\ymed)\|^2\pi_{\theta_0}(\ymed)^2 \advgt(\ymed; \theta_\tau)}{8\max\nolimits_{z \in \Ybad} \|\phi(z)\|^2} \\
    & \le \frac{7\|\phi(\ymed)\|^2\pi_{\theta_\tau}(\ymed)^2 \advgt(\ymed; \theta_\tau)}{8\max\nolimits_{z \in \Ybad} \|\phi(z)\|^2} ,
\end{split}
\] 
and the third inequality is from the fact that $\Vgt(\theta_t)$ is monotonically non-decreasing when maximizing it via gradient flow, so $\advgt(\ymed; \theta_\tau) \ge \advgt(\ymed; \theta_{t_{\sqrt{\gamma}}})$.

Now, by \zcref{assump: loglin_neg_minor}, since $\|\phi(\ystar)\|^2 > 0 > s$ we get:
\[
\begin{split}
    \pi_{\theta_0}(\ystar) & \le \frac{\|\phi(\ymed)\|^2\gtreward(\ymed)^{2/7}\gamma^{13/14}}{40B^2(\Delta_1+\Delta_2)} \\
    & \le \frac{\|\phi(\ymed)\|^2\sqrt{\gamma} \cdot \gamma^{3/7}}{40B^2 \advgt(\ystar; \theta_0)} \\
    & < \frac{0.8^2\|\phi(\ymed)\|^2\sqrt{\gamma}}{16B^2 \advgt(\ystar; \theta_0)} \\
    & \le \frac{\|\phi(\ymed)\|^2\pi_{\theta_0}(\ymed)^2\sqrt{\gamma}}{8(\|\phi(\ystar)\|^2-s) \advgt(\ystar; \theta_0)} .
\end{split}
\]
The first inequality is from \zcref{assump: loglin_neg_ystar} in \zcref{assump: loglin_neg_minor}; the second inequality is from $\gtreward(\ymed) \le 1$ and $\advgt(\ystar; \theta_0) \le \Delta_1+\Delta_2$; the third inequality is from $\gamma^{3/7} \le 1$; and the last inequality is from $\pi_{\theta_0}(\ymed) = 1 - \pi_{\theta_0}(\ystar) - \pi_{\theta_0}(\Ybad) \ge 1 - \gamma^{13/14} - 0.1\gtreward(\ymed) \ge 0.8$ and $\|\phi(\ystar)\|^2-s \le 2B^2$.
Plugging this upper bound on $\pi_{\theta_0}(\ystar)$ into \zcref{eq:pi_ystar_derive_tau_upper_bound_neg}, and using $\pi_{\theta_\tau}(\ymed) \ge \pi_{\theta_0}(\ymed)$ together with $1-\pi_{\theta_0}(\ystar)<1$, yields $\left.\frac{d}{dt} \pi_{\theta_t}(\ystar)\right|_{t=\tau} < 0$.
This contradicts the fact that $\tau$ is defined as the initial time at which $\left.\frac{d}{dt} \pi_{\theta_t}(\ystar)\right|_{t=\tau} \ge 0$.
Thus, it must be that $\frac{d}{dt} \pi_{\theta_t}(\ystar) < 0$ for all $t \in [0, t_{\sqrt{\gamma}}]$.

\medskip

\textbf{Step 2: upper bound on $\pi_{\theta_{t_{\sqrt{\gamma}}}}(\ystar)$ and monotonicity in $[ t_{\sqrt{\gamma}}, t_\gamma]$.}
Next, we prove that $\pi_{\theta_t} (\ystar)$ keeps decreasing and $\pi_{\theta_t}(\ymed)$ keeps increasing for all $t \in [t_{\sqrt{\gamma}}, t_\gamma]$.
To do so, we first derive upper bounds on $\frac{d}{dt} \pi_{\theta_t}(\ystar)$ and $\frac{d}{dt} \pi_{\theta_t}(\ymed)$ for $t \in [0, t_{\sqrt{\gamma}}]$ that allow quantifying how small the probability of $\ystar$ becomes until $t_{\sqrt{\gamma}}$.

\medskip

\textbf{Step 2.1: upper bound on $\frac{d}{dt} \pi_{\theta_t}(\ystar)$ in $[0, t_{\sqrt{\gamma}}]$.}
In this step, we prove that for all $t \in [0, t_{\sqrt{\gamma}}]$:
\begin{align*}
    \frac{d}{dt} \pi_{\theta_t}(\ystar) &\le -0.3 (\|\phi(\ymed)\|^2-s) \cdot \pi_{\theta_t}(\ystar)\pi_{\theta_t}(\ymed)^2 \advgt(\ymed; \theta_t) .
\end{align*}

First, for $t \in [0, t_{\sqrt{\gamma}}]$, based on the expression derived in \zcref{lem:probability_derivative_expression} we upper bound $\frac{d}{dt} \pi_{\theta_t}(\ystar)$ as follows:
\be
\begin{split}
        &\frac{d}{dt} \pi_{\theta_t}(\ystar) \\
        & = \pi_{\theta_t}(\ystar) \Big[ \pi_{\theta_t}(\ystar) \advgt(\ystar; \theta_t) \big(\|\phi(\ystar)\|^2 - \|\phi(\ystar)\|^2\pi_{\theta_t}(\ystar) - s \pi_{\theta_t}(\ymed)\big)\\
        & \hspace{16mm} + \pi_{\theta_t}(\ymed) \advgt(\ymed; \theta_t) \big(s - \|\phi(\ymed)\|^2\pi_{\theta_t}(\ymed) - s \pi_{\theta_t}(\ystar) \big) \\
        & \hspace{16mm} - \sum\nolimits_{z \in \Ybad}\pi_{\theta_t}(z)^2 \advgt(z; \theta_t) \|\phi(z)\|^2\Big] \\
        & \le \pi_{\theta_t}(\ystar) \Big[ (\|\phi(\ystar)\|^2-s) \pi_{\theta_t}(\ystar) \advgt(\ystar; \theta_t) \big(1 - \pi_{\theta_t}(\ystar)\big) \\
        & \hspace{16mm} - (\|\phi(\ymed)\|^2-s) \pi_{\theta_t}(\ymed)^2 \advgt(\ymed; \theta_t)\\
        &\hspace{16mm} - \sum\nolimits_{z \in \Ybad}\pi_{\theta_t}(z)^2 \advgt(z; \theta_t) \|\phi(z)\|^2 \Big]\\
        & \le \pi_{\theta_t}(\ystar) \Big[ (\|\phi(\ystar)\|^2-s) \pi_{\theta_0}(\ystar) \advgt(\ystar; \theta_t) \big(1 - \pi_{\theta_0}(\ystar)\big) \\
        &\hspace{16mm} - (\|\phi(\ymed)\|^2-s) \pi_{\theta_t}(\ymed)^2 \advgt(\ymed; \theta_t)\\
        &\hspace{16mm} - \sum\nolimits_{z \in \Ybad}\pi_{\theta_t}(z)^2 \advgt(z; \theta_t) \|\phi(z)\|^2 \Big] .
    \end{split}
    \label{eq:pi_ystar_derivative_upper_bound_neg_quantitative}
    \ee
    The first inequality is from $\pi_{\theta_t}(\ymed) \le 1 - \pi_{\theta_t}(\ystar)$ and the last inequality is by $\pi_{\theta_t}(\ystar) \leq \pi_{\theta_0} (\ystar) < 0.5$, which implies $\pi_{\theta_t} (\ystar) (1 - \pi_{\theta_t} (\ystar)) \leq \pi_{\theta_0} (\ystar) ( 1 - \pi_{\theta_0} (\ystar))$. 
    Denote by $G$ the term inside the square brackets in the last inequality above.
    We now prove that $G \leq -0.3(\|\phi(\ymed)\|^2-s) \pi_{\theta_t}(\ymed)^2 \advgt(\ymed; \theta_t)$, which is equivalent to 
    \[
        G+0.3(\|\phi(\ymed)\|^2-s) \pi_{\theta_t}(\ymed)^2 \advgt(\ymed; \theta_t) \leq 0 .
    \]
    Towards doing so, we perform the following sequence of computations:
    \begin{align*}
        & G+0.3(\|\phi(\ymed)\|^2-s) \pi_{\theta_t}(\ymed)^2 \advgt(\ymed; \theta_t)  \\
        &= -0.7(\|\phi(\ymed)\|^2-s) \pi_{\theta_t}(\ymed)^2 \advgt(\ymed; \theta_t) \\
        & \hspace{4mm} + (\|\phi(\ystar)\|^2-s) \pi_{\theta_0}(\ystar) \advgt(\ystar; \theta_t) \big(1 - \pi_{\theta_0}(\ystar)\big) - \sum\nolimits_{z \in \Ybad} \pi_{\theta_t}(z)^2 \advgt(z; \theta_t) \|\phi(z)\|^2\\
        & \le -0.7(\|\phi(\ymed)\|^2-s) \pi_{\theta_t}(\ymed)^2 \advgt(\ymed; \theta_t) \\
        & \hspace{4mm} + (\|\phi(\ystar)\|^2-s) \pi_{\theta_0}(\ystar) \advgt(\ystar; \theta_t) - \sum\nolimits_{z \in \Ybad} \max\nolimits_{z \in \Ybad} \|\phi(z)\|^2\pi_{\theta_t}(z)^2 \advgt(z; \theta_t)\\
        &= -0.7(\|\phi(\ymed)\|^2-s) \pi_{\theta_t}(\ymed)^2 \advgt(\ymed; \theta_t) \\
        & \hspace{4mm} + (\|\phi(\ystar)\|^2-s) \pi_{\theta_0}(\ystar) \advgt(\ystar; \theta_t) + \sum\nolimits_{z \in \Ybad} \max\nolimits_{z \in \Ybad} \|\phi(z)\|^2\pi_{\theta_t}(z)^2 (\Vgt(\theta_t) - \gtreward(z))\\
        &= -0.7 (\|\phi(\ymed)\|^2-s) \pi_{\theta_t}(\ymed)^2 \Big( (1-\pi_{\theta_t}(\ymed))\gtreward(\ymed) - \pi_{\theta_t}(\ystar)\gtreward(\ystar) \\
        & \hspace{20mm} - \gtreward(\ybad) \sum\nolimits_{z \in \Ybad} \pi_{\theta_t}(z) \Big) + (\|\phi(\ystar)\|^2-s) \pi_{\theta_0}(\ystar) \advgt(\ystar; \theta_t) \\
        & \hspace{4mm} + \sum\nolimits_{z \in \Ybad} \max\nolimits_{z \in \Ybad} \|\phi(z)\|^2\pi_{\theta_t}(z)^2 \Big((1-\pi_{\theta_t}(z))(-\gtreward(z)) + \pi_{\theta_t}(\ystar) \gtreward(\ystar) \\
        & \hspace{20mm} + \pi_{\theta_t}(\ymed) \gtreward(\ymed) + \sum\nolimits_{z^\prime \in \Ybad \setminus \{z\}} \pi_{\theta_t}(z^\prime)\gtreward(z^\prime) \Big) \\
        &= -0.7 (\|\phi(\ymed)\|^2-s) \pi_{\theta_t}(\ymed)^2 \big(\pi_{\theta_t}(\Ybad)\Deltatwo - \pi_{\theta_t}(\ystar)\Delta_1 \big) \\
        & \hspace{4mm} + (\|\phi(\ystar)\|^2-s) \pi_{\theta_0}(\ystar) \advgt(\ystar; \theta_t)\\
        &\hspace{4mm} + \sum\nolimits_{z \in \Ybad} \max\nolimits_{z \in \Ybad} \|\phi(z)\|^2\pi_{\theta_t}(z)^2 \big(\pi_{\theta_t}(\ymed)\Delta_2 + \pi_{\theta_t}(\ystar)(\Delta_1 + \Delta_2) \big) ,
    \end{align*}
    where the inequality is from $1 - \pi_{\theta_0}(\ystar) \le 1$.

    Continuing from the last expression above, we get:
    \begin{align*}
        & G+0.3(\|\phi(\ymed)\|^2-s) \pi_{\theta_t}(\ymed)^2 \advgt(\ymed; \theta_t) \\
        & \le -0.7 (\|\phi(\ymed)\|^2-s) \pi_{\theta_t}(\ymed)^2 \big(\pi_{\theta_t}(\Ybad)\Deltatwo - \pi_{\theta_t}(\ystar)\Delta_1 \big) \\
        & \hspace{4mm} + (\|\phi(\ystar)\|^2-s) \pi_{\theta_0}(\ystar) \advgt(\ystar; \theta_t)\\
        &\hspace{4mm} + \max\nolimits_{z \in \Ybad} \|\phi(z)\|^2\pi_{\theta_t}(\Ybad)^2 \big(\pi_{\theta_t}(\ymed)\Delta_2 + \pi_{\theta_t}(\ystar)(\Delta_1 + \Delta_2) \big)\\
        &\le \pi_{\theta_t}(\Ybad)\Big[ -0.7(\|\phi(\ymed)\|^2-s)\pi_{\theta_t}(\ymed)^2\Deltatwo \\
        & \hspace{20mm} + \max\nolimits_{z \in \Ybad} \|\phi(z)\|^2 \big(\pi_{\theta_t}(\Ybad)\pi_{\theta_t}(\ymed)\Delta_2 \big) \Big]\\
        & \hspace{4mm} +\pi_{\theta_0}(\ystar) \Big[ 0.7(\|\phi(\ymed)\|^2-s)\pi_{\theta_t}(\ymed)^2\Delta_1 + (\|\phi(\ystar)\|^2-s)\advgt(\ystar; \theta_t) \\
        & \hspace{20mm} + \max\nolimits_{z \in \Ybad} \|\phi(z)\|^2(\Delta_1 + \Delta_2)\pi_{\theta_t}(\Ybad)^2\Big ]\\
        &\le \underbrace{\pi_{\theta_t}(\Ybad)\Deltatwo \Big[ -0.7(\|\phi(\ymed)\|^2-s)\pi_{\theta_t}(\ymed)^2 + \|\phi(\ymed)\|^2\pi_{\theta_t}(\Ybad)\pi_{\theta_t}(\ymed)\Big]}_{I_1}\\
        & \hspace{4mm} + \pi_{\theta_0}(\ystar)\Big [ 0.7(\|\phi(\ymed)\|^2-s)\pi_{\theta_t}(\ymed)^2\Delta_1 + \max\nolimits_{z \in \Ybad} \|\phi(z)\|^2\pi_{\theta_t}(\Ybad)^2(\Delta_1 + \Delta_2) \\
        & \hspace{20mm} + (\|\phi(\ystar)\|^2-s)\advgt(\ystar; \theta_t) \Big ].
    \end{align*}
    Here, the first inequality is due to $\sum\nolimits_{z \in \Ybad} \pi_{\theta_t}(z)^2 \le \pi_{\theta_t} (\Ybad)^2$; the second inequality is from $\pi_{\theta_t}(\ystar) \le \pi_{\theta_0}(\ystar)$; and the last inequality is by \zcref{assum:feature_structure_neg_inn_prod}.
    We denote the second term in the last inequality by $I_2$, \ie, $I_2$ is equal to $\pi_{\theta_0} (\ystar)$ multiplied by the term inside the last square brackets.
    
    Next, we show that:
    \[
    \begin{split}
    I_1 &\le -0.2 (\|\phi(\ymed)\|^2-s) \pi_{\theta_t}(\Ybad) \pi_{\theta_t}(\ymed)^2 \Deltatwo , \\
    I_2 & \le 0.2 (\|\phi(\ymed)\|^2-s) \pi_{\theta_t}(\Ybad) \pi_{\theta_t}(\ymed)^2 \Deltatwo ,
    \end{split}
    \]
    from which it follows that $G+0.3(\|\phi(\ymed)\|^2-s) \pi_{\theta_t}(\ymed)^2 \advgt(\ymed; \theta_t) \leq 0$, and so $G \le -0.3(\|\phi(\ymed)\|^2-s) \pi_{\theta_t}(\ymed)^2 \advgt(\ymed; \theta_t)$.
    Plugging this inequality into \zcref{eq:pi_ystar_derivative_upper_bound_neg_quantitative} will then yield the desired upper bound on $\frac{d}{dt} \pi_{\theta_t}(\ystar)$.

    For $I_1$, it suffices to show that $0.5(\|\phi(\ymed)\|^2-s) \pi_{\theta_t}(\ymed) \ge \|\phi(\ymed)\|^2 \pi_{\theta_t}(\Ybad)$.
    From \zcref{assump: loglin_neg_minor}, we have $\pi_{\theta_0}(\Ybad) < 0.2$ and $\gamma < 0.05$, so $\pi_{\theta_0}(\ystar) \le \gamma^{13/14} < 0.13$ and $\pi_{\theta_0}(\ymed) = 1 - \pi_{\theta_0}(\ystar) - \pi_{\theta_0}(\Ybad) > 0.67$.
    This implies that $\pi_{\theta_t}(\Ybad) \le \pi_{\theta_0}(\Ybad) < 0.4 \pi_{\theta_0}(\ymed) \le 0.4 \pi_{\theta_t}(\ymed)$. 
    Since $s \le 0$, we further obtain:
    \begin{align*}
        &\|\phi(\ymed)\|^2 \pi_{\theta_t}(\Ybad)  
        < 0.4\pi_{\theta_t}(\ymed) \|\phi(\ymed)\|^2 < 0.5 (\|\phi(\ymed)\|^2-s) \pi_{\theta_t}(\ymed),
    \end{align*}
    which implies that $I_1 \le -0.2 (\|\phi(\ymed)\|^2-s) \pi_{\theta_t}(\Ybad) \pi_{\theta_t}(\ymed)^2\Deltatwo$. 
    
For $I_2$, we have:
\begin{align*}
    &0.7 (\|\phi(\ymed)\|^2-s)\pi_{\theta_t}(\ymed)^2 \Delta_1 + \max\nolimits_{z \in \Ybad} \|\phi(z)\|^2\pi_{\theta_t}(\Ybad)^2 (\Delta_1 + \Delta_2) \\
    & \hspace{4mm} + (\|\phi(\ystar)\|^2-s) \advgt(\ystar; \theta_t)\\
    & \le  0.7(\|\phi(\ymed)\|^2-s) \Delta_1 + 0.1\max\nolimits_{z \in \Ybad} \|\phi(z)\|^2(\Delta_1 + \Delta_2) + (\|\phi(\ystar)\|^2-s)\advgt(\ystar; \theta_0)\\
    & \leq  2(B^2 - s)(\Delta_1 + \Delta_2),
\end{align*}
where the first inequality is from $\pi_{\theta_t}(\Ybad)^2 \le \pi_{\theta_0}(\Ybad)^2 \le \pi_{\theta_0}(\Ybad) \le 0.1$ and $\Vgt(\theta_t) \ge \Vgt(\theta_0)$, and the second inequality is from $\advgt(\ystar; \theta_0) \le \Delta_1 + \Delta_2$. 
Meanwhile, as $\advgt(\ymed; \theta_t) \ge \sqrt{\gamma}$ for $t \in [0, t_{\sqrt{\gamma}}]$, we get:
\begin{align*}
    \advgt(\ymed; \theta_t) &= (1 - \pi_{\theta_t}(\ymed)) \gtreward(\ymed) - \pi_{\theta_t}(\ystar) \gtreward(\ystar) - \sum\nolimits_{z \in \Ybad} \pi_{\theta_t}(z) \gtreward(z)\\
    &= -\Delta_1 \pi_{\theta_t}(\ystar) + \Delta_2 \pi_{\theta_t}(\Ybad).
\end{align*}
Thus, $-\Delta_1 \pi_{\theta_t}(\ystar) + \Delta_2 \pi_{\theta_t}(\Ybad) \ge \sqrt{\gamma}$, and rearranging this inequality gives
\be
\pi_{\theta_t}(\Ybad) \ge \frac{\sqrt{\gamma}+ \Delta_1 \pi_{\theta_t}(\ystar)}{\Delta_2} \ge \frac{\sqrt{\gamma}}{\Delta_2} .
\label{eq:y_bad_prob_lower_bound_t_sqrt_gamma_neg_quantitative}
\ee
Here, the second inequality uses $\Delta_1 \pi_{\theta_t}(\ystar) \ge 0$. 
By the bound on $\gamma$ in \zcref{assump: gamma_loglin_neg} of \zcref{assump: loglin_neg_minor}, together with $\pi_{\theta_0}(\ymed)>0.67$ (shown above when bounding $I_1$), it holds that
\[
\gamma^{3/7}
\le \frac{21\|\phi(\ymed)\|^2}{1100B^2(\Delta_1+\Delta_2)}
\le \frac{\|\phi(\ymed)\|^2\pi_{\theta_0}(\ymed)^2}{10B^2(\Delta_1+\Delta_2)} .
\]
Moreover, notice that since $s<0$ and $\|\phi(\ymed)\|^2\le B^2$, we have
\[
\frac{\|\phi(\ymed)\|^2}{B^2}
\le
\frac{\|\phi(\ymed)\|^2-s}{B^2-s}.
\]
The last two inequalities thus give:
\[
\pi_{\theta_0}(\ystar) \le \gamma^{13/14} = \sqrt{\gamma} \cdot \gamma^{3 / 7} \le \frac{(\|\phi(\ymed)\|^2-s)\sqrt{\gamma} \pi_{\theta_0}  (\ymed)^2}{10 (B^2-s)(\Delta_1 + \Delta_2)},
\]
which leads to:
\[
I_2 \le 2(B^2-s)(\Delta_1 + \Delta_2) \pi_{\theta_0}(\ystar) \le 0.2 (\|\phi(\ymed)\|^2-s) \pi_{\theta_t}(\Ybad) \pi_{\theta_t}(\ymed)^2 \Deltatwo .
\]

Combining the upper bounds on $I_1$ and $I_2$, we have shown that $I_1 + I_2 \leq 0$, and so $G \leq -0.3(\|\phi(\ymed)\|^2-s) \pi_{\theta_t}(\ymed)^2 \advgt(\ymed; \theta_t)$.
Plugging this upper bound on $G$ into \zcref{eq:pi_ystar_derivative_upper_bound_neg_quantitative} establishes that for all $t \in [0, t_{\sqrt{\gamma}}]$:
\begin{align*}
    \frac{d}{dt} \pi_{\theta_t}(\ystar) &\le -0.3 (\|\phi(\ymed)\|^2-s) \cdot \pi_{\theta_t}(\ystar)\pi_{\theta_t}(\ymed)^2 \advgt(\ymed; \theta_t) ,
\end{align*}
as desired.

\medskip

\textbf{Step 2.2: upper bound on $\frac{d}{dt} \pi_{\theta_t}(\ymed)$ in $[0, t_{\sqrt{\gamma}}]$.}
In Step 1, we showed that $\frac{d}{dt} \pi_{\theta_t} (\ymed) > 0$ for $t \in [0, t_{\sqrt{\gamma}}]$.
In this step, we prove that $\pi_{\theta_t} (\ymed)$ does not grow too fast by establishing that:
\begin{align*}
    \frac{d}{dt}\pi_{\theta_t}(\ymed) \le 2.1 (\|\phi(\ymed)\|^2-s/2) \pi_{\theta_t}(\ymed)^2 (1-\pi_{\theta_t}(\ymed))\advgt(\ymed; \theta_t) .
\end{align*}

We begin upper bounding $ \frac{d}{dt}\pi_{\theta_t}(\ymed)$ using the expression derived in \zcref{lem:probability_derivative_expression}:
\be
\begin{split}
    & \frac{d}{dt}\pi_{\theta_t}(\ymed) \\
    & = \pi_{\theta_t}(\ymed) \Big [\pi_{\theta_t}(\ymed) \advgt(\ymed; \theta_t) \big( \|\phi(\ymed)\|^2 - \|\phi(\ymed)\|^2\pi_{\theta_t}(\ymed) - s \pi_{\theta_t}(\ystar) \big)\\
    & \hspace{20mm} + \pi_{\theta_t}(\ystar) \advgt(\ystar; \theta_t) \big(s - \|\phi(\ystar)\|^2\pi_{\theta_t}(\ystar) - s \pi_{\theta_t}(\ymed) \big) \\
    & \hspace{20mm} - \sum\nolimits_{z \in \Ybad} \pi_{\theta_t}(z)^2 \advgt(z; \theta_t) \|\phi(z)\|^2 \Big] \\
    & \le \pi_{\theta_t}(\ymed) \Big [\pi_{\theta_t}(\ymed) \advgt(\ymed; \theta_t) \big(\|\phi(\ymed)\|^2 - \|\phi(\ymed)\|^2\pi_{\theta_t}(\ymed) - s \pi_{\theta_t}(\ystar)\big) \\
     &\hspace{20mm} - \sum\nolimits_{z \in \Ybad} \pi_{\theta_t}(z)^2 \advgt(z; \theta_t) \|\phi(z)\|^2 \Big ]\\
    & \le \pi_{\theta_t}(\ymed) \Big [\pi_{\theta_t}(\ymed) \advgt(\ymed; \theta_t) \big(\|\phi(\ymed)\|^2 - \|\phi(\ymed)\|^2\pi_{\theta_t}(\ymed) - s \pi_{\theta_t}(\ystar)\big) \\
    &\hspace{20mm} + \max\nolimits_{z \in \Ybad} \|\phi(z)\|^2(\Delta_2 - \advgt(\ymed; \theta_t) ) \sum\nolimits_{z \in \Ybad} \pi_{\theta_t}(z)^2 \Big ] \\
    & \le \pi_{\theta_t}(\ymed) \Big [\pi_{\theta_t}(\ymed) \advgt(\ymed; \theta_t) \big(\|\phi(\ymed)\|^2 - \|\phi(\ymed)\|^2\pi_{\theta_t}(\ymed) - s \pi_{\theta_t}(\ystar)\big) \\
    &\hspace{20mm} + \max\nolimits_{z \in \Ybad} \|\phi(z)\|^2(\Delta_2 - \advgt(\ymed; \theta_t) ) \pi_{\theta_t}(\Ybad)^2 \Big ] .
\end{split}
\label{eq:pi_ymed_derivative_upper_bound_neg_quantitative}
\ee
Here, the first inequality is from $s - \norm{\phi(\ystar)}^2 \pi_{\theta_t}(\ystar) - s \pi_{\theta_t}(\ymed) \le 0$ and the last inequality is from $\sum\nolimits_{z \in \Ybad} \pi_{\theta_t}(z)^2 \le \pi_{\theta_t} (\Ybad)^2$.
Our next claim is that $1 - \pi_{\theta_t}(\ymed) \ge 2 \pi_{\theta_t}(\ystar)$ for all $t \in [0, t_{\sqrt{\gamma}}]$. 
Indeed, by \zcref{lemma: max_prob_y_prime_s}, for $t \in [0, t_{\sqrt{\gamma}}]$ it holds that:
\[
1 - \pi_{\theta_t}(\ymed) \ge \frac{\sqrt{\gamma}}{1 + \gtreward(\ymed)} \ge \frac{\sqrt{\gamma}}{2} .
\]
Moreover, \zcref{assump: loglin_neg_ystar,assumpt: loglin_neg_ybad} in \zcref{assump: loglin_neg_minor} give $\pi_{\theta_t}(\ystar) \le \pi_{\theta_0}(\ystar) \le \gamma^{13/14}$. 
Under the conditions on $\gamma$ in \zcref{assump: loglin_neg_minor}, it holds that $\sqrt{\gamma} \ge 4 \gamma^{13/14}$.
Thus, $1-\pi_{\theta_t}(\ymed) \ge 2 \pi_{\theta_t}(\ystar)$, from which we conclude:
\[
\|\phi(\ymed)\|^2 - \|\phi(\ymed)\|^2\pi_{\theta_t}(\ymed) - s \pi_{\theta_t}(\ystar) \le (\|\phi(\ymed)\|^2 - s/2) (1 - \pi_{\theta_t}(\ymed)).
\]
Returning to \zcref{eq:pi_ymed_derivative_upper_bound_neg_quantitative}, this leads to:
\be
\begin{split}
    \frac{d}{dt}\pi_{\theta_t}(\ymed)
    \le{}& \pi_{\theta_t}(\ymed) \Big[(\|\phi(\ymed)\|^2-s/2)\pi_{\theta_t}(\ymed)(1 - \pi_{\theta_t}(\ymed)) \advgt(\ymed; \theta_t) \\
    &+ \max\nolimits_{z \in \Ybad} \|\phi(z)\|^2(\Delta_2 - \advgt(\ymed; \theta_t) ) \pi_{\theta_t}(\Ybad)^2 \Big]\\
    \le{}& \pi_{\theta_t}(\ymed) (1 - \pi_{\theta_t}(\ymed)) \Big[(\|\phi(\ymed)\|^2-s/2)\pi_{\theta_t}(\ymed)\advgt(\ymed; \theta_t) \\
    &+ \underbrace{\max\nolimits_{z \in \Ybad} \|\phi(z)\|^2 (\Delta_2 - \advgt(\ymed; \theta_t) ) \pi_{\theta_t}(\Ybad)}_{G} \Big]
\end{split}
\label{eq:pi_ymed_derivative_upper_bound_neg_quantitative_final}
\ee
where the second inequality is from $\pi_{\theta_t}(\Ybad) \le 1 - \pi_{\theta_t}(\ymed)$. 
We now show that $1.1 (\|\phi(\ymed)\|^2-s/2) \pi_{\theta_t}(\ymed) \advgt(\ymed; \theta_t) - G \geq 0$:
\begin{align*}
&1.1 (\|\phi(\ymed)\|^2-s/2) \pi_{\theta_t}(\ymed) \advgt(\ymed; \theta_t) - G \\
    & = 1.1 (\|\phi(\ymed)\|^2-s/2) \pi_{\theta_t}(\ymed) \advgt(\ymed; \theta_t) \\
    & \hspace{4mm} - \max\nolimits_{z \in \Ybad} \|\phi(z)\|^2 \pi_{\theta_t}(\Ybad) (\Delta_2 - \advgt(\ymed; \theta_t)) \\
    & \ge 1.1\|\phi(\ymed)\|^2 \pi_{\theta_t}(\ymed)\big(\gtreward(\ymed)-\Vgt(\theta_t)\big) \\
    & \hspace{4mm} - \max\nolimits_{z \in \Ybad} \|\phi(z)\|^2\pi_{\theta_t}(\Ybad)\big(\Vgt(\theta_t) - \gtreward(\ybad)\big)\\
    & = 1.1 \|\phi(\ymed)\|^2\pi_{\theta_t}(\ymed)\big(\pi_{\theta_t}(\Ybad) \Deltatwo - \pi_{\theta_t}(\ystar) \Delta_1 \big)\\
    &\hspace{4mm} - \max\nolimits_{z \in \Ybad} \|\phi(z)\|^2\pi_{\theta_t}(\Ybad)\big(\pi_{\theta_t}(\ymed)\Delta_2 + \pi_{\theta_t}(\ystar) (\Delta_1+\Delta_2) \big)\\
    & = \pi_{\theta_t}(\Ybad) \Big(1.1 \|\phi(\ymed)\|^2\pi_{\theta_t}(\ymed) \Deltatwo - \max\nolimits_{z \in \Ybad} \|\phi(z)\|^2\pi_{\theta_t}(\ymed) \Delta_2 \Big)\\
    &\hspace{4mm} - \pi_{\theta_t}(\ystar) \big(1.1 \|\phi(\ymed)\|^2\pi_{\theta_t}(\ymed) \Delta_1 + \max\nolimits_{z \in \Ybad} \|\phi(z)\|^2\pi_{\theta_t}(\Ybad)  (\Delta_1 + \Delta_2) \big)\\
    & \ge \underbrace{0.1 \pi_{\theta_t}(\Ybad) \pi_{\theta_t}(\ymed)\Deltatwo \|\phi(\ymed)\|^2 }_{I_1}\\
    &\hspace{4mm} - \underbrace{\pi_{\theta_t}(\ystar) \|\phi(\ymed)\|^2 \big(1.1 \pi_{\theta_t}(\ymed) \Delta_1 + \pi_{\theta_t}(\Ybad)  (\Delta_1 + \Delta_2) \big)}_{I_2}.
\end{align*}
Here, the first inequality is from $s \le 0$ and the last inequality is from \zcref{assum:feature_structure_neg_inn_prod}. 

We proceed by showing that $I_2 \le 0.1 \|\phi(\ymed)\|^2\pi_{\theta_t}(\ymed) \pi_{\theta_t}(\Ybad)\Deltatwo = I_1$.
From \zcref{assump: loglin_neg_minor}, we have $\pi_{\theta_t}(\Ybad) \le \pi_{\theta_0}(\Ybad) < 0.4 \pi_{\theta_0}(\ymed) \le 0.4 \pi_{\theta_t}(\ymed)$. 
Therefore,
\begin{align*}
    1.1 \pi_{\theta_t}(\ymed) \Delta_1 + \pi_{\theta_t}(\Ybad)  (\Delta_1 + \Delta_2) & < 1.1 \pi_{\theta_t}(\ymed) \Delta_1 + 0.4 \pi_{\theta_t}(\ymed)(\Delta_1 + \Delta_2) \\
    & \leq 1.5 \pi_{\theta_t}(\ymed)(\Delta_1 + \Delta_2) .
\end{align*}
As shown in \zcref{eq:y_bad_prob_lower_bound_t_sqrt_gamma_neg_quantitative}, $\pi_{\theta_t}(\Ybad) \ge \frac{\sqrt{\gamma}}{\Delta_2}$ for $t \in [0, t_{\sqrt{\gamma}}]$.
Furthermore, by \zcref{assump: loglin_neg_minor} we know that
\[
\gamma \le \bigg(\frac{\|\phi(\ymed)\|^2}{50 B^2(\Delta_1 + \Delta_2)}\bigg)^{7/3} ,
\]
and
\[
\pi_{\theta_0}(\ystar) \le \gamma^{13/14} \le \frac{\sqrt{\gamma} \|\phi(\ymed)\|^2}{50 B^2(\Delta_1 + \Delta_2)} .
\]
Thus, we arrive at:
\[
\begin{split}
I_2 & \le 1.5 B^2 \pi_{\theta_t}(\ymed) \pi_{\theta_0}(\ystar)(\Delta_1 + \Delta_2) \\
& \le 0.03 \|\phi(\ymed)\|^2\pi_{\theta_t}(\ymed) \pi_{\theta_t}(\Ybad)\Deltatwo \\
& \le 0.1 \|\phi(\ymed)\|^2\pi_{\theta_t}(\ymed) \pi_{\theta_t}(\Ybad)\Deltatwo \\
& = I_1 ,
\end{split}
\]
which implies that $G \leq 1.1 (\|\phi(\ymed)\|^2-s/2) \pi_{\theta_t}(\ymed) \advgt(\ymed; \theta_t)$.
Going back to \zcref{eq:pi_ymed_derivative_upper_bound_neg_quantitative_final}, the desired upper bound on $\frac{d}{dt} \pi_{\theta_t}(\ymed)$ follows:
\begin{align*}
    \frac{d}{dt}\pi_{\theta_t}(\ymed)
    &\le \pi_{\theta_t}(\ymed) (1-\pi_{\theta_t}(\ymed)) \Big[(\|\phi(\ymed)\|^2-s/2)\pi_{\theta_t}(\ymed)\advgt(\ymed; \theta_t) + G \Big] \\
    &\le \pi_{\theta_t}(\ymed) (1-\pi_{\theta_t}(\ymed)) \Big[(\|\phi(\ymed)\|^2-s/2)\pi_{\theta_t}(\ymed)\advgt(\ymed; \theta_t) \\
    &\hspace{44mm} + 1.1 (\|\phi(\ymed)\|^2-s/2)\pi_{\theta_t}(\ymed)\advgt(\ymed; \theta_t) \Big] \\
    &= 2.1 (\|\phi(\ymed)\|^2-s/2) \pi_{\theta_t}(\ymed)^2 (1-\pi_{\theta_t}(\ymed))\advgt(\ymed; \theta_t).
\end{align*}

\medskip

\textbf{Step 2.3: upper bound on $\pi_{\theta_{t_{\sqrt{\gamma}}}}(\ystar)$.}
To summarize, we showed that for all $t \in [0, t_{\sqrt{\gamma}}]$:
\begin{align*}
    \frac{d}{dt} \pi_{\theta_t}(\ystar) &\le -0.3(\|\phi(\ymed)\|^2-s) \cdot \pi_{\theta_t}(\ystar)\pi_{\theta_t}(\ymed)^2 \advgt(\ymed; \theta_t), \\
    \frac{d}{dt}\pi_{\theta_t}(\ymed) &\le 2.1(\|\phi(\ymed)\|^2-s/2) \cdot \pi_{\theta_t}(\ymed)^2(1-\pi_{\theta_t}(\ymed)) \advgt(\ymed; \theta_t).
\end{align*}
Since both sides of the first inequality are negative and both sides of the second inequality are positive, we can divide them to obtain:
\begin{align*}
    \frac{\frac{d}{dt} \pi_{\theta_t}(\ystar)}{\frac{d}{dt}\pi_{\theta_t}(\ymed)}
    &\le - \frac{\|\phi(\ymed)\|^2-s}{7(\|\phi(\ymed)\|^2-s/2)} \cdot \frac{\pi_{\theta_t}(\ystar)}{1-\pi_{\theta_t}(\ymed)} \\
    & = -h(s)\cdot \frac{\pi_{\theta_t}(\ystar)}{1-\pi_{\theta_t}(\ymed)},
\end{align*}
where recall that $h(s) = \frac{\|\phi(\ymed)\|^2-s}{7(\|\phi(\ymed)\|^2-s/2)}$ (\zcref{eq:neg_def_helper_variables}).
We can rewrite this inequality as:
\begin{align*}
    \frac{\frac{d}{dt} \pi_{\theta_t}(\ystar)}{\pi_{\theta_t}(\ystar)}
    &\le h(s)\cdot \frac{-\frac{d}{dt}\pi_{\theta_t}(\ymed)}{1-\pi_{\theta_t}(\ymed)}
    = h(s)\cdot \frac{\frac{d}{dt}(1-\pi_{\theta_t}(\ymed))}{1-\pi_{\theta_t}(\ymed)},
\end{align*}
from which it follows that:
\begin{align*}
    \frac{d}{dt}\ln \pi_{\theta_t}(\ystar) \le h(s)\cdot \frac{d}{dt} \ln (1-\pi_{\theta_t}(\ymed)).
\end{align*}
Integrating both sides over $[0, t_{\sqrt{\gamma}}]$ gives:
\begin{align*}
    \ln \pi_{\theta_t}(\ystar)\Big|_{0}^{t_{\sqrt{\gamma}}}
    \le h(s)\cdot \ln (1-\pi_{\theta_t}(\ymed))\Big|_{0}^{t_{\sqrt{\gamma}}}.
\end{align*}
Therefore,
\begin{align*}
    \frac{\pi_{\theta_{t_{\sqrt{\gamma}}}}(\ystar)}{\pi_{\theta_0}(\ystar)}
    \le \brk*{ \frac{1-\pi_{\theta_{t_{\sqrt{\gamma}}}}(\ymed)}{1-\pi_{\theta_0}(\ymed)} }^{h(s)},
\end{align*}
and hence
\begin{align}
\label{eq: ystar_upper_tsqrtgamma_loglin}
\pi_{\theta_{t_{\sqrt{\gamma}}}}(\ystar) \le \frac{\pi_{\theta_0}(\ystar) (1-\pi_{\theta_{t_{\sqrt{\gamma}}}}(\ymed))^{h(s)}}{(1-\pi_{\theta_0}(\ymed))^{h(s)}} .
\end{align}
Next, we prove that $1 - \pi_{\theta_{t_{\sqrt{\gamma}}}}(\ymed) \le \frac{\sqrt{\gamma}+ \pi_{\theta_0}(\ystar)}{\gtreward(\ymed)}$.
To see this, assume by way of contradiction that $1 - \pi_{\theta_{t_{\sqrt{\gamma}}}}(\ymed) > \frac{\sqrt{\gamma}+ \pi_{\theta_0}(\ystar)}{\gtreward(\ymed)}$.
Then, we would have:
\begin{align*}
    \Vgt(\theta_{t_{\sqrt{\gamma}}}) &= \pi_{\theta_{t_{\sqrt{\gamma}}}}(\ymed) \gtreward(\ymed) + \pi_{\theta_{t_{\sqrt{\gamma}}}}(\ystar) \gtreward(\ystar) + \pi_{\theta_{t_{\sqrt{\gamma}}}}(\Ybad) \gtreward(\ybad)\\
    &< \bigg(1 - \frac{\sqrt{\gamma}+ \pi_{\theta_0}(\ystar)}{\gtreward(\ymed)}\bigg)\gtreward(\ymed) + \pi_{\theta_0}(\ystar) \\
    & \le \gtreward(\ymed) - \sqrt{\gamma},
\end{align*}
where $\ybad \in \Ybad$, and so $\gtreward (\ybad) \le 0$.
This contradicts the definition of $t_{\sqrt{\gamma}}$ as the initial time at which $\Vgt (\theta_t) \ge \gtreward(\ymed) - \sqrt{\gamma}$. 

Plugging the upper bound on $1 - \pi_{\theta_{t_{\sqrt{\gamma}}}} (\ymed)$ into \zcref{eq: ystar_upper_tsqrtgamma_loglin} then gives:
\be
\begin{split}
    \pi_{\theta_{t_{\sqrt{\gamma}}}}(\ystar) & \le \frac{\pi_{\theta_0}(\ystar)\big(\sqrt{\gamma}+\pi_{\theta_0}(\ystar)\big)^{h(s)}}{\big[\gtreward(\ymed)(1-\pi_{\theta_0}(\ymed))\big]^{h(s)}} \\
    & \le \frac{\pi_{\theta_0}(\ystar)(1.3 \sqrt{\gamma})^{h(s)}}{(0.05 \gtreward(\ymed)^2)^{h(s)}} \\
    & = \pi_{\theta_0}(\ystar) \cdot \brk*{ \frac{ 26 \sqrt{\gamma} }{ \gtreward(\ymed)^2 }}^{h(s)} .
\end{split}
\label{eq: ystar_upper_sqrtgamma_loglin}
\ee
where the second inequality is from $1 - \pi_{\theta_0}(\ymed) \ge \pi_{\theta_0}(\Ybad) \ge 0.05 \cdot \gtreward(\ymed)$ and $\pi_{\theta_0}(\ystar) \le \gamma^{13/14} \le 0.3 \sqrt{\gamma}$ (see \zcref{assump: loglin_neg_minor}).
From \zcref{eq: ystar_upper_sqrtgamma_loglin} and \zcref{assump: loglin_neg_ystar} in \zcref{assump: loglin_neg_minor} we then get:
\begin{align}
    \pi_{\theta_{t_{\sqrt{\gamma}}}}(\ystar) &\le \pi_{\theta_0}(\ystar) \cdot (26\sqrt{\gamma}/\gtreward(\ymed)^2)^{h(s)}\nonumber\\
    &\le \frac{\|\phi(\ymed)\|^2\gamma^{13/14}\gtreward(\ymed)^{2/7}}{40B^2(\Delta_1 + \Delta_2)} \cdot \bigg(\frac{26\sqrt{\gamma}}{\gtreward(\ymed)^2}\bigg)^{h(s)}\nonumber\\
    &< \frac{0.8^2 \|\phi(\ymed)\|^2\gamma^{13/14}\gtreward(\ymed)^{2/7}}{16 \cdot 26^{1/7}B^2(\Delta_1 + \Delta_2)} \cdot \bigg(\frac{26\sqrt{\gamma}}{\gtreward(\ymed)^2}\bigg)^{h(s)}\nonumber\\
    &\le \frac{C\|\phi(\ymed)\|^2\pi_{\theta_0}(\ymed)^2\gamma^{\frac{13}{14}+\frac{h(s)}{2}}}{8(\|\phi(\ystar)\|^2-s) \advgt(\ystar; \theta_0)}.
    \label{eq: ystar_upper_sqrtgamma_final_loglin}
\end{align}
Here, the last inequality uses $h(s) = 1/7 + 13\alpha/7$, and hence $\gtreward(\ymed)^{2/7}(26/\gtreward(\ymed)^2)^{h(s)} = 26^{1/7}C$, together with $\pi_{\theta_0}(\ymed)\ge 0.8$, $\|\phi(\ystar)\|^2-s\le 2B^2$, and $\advgt(\ystar;\theta_0)\le \Delta_1+\Delta_2$.

\medskip

\textbf{Step 2.4: completing the proof of monotonicity in $[t_{\sqrt{\gamma}}, t_\gamma]$}
Finally, we conclude the proof of \zcref{prop: monotonical_loglin_neg}.
We first show that $t_{\sqrt{\gamma}} < t_\gamma$, \ie, we prove that the interval $[t_{\sqrt{\gamma}}, t_\gamma]$ is non-empty.
Recall that $h(s) = 1/7 + 13\alpha/7$ and $C = (26/\gtreward(\ymed)^2)^{13\alpha/7}$.
Therefore,
\[
\begin{split}
\frac{C\gamma^{13/14+h(s)/2}}{\sqrt{\gamma}}
= C\gamma^{3/7+h(s)/2} = \sqrt{\gamma}\brk*{\frac{26\sqrt{\gamma}}{\gtreward(\ymed)^2}}^{13\alpha/7}.
\end{split}
\]
Moreover, by \zcref{assumpt: loglin_neg_ybad} in \zcref{assump: loglin_neg_minor},
\[
    \frac{2\sqrt{\gamma}}{\Delta_2}
    \le \pi_{\theta_0}(\Ybad)
    \le 0.1\gtreward(\ymed),
\]
and since $\Delta_2 = \gtreward(\ymed)-\gtreward(\ybad) \le \gtreward(\ymed)+1 \le 2$, for $\ybad \in \Ybad$, we get $\sqrt{\gamma}\le 0.1\gtreward(\ymed)$.
Using also the fact that $13\alpha/7 \in (0,1/7)$ and $\gtreward(\ymed)\le 1$, this yields
\[
\begin{split}
    \frac{C\gamma^{13/14+h(s)/2}}{\sqrt{\gamma}}
    &\le 0.1\gtreward(\ymed)
    \brk*{\frac{2.6}{\gtreward(\ymed)}}^{13\alpha/7} \\
    &= 0.1\cdot 2.6^{13\alpha/7}
    \gtreward(\ymed)^{1-13\alpha/7}
    < 1 .
\end{split}
\]
Hence, $C\gamma^{13/14+h(s)/2} < \sqrt{\gamma}$.
Thus, the threshold $\gtreward(\ymed)-C\gamma^{13/14+h(s)/2}$ is strictly larger than $\gtreward(\ymed)-\sqrt{\gamma}$, and by continuity of $\Vgt(\theta_t)$ together with the definitions of $t_{\sqrt{\gamma}}$ and $t_\gamma$, it follows that $t_{\sqrt{\gamma}} < t_\gamma$.

Now, we can consider the optimization dynamics over $[t_{\sqrt{\gamma}}, t_\gamma]$ and show that $\pi_{\theta_t}(\ystar)$ continues to decrease during this time interval.
Assume by way of contradiction that there exists $t^\prime \in [0, t_\gamma]$ such that $\left.\frac{d}{dt} \pi_{\theta_t}(\ystar)\right|_{t=t^\prime} \ge 0$.
Let $\tau$ be the initial time in $[0, t_\gamma]$ at which $\frac{d}{dt} \pi_{\theta_t}(\ystar) \geq 0$, \ie:
\[
 \tau := \min \brk[c]*{ t \in [0, t_\gamma] : \tfrac{d}{dt} \pi_{\theta_t}(\ystar) \ge 0 } .
\]
By definition, $\frac{d}{dt} \pi_{\theta_t}(\ystar) < 0$, and so $\frac{d}{dt} \pi_{\theta_t} (\ymed) > 0$ (\zcref{lemma: conditional_monotonicity_loglin_neg}), for all $t \in [0, \tau)$.
We also know that if $\tau$ exists, then $\tau \ge t_{\sqrt{\gamma}}$ since $\frac{d}{dt} \pi_{\theta_t}(\ystar) < 0$ for all $t \in [0, t_{\sqrt{\gamma}}]$.
This implies that $\pi_{\theta_\tau} (\ystar) \leq \pi_{\theta_{t_{\sqrt{\gamma}}}}(\ystar)$ and $\pi_{\theta_\tau}(\ymed) \ge \pi_{\theta_0}(\ymed)$.

Now, by the same arguments made in \hyperref[proof:monotonical_loglin_neg:step1]{Step 1}, it follows that $\frac{d}{dt} \pi_{\theta_t}(\Ybad) \le 0$ for all $t \in [0, \tau)$.
We continue by showing that $\pi_{\theta_\tau}(\Ybad)^2 \le \frac{7\|\phi(\ymed)\|^2\pi_{\theta_0}(\ymed)^2 \advgt(\ymed; \theta_\tau)}{8\max\nolimits_{z \in \Ybad} \|\phi(z)\|^2\Delta_2}$.
To see this, notice that:
\begin{align*}
    \advgt(\ymed; \theta_\tau) &= (1 - \pi_{\theta_\tau}(\ymed)) \gtreward(\ymed) - \pi_{\theta_\tau}(\ystar) \gtreward(\ystar) - \sum\nolimits_{z \in \Ybad} \pi_{\theta_\tau}(z) \gtreward(z)\\
    &= -\Delta_1 \pi_{\theta_\tau}(\ystar) + \Deltatwo \pi_{\theta_\tau}(\Ybad) .
\end{align*}
Thus,
\begin{align*}
    \pi_{\theta_\tau}(\Ybad) = \frac{\advgt(\ymed; \theta_\tau) + \Delta_1 \pi_{\theta_\tau}(\ystar)}{\Deltatwo} \le \frac{1.3 \advgt(\ymed; \theta_\tau)}{\Deltatwo},
\end{align*}
where the inequality is from \zcref{assump: loglin_neg_ystar} in \zcref{assump: loglin_neg_minor} since $\pi_{\theta_0}(\ystar) \le \frac{\|\phi(\ymed)\|^2\gtreward(\ymed)^{2/7}\gamma^{13/14}}{40B^2(\Delta_1+\Delta_2)} \le \frac{\gtreward(\ymed)^{2/7}\gamma^{13/14}}{40(\Delta_1+\Delta_2)} \le \frac{0.3\gamma^{13/14}}{\Delta_1(26/\gtreward(\ymed)^2)^{1/7}}$, which combined with \zcref{eq: ystar_upper_sqrtgamma_loglin} yields $\Delta_1 \pi_{\theta_\tau}(\ystar) \le \Delta_1 \pi_{\theta_{t_{\sqrt{\gamma}}}}(\ystar) \le \Delta_1 \pi_{\theta_0}(\ystar) \cdot (26 \sqrt{\gamma}/\gtreward(\ymed)^2)^{h(s)}\le 0.3 C\gamma^{13/14+h(s)/2} \le 0.3 \advgt(\ymed; \theta_\tau)$.

Squaring both sides of the inequality, we then have:
\begin{align*}
    \pi_{\theta_\tau}(\Ybad)^2 &\le \frac{1.69 \advgt(\ymed; \theta_\tau)^2}{\Deltatwo^2}\\
    &< \frac{7\|\phi(\ymed)\|^2\pi_{\theta_0}(\ymed)^2 \advgt(\ymed; \theta_\tau)}{8\max\nolimits_{z \in \Ybad} \|\phi(z)\|^2 \Delta_2} \cdot \frac{2 \max\nolimits_{z \in \Ybad} \|\phi(z)\|^2 \advgt(\ymed; \theta_\tau)}{\|\phi(\ymed)\|^2\Deltatwo \pi_{\theta_0}(\ymed)^2}\\
    &\le \frac{7\|\phi(\ymed)\|^2\pi_{\theta_0}(\ymed)^2 \advgt(\ymed; \theta_\tau)}{8\max\nolimits_{z \in \Ybad} \|\phi(z)\|^2 \Delta_2} \cdot \frac{2 \advgt(\ymed; \theta_0)}{\Deltatwo\pi_{\theta_0}(\ymed)^2}.
\end{align*}
The second inequality is from $1.69 \times 8 < 2 \times 7$ and the last inequality is by $\max\nolimits_{z \in \Ybad}\|\phi(z)\|^2 \le \|\phi(\ymed)\|^2$ (\zcref{assum:feature_structure_neg_inn_prod}) and $\advgt(\ymed; \theta_\tau) \le \advgt(\ymed; \theta_0)$.
We now prove that the second term on the right-hand side of the inequality above is at most $1$.
To do so, we upper bound $\advgt(\ymed; \theta_0)$:
\begin{align*}
    \advgt(\ymed; \theta_0) &= (1 - \pi_{\theta_0}(\ymed))\gtreward(\ymed) - \pi_{\theta_0}(\ystar) \gtreward(\ystar) - \sum\nolimits_{z \in \Ybad}\pi_{\theta_0}(z) \gtreward(z)\\
    &\le (1 - \pi_{\theta_0}(\ymed))\gtreward(\ymed) + \pi_{\theta_0}(\Ybad)\\
    &\le 0.2 \gtreward(\ymed) + 0.1 \gtreward(\ymed)\\
    &= 0.3 \gtreward(\ymed)\\
    &< \frac{ 0.8^2 \gtreward(\ymed)}{2}\\
    &\le \frac{\Deltatwo \pi_{\theta_0}(\ymed)^2}{2}.
\end{align*}
The first inequality is due to $\gtreward(z) \ge -1$ for all $z \in \Ybad$; the second inequality is from \zcref{assump: loglin_neg_minor} since $\pi_{\theta_0}(\Ybad) \le 0.1\gtreward(\ymed) \le 0.1$ and $\pi_{\theta_0}(\ymed) \ge 0.8$; and the last inequality is from $\gtreward(\ybad) \le 0$ and $\pi_{\theta_0}(\ymed) \ge 0.8$. 
Therefore, $\frac{2 \advgt(\ymed; \theta_0)}{\Deltatwo \pi_{\theta_0}(\ymed)^2} \le 1$, which leads to 
\begin{equation}
\label{eq: ybad2_upper3}
    \pi_{\theta_\tau}(\Ybad)^2 \le \frac{7\|\phi(\ymed)\|^2\pi_{\theta_0}(\ymed)^2 \advgt(\ymed; \theta_\tau)}{8\max\nolimits_{z \in \Ybad} \|\phi(z)\|^2\Delta_2}.
\end{equation}

Let us now consider $\left.\frac{d}{dt} \pi_{\theta_t}(\ystar)\right|_{t=\tau}$ and obtain a contradiction by showing that it is negative.
We start with the following computations, based on the expression derived in \zcref{lem:probability_derivative_expression}:
\begin{align*}
        \left.\frac{d}{dt} \pi_{\theta_t}(\ystar)\right|_{t=\tau} & = \pi_{\theta_\tau}(\ystar) \Big[ \pi_{\theta_\tau}(\ystar) \advgt(\ystar; \theta_\tau) \big(\|\phi(\ystar)\|^2 - \|\phi(\ystar)\|^2\pi_{\theta_\tau}(\ystar) - s \pi_{\theta_\tau}(\ymed)\big)\\
        &\hspace{18mm} + \pi_{\theta_\tau}(\ymed) \advgt(\ymed; \theta_\tau) \big(s - \|\phi(\ymed)\|^2\pi_{\theta_\tau}(\ymed) - s \pi_{\theta_\tau}(\ystar) \big) \\
        & \hspace{18mm} - \sum\nolimits_{z \in \Ybad} \pi_{\theta_\tau}(z)^2 \advgt(z; \theta_\tau)\|\phi(z)\|^2 \Big]\\
        &\le \pi_{\theta_\tau}(\ystar) \Big[ \pi_{\theta_\tau}(\ystar) \advgt(\ystar; \theta_\tau) \big ( \|\phi(\ystar)\|^2 - \|\phi(\ystar)\|^2\pi_{\theta_\tau}(\ystar) - s \pi_{\theta_\tau}(\ymed) \big ) \\
        &\hspace{18mm} + \pi_{\theta_\tau}(\ymed) \advgt(\ymed; \theta_\tau) \big ( s - \|\phi(\ymed)\|^2\pi_{\theta_\tau}(\ymed) - s \pi_{\theta_\tau}(\ystar) \big) \\
        & \hspace{18mm} + \max\nolimits_{z \in \Ybad} \|\phi(z)\|^2 \big (\Delta_2 - \gamma^{\frac{13}{14} + \frac{h(s)}{2}} \big ) \sum\nolimits_{z \in \Ybad}\pi_{\theta_\tau}(z)^2  \Big]\\
        &\leq \pi_{\theta_\tau}(\ystar) \Big[ (\|\phi(\ystar)\|^2-s) \pi_{\theta_\tau}(\ystar) \advgt(\ystar; \theta_\tau) \big ( 1 - \pi_{\theta_\tau}(\ystar)\big ) \\
        &\hspace{18mm} - \|\phi(\ymed)\|^2\pi_{\theta_\tau}(\ymed)^2 \advgt(\ymed; \theta_\tau) \\
        &\hspace{18mm} + \max\nolimits_{z \in \Ybad} \|\phi(z)\|^2(\Delta_2 - \gamma^{\frac{13}{14} + \frac{h(s)}{2}}) \sum\nolimits_{z \in \Ybad} \pi_{\theta_\tau}(z)^2 \Big].
\end{align*}
The first inequality is from $-\advgt(z;\theta_\tau)=\Delta_2-\advgt(\ymed;\theta_\tau)\le \Delta_2-\gamma^{\frac{13}{14}+\frac{h(s)}{2}}$ for all $z\in\Ybad$ (this holds since $\tau\le t_\gamma$ and $C\ge 1$) and the second inequality is from $s \le 0$ and $\pi_{\theta_\tau}(\ymed) \le 1 - \pi_{\theta_\tau}(\ystar)$.
We continue upper bounding $\left.\frac{d}{dt} \pi_{\theta_t}(\ystar)\right|_{t=\tau}$ as follows:
\begin{align*}
        \left.\frac{d}{dt} \pi_{\theta_t}(\ystar)\right|_{t=\tau} &\leq \pi_{\theta_\tau}(\ystar) \Big[ (\|\phi(\ystar)\|^2-s) \pi_{\theta_{t_{\sqrt{\gamma}}}}(\ystar) \advgt(\ystar; \theta_0) \big(1 - \pi_{\theta_{t_{\sqrt{\gamma}}}}(\ystar)\big) \\
        &\hspace{18mm} - \|\phi(\ymed)\|^2\pi_{\theta_\tau}(\ymed)^2 \advgt(\ymed; \theta_\tau) \\
        &\hspace{18mm} + \max\nolimits_{z \in \Ybad} \|\phi(z)\|^2 \big ( \Delta_2 - \gamma^{\frac{13}{14} + \frac{h(s)}{2}} \big)  \pi_{\theta_\tau}(\Ybad)^2 \big]\\
        &\le \pi_{\theta_\tau}(\ystar) \Big[ (\|\phi(\ystar)\|^2-s) \pi_{\theta_{t_{\sqrt{\gamma}}}}(\ystar) \advgt(\ystar; \theta_0) \big(1 - \pi_{\theta_{t_{\sqrt{\gamma}}}}(\ystar)\big) \\
        & \hspace{18mm} - (\|\phi(\ymed)\|^2/8) \pi_{\theta_\tau}(\ymed)^2 \advgt(\ymed; \theta_\tau) \Big]\\
        &\le \pi_{\theta_\tau}(\ystar) \Big[ (\|\phi(\ystar)\|^2-s) \pi_{\theta_{t_{\sqrt{\gamma}}}}(\ystar) \advgt(\ystar; \theta_0) \big(1 - \pi_{\theta_{t_{\sqrt{\gamma}}}}(\ystar)\big) \\
        &\hspace{18mm} - (C\|\phi(\ymed)\|^2/8) \pi_{\theta_\tau}(\ymed)^2 \gamma^{\frac{13}{14} + \frac{h(s)}{2}} \Big].
\end{align*}
The first inequality is from $\advgt(\ystar; \theta_\tau) \le \advgt(\ystar; \theta_0)$, $\pi_{\theta_\tau}(\ystar) \le \pi_{\theta_{t_{\sqrt{\gamma}}}}(\ystar)$, $\pi_{\theta_\tau}(\ymed) \ge \pi_{\theta_{t_{\sqrt{\gamma}}}}(\ymed)$, and $\sum_{z \in \Ybad} \pi_{\theta_\tau}(z)^2 \le \pi_{\theta_\tau} (\Ybad)^2$;
the second inequality is from
\[
\begin{split}
\max\nolimits_{z \in \Ybad}\|\phi(z)\|^2(\Delta_2 - \gamma^{\frac{13}{14}+\frac{h(s)}{2}}) \pi_{\theta_\tau}(\Ybad)^2 & \le \max\nolimits_{z \in \Ybad}\|\phi(z)\|^2\Delta_2 \pi_{\theta_\tau}(\Ybad)^2 \\
& \le \frac{7\|\phi(\ymed)\|^2\pi_{\theta_\tau}(\ymed)^2 \advgt(\ymed;\theta_\tau)}{8} ,
\end{split}
\]
which is implied by \zcref{eq: ybad2_upper3}, and $\pi_{\theta_\tau}(\ymed)\ge\pi_{\theta_0}(\ymed)$;
and the last inequality is from $\advgt(\ymed; \theta_\tau) \ge \advgt(\ymed; \theta_{t_\gamma}) = C \gamma^{\frac{13}{14} + \frac{h(s)}{2}}$.

Therefore, using the upper bound on $\pi_{\theta_{t_{\sqrt{\gamma}}}} (\ystar)$ from \zcref{eq: ystar_upper_sqrtgamma_final_loglin}:
\[
\begin{split}
& \left.\frac{d}{dt} \pi_{\theta_t}(\ystar)\right|_{t=\tau} \\
&\le \pi_{\theta_\tau}(\ystar) \Big[ (\|\phi(\ystar)\|^2-s) \pi_{\theta_{t_{\sqrt{\gamma}}}}(\ystar) \advgt(\ystar; \theta_0) \big(1 - \pi_{\theta_{t_{\sqrt{\gamma}}}}(\ystar)\big) \\
&\hspace{18mm} - (C\|\phi(\ymed)\|^2/8) \pi_{\theta_\tau}(\ymed)^2 \gamma^{\frac{13}{14} + \frac{h(s)}{2}} \Big] \\
&< \pi_{\theta_\tau}(\ystar) \Big[ (C\|\phi(\ymed)\|^2/8) \pi_{\theta_0}(\ymed)^2\gamma^{\frac{13}{14}+\frac{h(s)}{2}} - (C\|\phi(\ymed)\|^2/8) \pi_{\theta_\tau}(\ymed)^2 \gamma^{\frac{13}{14}+\frac{h(s)}{2}} \Big] \\
&= \frac{C\pi_{\theta_\tau}(\ystar)\|\phi(\ymed)\|^2\gamma^{\frac{13}{14}+\frac{h(s)}{2}}}{8}\Big(\pi_{\theta_0}(\ymed)^2-\pi_{\theta_\tau}(\ymed)^2\Big) \\
& \le 0,
\end{split}
\]
where the last inequality is from $\pi_{\theta_\tau}(\ymed) \ge \pi_{\theta_0}(\ymed)$.
Thus, the strict inequality above implies $\left.\frac{d}{dt}\pi_{\theta_t}(\ystar)\right|_{t=\tau}<0$, which contradicts the fact that $\tau$ is the initial time at which $\left.\frac{d}{dt}\pi_{\theta_t}(\ystar)\right|_{t=\tau} \geq 0$.
Hence, it must be that $\frac{d}{dt} \pi_{\theta_t}(\ystar) < 0$ for all $t \in [0, t_\gamma]$.
Additionally, by \zcref{lemma: conditional_monotonicity_loglin_neg}, we also have $\frac{d}{dt} \pi_{\theta_t}(\ymed) > 0$ for all $t \in [0, t_\gamma]$.

To conclude, we showed that $\frac{d}{dt} \pi_{\theta_t} (\ystar) < 0$ and $\frac{d}{dt} \pi_{\theta_t}(\ymed) > 0$ for all $t \in [0, t_\gamma]$.
Thus, $\min\nolimits_{t \in [0, t_\gamma]} \pi_{\theta_t} (\ystar) = \pi_{\theta_{t_\gamma}}(\ystar)$.
Furthermore, since $t_{\sqrt{\gamma}} < t_\gamma$, by \zcref{eq: ystar_upper_sqrtgamma_final_loglin} it follows that
\[
\pi_{\theta_{t_\gamma}} (\ystar) < \pi_{\theta_{t_{\sqrt{\gamma}}}}(\ystar) \le \frac{C\|\phi(\ymed)\|^2\pi_{\theta_0}(\ymed)^2\gamma^{\frac{13}{14}+\frac{h(s)}{2}}}{8(\|\phi(\ystar)\|^2-s) \advgt(\ystar; \theta_0)} ,
\]
completing the proof of \zcref{prop: monotonical_loglin_neg}.
\end{proof}

\medskip

Based on \zcref{prop: monotonical_loglin_neg}, we can now complete the proof of Case I.

\begin{proof}[Proof of \zcref{thm:attraction_to_mediocre_outputs_beneficial_error_neg_inner_product}, Case I]
     By \zcref{prop: monotonical_loglin_neg}, as long as $\Vgt(\theta_t) \le \gtreward(\ymed) - C\gamma^{\frac{13}{14}+\frac{h(s)}{2}}$, we know that $\pi_{\theta_t}(\ystar)$ is monotonically decreasing and $\pi_{\theta_t}(\ymed)$ is monotonically increasing.
     Recall that $t_\gamma$ denotes the initial time at which $\Vgt(\theta_t) \ge \gtreward(\ymed) - C\gamma^{\frac{13}{14}+\frac{h(s)}{2}}$.
     Since $\Vgt (\theta_t)$ is continuous in $t$, at time $t_\gamma$ it must be that $\Vgt(\theta_{t_\gamma}) = \gtreward(\ymed) - C\gamma^{\frac{13}{14}+\frac{h(s)}{2}}$ and $t_\gamma$ is the initial time at which this equality holds.

     We first prove that at $t_\gamma$ the policy is highly concentrated on $\ymed$ by upper bounding $1 - \pi_{\theta_{t_\gamma}}(\ymed)$.
     This follows from
    \[
    \begin{split}
        \gtreward(\ymed) - C\gamma^{\frac{13}{14}+\frac{h(s)}{2}} & = \Vgt(\theta_{t_\gamma}) \\
        & = \gtreward(\ymed) \pi_{\theta_{t_\gamma}}(\ymed) + \pi_{\theta_{t_\gamma}}(\ystar) \gtreward(\ystar) + \sum\nolimits_{z \in \Ybad}\pi_{\theta_{t_\gamma}}(z) \gtreward(z)  ,
    \end{split}
    \]
    which implies:
    \be
    \begin{split}
        1 - \pi_{\theta_{t_\gamma}}(\ymed) &= \frac{1}{\gtreward(\ymed)} \brk*{ C\gamma^{\frac{13}{14}+\frac{h(s)}{2}} + \pi_{\theta_{t_\gamma}}(\ystar)\gtreward(\ystar) + \sum_{z \in \Ybad}\pi_{\theta_{t_\gamma}}(z) \gtreward(z) } \\
        &\le \frac{1}{\gtreward(\ymed)} \brk*{ C\gamma^{\frac{13}{14}+\frac{h(s)}{2}} + \frac{C\|\phi(\ymed)\|^2\pi_{\theta_0}(\ymed)^2\gamma^{\frac{13}{14}+\frac{h(s)}{2}}}{8(\|\phi(\ystar)\|^2-s)\advgt(\ystar;\theta_0)} } \\
        &= \frac{K\gamma^{\frac{13}{14}+\frac{h(s)}{2}}}{\gtreward(\ymed)} .
    \end{split}
    \label{eq: ymed_upper_tgamma_loglin}
    \ee
    The inequality is due to $\pi_{\theta_{t_\gamma}}(\ystar) \le \frac{C\|\phi(\ymed)\|^2\pi_{\theta_0}(\ymed)^2\gamma^{\frac{13}{14}+\frac{h(s)}{2}}}{8(\|\phi(\ystar)\|^2-s)\advgt(\ystar; \theta_0)}$ from \zcref{prop: monotonical_loglin_neg}, together with $\gtreward(\ystar)\le 1$ and $\sum_{z \in \Ybad}\pi_{\theta_{t_\gamma}}(z)\gtreward(z)\le 0$, and the last equality is by the definition of $K$ in \zcref{thm:attraction_to_mediocre_outputs_beneficial_error_neg_inner_product}.

    Now, the fact that the policy assigns high probability to $\ymed$ at $t_\gamma$ allows us to lower bound $\teps$---the initial time at which $\Vgt (\theta_t) \geq \gtreward (\ystar) - \epsilon$.
    For any time $T \ge t_\gamma$, we have:
    \be
    \begin{split}
        \|\theta_T - \theta_{t_\gamma}\| &= \norm*{ \int_{t_\gamma}^T \frac{d}{dt} \theta_t dt } \\
        &\le \int_{t_\gamma}^T \norm*{ \frac{d}{dt} \theta_t } dt\\
        &= \int_{t_\gamma}^T \norm*{ \nabla \Vgt(\theta_t) } dt\\
        &\le \int_{t_\gamma}^T 2B(\Delta_1+\Delta_2)(1 - \pi_{\theta_t}(\ymed)) dt , 
    \end{split}
    \label{eq: theta_upper_neg_loglin}
    \ee
    where the last inequality is by \zcref{lemma:l2_grad_bound}.
    We proceed by upper bounding the rate at which $1 - \pi_{\theta_t}(\ymed)$ increases (\ie, the rate at which $\pi_{\theta_t}(\ymed)$ decreases).
    From the chain rule we obtain:
    \begin{align*}
        \frac{d}{dt}(1 - \pi_{\theta_t}(\ymed)) &= \inprodBig{\nabla (1 - \pi_{\theta_t}(\ymed))}{\frac{d}{dt} \theta_t}\\
        &= \inprod{\nabla (1 - \pi_{\theta_t}(\ymed))}{\nabla \Vgt(\theta_t)}\\
        &= \inprod{\nabla(\pi_{\theta_t}(\ystar) + \sum\nolimits_{z \in \Ybad} \pi_{\theta_t}(z))}{\nabla \Vgt(\theta_t)}.
    \end{align*}
    For all $y \in \Y$, the gradient of $\pi_{\theta_t}(y)$ with respect to $\theta_t$ can be computed as follows:
    \begin{align*}
        \nabla \pi_{\theta_t}(y) = \pi_{\theta_t}(y) \brk*{\phi(y) - \sum\nolimits_{z \in \Y}\pi_{\theta_t}(z)\phi(z)} = \pi_{\theta_t}(y)\brk*{\phi(y) - \bar{\phi}_{\theta_t}} ,
    \end{align*}
    where $\bar{\phi}_{\theta_t} := \sum\nolimits_{z \in \Y}\pi_{\theta_t}(z)\phi(z)$.
    This yields:
    \begin{align*}
        \frac{d}{dt}(1 - \pi_{\theta_t}(\ymed)) &\le \sum\nolimits_{y \in \Y\setminus\{\ymed\}} \pi_{\theta_t}(y) \abs*{ \inprod{\phi(y) - \bar{\phi}_{\theta_t}}{\nabla \Vgt(\theta_t)} }\\
        &\le \sum\nolimits_{y \in \Y\setminus\{\ymed\}} \pi_{\theta_t}(y) \|\phi(y) - \bar{\phi}_{\theta_t}\| \norm*{ \nabla \Vgt(\theta_t) } \\
        &\le \max\nolimits_{y \in \Y\setminus\{\ymed\}} \|\phi(y) - \bar{\phi}_{\theta_t}\| \norm*{ \nabla \Vgt(\theta_t) } \cdot \sum\nolimits_{y \in \Y\setminus\{\ymed\}} \pi_{\theta_t}(y) \\
        &\le 4B^2 (\Delta_1+\Delta_2) (1 - \pi_{\theta_t}(\ymed))^2.
    \end{align*}
    Here, the second inequality is due to the Cauchy-Schwarz inequality; the third inequality is due to $\|\phi(z) - \bar{\phi}_{\theta_t}\| \leq \max\nolimits_{y \in \Y\setminus\{\ymed\}} \|\phi(y) - \bar{\phi}_{\theta_t}\|$ for all $z \in \Y \setminus \{ \ymed \}$; and the last inequality is from $\max\nolimits_{y \in \Y\setminus\{\ymed\}} \|\phi(y) - \bar{\phi}_{\theta_t}\| \leq 2B$ and \zcref{lemma:l2_grad_bound}.
    
    This implies that:
    \[
      \frac{d}{dt} \brk*{ \frac{1}{1 - \pi_{\theta_t} (\ymed)} }  = - \frac{ \frac{d}{dt} (1 - \pi_{\theta_t} (\ymed)) }{ (1 - \pi_{\theta_t} (\ymed)) ^2 } \geq -4B^2 (\Delta_1 + \Delta_2) .
    \]
    Integrating both sides from $t_\gamma$ to $t$, for any $t \geq t_\gamma$, leads to:
    \[
        \frac{1}{1 - \pi_{\theta_t} (\ymed)} - \frac{1}{1 - \pi_{\theta_{t_\gamma}} (\ymed)} \geq -4B^2 (\Delta_1 + \Delta_2) (t - t_\gamma) .
    \]
    Hence, for any $t \geq t_\gamma$, such that
    \[
        4B^2 (\Delta_1 + \Delta_2) (1 - \pi_{\theta_{t_\gamma}} (\ymed)) (t - t_\gamma) < 1 ,
    \]
    it holds that
    \[
    1 - \pi_{\theta_t}(\ymed) \le \frac{1-\pi_{\theta_{t_\gamma}}(\ymed)}{1 - 4B^2(\Delta_1 + \Delta_2)(1 - \pi_{\theta_{t_\gamma}}(\ymed)) (t - t_\gamma)} .
    \]
    Thus, for any $T \geq t_\gamma$ satisfying $4B^2 (\Delta_1 + \Delta_2) (1 - \pi_{\theta_{t_\gamma}} (\ymed)) (T - t_\gamma) < 1$, we can plug this upper bound into \zcref{eq: theta_upper_neg_loglin} to get:
    \be
    \begin{split}
        \|\theta_T - \theta_{t_\gamma}\| &\le 2B(\Delta_1 + \Delta_2) \int_{t_\gamma}^T (1 - \pi_{\theta_t}(\ymed))dt\\
        &= 2B(\Delta_1 + \Delta_2) \int_0^{T-t_\gamma} \frac{1 - \pi_{\theta_{t_\gamma}}(\ymed)}{1 - 4B^2(\Delta_1 + \Delta_2)(1 - \pi_{\theta_{t_\gamma}}(\ymed)) u} du\\
        &= \frac{1}{2B} \ln\bigg(\frac{1}{1-4B^2(\Delta_1 + \Delta_2)(1 - \pi_{\theta_{t_\gamma}}(\ymed)) (T - t_\gamma)}\bigg) .
    \end{split}
    \label{eq:bound_dist_V_T_V_t_gamma_neg}
    \ee
    By \zcref{lemma:vgt_lipschitzness}, $\Vgt$ is $B$-Lipschitz with respect to $\theta$, and so:
    \begin{align*}
        |\Vgt(\theta_T) - \Vgt(\theta_{t_{\gamma}})| \le B \|\theta_T - \theta_{t_{\gamma}}\| \le \frac{1}{2} \ln\bigg(\frac{1}{1-4B^2(\Delta_1 + \Delta_2)(1 - \pi_{\theta_{t_\gamma}}(\ymed)) (T - t_\gamma)}\bigg).
    \end{align*}
    If $\teps = \infty$, then the desired lower bound holds trivially.
    Hence, assume $\teps < \infty$.
    By the definition of $\teps$ and continuity of $\Vgt(\theta_t)$, $\Vgt(\theta_{\teps}) = \gtreward(\ystar) - \epsilon$.
    In particular, $\teps > t_\gamma$ since $\Vgt (\theta_{\teps}) = \gtreward (\ystar) - \epsilon > \gtreward (\ymed) > \Vgt (\theta_{t_\gamma})$.
    Therefore,
    \[
        \Vgt (\theta_{\teps}) - \Vgt(\theta_{t_{\gamma}}) = \gtreward (\ystar) - \epsilon - \gtreward (\ymed) + C\gamma^{\frac{13}{14}+\frac{h(s)}{2}} \ge \Delta_1 - \epsilon .
    \]
    If $4B^2 (\Delta_1 + \Delta_2) (1 - \pi_{\theta_{t_\gamma}} (\ymed)) (\teps - t_\gamma) \geq 1$, then directly
    \[
       \teps - t_\gamma \geq \frac{ 1 }{ 4B^2 (\Delta_1 + \Delta_2) (1 - \pi_{\theta_{t_\gamma}} (\ymed)) }  \geq  \frac{ 1 - e^{-2 (\Delta_1 - \epsilon)} }{ 4B^2 (\Delta_1 + \Delta_2) (1 - \pi_{\theta_{t_\gamma}} (\ymed)) } ,
    \]
    since $1 - e^{-2 (\Delta_1 - \epsilon)} \leq 1$.
    Otherwise, we apply the bound in \zcref{eq:bound_dist_V_T_V_t_gamma_neg} to get:
    \[
        \Delta_1 - \epsilon \le \Vgt (\theta_{\teps}) - \Vgt(\theta_{t_{\gamma}}) \le \frac{1}{2} \ln\bigg(\frac{1}{1-4B^2(\Delta_1 + \Delta_2)(1 - \pi_{\theta_{t_\gamma}}(\ymed)) (\teps - t_\gamma)}\bigg) ,
    \]
    which implies
    \[
        \teps - t_\gamma \ge \frac{1 - e^{-2(\Delta_1-\epsilon)}}{4B^2(\Delta_1 + \Delta_2) (1 - \pi_{\theta_{t_{\gamma}}}(\ymed))} .
    \]
    Therefore, in either case:
    \[
    \begin{split}
        \teps & \geq \teps - t_\gamma \\
        & \geq \frac{1 - e^{-2(\Delta_1-\epsilon)}}{4B^2(\Delta_1 + \Delta_2) (1 - \pi_{\theta_{t_{\gamma}}}(\ymed))} \\
        &\ge \frac{\gtreward(\ymed)(1 - e^{-2(\Delta_1-\epsilon)})}{4B^2(\Delta_1 + \Delta_2) K\gamma^{\frac{13}{14}+\frac{h(s)}{2}}} \\
        &= \frac{\gtreward(\ymed)M'^{14/13 + \alpha}(1 - e^{-2(\Delta_1-\epsilon)})}{4B^2(\Delta_1 + \Delta_2) K} \cdot \pi_{\theta_0} (\ystar)^{- 14 / 13 - \alpha } \\
        & = \Omega \brk*{ \pi_{\theta_0} (\ystar)^{-14 / 13 - \alpha} } .
    \end{split}
    \]
    where the third inequality follows from \zcref{eq: ymed_upper_tgamma_loglin}.
    This completes the proof for Case I of \zcref{thm:attraction_to_mediocre_outputs_beneficial_error_neg_inner_product}.
\end{proof}

\subsubsection{Proof of Case II}

Suppose that gradient flow is used to maximize the expected reward with respect to $\proxyreward$, which assigns $\ymed$ a low reward $\proxyreward (\ymed) = \min\nolimits_{y \in \Ybad} \gtreward (y)$.
Similarly to the proof of Case II in \zcref{thm:attraction_to_mediocre_outputs_beneficial_error_formal}, we show that in this case $\pi_{\theta_t} (\ystar)$ increases already from $t = 0$ and that $\tepsnomed = \OO (\pi_{\theta_0} (\ystar)^{-1})$, where recall that $\tepsnomed$ is the initial time at which $\Vgt (\theta_t) \geq \gtreward (\ystar) - \epsilon$ when maximizing $\proxyreward$.

\begin{proof}[Proof of \zcref{thm:attraction_to_mediocre_outputs_beneficial_error_neg_inner_product}, Case II]
Notice that $\Vgt(\theta) \ge \Vproxy(\theta)$ for all $\theta \in \R^D$ since $\gtreward(y) \ge \proxyreward(y)$ for all $y \in \Y$.
Thus, when $\Vproxy(\theta_t) \ge \gtreward(\ystar) - \epsilon$ it also holds that $\Vgt(\theta_t) \ge \gtreward(\ystar) - \epsilon$. 
We can therefore focus on $\Vproxy$ and consider the time it takes until it reaches a value of $\gtreward(\ystar) - \epsilon$.
In particular, denote by $\tepsnomedP$ the initial time at which $\Vproxy(\theta_t) \ge \gtreward(\ystar) - \epsilon$, \ie:
\[
\tepsnomedP := \min \brk[c]*{ t \geq 0 :\Vproxy (\theta_t) \ge \gtreward (\ystar) - \epsilon }.
\]
The discussion above implies that $\tepsnomed \le \tepsnomedP$, and so it suffices to upper bound $\tepsnomedP$.
For simplicity of notation, denote $\rho := \frac{\gtreward(\ystar) + 1 - \epsilon}{\gtreward(\ystar) + 1}$. 
Observe that when $\pi_{\theta_t}(\ystar) \ge \rho$, it holds that $\Vproxy (\theta_t) \ge \gtreward (\ystar) - \epsilon$. 
This is because:
\begin{align*}
    \Vproxy (\theta_t) &\ge \pi_{\theta_t}(\ystar) \proxyreward(\ystar) - (1 - \pi_{\theta_t}(\ystar))\\
    &= \pi_{\theta_t}(\ystar) (\gtreward(\ystar) + 1) -1\\
    &\ge \gtreward(\ystar) - \epsilon .
\end{align*}
Thus, $\pi_{\theta_t}(\ystar) \leq \rho$ for all $t \in [0, \tepsnomedP]$.

We prove that
\[
T:= \frac{1}{(1-\rho)^2 \advproxy(\ystar; \theta_0)\|\phi(\ystar)\|^2}\bigg(\frac{1}{\pi_{\theta_0}(\ystar)} - \frac{1}{\rho}\bigg) \geq \tepsnomedP ,
\]
from which the desired upper bound on $\tepsnomed$ immediately follows by $\tepsnomed \le \tepsnomedP$ and substituting the value of $\rho$.    

Assume by way of contradiction that \smash{$\tepsnomedP > T$}.
Let us lower bound the rate at which $\pi_{\theta_t}(\ystar)$ increases over \smash{$[0, \tepsnomedP]$}.
Notice first that $\Vproxy (\theta_0) > \proxyreward (y)$ for all $y \in \Y \setminus \{ \ystar\}$ since all such $y$ have the same proxy reward value, which is lower than $\proxyreward (\ystar) = \gtreward (\ystar)$.
Now, consider the time derivative of $\pi_{\theta_t}(\ystar)$, starting from the expression derived in \zcref{lem:probability_derivative_expression}:
\begin{align*}
        & \frac{d}{dt} \pi_{\theta_t}(\ystar) \\
        & = \pi_{\theta_t}(\ystar) \Big [ \pi_{\theta_t}(\ystar) \advproxy(\ystar; \theta_t) \big(\|\phi(\ystar)\|^2 - \|\phi(\ystar)\|^2\pi_{\theta_t}(\ystar) - s \pi_{\theta_t}(\ymed)\big)\\
        & \hspace{16mm} + \pi_{\theta_t}(\ymed) \advproxy(\ymed; \theta_t) \big(s - \|\phi(\ymed)\|^2\pi_{\theta_t}(\ymed) - s \pi_{\theta_t}(\ystar) \big) \\
        & \hspace{16mm} - \sum\nolimits_{z \in \Ybad} \pi_{\theta_t}(z)^2 \advproxy(z; \theta_t) \|\phi(z)\|^2 \Big]\\
        & \ge \pi_{\theta_t}(\ystar) \Big[ \pi_{\theta_t}(\ystar) \advproxy(\ystar; \theta_t) \big(\|\phi(\ystar)\|^2 - \|\phi(\ystar)\|^2\pi_{\theta_t}(\ystar) - s \pi_{\theta_t}(\ymed)\big) \\
        &\hspace{16mm} + \pi_{\theta_t}(\ymed) \advproxy(\ymed; \theta_t) \big(s - \|\phi(\ymed)\|^2\pi_{\theta_t}(\ymed) - s \pi_{\theta_t}(\ystar) \big) \Big]\\
        &\ge \pi_{\theta_t}(\ystar) \Big[ \pi_{\theta_t}(\ystar) \advproxy(\ystar; \theta_t) \big(\|\phi(\ystar)\|^2 - \|\phi(\ystar)\|^2\pi_{\theta_t}(\ystar)\big) \\
        &\hspace{16mm} + \pi_{\theta_t}(\ymed) \advproxy(\ymed; \theta_t) \big(s - \|\phi(\ymed)\|^2\pi_{\theta_t}(\ymed) - s \pi_{\theta_t}(\ystar) \big) \Big]\\
        & \ge \pi_{\theta_t}(\ystar)^2 \|\phi(\ystar)\|^2 \advproxy(\ystar; \theta_t) \big(1 -\pi_{\theta_t}(\ystar)\big) .
    \end{align*}
    Here, the first inequality is from $\advproxy(z; \theta_t) \le 0$ for all $ z \in \Ybad$; the second inequality is from $s \le 0$; and the last inequality is from $\advproxy(\ymed; \theta_t) \le \advproxy(\ymed; \theta_0) \le 0$ together with
    \[
        s - \|\phi(\ymed)\|^2\pi_{\theta_t}(\ymed) - s \pi_{\theta_t}(\ystar)
        = s(1-\pi_{\theta_t}(\ystar)) - \|\phi(\ymed)\|^2\pi_{\theta_t}(\ymed) \le 0 ,
    \]
    so the $\ymed$ term is non-negative.
    Next, we show that $\advproxy(\ystar; \theta_t) \ge (1-\rho)\advproxy(\ystar; \theta_0)$. 
    Observe that since $\proxyreward(\ystar) = \gtreward(\ystar)$ and $\Vproxy(\theta_0)\ge -1$,
    \[
        \frac{\gtreward(\ystar) - \epsilon - \Vproxy (\theta_0)}{\proxyreward(\ystar) - \Vproxy (\theta_0)}
        = 1 - \frac{\epsilon}{\proxyreward(\ystar) - \Vproxy (\theta_0)}
        \le 1 - \frac{\epsilon}{\proxyreward(\ystar) + 1}
        = \frac{\gtreward(\ystar) + 1 - \epsilon}{\gtreward(\ystar) + 1}
        = \rho .
    \]
    Using this fact, we have that:
\begin{align*}
    \frac{\advproxy(\ystar; \theta_t)}{\advproxy(\ystar; \theta_0)} = \frac{\proxyreward(\ystar) - \Vproxy(\theta_t)}{\proxyreward(\ystar) - \Vproxy(\theta_0)}=1 - \frac{\Vproxy(\theta_t) - \Vproxy(\theta_0)}{\proxyreward(\ystar) - \Vproxy(\theta_0)} \ge 1 - \frac{\gtreward(\ystar) - \epsilon - \Vproxy(\theta_0)}{\proxyreward(\ystar) - \Vproxy(\theta_0)} \geq 1-\rho.
\end{align*}
Therefore, along with $1 - \pi_{\theta_t} (\ystar) \geq 1 - \rho$ for $t \in [0, \tepsnomedP]$ (as proven above), this leads to:
\begin{align*}
    \frac{d}{dt}\pi_{\theta_t}(\ystar) \ge (1-\rho)^2 \advproxy(\ystar; \theta_0)\|\phi(\ystar)\|^2 \pi_{\theta_t}(\ystar)^2.
\end{align*}
Dividing both sides of the inequality above by $\pi_{\theta_t}(\ystar)^2$, integrating from $0$ to $t$, and rearranging terms results in the following lower bound:
\[
\pi_{\theta_t}(\ystar) \ge \frac{\pi_{\theta_0}(\ystar)}{1 - (1-\rho)^2\advproxy(\ystar; \theta_0)\|\phi(\ystar)\|^2\pi_{\theta_0}(\ystar) \cdot t} .
\]
At time $t = T$ for
\[
T = \frac{1}{(1-\rho)^2 \advproxy(\ystar; \theta_0)\|\phi(\ystar)\|^2}\bigg(\frac{1}{\pi_{\theta_0}(\ystar)} - \frac{1}{\rho}\bigg) ,
\]
this lower bound on $\pi_{\theta_t} (\ystar)$ implies that $\pi_{\theta_t}(\ystar) \ge \rho$, in contradiction to our assumption that $\tepsnomedP > T$.
Thus,
\[
\tepsnomed \le \tepsnomedP \le T \le \frac{ \brk*{ \gtreward (\ystar) + 1 }^2 }{\epsilon^2  \brk1{ \proxyreward (\ystar) - \Vproxy (\theta_0)  } \norm{ \phi(\ystar) }^2 } \cdot \pi_{\theta_0} (\ystar)^{-1} = \OO \brk*{ \pi_{\theta_0} (\ystar)^{-1} } .
\]
\end{proof}

\subsection{Proof of \zcref{prop:neg_inner_product_fail}}
\label{app:proofs:neg_inner_product_fail}

The core of the proof is to show that for all $t \geq 0$:
\begin{align}
    \frac{\pi_{\theta_t}(\ystar)}{\pi_{\theta_t}(\ybad)} < \zeta \leq \frac{\Deltatwo}{\Deltaone} .
    \label{eq: ratio_small_zeta}
\end{align}
That is, the policy never assigns substantially higher probability to $\ystar$ relative to $\ybad$.
We first establish that this bound implies the desired conclusion.
Towards this, we start by showing that $\Vgt(\theta_0) \le \gtreward(\ymed)$.
Indeed:
\begin{align}
    \gtreward(\ymed) - \Vgt(\theta_0) &= \gtreward(\ymed) - \pi_{\theta_0}(\ystar) \gtreward(\ystar) - \pi_{\theta_0}(\ymed) \gtreward(\ymed) - \sum_{z \in \Ybad} \pi_{\theta_0}(z) \gtreward(z)\nonumber\\
    &= \gtreward(\ymed) - \pi_{\theta_0}(\ystar) \gtreward(\ystar) - \pi_{\theta_0}(\ymed) \gtreward(\ymed) - \pi_{\theta_0}(\Ybad) \gtreward(\ybad)\nonumber\\
    &=(\pi_{\theta_0}(\ystar) + \pi_{\theta_0}(\Ybad))\gtreward(\ymed) - \pi_{\theta_0}(\ystar) \gtreward(\ystar) - \pi_{\theta_0}(\Ybad) \gtreward(\ybad)\nonumber\\
    &= -\Delta_1 \pi_{\theta_0}(\ystar) + \Deltatwo \pi_{\theta_0}(\Ybad)\nonumber\\
    &\ge 0,\label{eq: bar_r_small}
\end{align}
where the inequality is due to $\Deltaone \pi_{\theta_0} (\ystar) \leq \Deltatwo \pi_{\theta_0} (\Ybad)$ (see the second assumption in the proposition statement).
Next, we prove that $\zeta \le \Delta_2/\Delta_1$.
Since $\Vgt (\theta_0) \leq \gtreward (\ymed)$ (\zcref{eq: bar_r_small}), it holds that
\begin{align}
    \frac{\Vgt(\theta_0) - \gtreward(\ybad)}{\gtreward(\ystar) - \Vgt(\theta_0)}
    \le
    \frac{\gtreward(\ymed) - \gtreward(\ybad)}{\gtreward(\ystar) - \gtreward(\ymed)}
    =
    \frac{\Delta_2}{\Delta_1}.
    \label{eq: zeta_small_cond1}
\end{align}
Furthermore, because $\theta_0 = C (\phi(\ybad) - \phi(\ystar))$ for some $C > 0$ and $\inprod{ \phi(\ybad) }{ \phi (\ystar) } = 0$, we have that $\inprod{\phi (\ystar)}{ \theta_0} = - C \norm{\phi (\ystar)}^2  \leq 0 \leq C \norm{\phi(\ybad)}^2 = \inprod{\phi (\ybad)}{ \theta_0 }$.
That is, the logit of $\ystar$ is lower than that of $\ybad$ at initialization, and so $\pi_{\theta_0}(\ystar) \le \pi_{\theta_0}(\ybad)$.
Together with the first assumption in the proposition statement, which gives $\pi_{\theta_0}(\ymed) > \pi_{\theta_0}(\ybad)$, this implies
\begin{align}
\frac{\ln \pi_{\theta_0}(\ymed) - \ln \pi_{\theta_0}(\ybad)}{\ln \pi_{\theta_0}(\ymed) - \ln \pi_{\theta_0}(\ystar)}
\le 1.
\label{eq: zeta_small_cond2}
\end{align}
Combining \zcref{eq: zeta_small_cond1,eq: zeta_small_cond2} with the definition of $\zeta$, we then obtain
\begin{align*}
    \zeta = \frac{\ln \pi_{\theta_0}(\ymed) - \ln \pi_{\theta_0}(\ybad)}{\ln \pi_{\theta_0}(\ymed) - \ln \pi_{\theta_0}(\ystar)} \cdot \frac{\Vgt(\theta_0) - \gtreward(\ybad)}{\gtreward(\ystar)-\Vgt(\theta_0)} \le \frac{\Delta_2}{\Delta_1}.
\end{align*}
Now, assuming \zcref{eq: ratio_small_zeta} holds for all $t \geq 0$, we lower bound $\gtreward(\ystar) - \Vgt(\theta_t)$ as follows:
\begin{align*}
    \gtreward(\ystar) - \Vgt(\theta_t) &= \gtreward(\ystar)(1 - \pi_{\theta_t}(\ystar)) - \pi_{\theta_t}(\ymed) \gtreward(\ymed) - \sum\nolimits_{z \in \Ybad}\pi_{\theta_t}(z) \gtreward(z)\\
    & \ge \gtreward(\ystar)(1 - \pi_{\theta_t}(\ystar)) - \pi_{\theta_t}(\ymed) \gtreward(\ymed) - \pi_{\theta_t}(\ybad)\gtreward(\ybad)\\
    & \ge \gtreward(\ystar)(1 - \pi_{\theta_t}(\ystar)) - (1-\pi_{\theta_t}(\ystar) - \pi_{\theta_t}(\ybad))\gtreward(\ymed) \\
    & \hspace{4mm} - \pi_{\theta_t}(\ybad)\gtreward(\ybad)\\
    & = \Delta_1(1-\pi_{\theta_t}(\ystar)) + \Delta_2 \pi_{\theta_t}(\ybad)\\
    & > \Delta_1(1-\pi_{\theta_t}(\ystar)) + \Delta_1 \pi_{\theta_t}(\ystar) \\
    & = \Delta_1 .
\end{align*}
Here, the first inequality is from $\gtreward(z) \le 0$ for all $z \in \Ybad$; the second inequality is from $\pi_{\theta_t}(\ymed) \le 1 - \pi_{\theta_t}(\ystar) - \pi_{\theta_t}(\ybad)$; and the last inequality is from \zcref{eq: ratio_small_zeta}.
Thus, \zcref{eq: ratio_small_zeta} implies that for all $t \geq 0$,
\[
    \Vgt (\theta_t) < \gtreward(\ystar) - \Delta_1 = \gtreward(\ymed) .
\]

\medskip

It remains to prove \zcref{eq: ratio_small_zeta}.
Let $f_y(\theta) := \inprod{ \phi(y) }{ \theta }$ denote the logit of $y \in \Y$ under $\theta \in \R^D$.
At initialization,
    \begin{align*}
        \frac{\pi_{\theta_0}(\ystar)}{\pi_{\theta_0}(\ybad)} &= \exp\big(f_{\ystar}(\theta_0) - f_{\ybad}(\theta_0)\big)\\
        &= \exp\big(\inprod{\phi(\ystar) - \phi(\ybad)}{\theta_0}\big)\\
        &= \exp\big(-C\|\phi(\ybad) - \phi(\ystar)\|^2\big)\\
        &< \exp(\ln(\zeta)) = \zeta .
    \end{align*}
Both the third equality and the last inequality are due to the last assumption in the proposition statement, \ie, due to $\theta_0 = C \cdot \brk*{ \phi (\ybad) - \phi (\ystar) }$ for some $C > \max \brk[c]1{ 0, - \ln (\zeta) / \norm{ \phi (\ybad) - \phi (\ystar) }^2 }$.
Now, assume by way of contradiction that there exists a time $t'$ at which $\frac{\pi_{\theta_{t'}}(\ystar)}{\pi_{\theta_{t'}}(\ybad)} \geq \zeta$ and let $\tau$ denote the initial such time, \ie:
\[
\tau := \min \brk[c]*{ t \geq 0 : \frac{\pi_{\theta_t}(\ystar)}{\pi_{\theta_t}(\ybad)} \geq \zeta } .
\]
Then, for all $ t \in [0, \tau)$ it holds that $\pi_{\theta_t}(\ystar) / \pi_{\theta_t}(\ybad) < \zeta$.
We now show that $\pi_{\theta_\tau}(\ystar) / \pi_{\theta_\tau}(\ybad) < \zeta$, in contradiction to the definition of $\tau$ as the initial time at which this ratio is at least $\zeta$.
We will do so by proving that $\pi_{\theta_t}(\ystar) / \pi_{\theta_t}(\ybad)$ is decreasing for all $t \in [0, \tau)$, which, along with the fact that $\pi_{\theta_0}(\ystar) / \pi_{\theta_0}(\ybad) < \zeta$, implies that $\pi_{\theta_\tau}(\ystar) / \pi_{\theta_\tau}(\ybad) < \zeta$.

Notice that, since $\pi_{\theta_t}(\ystar) / \pi_{\theta_t}(\ybad) = \exp\big(f_{\ystar}(\theta_t) - f_{\ybad}(\theta_t)\big)$:
\[
\sign \brk*{ \frac{d}{dt} \brk*{ \frac{\pi_{\theta_t}(\ystar)}{\pi_{\theta_t}(\ybad)} } } = \sign \brk*{ \frac{d}{dt} \brk*{f_{\ystar}(\theta_t) - f_{\ybad}(\theta_t) } } .
\]
It therefore suffices to show that $\frac{d}{dt} \brk*{f_{\ystar}(\theta_t) - f_{\ybad}(\theta_t) } < 0$ for all $t \in [0, \tau )$.

The time derivative of $f_y (\theta_t)$ for an output $y \in \Y$ is given by:
\[
\frac{d}{dt} f_y(\theta_t) = \inprod{ \phi (y) }{ \tfrac{d}{dt} \theta_t } = \inprod{ \phi (y) }{ \nabla \Vgt (\theta_t) } = \sum\nolimits_{z \in \Y} \pi_{\theta_t}(z) \advgt(z; \theta_t) \inprod{\phi(y)}{\phi(z)} ,
\]
where the last equality is by \zcref{lem:gradient_expression}.
Thus, for all $ t \in [0, \tau)$ we have that:
\begin{align*}
    &\frac{d}{dt}\big(f_{\ystar}(\theta_t) - f_{\ybad}(\theta_t)\big)\\
    & = \sum\nolimits_{y \in \Y} \brk1{ \inprod{\phi(\ystar)}{\phi(y)} - \inprod{\phi(\ybad)}{\phi(y)} } \pi_{\theta_t}(y) \advgt(y; \theta_t)\\
    & = \sum\nolimits_{y \in \Y} \inprod{\phi(\ystar) - \phi(\ybad)}{\phi(y) - \phi(\ymed) + \phi(\ymed)}\pi_{\theta_t}(y) \advgt(y; \theta_t)\\
    & = \sum\nolimits_{y \in \Y} \inprod{\phi(\ystar) - \phi(\ybad)}{\phi(y) - \phi(\ymed)}\pi_{\theta_t}(y) \advgt(y; \theta_t) \\
    & \hspace{4mm} + \inprod{\phi(\ystar) - \phi(\ybad)}{\phi(\ymed)}\sum\nolimits_{y \in \Y}  \pi_{\theta_t}(y) \advgt(y; \theta_t) .
\end{align*}
Because
\begin{align*}
    \sum\nolimits_{y \in \Y}  \pi_{\theta_t}(y) \advgt(y; \theta_t) = \sum\nolimits_{y \in \Y} \pi_{\theta_t}(y)(\gtreward(y) - \Vgt(\theta_t)) = \Vgt (\theta_t) - \Vgt(\theta_t) = 0 ,
\end{align*}
we may write:
\begin{align}
    &\frac{d}{dt}\big(f_{\ystar}(\theta_t) - f_{\ybad}(\theta_t)\big) \nonumber\\
    &=
    \sum\nolimits_{y \in \Y}
    \inprod{\phi(\ystar) - \phi(\ybad)}{\phi(y) - \phi(\ymed)}
    \pi_{\theta_t}(y) \advgt(y; \theta_t) \nonumber\\
    &=
    \inprod{\phi(\ystar) - \phi(\ybad)}{\phi(\ystar) - \phi(\ymed)}
    \pi_{\theta_t}(\ystar) \advgt(\ystar; \theta_t) \nonumber\\
    &\hspace{4mm}
    + \inprod{\phi(\ystar) - \phi(\ybad)}{\phi(\ybad) - \phi(\ymed)}
    \pi_{\theta_t}(\ybad) \advgt(\ybad; \theta_t) \nonumber\\
    &\hspace{4mm}
    + \sum\nolimits_{z \in \Ybad \setminus \{\ybad\}}
    \inprod{\phi(\ystar) - \phi(\ybad)}{\phi(z) - \phi(\ymed)}
    \pi_{\theta_t}(z) \advgt(z; \theta_t).
    \label{eq:deriv_logit_diff_decomposition}
\end{align}
Notice that the contribution of outputs in $\Ybad \setminus \{\ybad\}$ is non-positive.
Indeed, for every $z \in \Ybad \setminus \{\ybad\}$, \zcref{assum:feature_structure_neg_inn_prod} implies $\inprod{\phi(\ystar) - \phi(\ybad)}{\phi(z) - \phi(\ymed)} = -\inprod{\phi(\ystar)}{\phi(\ymed)} > 0$,
and since all outputs in $\Ybad$ have the same minimal ground truth reward (\zcref{assum:reward_structure_orthonormal}), $\advgt(z;\theta_t) < 0$ for all $z \in \Ybad$.
Using \zcref{eq:deriv_logit_diff_decomposition}, we therefore get
\begin{align*}
    &\frac{d}{dt}\big(f_{\ystar}(\theta_t) - f_{\ybad}(\theta_t)\big) \nonumber\\
    &\le
    \inprod{\phi(\ystar) - \phi(\ybad)}{\phi(\ystar) - \phi(\ymed)}
    \pi_{\theta_t}(\ystar) \advgt(\ystar; \theta_t) \nonumber\\
    &\hspace{4mm}
    + \inprod{\phi(\ystar) - \phi(\ybad)}{\phi(\ybad) - \phi(\ymed)}
    \pi_{\theta_t}(\ybad) \advgt(\ybad; \theta_t).
\end{align*}

Next, using $\advgt(\ystar;\theta_t)=\gtreward(\ystar)-\Vgt(\theta_t)$ and $\advgt(\ybad;\theta_t)=\gtreward(\ybad)-\Vgt(\theta_t)$, the right-hand side of the equation above factors as
\begin{align}
    &\frac{d}{dt}\big(f_{\ystar}(\theta_t) - f_{\ybad}(\theta_t)\big) \nonumber\\
    &\le
    - \inprod{\phi(\ystar) - \phi(\ybad)}{\phi(\ybad) - \phi(\ymed)}
    \pi_{\theta_t}(\ybad) \brk*{\Vgt(\theta_t) - \gtreward(\ybad)} \nonumber\\
    &\hspace{4mm}
    \cdot
    \bigg(
    -\frac{\inprod{\phi(\ystar) - \phi(\ymed)}{\phi(\ystar) - \phi(\ybad)}}{\inprod{\phi(\ybad) - \phi(\ymed)}{\phi(\ystar) - \phi(\ybad)}}
    \cdot
    \frac{\pi_{\theta_t}(\ystar)}{\pi_{\theta_t}(\ybad)}
    \cdot
    \frac{\gtreward(\ystar) - \Vgt(\theta_t)}{\Vgt(\theta_t) - \gtreward(\ybad)}
    + 1
    \bigg).
    \label{eq:deriv_logit_diff_factored}
\end{align}

It remains to show that the right-hand side of \zcref{eq:deriv_logit_diff_factored} is negative.
By the first assumption in the proposition statement, $\pi_{\theta_0}(\ymed) > \pi_{\theta_0}(\ybad)$.
This implies that $f_{\ymed}(\theta_0) > f_{\ybad}(\theta_0)$, or equivalently, $\inprod{\phi(\ymed) - \phi(\ybad)}{ \theta_0 } > 0$.
Substituting $\theta_0 = C (\phi(\ybad) - \phi(\ystar))$ and dividing by $C > 0$ gives
\[
    \inprod{\phi(\ybad) - \phi(\ymed)}{\phi(\ystar) - \phi(\ybad)} = \inprod{\phi(\ybad) - \phi(\ystar)}{\phi(\ymed) - \phi(\ybad)} > 0 .
\]
Furthermore,
\begin{align*}
    \inprod{\phi(\ystar) - \phi(\ymed)}{\phi(\ystar) - \phi(\ybad)}
    &= \inprod{\phi(\ystar) - \phi(\ybad)}{\phi(\ystar) - \phi(\ybad)}\\
    &\hspace{4mm} + \inprod{\phi(\ybad) - \phi(\ymed)}{\phi(\ystar) - \phi(\ybad)}\\
    &\ge \inprod{\phi(\ybad) - \phi(\ymed)}{\phi(\ystar) - \phi(\ybad)}\\
    &> 0 .
\end{align*}
These are exactly the numerator and denominator of the inner product terms appearing in 
\[
  \zeta =
    \frac{\inprod{\phi(\ybad) - \phi(\ymed)}{\phi(\ystar)-\phi(\ybad)}}{\inprod{\phi(\ystar)-\phi(\ymed)}{\phi(\ystar)-\phi(\ybad)}}
    \cdot
    \frac{\Vgt(\theta_0) - \gtreward(\ybad)}{\gtreward(\ystar) - \Vgt(\theta_0)} ,
\]
and both are positive.
Thus, for $t \in [0,\tau)$, the bound $\pi_{\theta_t}(\ystar)/\pi_{\theta_t}(\ybad) < \zeta$ (\zcref{eq: ratio_small_zeta}) implies
\[
    \frac{\inprod{\phi(\ystar)-\phi(\ymed)}{\phi(\ystar)-\phi(\ybad)}}{\inprod{\phi(\ybad)-\phi(\ymed)}{\phi(\ystar)-\phi(\ybad)}}
    \cdot
    \frac{\pi_{\theta_t}(\ystar)}{\pi_{\theta_t}(\ybad)}
    <
    \frac{\Vgt(\theta_0) - \gtreward(\ybad)}{\gtreward(\ystar) - \Vgt(\theta_0)} .
\]
Since under gradient flow $\Vgt$ is monotonically non-decreasing, we have $\Vgt(\theta_0) \le \Vgt(\theta_t)$, and so
\[
    \frac{\gtreward(\ystar) - \Vgt(\theta_t)}{\Vgt(\theta_t) - \gtreward(\ybad)}
    \le
    \frac{\gtreward(\ystar) - \Vgt(\theta_0)}{\Vgt(\theta_0) - \gtreward(\ybad)} .
\]
Combining the last two inequalities shows that the product inside the parentheses in \zcref{eq:deriv_logit_diff_factored} is less than $1$.
Since $\inprod{\phi(\ystar) - \phi(\ybad)}{\phi(\ybad) - \phi(\ymed)} > 0$, $\pi_{\theta_t}(\ybad) > 0$, and $\Vgt(\theta_t) > \gtreward(\ybad)$, we conclude from \zcref{eq:deriv_logit_diff_factored} that
\begin{align}
    \frac{d}{dt}\big(f_{\ystar}(\theta_t) - f_{\ybad}(\theta_t)\big) < 0 .
    \label{eq:deriv_logit_diff_negative}
\end{align}
\zcref{eq:deriv_logit_diff_negative} implies that $f_{\ystar}(\theta_t) - f_{\ybad}(\theta_t)$ is non-increasing, and so $\pi_{\theta_t}(\ystar) / \pi_{\theta_t}(\ybad)$ is non-increasing, for all $ t \in [0, \tau)$. 
Thus,
\[
\frac{\pi_{\theta_\tau}(\ystar)}{\pi_{\theta_\tau}(\ybad)} \le \frac{\pi_{\theta_0}(\ystar)}{\pi_{\theta_0}(\ybad)} < \zeta ,
\]
in contradiction to our assumption that $\pi_{\theta_\tau}(\ystar) / \pi_{\theta_\tau}(\ybad) \geq \zeta$. 
This establishes \zcref{eq: ratio_small_zeta}, concluding the proof.
\qed

\subsection{Proof of \zcref{thm:attraction_to_mediocre_outputs_beneficial_error_pos_inner_product}}
\label{app:proofs:attraction_to_mediocre_outputs_beneficial_error_pos_inner_product}

For convenience of notation, throughout the proof we denote $s := \inprod{\phi(\ystar)}{\phi(\ymed)}$.

\subsubsection{Proof of Case I}

Suppose that gradient flow is used to maximize the expected reward with respect to $\gtreward$.
We begin by proving that $\pi_{\theta_t}(\Ybad)$ is monotonically
non-increasing (\zcref{lemma: ybad_decrease_nonorm}) and $\pi_{\theta_t}(\ystar)$ is monotonically non-decreasing (\zcref{lemma: ystar_increase_nonorm}) throughout optimization.
We then upper bound $\teps$---the initial time at which $\Vgt (\theta_t) \geq \gtreward (\ystar) - \epsilon$---by lower bounding the growth rate of $\pi_{\theta_t}(\ystar)$ over the interval $[0,\teps)$.

\begin{lemma}
\label{lemma: ybad_decrease_nonorm}
At any time $ t \ge 0$, it holds that $\frac{d}{dt}\pi_{\theta_t}(\Ybad) \le 0$.
\end{lemma}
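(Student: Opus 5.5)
The plan is to write $\frac{d}{dt}\pi_{\theta_t}(\Ybad)$ explicitly using \zcref{lem:probability_derivative_expression} and the orthogonality structure of \zcref{assum:feature_structure_pos_inn_prod}. Then reduce the claim to non-negativity of a quadratic form involving only $\ystar$ and $\ymed$. Throughout, write $A(y) := \advgt(y;\theta_t)$, $\pi := \pi_{\theta_t}$, and $\bar\phi := \sum_y \pi(y)\phi(y)$, and recall $\nabla \Vgt(\theta_t) = \sum_y \pi(y)A(y)\phi(y)$ (\zcref{lem:gradient_expression}).

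\textbf{Step 1: decomposition.} For $z\in\Ybad$, $\frac{d}{dt}\pi(z) = \pi(z)\inprod{\phi(z)-\bar\phi}{\nabla\Vgt(\theta_t)}$. By \zcref{assum:feature_structure_pos_inn_prod}, $\phi(z)$ is orthogonal to every other feature, so $\inprod{\phi(z)}{\nabla\Vgt(\theta_t)} = \pi(z)A(z)\norm{\phi(z)}^2$. Expanding the inner product of $\bar\phi$ with the gradient in the same way gives
\[
\inprod{\bar\phi}{\nabla \Vgt(\theta_t)} = \sum\nolimits_{z\in\Ybad}\pi(z)^2A(z)\norm{\phi(z)}^2 + Q ,
\]
where $Q := \pi(\ystar)^2A(\ystar)\norm{\phi(\ystar)}^2 + \pi(\ystar)\pi(\ymed)s\brk{A(\ystar)+A(\ymed)} + \pi(\ymed)^2A(\ymed)\norm{\phi(\ymed)}^2$ and $s := \inprod{\phi(\ystar)}{\phi(\ymed)}$. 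Summing over $z\in\Ybad$ then yields
\[
\frac{d}{dt}\pi(\Ybad) = \brk{1-\pi(\Ybad)}\sum\nolimits_{z\in\Ybad}\pi(z)^2A(z)\norm{\phi(z)}^2 - \pi(\Ybad)\,Q .
\]
Since outputs in $\Ybad$ have the minimal ground truth reward, $A(z)\le 0$ for all $z\in\Ybad$, so the first term is non-positive. It therefore suffices to show $Q\ge 0$.

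\textbf{Step 2: $Q\ge0$.} Set $a := \pi(\ystar)A(\ystar)$ and $b := \pi(\ymed)A(\ymed)$, so that
\[
Q = \pi(\ystar)\,a\,\norm{\phi(\ystar)}^2 + s\brk{\pi(\ymed)\,a + \pi(\ystar)\,b} + \pi(\ymed)\,b\,\norm{\phi(\ymed)}^2 .
\]
Note that $A(\ystar)\ge A(\ymed)$ and $A(\ystar)\geq 0$.
\begin{itemize}
\item If $A(\ymed)\ge 0$, then $a,b\ge 0$, and with $s>0$ every term of $Q$ is non-negative.
\item If $A(\ymed)<0$, we use $\sum_y \pi(y)A(y) = 0$. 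Combined with $A(z)\le 0$ on $\Ybad$, this gives $a+b = -\sum_{z\in\Ybad}\pi(z)A(z)\ge 0$. We then regroup $Q$ into two pieces.
  \begin{itemize}
  \item First piece: $\pi(\ystar)\brk{a\norm{\phi(\ystar)}^2 + s b}$. Since $a\ge0$ and $\norm{\phi(\ystar)}^2 > s$, it is at least $\pi(\ystar)\,s\,(a+b)\ge 0$.
  \item Second piece: $\pi(\ymed)\brk{s a + b\norm{\phi(\ymed)}^2}$. Since $a\ge 0$ and $s>\norm{\phi(\ymed)}^2$, it is at least $\pi(\ymed)\norm{\phi(\ymed)}^2(a+b)\ge 0$.
  \end{itemize}
\end{itemize}
This is where the theorem's feature conditions $\norm{\phi(\ymed)}^2 < s < \norm{\phi(\ystar)}^2$ enter.

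The main obstacle is the case $A(\ymed)<0$, where the $\ymed$ contributions to $Q$ are negative. The key observations are that the zero-mean identity for advantages gives $a+b\ge 0$, and that the two inequalities between $s$ and the squared norms allow pairing each negative term with a dominating positive one. Step 1 is a routine computation given \zcref{lem:probability_derivative_expression}.
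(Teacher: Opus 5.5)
Your proof is correct. Step~1 and your first case are exactly the paper's argument: your $Q$ is the paper's $I_2/\pi_{\theta_t}(\Ybad)$, and the paper also handles $\Vgt(\theta_t)\le\gtreward(\ymed)$ by noting that every term of $Q$ is non-negative.

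The routes differ in the case $\advgt(\ymed;\theta_t)<0$.
\begin{itemize}
\item The paper uses the three-level reward structure to derive explicit advantage bounds, $\abs{\advgt(\ymed;\theta_t)}\le\pi_{\theta_t}(\ystar)\Deltaone$ and $\advgt(\ystar;\theta_t)\ge(1-\pi_{\theta_t}(\ystar))\Deltaone$. It then uses $\pi_{\theta_t}(\ymed)\le 1-\pi_{\theta_t}(\ystar)$ to bound $Q$ below by $\pi_{\theta_t}(\ystar)(1-\pi_{\theta_t}(\ystar))\Deltaone\bigl[(\norm{\phi(\ystar)}^2-s)\pi_{\theta_t}(\ystar)+(s-\norm{\phi(\ymed)}^2)\pi_{\theta_t}(\ymed)\bigr]$.
\item You instead use only the zero-mean identity $\sum_y\pi_{\theta_t}(y)\advgt(y;\theta_t)=0$ and $\advgt(z;\theta_t)\le 0$ for $z\in\Ybad$ to get $a+b\ge 0$, and then pair terms.
\end{itemize}

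Your argument is shorter and more structural. It relies only on $\ystar$ maximizing the reward and $\Ybad$ minimizing it. It actually needs no case split, since the pairing is valid for either sign of $\advgt(\ymed;\theta_t)$.

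What the paper's route buys is the quantitative, $\Deltaone$-dependent advantage bounds themselves. These are reused in \zcref{lemma: ystar_increase_nonorm} and, more importantly, in the Case~I lower bound on $\frac{d}{dt}\pi_{\theta_t}(\ystar)$, which yields the upper bound on $\teps$. Your $a+b\ge 0$ pairing would also give the logit-gap monotonicity in \zcref{lemma: ystar_increase_nonorm}: it shows $\frac{d}{dt}\inprod{\phi(\ystar)-\phi(\ymed)}{\theta_t}\ge\norm{\phi(\ystar)-\phi(\ymed)}^2\,a\ge 0$. The time bound, however, still requires an explicit lower bound on $\advgt(\ystar;\theta_t)$ of the kind the paper derives.
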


\begin{proof}
Suppose first that $\Vgt(\theta_t)\le \gtreward(\ymed)$.
Under \zcref{assum:feature_structure_pos_inn_prod}, for every $z \in \Ybad$, the feature vector $\phi(z)$ is orthogonal to the feature vectors of all other outputs.
Therefore, by \zcref{lem:probability_derivative_expression} we have for all $ z \in \Ybad$:
\begin{align*}
        \frac{d}{dt}\pi_{\theta_t}(z)
        ={}& \pi_{\theta_t}(z)^2 \advgt(z; \theta_t) \|\phi(z)\|^2 \\
        & -  \pi_{\theta_t}(z) \Big(\sum\nolimits_{z' \in \Ybad} \pi_{\theta_t}(z')^2 \advgt(z'; \theta_t) \|\phi(z')\|^2 \\
        & \hspace{16mm} + \pi_{\theta_t}(\ymed) \advgt(\ymed; \theta_t) \big(\|\phi(\ymed)\|^2\pi_{\theta_t}(\ymed) + s\pi_{\theta_t}(\ystar) \big) \\
        & \hspace{16mm} + \pi_{\theta_t}(\ystar) \advgt(\ystar; \theta_t) \big(\|\phi(\ystar)\|^2 \pi_{\theta_t}(\ystar) + s \pi_{\theta_t}(\ymed) \big) \Big).
    \end{align*}
    Summing over all $z \in \Ybad$, we have:
    \begin{align*}
        \frac{d}{dt} \pi_{\theta_t}(\Ybad) & =  \underbrace{\Big(\sum\nolimits_{z \in \Ybad} \pi_{\theta_t}(z)^2 \advgt(z; \theta_t) \|\phi(z)\|^2\Big)\big(1 - \pi_{\theta_t}(\Ybad)\big)}_{I_1} \\
        & \hspace{4mm}- \pi_{\theta_t}(\Ybad) \Big(\pi_{\theta_t}(\ymed)\advgt(\ymed; \theta_t)\big ( \|\phi(\ymed)\|^2\pi_{\theta_t}(\ymed) + s\pi_{\theta_t}(\ystar) \big ) \\
        & \hspace{24mm} + \pi_{\theta_t}(\ystar) \advgt(\ystar; \theta_t)\big (\|\phi(\ystar)\|^2\pi_{\theta_t}(\ystar) + s \pi_{\theta_t}(\ymed) \big ) \Big),
    \end{align*}
    where we denote by $I_2$ the second term on the right-hand side of the equation above (excluding the minus sign).
    Since all outputs in $\Ybad$ have the minimal ground truth reward, we have $\advgt(z; \theta_t) \le 0$ for all $z \in \Ybad$.
    Together with $\advgt(\ymed; \theta_t) \geq 0$ and $\advgt(\ystar; \theta_t) \ge 0$, this gives $I_1 \le 0$ and $I_2 \ge 0$.
    This implies that $\frac{d}{dt}\pi_{\theta_t}(\Ybad) \le 0$ when $\Vgt(\theta_t) \le \gtreward(\ymed)$.

    \medskip

    Now, consider the case where $\Vgt(\theta_t) > \gtreward(\ymed)$. 
    In this case, $I_1 \le 0$ still holds. 
    Furthermore,
    \be
    \begin{split}
        \Vgt(\theta_t) - \gtreward(\ymed) &= (\pi_{\theta_t}(\ymed) - 1)\gtreward(\ymed) + \pi_{\theta_t}(\ystar) \gtreward(\ystar) + \sum\nolimits_{z \in \Ybad} \pi_{\theta_t}(z) \gtreward(z) \\
        & \le (\pi_{\theta_t}(\ymed) - 1)\gtreward(\ymed) + \pi_{\theta_t}(\ystar) \gtreward(\ystar) \\
        & \le -\pi_{\theta_t}(\ystar)\gtreward(\ymed) + \pi_{\theta_t}(\ystar)\gtreward(\ystar) \\
        & = \pi_{\theta_t}(\ystar)\Delta_1, 
    \end{split}
    \label{eq: det_ymed_upper}
    \ee
    where the first inequality is from $\gtreward(z) \le 0$ for all $z \in \Ybad$ and the second inequality is because $1 - \pi_{\theta_t}(\ymed) \ge \pi_{\theta_t}(\ystar)$ and $\gtreward(\ymed) > 0$. 
    Thus, $\abs{ \advgt(\ymed; \theta_t) } = \Vgt (\theta_t) - \gtreward (\ymed) \le \pi_{\theta_t}(\ystar) \Delta_1$.
    Moreover,
    \be
    \begin{split}
        \advgt(\ystar; \theta_t) &= \gtreward(\ystar) - \pi_{\theta_t}(\ystar) \gtreward(\ystar) - \pi_{\theta_t}(\ymed) \gtreward(\ymed) - \sum\nolimits_{z \in \Ybad}\pi_{\theta_t}(z) \gtreward(z) \\
        &\ge \gtreward(\ystar) - \pi_{\theta_t}(\ystar)\gtreward(\ystar) - (1 - \pi_{\theta_t}(\ystar))\gtreward(\ymed) \\
        &= (1 - \pi_{\theta_t}(\ystar))\Delta_1 .
    \end{split}
    \label{eq: det_ystar_lower}
    \ee
    We can therefore lower bound $I_2 / \pi_{\theta_t}(\Ybad)$ as:
    \begin{align*}
        &\pi_{\theta_t}(\ymed) \advgt(\ymed; \theta_t) \big (\|\phi(\ymed)\|^2 \pi_{\theta_t}(\ymed) + s \pi_{\theta_t}(\ystar) \big ) \\
        & \hspace{4mm} + \pi_{\theta_t}(\ystar) \advgt(\ystar; \theta_t) \big ( \|\phi(\ystar)\|^2 \pi_{\theta_t}(\ystar) + s \pi_{\theta_t}(\ymed) \big ) \\
        & \geq \pi_{\theta_t}(\ystar)(1 - \pi_{\theta_t}(\ystar))\Delta_1\big ( \|\phi(\ystar)\|^2\pi_{\theta_t}(\ystar) + s\pi_{\theta_t}(\ymed) \big ) \\
        &\hspace{4mm} - \pi_{\theta_t}(\ymed)\pi_{\theta_t}(\ystar)\Delta_1 \big (\|\phi(\ymed)\|^2\pi_{\theta_t} (\ymed) + s \pi_{\theta_t}(\ystar) \big ) \\
        & \geq \pi_{\theta_t}(\ystar)(1 - \pi_{\theta_t}(\ystar))\Delta_1\big((\|\phi(\ystar)\|^2-s)\pi_{\theta_t}(\ystar) + (s-\|\phi(\ymed)\|^2)\pi_{\theta_t}(\ymed)\big)\\
        &\geq 0 .
    \end{align*}
    Here, the first inequality is from $|\advgt(\ymed; \theta_t)| \le \pi_{\theta_t}(\ystar) \Delta_1$ and $\advgt(\ystar; \theta_t) \ge (1 - \pi_{\theta_t}(\ystar))\Delta_1$, along with the fact that the multipliers of these advantages are non-negative.
    The second inequality is from $1 - \pi_{\theta_t}(\ystar) \ge \pi_{\theta_t}(\ymed)$, combined with $s > 0$ and $\norm{\phi(\ymed)}^2 \ge 0$.
    The last inequality follows from the assumption in \zcref{thm:attraction_to_mediocre_outputs_beneficial_error_pos_inner_product} that $\norm{\phi(\ymed)}^2 < s < \norm{\phi(\ystar)}^2$.
    Thus, $I_2 \ge 0$, which means $\frac{d}{dt}\pi_{\theta_t}(\Ybad) \le 0$.

    Overall, we showed that $\frac{d}{dt}\pi_{\theta_t}(\Ybad) \le 0$ both when $\Vgt(\theta_t) \le \gtreward(\ymed)$ and when $\Vgt(\theta_t) > \gtreward(\ymed)$, and so $\pi_{\theta_t}(\Ybad)$ is non-increasing for all $t \ge 0$.
\end{proof}

\begin{lemma}
\label{lemma: ystar_increase_nonorm}
At any time $ t \ge 0$, it holds that $\frac{d}{dt}\pi_{\theta_t}(\ystar) \ge 0$.
\end{lemma}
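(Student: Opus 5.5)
The plan is to expand $\frac{d}{dt}\pi_{\theta_t}(\ystar)$ via \zcref{lem:probability_derivative_expression}, using \zcref{assum:feature_structure_pos_inn_prod} (each $\phi(z)$, $z \in \Ybad$, is orthogonal to all other features), exactly as in the proof of \zcref{lemma: conditional_monotonicity_loglin_neg}. This gives
\[
\frac{d}{dt}\pi_{\theta_t}(\ystar) = \pi_{\theta_t}(\ystar)\Big[\pi_{\theta_t}(\ystar)\advgt(\ystar;\theta_t)\,c_1 + \pi_{\theta_t}(\ymed)\advgt(\ymed;\theta_t)\,c_2 - \sum\nolimits_{z\in\Ybad}\pi_{\theta_t}(z)^2\advgt(z;\theta_t)\|\phi(z)\|^2\Big],
\]
where
\[
c_1 := \|\phi(\ystar)\|^2(1-\pi_{\theta_t}(\ystar)) - s\,\pi_{\theta_t}(\ymed), \qquad c_2 := s(1-\pi_{\theta_t}(\ystar)) - \|\phi(\ymed)\|^2\pi_{\theta_t}(\ymed).
\]

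The $\Ybad$ sum contributes nonnegatively, since $\advgt(z;\theta_t)\le 0$ for every $z\in\Ybad$ (these outputs have minimal reward). Next I would check the signs of the coefficients, using $\pi_{\theta_t}(\ymed)\le 1-\pi_{\theta_t}(\ystar)$ and $s>0$:
\[
c_1 \ge (\|\phi(\ystar)\|^2-s)(1-\pi_{\theta_t}(\ystar)) \ge 0, \qquad c_2 \ge (s-\|\phi(\ymed)\|^2)\pi_{\theta_t}(\ymed) \ge 0,
\]
both by the standing assumption $\|\phi(\ymed)\|^2<s<\|\phi(\ystar)\|^2$. Also $\advgt(\ystar;\theta_t)\ge 0$ always, since $\ystar$ has maximal reward.

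The argument then splits into two cases, as in \zcref{lemma: ybad_decrease_nonorm}. If $\Vgt(\theta_t)\le\gtreward(\ymed)$, then $\advgt(\ymed;\theta_t)\ge 0$, so every term is nonnegative and we are done. The case $\Vgt(\theta_t)>\gtreward(\ymed)$ is where I expect the only real difficulty: there the $\ymed$ term becomes negative.

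In that case I would reuse the bounds from the proof of \zcref{lemma: ybad_decrease_nonorm}. By \zcref{eq: det_ymed_upper}, $|\advgt(\ymed;\theta_t)|\le\pi_{\theta_t}(\ystar)\Delta_1$, and by \zcref{eq: det_ystar_lower}, $\advgt(\ystar;\theta_t)\ge(1-\pi_{\theta_t}(\ystar))\Delta_1$. Since $c_1,c_2\ge 0$, the bracket (ignoring the nonnegative $\Ybad$ sum) is at least
\[
\pi_{\theta_t}(\ystar)\Delta_1\big[(1-\pi_{\theta_t}(\ystar))c_1 - \pi_{\theta_t}(\ymed)c_2\big].
\]
Writing $a:=1-\pi_{\theta_t}(\ystar)$ and $b:=\pi_{\theta_t}(\ymed)$, the quantity in square brackets expands to
\[
\|\phi(\ystar)\|^2a^2 - 2sab + \|\phi(\ymed)\|^2b^2 = \|a\,\phi(\ystar)-b\,\phi(\ymed)\|^2 \ge 0,
\]
which is nonnegative (equivalently, by Cauchy--Schwarz $s^2\le\|\phi(\ystar)\|^2\|\phi(\ymed)\|^2$). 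Hence $\frac{d}{dt}\pi_{\theta_t}(\ystar)\ge 0$ in both cases, for every $t\ge 0$.
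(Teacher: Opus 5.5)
Your proof is correct, but it takes a different route from the paper's proof of this lemma. The paper never expands $\frac{d}{dt}\pi_{\theta_t}(\ystar)$ here. It first shows that the logit gap $\ell(\theta_t) = \inprod{\phi(\ystar)-\phi(\ymed)}{\theta_t}$ is non-decreasing. Its derivative is $(\norm{\phi(\ystar)}^2-s)\pi_{\theta_t}(\ystar)\advgt(\ystar;\theta_t) + (s-\norm{\phi(\ymed)}^2)\pi_{\theta_t}(\ymed)\advgt(\ymed;\theta_t)$, handled with the same two cases and the same advantage bounds you use. It then writes $\pi_{\theta_t}(\ystar) = (1-\pi_{\theta_t}(\Ybad))\cdot r_t/(1+r_t)$ with $r_t = e^{\ell(\theta_t)}$, so monotonicity follows because both factors are non-negative and non-decreasing. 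The first factor relies on \zcref{lemma: ybad_decrease_nonorm}.

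Your direct expansion does not need that lemma at all. You only borrow \zcref{eq: det_ymed_upper,eq: det_ystar_lower}, whose derivations do not actually use the case assumption. The perfect-square identity $\norm{a\phi(\ystar)-b\phi(\ymed)}^2$ also makes the role of the feature geometry transparent.

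Your route in fact gives more than the lemma. Since $\advgt(\ymed;\theta_t)\ge -\pi_{\theta_t}(\ystar)\Deltaone$ holds trivially in your first case too, you can drop the case split. You then get $\frac{d}{dt}\pi_{\theta_t}(\ystar)\ge \pi_{\theta_t}(\ystar)^2\Deltaone\norm{(1-\pi_{\theta_t}(\ystar))\phi(\ystar)-\pi_{\theta_t}(\ymed)\phi(\ymed)}^2$ for all $t$. This is exactly the bound the paper re-derives afterwards in its proof of Case I (\zcref{eq:loglin_ystar_increase_lower_pos_case_I}).

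The paper's route, in exchange, exposes two separately monotone quantities: the mass on $\Ybad$ and the $\ystar$-versus-$\ymed$ logit gap. The logit-gap computation is then reused, with the opposite sign, in Case II.
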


\begin{proof}
Let $\ell(\theta) := \inprod{\phi (\ystar)}{\theta} - \inprod{\phi(\ymed)}{\theta} = \inprod{\phi (\ystar) - \phi (\ymed)}{\theta}$ be the logit difference between $\ystar$ and $\ymed$ under $\theta \in \R^D$.
We prove that $\ell(\theta_t)$ is non-decreasing for all $t \geq 0$.
We can write the time derivative of $\ell(\theta_t)$ as:
\begin{align*}
    \frac{d}{dt}\ell(\theta_t) ={}& \inprod{\phi (\ystar) - \phi (\ymed)}{\tfrac{d}{dt} \theta_t} \\
    ={}& \|\phi(\ystar)\|^2 \pi_{\theta_t}(\ystar) \advgt(\ystar; \theta_t) + s \pi_{\theta_t}(\ymed) \advgt(\ymed; \theta_t) - s \pi_{\theta_t}(\ystar) \advgt(\ystar; \theta_t)\\
    &- \|\phi(\ymed)\|^2 \pi_{\theta_t}(\ymed) \advgt(\ymed; \theta_t)\\
    ={}& (\|\phi(\ystar)\|^2-s) \pi_{\theta_t}(\ystar) \advgt(\ystar; \theta_t) + (s-\|\phi(\ymed)\|^2) \pi_{\theta_t}(\ymed) \advgt(\ymed; \theta_t).
\end{align*}
If $\Vgt (\theta_t) \le \gtreward (\ymed)$ (\ie, $\advgt (\ymed ; \theta_t) \geq 0$), since $\|\phi(\ystar)\|^2 > s > \|\phi(\ymed)\|^2$ and $\advgt (\ystar ; \theta_t) \geq 0$, it follows directly from the expression above that $\frac{d}{dt}\ell(\theta_t) \ge 0$. 
On the other hand, if $\Vgt(\theta_t) > \gtreward(\ymed)$, \zcref{eq: det_ymed_upper,eq: det_ystar_lower} hold, as established in the proof of \zcref{lemma: ybad_decrease_nonorm}. 
Thus, in this case it also holds that $\frac{d}{dt} \ell (\theta_t) \ge 0$ since:
    \begin{align*}
        \frac{d}{dt}\ell(\theta_t) &\ge (\|\phi(\ystar)\|^2-s) \pi_{\theta_t}(\ystar) (1 - \pi_{\theta_t}(\ystar)) \Delta_1 - (s-\|\phi(\ymed)\|^2) \pi_{\theta_t}(\ymed) \pi_{\theta_t}(\ystar)\Delta_1\\
        &\ge \pi_{\theta_t}(\ystar)(1 - \pi_{\theta_t}(\ystar))\Delta_1 (\|\phi(\ystar)\|^2- 2 s + \|\phi(\ymed)\|^2) \\
        & \ge 0,
    \end{align*}
    where the second inequality is from $\pi_{\theta_t}(\ymed) \le 1 - \pi_{\theta_t}(\ystar)$, and the last inequality is from $s \le \|\phi(\ystar)\|\|\phi(\ymed)\| \le (\|\phi(\ystar)\|^2 + \|\phi(\ymed)\|^2 ) / 2$.
    Thus, $\ell(\theta_t)$ is monotonically non-decreasing for all $t \ge 0$.

    Now, notice that
    \begin{align*}
        \pi_{\theta_t}(\ystar) = (1 - \pi_{\theta_t}(\Ybad)) \cdot \frac{\pi_{\theta_t}(\ystar)}{\pi_{\theta_t}(\ystar) + \pi_{\theta_t}(\ymed)} .
    \end{align*}
    The factor $1 - \pi_{\theta_t}(\Ybad)$ is monotonically non-decreasing by \zcref{lemma: ybad_decrease_nonorm}, since $\pi_{\theta_t}(\Ybad)$ is non-increasing.
    Furthermore, the factor $\pi_{\theta_t}(\ystar)/(\pi_{\theta_t}(\ystar) + \pi_{\theta_t}(\ymed))$ is also non-decreasing.
    To see this, let $r_t := \pi_{\theta_t}(\ystar) / \pi_{\theta_t}(\ymed)$.
    By the softmax parametrization of the policy, $\ell(\theta_t) = \ln r_t$, and so $r_t = \exp \brk{\ell(\theta_t)}$.
    Since both $\ell(\theta_t)$ and the exponential function are non-decreasing, $r_t$ is non-decreasing.
    Furthermore,
    \[
        \frac{\pi_{\theta_t}(\ystar)}{\pi_{\theta_t}(\ystar) + \pi_{\theta_t}(\ymed)}
        =
        \frac{r_t}{1+r_t},
    \]
    and the map $r \mapsto r/(1+r)$ is non-decreasing for $r \ge 0$.
    Thus, $\pi_{\theta_t}(\ystar)/(\pi_{\theta_t}(\ystar) + \pi_{\theta_t}(\ymed))$ is non-decreasing in $t$.
    Overall, the above implies that $\pi_{\theta_t}(\ystar)$ is non-decreasing, and so $\frac{d}{dt} \pi_{\theta_t} (\ystar) \geq 0$, for all $t \ge 0$.
\end{proof}

\medskip

We now complete the proof of Case I.

\begin{proof}[Proof of \zcref{thm:attraction_to_mediocre_outputs_beneficial_error_pos_inner_product}, Case I]
Denote 
\[
\rho := \frac{\gtreward(\ymed)+\Delta_1-\epsilon+1}{\gtreward(\ystar)+1} ,
\]
and notice that when $\pi_{\theta_t}(\ystar) \ge \rho$ it must be that $\Vgt(\theta_t) \ge \gtreward(\ymed) + \Delta_1 - \epsilon = \gtreward (\ystar) - \epsilon$ since:
\begin{align*}
\Vgt(\theta_t) &\ge \pi_{\theta_t}(\ystar) \gtreward(\ystar) - (1 - \pi_{\theta_t}(\ystar))\\
&= \pi_{\theta_t}(\ystar) (\gtreward(\ystar) + 1) -1\\
&\ge \gtreward(\ystar)-\epsilon ,
\end{align*}
where the first inequality is from $\gtreward(y) \ge -1$ for all $ y \in \Y$ and the second inequality follows from the definition of $\rho$. 
This implies that $\pi_{\theta_t}(\ystar) < \rho$ for all $ t \in [0, \teps)$.
In particular, \zcref{lem:mediocre_output_reward_larger_than_initial_expected_reward_neg_inner_product} implies that $\Vgt(\theta_0)<\gtreward(\ymed)<\gtreward(\ystar)-\epsilon$, and so $\pi_{\theta_0}(\ystar)<\rho$ and $\teps > 0$.

Next, we lower bound the time derivative of $\pi_{\theta_t}(\ystar)$ for $t \in [0, \teps)$, based on the expression derived in \zcref{lem:probability_derivative_expression}:
    \be
    \begin{split}
        \frac{d}{dt}\pi_{\theta_t}(\ystar) ={}& \pi_{\theta_t}(\ystar) \Big[ \pi_{\theta_t}(\ystar) \advgt(\ystar; \theta_t) \big ( \|\phi(\ystar)\|^2 - s \pi_{\theta_t}(\ymed) - \|\phi(\ystar)\|^2\pi_{\theta_t}(\ystar) \big )\\
        &\hspace{12mm} + \pi_{\theta_t}(\ymed) \advgt(\ymed; \theta_t) \big (s - \|\phi(\ymed)\|^2 \pi_{\theta_t}(\ymed) - s \pi_{\theta_t}(\ystar) \big )\\
        &\hspace{12mm} - \sum\nolimits_{z \in \Ybad} \pi_{\theta_t}(z)^2\advgt(z; \theta_t)\|\phi(z)\|^2\Big]\\
        \ge{}& \pi_{\theta_t}(\ystar) \Big[ \pi_{\theta_t}(\ystar) \advgt(\ystar; \theta_t)\big ( \|\phi(\ystar)\|^2 - s \pi_{\theta_t}(\ymed) - \|\phi(\ystar)\|^2\pi_{\theta_t}(\ystar) \big )\\
        &\hspace{12mm} + \pi_{\theta_t}(\ymed) \advgt(\ymed; \theta_t) \big (s - \|\phi(\ymed)\|^2 \pi_{\theta_t}(\ymed) - s \pi_{\theta_t}(\ystar) \big )\Big]\\
        ={}& \pi_{\theta_t}(\ystar) \Big[ \pi_{\theta_t}(\ystar) \advgt(\ystar; \theta_t) \big (\|\phi(\ystar)\|^2(1-\pi_{\theta_t}(\ystar))-s\pi_{\theta_t}(\ymed) \big)\\
        &\hspace{12mm} + \pi_{\theta_t}(\ymed) \advgt(\ymed; \theta_t) \big (s(1-\pi_{\theta_t}(\ystar))-\|\phi(\ymed)\|^2 \pi_{\theta_t}(\ymed) \big )\Big]\\
        \ge{}& \pi_{\theta_t}(\ystar)^2\Delta_1\Big[\|\phi(\ystar)\|^2(1-\pi_{\theta_t}(\ystar))^2 - 2s\pi_{\theta_t}(\ymed)(1-\pi_{\theta_t}(\ystar))\\
        & \hspace{18mm} + \|\phi(\ymed)\|^2\pi_{\theta_t}(\ymed)^2\Big] .
    \end{split}
    \label{eq:loglin_ystar_increase_lower_pos_case_I}
    \ee
    Here, the first inequality is from $\advgt(z; \theta_t) \le 0$ for all $z \in \Ybad$.
    The second inequality is from $\advgt(\ystar;\theta_t) \ge (1-\pi_{\theta_t}(\ystar))\Delta_1$ (as in \zcref{eq: det_ystar_lower}), $\advgt(\ymed;\theta_t) \ge -\pi_{\theta_t}(\ystar)\Delta_1$ (which follows from \zcref{eq: det_ymed_upper} when $\Vgt(\theta_t)>\gtreward(\ymed)$ and is trivial otherwise), and the fact that $\|\phi(\ystar)\|^2(1-\pi_{\theta_t}(\ystar))-s\pi_{\theta_t}(\ymed)$ and $s(1-\pi_{\theta_t}(\ystar))-\|\phi(\ymed)\|^2 \pi_{\theta_t}(\ymed)$ are non-negative, since $\pi_{\theta_t}(\ymed)\le 1-\pi_{\theta_t}(\ystar)$ and $\|\phi(\ymed)\|^2<s<\|\phi(\ystar)\|^2$.
    To simplify the bracketed term on the right-hand side of \zcref{eq:loglin_ystar_increase_lower_pos_case_I}, we can view the expression
    \[
        \|\phi(\ystar)\|^2(1-\pi_{\theta_t}(\ystar))^2 - 2s\pi_{\theta_t}(\ymed)(1-\pi_{\theta_t}(\ystar)) + \|\phi(\ymed)\|^2\pi_{\theta_t}(\ymed)^2
    \]
    as a function of $\pi_{\theta_t}(\ymed)$ over the interval $ [0, 1-\pi_{\theta_t}(\ystar)]$.
    Its derivative with respect to $\pi_{\theta_t}(\ymed)$ is $-2s(1-\pi_{\theta_t}(\ystar)) + 2\|\phi(\ymed)\|^2\pi_{\theta_t}(\ymed)$, which is at most $-2(s-\|\phi(\ymed)\|^2)(1-\pi_{\theta_t}(\ystar)) \le 0$, since $\pi_{\theta_t}(\ymed) \le 1-\pi_{\theta_t}(\ystar)$ and $s>\|\phi(\ymed)\|^2$.
    Thus, the expression is minimized at $\pi_{\theta_t}(\ymed)=1-\pi_{\theta_t}(\ystar)$.
    This implies that:
    \be
    \begin{split}
        \frac{d}{dt}\pi_{\theta_t}(\ystar)
        &\ge \pi_{\theta_t}(\ystar)^2(1-\pi_{\theta_t}(\ystar))^2\Delta_1(\|\phi(\ystar)\|^2+\|\phi(\ymed)\|^2-2s) \\
        &\ge (1-\rho)^2 \Delta_1(\|\phi(\ystar)\|^2+\|\phi(\ymed)\|^2-2s)\pi_{\theta_t}(\ystar)^2 \\
        &= (1-\rho)^2 \Delta_1 \norm{\phi (\ystar) - \phi (\ymed) }^2 \pi_{\theta_t}(\ystar)^2 ,
    \end{split}
    \label{eq: loglin_ystar_increase_lower}
    \ee
    where the second inequality is from $\pi_{\theta_t}(\ystar) < \rho$ for all $ t \in [0, \teps)$.

Dividing both sides of \zcref{eq: loglin_ystar_increase_lower} by $\pi_{\theta_t}(\ystar)^2$, integrating from $0$ to $t$, and rearranging terms~yields:
\[
\pi_{\theta_t}(\ystar) \ge \frac{\pi_{\theta_0}(\ystar)}{1 - (1-\rho)^2\Delta_1 \norm{\phi (\ystar) - \phi (\ymed) }^2 \pi_{\theta_0}(\ystar) \cdot t} .
\]
Since $\pi_{\theta_t}(\ystar)< \rho$ for all $t\in[0, \teps)$, we get that for all such $t$:
\[
t \le \frac{1}{(1-\rho)^2 \norm{\phi (\ystar) - \phi (\ymed) }^2 \Delta_1}\bigg(\frac{1}{\pi_{\theta_0}(\ystar)} - \frac{1}{\rho}\bigg) .
\]
Since this holds for all $t \in [0, \teps)$, we get
\[
    \teps \le \frac{1}{(1-\rho)^2 \norm{\phi (\ystar) - \phi (\ymed) }^2\Delta_1} \cdot \frac{1}{\pi_{\theta_0}(\ystar)} .
\]
Plugging in the definition of $\rho$ gives
\[
    \teps \le \frac{\brk*{\gtreward(\ystar)+1}^2}{\epsilon^2 \norm{\phi (\ystar) - \phi (\ymed) }^2\Delta_1} \cdot \frac{1}{\pi_{\theta_0}(\ystar)} ,
\]
which completes the proof of Case I.
\end{proof}

\subsubsection{Proof of Case II}

Suppose that gradient flow is used to maximize the expected reward with respect to $\proxyreward$, and that $\inprod{ \phi (\ystar)}{\phi(\ymed)}$ is sufficiently high, in the sense that
\be
    \inprod{ \phi (\ystar) }{ \phi (\ymed) } > \frac{ \pi_{\theta_0} (\ystar) \advproxy (\ystar ; \theta_0) \norm{ \phi (\ystar) }^2 - \pi_{\theta_0} (\ymed) \advproxy (\ymed ; \theta_0) \norm{ \phi (\ymed )}^2 }{ \pi_{\theta_0} (\ystar) \advproxy (\ystar ; \theta_0) - \pi_{\theta_0} (\ymed) \advproxy (\ymed ; \theta_0) } .
\label{eq:pos_inner_product_condition_case_II}
\ee

\begin{proof}[Proof of \zcref{thm:attraction_to_mediocre_outputs_beneficial_error_pos_inner_product}, Case II]
We prove that, due to the high inner product between $\phi (\ystar)$ and $\phi (\ymed)$, the logit difference $\inprod{ \phi (\ystar) }{ \theta_t} - \inprod{ \phi (\ymed) }{ \theta_t }$ is monotonically non-increasing with respect to $t$.
Consequently, the ratio $\pi_{\theta_t}(\ystar)/\pi_{\theta_t}(\ymed)$ is bounded by its initial value for all $t \geq 0$, and the policy never assigns $\ystar$ sufficiently high probability to reach an expected ground truth reward above the constant $\gtreward (\ystar) - \brk{ \gtreward (\ystar) - \gtreward (\ymed) } \pi_{\theta_0} (\ymed)$.

We first note that $\advproxy(\ymed;\theta_0)<0$ because $\ymed$ has a minimal proxy reward of
$\proxyreward(\ymed) \le \min\nolimits_{y \in \Ybad}\gtreward(y)$, while $\ystar$ has strictly larger proxy reward and $\pi_{\theta_0}$ has full support as a softmax policy.
Moreover, $\advproxy(\ystar;\theta_0) > 0$ since $\ystar$ has the unique maximal proxy reward.

Now, let $\ell(\theta) := \inprod{\phi (\ystar)}{\theta} - \inprod{\phi(\ymed)}{\theta} = \inprod{\phi (\ystar) - \phi (\ymed)}{\theta}$ be the logit difference between $\ystar$ and $\ymed$ under $\theta \in \R^D$.
We prove that $\ell(\theta_t)$ is non-increasing for all $t \geq 0$.
We can write the time derivative of $\ell(\theta_t)$ as:
\be
\begin{split}
    \frac{d}{dt}\ell(\theta_t) ={}& \inprod{\phi (\ystar) - \phi (\ymed)}{\tfrac{d}{dt} \theta_t} \\
    ={}& \|\phi(\ystar)\|^2 \pi_{\theta_t}(\ystar) \advproxy(\ystar; \theta_t) + s \pi_{\theta_t}(\ymed) \advproxy(\ymed; \theta_t) - s \pi_{\theta_t}(\ystar) \advproxy(\ystar; \theta_t) \\
    &- \|\phi(\ymed)\|^2 \pi_{\theta_t}(\ymed) \advproxy(\ymed; \theta_t)\\
    ={}& (\|\phi(\ystar)\|^2-s) \pi_{\theta_t}(\ystar) \advproxy(\ystar; \theta_t) + (s-\|\phi(\ymed)\|^2) \pi_{\theta_t}(\ymed) \advproxy(\ymed; \theta_t)\\
    \le{}& (\|\phi(\ystar)\|^2-s) \pi_{\theta_t}(\ystar) \advproxy(\ystar; \theta_0) + (s-\|\phi(\ymed)\|^2) \pi_{\theta_t}(\ymed) \advproxy(\ymed; \theta_0)\\
    ={}& \pi_{\theta_t}(\ymed)\big((\|\phi(\ystar)\|^2-s) \advproxy(\ystar; \theta_0) \exp \brk{\ell(\theta_t)} + (s-\|\phi(\ymed)\|^2)\advproxy(\ymed; \theta_0)\big) .
\end{split}
\label{eq:logit_derivative_upper_bound_pos_case_II}
\ee
Here, the first inequality is from $\Vproxy(\theta_t)$ being monotonically non-decreasing when maximizing it via gradient flow, together with the positivity of $\|\phi(\ystar)\|^2-s$ and $s-\|\phi(\ymed)\|^2$, and the last equality is by the definition of $\ell(\theta_t)$ since $\pi_{\theta_t}(\ystar) = \pi_{\theta_t}(\ymed) \cdot \exp \brk{\ell(\theta_t)}$.
Since $\advproxy(\ystar;\theta_0)>0>\advproxy(\ymed;\theta_0)$ and $\|\phi(\ymed)\|^2<s<\|\phi(\ystar)\|^2$, the denominator and numerator of the right-hand side below are both positive.
Therefore, when 
\be
\exp \brk{\ell(\theta_t)} < \frac{-(s-\|\phi(\ymed)\|^2)\advproxy(\ymed; \theta_0)}{(\|\phi(\ystar)\|^2-s) \advproxy(\ystar; \theta_0)} ,
\label{eq:condition_for_logit_decrease}
\ee
we get that $\frac{d}{dt} \ell(\theta_t) < 0$.

\zcref{eq:pos_inner_product_condition_case_II} implies \zcref{eq:condition_for_logit_decrease} at $t=0$.
Indeed, the denominator on the right-hand side of \zcref{eq:pos_inner_product_condition_case_II} is positive because $\advproxy(\ystar;\theta_0)>0>\advproxy(\ymed;\theta_0)$.
Therefore, multiplying both sides of \zcref{eq:pos_inner_product_condition_case_II} by this denominator and rearranging gives
\[
    \pi_{\theta_0}(\ystar)\advproxy(\ystar;\theta_0)
    \brk*{\|\phi(\ystar)\|^2-s}
    <
    -\pi_{\theta_0}(\ymed)\advproxy(\ymed;\theta_0)
    \brk*{s-\|\phi(\ymed)\|^2}.
\]
Since $\pi_{\theta_0}(\ymed)>0$, $\advproxy(\ystar;\theta_0)>0$, and $\|\phi(\ystar)\|^2-s>0$, we may divide both sides by
\[
    \pi_{\theta_0}(\ymed)\advproxy(\ystar;\theta_0)
    \brk*{\|\phi(\ystar)\|^2-s}
\]
to obtain
\[
    \frac{\pi_{\theta_0}(\ystar)}{\pi_{\theta_0}(\ymed)}
    <
    \frac{-(s-\|\phi(\ymed)\|^2)\advproxy(\ymed;\theta_0)}
    {(\|\phi(\ystar)\|^2-s)\advproxy(\ystar;\theta_0)} .
\]
Finally, $\pi_{\theta_0}(\ystar)/\pi_{\theta_0}(\ymed)=\exp\brk{\ell(\theta_0)}$, so \zcref{eq:condition_for_logit_decrease} holds at $t=0$.
We prove that \zcref{eq:condition_for_logit_decrease} holds for all $t \geq 0$ by contradiction.
Assume that there exists a time $t' > 0$ at which 
\[
\exp \brk{\ell(\theta_{t'})} \ge \frac{-(s-\|\phi(\ymed)\|^2)\advproxy(\ymed; \theta_0)}{(\|\phi(\ystar)\|^2-s) \advproxy(\ystar; \theta_0)}  ,
\]
and denote by $\tau$ the initial such time, \ie:
\[
    \tau := \min \brk[c]*{ t \geq 0 : \exp \brk{\ell(\theta_{t})} \ge \frac{-(s-\|\phi(\ymed)\|^2)\advproxy(\ymed; \theta_0)}{(\|\phi(\ystar)\|^2-s) \advproxy(\ystar; \theta_0)}   } .
\]
By the definition of $\tau$, for all $t \in [0, \tau)$ \zcref{eq:condition_for_logit_decrease} holds.
Thus, for all $t \in [0, \tau)$, by \zcref{eq:logit_derivative_upper_bound_pos_case_II} $\frac{d}{dt} \ell (\theta_t) < 0$, and so $\ell (\theta_\tau) < \ell (\theta_0)$.
This implies that:
\[
\exp \brk{\ell (\theta_\tau)} < \exp \brk{\ell(\theta_0)} < \frac{-(s-\|\phi(\ymed)\|^2)\advproxy(\ymed; \theta_0)}{(\|\phi(\ystar)\|^2-s) \advproxy(\ystar; \theta_0)} ,
\]
in contradiction to the definition of $\tau$.
Thus, for all $t \geq 0$, \zcref{eq:condition_for_logit_decrease} holds and $\frac{d}{dt} \ell(\theta_t) < 0$.

Now, examining $\pi_{\theta_t} (\ystar)$, we can see that:
\begin{align*}
    \pi_{\theta_t}(\ystar) &\le \frac{\pi_{\theta_t}(\ystar)}{\pi_{\theta_t}(\ymed) + \pi_{\theta_t}(\ystar)}\nonumber\\
    &= \frac{\pi_{\theta_t}(\ystar)/\pi_{\theta_t}(\ymed)}{1 + \pi_{\theta_t}(\ystar)/\pi_{\theta_t}(\ymed)}\nonumber\\
    &\le \frac{\pi_{\theta_0}(\ystar)/\pi_{\theta_0}(\ymed)}{1 + \pi_{\theta_0}(\ystar)/\pi_{\theta_0}(\ymed)} \\
    &= \frac{\pi_{\theta_0}(\ystar)}{\pi_{\theta_0}(\ystar)+\pi_{\theta_0}(\ymed)} ,
\end{align*}
where the second inequality is due to $\pi_{\theta_t}(\ystar)/\pi_{\theta_t}(\ymed) = \exp \brk{\ell(\theta_t)}$ and the fact that $\ell(\theta_t) \leq \ell(\theta_0)$ for all $t \geq 0$.
Thus,
\[
1 - \pi_{\theta_t}(\ystar) \geq \frac{\pi_{\theta_0}(\ymed)}{\pi_{\theta_0}(\ystar)+\pi_{\theta_0}(\ymed)} .
\]
Furthermore, since $\gtreward(z) < \gtreward(\ymed)$, for all $z \in \Ybad$, we know that
\begin{align*}
    \Vgt(\theta_t) &< \pi_{\theta_t}(\ystar)\gtreward(\ystar) + (1-\pi_{\theta_t}(\ystar))\gtreward(\ymed) .
\end{align*}
We may therefore conclude that for all $t \geq 0$:
\begin{align*}
    \gtreward(\ystar) - \Vgt(\theta_t) &> (1 - \pi_{\theta_t}(\ystar))\gtreward(\ystar) - (1 - \pi_{\theta_t}(\ystar))\gtreward(\ymed)\\
    &= (\gtreward(\ystar) - \gtreward(\ymed))(1 - \pi_{\theta_t}(\ystar))\\
    &\ge \frac{(\gtreward(\ystar) - \gtreward(\ymed))\pi_{\theta_0}(\ymed)}{\pi_{\theta_0}(\ystar) + \pi_{\theta_0}(\ymed)}\\
    &\ge (\gtreward(\ystar) - \gtreward(\ymed))\pi_{\theta_0}(\ymed),
\end{align*}
where the second inequality is from the lower bound on $1-\pi_{\theta_t}(\ystar)$ above, and the last inequality is from $\pi_{\theta_0}(\ystar) + \pi_{\theta_0}(\ymed) \le 1$.
\end{proof}

\subsection{Auxiliary Lemmas}
\label{app:proofs:auxiliary_lemmas}

In this appendix, we include auxiliary lemmas that are used throughout the proofs in \zcref{app:proofs}.

\begin{lemma}
\label{lem:gradient_expression}
Let $r : \Y \to [-1, 1]$.
For a linear softmax policy parameterized by $\theta \in \R^D$, denote the expected reward with respect to $r$ by $V (\theta) := \EE\nolimits_{y \sim \pi_\theta} \brk[s]{ r(y) }$.
Then:
\[
\nabla V (\theta) = \sum\nolimits_{y \in \Y} \pi_\theta (y) \adv (y ; \theta) \cdot \phi (y)
\text{\,,}
\]
where $\adv (y ; \theta) := r(y) - V (\theta)$ is the advantage of $y \in \Y$ under $r$ and $\pi_{\theta}$.
\end{lemma}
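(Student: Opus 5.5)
This is a direct computation with the chain rule applied to the log-softmax. We write $V(\theta) = \sum_{y \in \Y} \pi_\theta(y) r(y)$, so that $\nabla V(\theta) = \sum_{y} r(y) \nabla \pi_\theta(y)$. It therefore suffices to compute $\nabla \pi_\theta(y)$ for a linear softmax policy.

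First, we write $\ln \pi_\theta(y) = \inprod{\phi(y)}{\theta} - \ln \sum_{z \in \Y} \exp(\inprod{\phi(z)}{\theta})$. Differentiating gives
\[
\nabla \ln \pi_\theta(y) = \phi(y) - \sum\nolimits_{z \in \Y} \pi_\theta(z)\phi(z) = \phi(y) - \bar{\phi}_\theta,
\]
where $\bar{\phi}_\theta := \EE_{z\sim\pi_\theta}[\phi(z)]$. Hence $\nabla \pi_\theta(y) = \pi_\theta(y)(\phi(y) - \bar{\phi}_\theta)$. This is the same identity already used inside the proofs of \zcref{lem:tv_lipschitz_loglin,lem:tabular_grad_loglin}; here we derive it from scratch.

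Next, we substitute into the gradient:
\[
\nabla V(\theta) = \sum\nolimits_{y} \pi_\theta(y) r(y) \phi(y) - \Big(\sum\nolimits_y \pi_\theta(y) r(y)\Big)\bar{\phi}_\theta = \sum\nolimits_y \pi_\theta(y) r(y)\phi(y) - V(\theta)\bar{\phi}_\theta.
\]
We then rewrite $V(\theta)\bar{\phi}_\theta = \sum_y \pi_\theta(y) V(\theta)\phi(y)$ by expanding the definition of $\bar{\phi}_\theta$. Combining the two sums yields $\sum_y \pi_\theta(y)(r(y) - V(\theta))\phi(y) = \sum_y \pi_\theta(y)\adv(y;\theta)\phi(y)$, which is the claimed identity.

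There is no real obstacle. The only point requiring care is the bookkeeping step that replaces the baseline term $V(\theta)\bar{\phi}_\theta$ by a $\pi_\theta$-weighted sum over $\phi(y)$, which turns the reward into the advantage. Differentiability is immediate because the softmax is smooth and $\Y$ is finite.
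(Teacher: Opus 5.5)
Your proposal is correct and follows essentially the same route as the paper. Both compute $\nabla \pi_\theta(y) = \pi_\theta(y)(\phi(y) - \bar{\phi}_\theta)$, substitute it into $\nabla V(\theta) = \sum_{y} r(y)\nabla\pi_\theta(y)$, and absorb the baseline term $V(\theta)\bar{\phi}_\theta$ into the sum to turn the reward into the advantage; the only cosmetic difference is that you derive the softmax gradient through the log-derivative, whereas the paper states it directly.
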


\begin{proof}
Since the probability of an output $y \in \Y$ under $\pi_{\theta}$ is given by
\[
\pi_\theta(y)=\frac{\exp(\langle \phi(y),\theta\rangle)}
{\sum\nolimits_{z \in \Y}\exp(\langle \phi (z),\theta\rangle)}
\text{\,,}
\]
the gradient of $\pi_\theta(y)$ with respect to $\theta$ can be written as:
\[
\nabla \pi_\theta(y)
=
\pi_\theta(y)\left(\phi(y)-\sum\nolimits_{z \in \Y}\pi_\theta(z) \cdot \phi(z)\right).
\]
Hence:
\begin{align*}
\nabla V (\theta) &=\sum\nolimits_{y \in \Y} r (y) \nabla \pi_\theta(y)\\
& =
\sum\nolimits_{y \in \Y}\pi_\theta(y)r (y) \cdot \phi(y)
-
\brk*{ \sum\nolimits_{y\in \Y}\pi_\theta(y) r (y) } \brk*{ \sum\nolimits_{z \in \Y}\pi_\theta(z) \cdot \phi(z) }\\
& =
\sum\nolimits_{y\in \Y}\pi_\theta(y)r (y) \cdot \phi(y)
-
V (\theta) \brk*{ \sum\nolimits_{z \in \Y}\pi_\theta(z) \cdot \phi(z) } \\
& = \sum\nolimits_{y \in \Y}\pi_\theta(y)\adv(y;\theta) \cdot \phi(y) .
\end{align*}
\end{proof}

\begin{lemma}
\label{lemma:l2_grad_bound}
For a linear softmax policy parameterized by $\theta \in \R^D$ and ground truth reward function $\gtreward$ upholding \zcref{assum:reward_structure_orthonormal}, it holds that:
\[
\norm*{ \nabla \Vgt(\theta) }
\leq
2B \brk*{ \Delta_1+\Delta_2 } \brk*{ 1-\pi_\theta(\ymed) }
\text{\,,}
\]
where $\Deltaone := \gtreward (\ystar) - \gtreward (\ymed)$, $\Deltatwo := \gtreward (\ymed) - \gtreward (\ybad)$ for some $\ybad \in \Ybad$, and $B := \max_{y \in \Y} \norm{ \phi (y) }$.
\end{lemma}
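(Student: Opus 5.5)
The plan is to rewrite the gradient from \zcref{lem:gradient_expression} so that the contribution of $\ymed$ vanishes identically. Every remaining term will then carry a factor $\pi_\theta(y)$ with $y \neq \ymed$, which produces the factor $1 - \pi_\theta(\ymed)$.

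\textbf{Step 1: shift the reward and center the features.} Start from $\nabla \Vgt(\theta) = \sum\nolimits_{y \in \Y} \pi_\theta(y) \advgt(y;\theta) \cdot \phi(y)$ and let $\bar{\phi}_\theta := \sum\nolimits_{z \in \Y} \pi_\theta(z) \phi(z)$. Two identities hold:
\begin{itemize}[leftmargin=6mm]
\item $\sum\nolimits_{y} \pi_\theta(y) \advgt(y;\theta) = 0$;
\item $\sum\nolimits_{y} \pi_\theta(y) \brk{\phi(y) - \bar{\phi}_\theta} = 0$.
\end{itemize}
Using both, for any constant $c \in \R$,
\[
\nabla \Vgt(\theta) = \sum\nolimits_{y \in \Y} \pi_\theta(y) \brk*{ \gtreward(y) - c } \brk*{ \phi(y) - \bar{\phi}_\theta }
\text{\,.}
\]
To verify this, expand the right-hand side. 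The $-\Vgt(\theta)$ part of the advantage is absorbed by the second identity, and the $c$ term is absorbed the same way. Choose $c = \gtreward(\ymed)$, so the summand for $y = \ymed$ is zero.

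\textbf{Step 2: bound the remaining terms.} Apply the triangle inequality to the sum over $y \in \Y \setminus \{\ymed\}$. Two facts bound each summand:
\begin{itemize}[leftmargin=6mm]
\item $\norm{\phi(y) - \bar{\phi}_\theta} \leq \norm{\phi(y)} + \norm{\bar{\phi}_\theta} \leq 2B$.
\item $\abs{\gtreward(y) - \gtreward(\ymed)} \leq \max\{\Deltaone, \Deltatwo\} \leq \Deltaone + \Deltatwo$. This equals $\Deltaone$ for $y = \ystar$ and $\Deltatwo$ for $y \in \Ybad$, since by \zcref{assum:reward_structure_orthonormal} all outputs in $\Ybad$ share the reward $\gtreward(\ybad)$.
\end{itemize}
Summing over $y \neq \ymed$ gives
\[
\norm{\nabla \Vgt(\theta)} \leq 2B(\Deltaone+\Deltatwo) \sum\nolimits_{y \neq \ymed} \pi_\theta(y) = 2B(\Deltaone+\Deltatwo)\brk*{1 - \pi_\theta(\ymed)}
\text{\,.}
\]

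The only step needing care is the re-centering identity in Step 1, which is what removes the $\ymed$ term. Once it is checked, the bound is immediate.
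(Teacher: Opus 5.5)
Your proof is correct, and it takes a cleaner route than the paper's.

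The paper writes out the three advantage values explicitly:
\[
\advgt(\ystar;\theta) = (1-\pi_\theta(\ystar))\Deltaone + \pi_\theta(\Ybad)\Deltatwo ,
\qquad
\advgt(\ymed;\theta) = -\pi_\theta(\ystar)\Deltaone + \pi_\theta(\Ybad)\Deltatwo ,
\]
and $\advgt(z;\theta) = -\pi_\theta(\ystar)\Deltaone - (1-\pi_\theta(\Ybad))\Deltatwo$ for $z \in \Ybad$. It then applies the triangle inequality with the uncentered bound $\|\phi(y)\| \le B$, so the $\ymed$ term is still present at that point. That term is bounded and absorbed only when collecting the coefficients of $\Deltaone$ and $\Deltatwo$, using $\pi_\theta(\ystar)+\pi_\theta(\ymed)+\pi_\theta(\Ybad)=1$. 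This gives $2B\bigl(\pi_\theta(\ystar)(1-\pi_\theta(\ystar))\Deltaone + \pi_\theta(\Ybad)(1-\pi_\theta(\Ybad))\Deltatwo\bigr)$.

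Your baseline-shift identity removes the $\ymed$ summand exactly, before any inequality is applied, so a single triangle inequality suffices. The factor $2$ then comes from $\|\phi(y)-\bar{\phi}_\theta\| \le 2B$ rather than from advantage bookkeeping. You land on $2B\bigl(\pi_\theta(\ystar)\Deltaone + \pi_\theta(\Ybad)\Deltatwo\bigr)$, which is essentially the paper's intermediate quantity.

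Your argument is also more general. It shows $\|\nabla \Vgt(\theta)\| \le 2B\sum_{y}\pi_\theta(y)\,|\gtreward(y)-c|$ for any reward function and any constant $c$. The equal-reward assumption on $\Ybad$ is then used only to name the gaps as $\Deltaone,\Deltatwo$, so the bound extends immediately to $\Ybad$ with unequal rewards, which the paper says its analysis should accommodate.

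If you want the paper's sharper $(1-\pi_\theta(y))$ factors, use
\[
\|\phi(y)-\bar{\phi}_\theta\| = \Bigl\|\sum\nolimits_{z\neq y}\pi_\theta(z)\bigl(\phi(y)-\phi(z)\bigr)\Bigr\| \le 2B\,(1-\pi_\theta(y)) .
\]
Those factors are discarded in the final bound anyway.
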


\begin{proof}
For a linear softmax policy $\pi_{\theta}$, by \zcref{lem:gradient_expression} we have that:
\[
\nabla \Vgt (\theta) = \sum\nolimits_{y \in \Y} \pi_\theta (y) \advgt (y ; \theta) \cdot \phi (y) .
\]
Since all outputs in $\Ybad$ have the same reward of $\gtreward(\ybad)$, we can write
\[
\begin{split}
\Vgt(\theta)
& = \pi_\theta(\ystar)\gtreward(\ystar) + \pi_\theta(\ymed)\gtreward(\ymed) + \pi_\theta(\Ybad)\gtreward(\ybad) \\
& = \gtreward(\ymed) + \pi_\theta(\ystar)\Delta_1 - \pi_\theta(\Ybad)\Delta_2 .
\end{split}
\]
Hence the three possible advantage values are:
\begin{align*}
\advgt(\ystar;\theta)
&= \gtreward(\ystar)-\Vgt(\theta)
= \brk*{ 1-\pi_\theta(\ystar) }\Delta_1 + \pi_\theta(\Ybad)\Delta_2,\\
\advgt(\ymed;\theta)
&= \gtreward(\ymed)-\Vgt(\theta)
= -\pi_\theta(\ystar)\Delta_1 + \pi_\theta(\Ybad)\Delta_2,\\
\advgt(z;\theta)
&= \gtreward(\ybad)-\Vgt(\theta)
= -\pi_\theta(\ystar)\Delta_1 - \brk*{ 1-\pi_\theta(\Ybad) }\Delta_2
, \qquad \text{for all } z \in \Ybad.
\end{align*}
Substituting these expressions into the gradient formula gives:
\[
\begin{split}
\nabla \Vgt(\theta)
& = \pi_\theta(\ystar)\advgt(\ystar;\theta)\phi(\ystar)
+ \pi_\theta(\ymed)\advgt(\ymed;\theta)\phi(\ymed) \\
& \hspace{4mm} + \sum\nolimits_{z \in \Ybad}\pi_\theta(z)\advgt(z;\theta)\phi(z) .
\end{split}
\]
Therefore, by the triangle inequality and the definition of $B$,
\begin{align*}
\norm{\nabla \Vgt(\theta)}
&\le B \brk*{
    \pi_\theta(\ystar)\advgt(\ystar;\theta)
    + \pi_\theta(\ymed)\abs{\advgt(\ymed;\theta)}
    + \sum\nolimits_{z \in \Ybad}\pi_\theta(z)\abs{\advgt(z;\theta)}
} \\
&= B \brk1{
    \pi_\theta(\ystar)\advgt(\ystar;\theta)
    + \pi_\theta(\ymed)\abs*{-\pi_\theta(\ystar)\Delta_1 + \pi_\theta(\Ybad)\Delta_2} } \\
    & \hspace{4mm} + B \brk1{ \pi_\theta(\Ybad)\brk*{ \pi_\theta(\ystar)\Delta_1 + \brk*{ 1-\pi_\theta(\Ybad) }\Delta_2 }  } . 
\end{align*}
Applying the triangle inequality, collecting the coefficients of $\Delta_1$ and $\Delta_2$, and using $\pi_\theta(\ystar)+\pi_\theta(\ymed)+\pi_\theta(\Ybad)=1$, we can further upper bound the above expression as:
\begin{align*}
\norm{\nabla \Vgt(\theta)}
&\le B \brk1{
    \pi_\theta(\ystar)\brk*{ \brk*{ 1-\pi_\theta(\ystar) }\Delta_1 + \pi_\theta(\Ybad)\Delta_2 }
    + \pi_\theta(\ymed)\brk*{ \pi_\theta(\ystar)\Delta_1 + \pi_\theta(\Ybad)\Delta_2 }
} \\
&\hspace{4mm} + B \brk1{\pi_\theta(\Ybad)\brk*{ \pi_\theta(\ystar)\Delta_1 + \brk*{ 1-\pi_\theta(\Ybad) }\Delta_2 }
} \\
&= B \brk1{
    2\pi_\theta(\ystar)\brk*{ 1-\pi_\theta(\ystar) }\Delta_1 + 2\pi_\theta(\Ybad)\brk*{ 1-\pi_\theta(\Ybad) }\Delta_2
} .
\end{align*}
Since $1-\pi_\theta(\ystar)\le 1$ and $1-\pi_\theta(\Ybad)\le 1$, we may conclude:
\begin{align*}
\norm{\nabla \Vgt(\theta)}
&\le 2B \brk*{ \pi_\theta(\ystar)\Delta_1 + \pi_\theta(\Ybad)\Delta_2 } \\
&\le 2B(\Delta_1+\Delta_2)\brk*{ \pi_\theta(\ystar)+\pi_\theta(\Ybad) } \\
&= 2B(\Delta_1+\Delta_2)\brk*{ 1-\pi_\theta(\ymed) } .
\end{align*}
\end{proof}

\begin{lemma}
\label{lemma:vgt_lipschitzness}
For a linear softmax policy parameterized by $\theta \in \R^D$ and a ground truth reward function $\gtreward : \Y \to [-1, 1]$, let $B := \max\nolimits_{y \in \Y} \norm{\phi (y)}$.
Then, the expected ground truth reward $\Vgt(\theta) = \EE\nolimits_{y \sim \pi_{\theta}} [\gtreward (y)]$ is $B$-Lipschitz with respect to the Euclidean norm, \ie, for any $\theta, \theta' \in \R^D$:
\[
\abs*{ \Vgt(\theta)-\Vgt(\theta') } \le B \norm*{ \theta-\theta' } .
\]
\end{lemma}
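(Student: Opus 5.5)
The plan is to bound the gradient norm uniformly by $B$ and then integrate along a segment, mirroring the argument of \zcref{lem:tv_lipschitz_loglin}. Concretely, I would write $\Vgt(\theta) - \Vgt(\theta') = \int_0^1 \inprod{\nabla \Vgt(\theta(s))}{\theta - \theta'} \, ds$, where $\theta(s) := \theta' + s(\theta - \theta')$. By Cauchy--Schwarz this gives $\abs{\Vgt(\theta) - \Vgt(\theta')} \le \sup_{\xi} \norm{\nabla \Vgt(\xi)} \cdot \norm{\theta - \theta'}$. It therefore suffices to show $\norm{\nabla \Vgt(\xi)} \le B$ for every $\xi \in \R^D$.

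For the gradient bound I would use \zcref{lem:gradient_expression}. Since $\sum_y \pi_\xi(y)\advgt(y;\xi) = 0$, the gradient can be centered:
\[
\nabla \Vgt(\xi) = \EE\nolimits_{y\sim\pi_\xi}\brk[s]*{ \brk*{\gtreward(y) - c}\brk*{\phi(y) - \bar\phi_\xi} }
\]
for any constant $c$, with $\bar\phi_\xi := \EE_{y \sim \pi_\xi}[\phi(y)]$. Taking $c = V_G(\xi)$, this is a covariance between the scalar $\gtreward(y)$ and the vector $\phi(y)$. Fix a unit vector $v$. Then $\inprod{\nabla \Vgt(\xi)}{v} = \mathrm{Cov}_{\pi_\xi}\brk*{\gtreward(y), \inprod{\phi(y)}{v}}$, which is at most $\sqrt{\var[\gtreward(y)]}\cdot\sqrt{\var[\inprod{\phi(y)}{v}]}$.

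Bounding the variances is the main technical point, and the constants matter. Since $\gtreward \in [-1,1]$, we get $\var[\gtreward] \le 1$. Since $\inprod{\phi(y)}{v} \in [-B, B]$, we get $\var[\inprod{\phi(y)}{v}] \le B^2$ (Popoviciu gives $(2B)^2/4 = B^2$). Hence $\inprod{\nabla \Vgt(\xi)}{v} \le B$. Taking the supremum over unit $v$ yields $\norm{\nabla \Vgt(\xi)} \le B$.

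The anticipated obstacle is avoiding a factor-$2$ loss. A naive triangle inequality using $\abs{\advgt} \le 2$ and $\norm{\phi(y)} \le B$ would only give $2B$. The variance/Popoviciu step is what tightens the bound to exactly $B$, so the covariance formulation should be used rather than the crude estimate.
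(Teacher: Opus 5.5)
Your proof is correct and takes essentially the same route as the paper: center the gradient using $\sum_y \pi_\xi(y)\advgt(y;\xi)=0$, apply Cauchy--Schwarz in $L^2(\pi_\xi)$ to get $\sqrt{\var[\gtreward]}$ times a feature-spread term, bound the two factors by $1$ and $B$, and integrate along the segment (the paper phrases this last step as the mean value theorem). The only difference is that you project onto a unit vector $v$ first, so your feature-spread term is the largest directional variance, whereas the paper uses the full $\EE\|\phi(y)-\bar\phi_\xi\|^2$; both are at most $B^2$ already by $\var[X]\le\EE[X^2]$, so the appeal to Popoviciu is valid but not needed.
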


\begin{proof}
By \zcref{lem:gradient_expression}, for $r = \gtreward$ we have:
\[
\nabla \Vgt (\theta) = \sum\nolimits_{y \in \Y} \pi_\theta (y) \advgt (y ; \theta) \cdot \phi (y) .
\]
Since $\sum\nolimits_{y \in \Y}\pi_\theta(y)\advgt(y;\theta)=0$, we may rewrite this as:
\[
\nabla \Vgt (\theta) = \sum\nolimits_{y \in \Y} \pi_\theta (y) \advgt (y ; \theta) \cdot \brk*{ \phi(y) - \bar{\phi}_{\theta} } ,
\]
where $\bar{\phi}_{\theta} := \sum\nolimits_{z \in \Y}\pi_\theta(z)\phi(z)$.
Applying the triangle inequality followed by the Cauchy-Schwarz inequality gives:
\begin{align*}
    \norm*{ \nabla \Vgt (\theta) }
    &\le \sum\nolimits_{y \in \Y} \pi_\theta(y) \abs*{ \advgt (y ; \theta) } \norm*{ \phi(y) - \bar{\phi}_{\theta} } \\
    &\le \brk*{ \sum\nolimits_{y \in \Y} \pi_\theta(y) \advgt (y ; \theta)^2 }^{1/2}
    \brk*{ \sum\nolimits_{y \in \Y} \pi_\theta(y) \norm*{ \phi(y) - \bar{\phi}_{\theta} }^2 }^{1/2} \\
    &= \var\nolimits_{y \sim \pi_\theta}\brk[s]*{ \gtreward(y) }^{1/2}
    \brk*{ \EE\nolimits_{y \sim \pi_\theta}\brk[s]1{ \norm*{ \phi(y) }^2 } - \norm*{ \bar{\phi}_{\theta} }^2 }^{1/2} \\
    &\le B ,
\end{align*}
where the last inequality follows from the fact that $\gtreward(y) \in [-1,1]$ implies $\var_{y \sim \pi_\theta}\brk[s]*{ \gtreward(y) } \le 1$, and from $\EE_{y \sim \pi_\theta}\brk[s]1{ \norm{ \phi(y) }^2 } - \norm{ \bar{\phi}_{\theta} }^2 \le \EE_{y \sim \pi_\theta}\brk[s]1{ \norm{ \phi(y) }^2 } \le B^2$.
Therefore, by the mean value theorem,
\[
|\Vgt(\theta)-\Vgt(\theta')|
\le \sup\nolimits_{\xi \in [\theta,\theta']}\|\nabla \Vgt(\xi)\|\,\|\theta-\theta'\|
\le B\|\theta-\theta'\| .
\]
\end{proof}

\begin{lemma}
\label{lem:probability_derivative_expression}
Let $r : \Y \to [-1, 1]$.
For a linear softmax policy parameterized by $\theta \in \R^D$, denote the expected reward with respect to $r$ by $V(\theta) := \EE\nolimits_{y \sim \pi_\theta}\brk[s]{r(y)}$.
Suppose that gradient flow is used to maximize the expected reward with respect to $r$ (\zcref{eq:gf} with $V$ in place of $\Vproxy$).
Then, for any output $y \in \Y$,
\[
\frac{d}{dt}\pi_{\theta_t}(y)
=
\pi_{\theta_t}(y)
\inprod{
\phi(y) - \sum\nolimits_{z \in \Y}\pi_{\theta_t}(z)\phi(z)
}{
\sum\nolimits_{z \in \Y}\pi_{\theta_t}(z)\adv(z; \theta_t)\phi(z)
}
\text{\,,}
\]
where $\adv(z;\theta_t) := r(z) - V(\theta_t)$ is the advantage of $z \in \Y$ under $r$ and $\pi_{\theta_t}$.
Moreover, if the feature vectors $\{ \phi(z) \}_{z \in \Y}$ are orthonormal, then
\[
\frac{d}{dt}\pi_{\theta_t}(y)
=
\pi_{\theta_t}(y)\brk*{
\pi_{\theta_t}(y)\adv(y;\theta_t)
-
\sum\nolimits_{z \in \Y}\pi_{\theta_t}(z)^2\adv(z;\theta_t)
}
\text{\,.}
\]
\end{lemma}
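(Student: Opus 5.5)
The plan is to combine the chain rule with the gradient expression of \zcref{lem:gradient_expression} and the softmax Jacobian. Along gradient flow, $\frac{d}{dt}\pi_{\theta_t}(y) = \inprod{\nabla \pi_{\theta_t}(y)}{\frac{d}{dt}\theta_t} = \inprod{\nabla \pi_{\theta_t}(y)}{\nabla V(\theta_t)}$. For a linear softmax policy we already computed, in the proof of \zcref{lem:gradient_expression}, that $\nabla \pi_\theta(y) = \pi_\theta(y)\brk{\phi(y) - \bar{\phi}_\theta}$ with $\bar{\phi}_\theta := \sum\nolimits_{z \in \Y}\pi_\theta(z)\phi(z)$. \zcref{lem:gradient_expression} also gives $\nabla V(\theta) = \sum\nolimits_{z\in\Y}\pi_\theta(z)\adv(z;\theta)\phi(z)$. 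Substituting both expressions yields the first identity directly. Nothing beyond differentiability of $\theta \mapsto \pi_\theta(y)$, which is smooth, is needed.

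For the orthonormal case, I would expand the inner product term by term. Orthonormality gives $\inprod{\phi(y)}{\phi(z)} = \indc{z = y}$, so
\[
\inprod{\phi(y)}{\textstyle\sum_z \pi(z)\adv(z)\phi(z)} = \pi(y)\adv(y).
\]
Similarly, expanding $\bar{\phi}$ and using orthonormality gives
\[
\inprod{\bar{\phi}}{\textstyle\sum_z \pi(z)\adv(z)\phi(z)} = \textstyle\sum_z \pi(z)^2\adv(z).
\]
Subtracting the second from the first and multiplying by $\pi_{\theta_t}(y)$ gives the stated formula.

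No step is a real obstacle. The only point that needs care is remembering that $\bar{\phi}$ must be expanded as a sum over $\Y$, so that the cross term becomes the sum of squared probabilities rather than something that vanishes.
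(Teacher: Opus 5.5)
Your proposal is correct and matches the paper's proof: the chain rule along gradient flow, the softmax Jacobian $\nabla \pi_\theta(y) = \pi_\theta(y)\brk{\phi(y) - \bar{\phi}_\theta}$ and \zcref{lem:gradient_expression} give the general identity. Orthonormality then reduces the two inner products to $\pi_{\theta_t}(y)\adv(y;\theta_t)$ and $\sum_{z}\pi_{\theta_t}(z)^2\adv(z;\theta_t)$, exactly as the paper does.
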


\begin{proof}
By the chain rule and the gradient flow dynamics,
\[
\frac{d}{dt}\pi_{\theta_t}(y)
=
\inprod{\nabla \pi_{\theta_t}(y)}{\tfrac{d}{dt}\theta_t}
=
\inprod{\nabla \pi_{\theta_t}(y)}{\nabla V(\theta_t)} .
\]
Since $\pi_\theta$ is a linear softmax policy, we have
\[
\nabla \pi_{\theta_t}(y)
=
\pi_{\theta_t}(y)\brk*{\phi(y) - \sum\nolimits_{z \in \Y}\pi_{\theta_t}(z)\phi(z)} .
\]
Moreover, by \zcref{lem:gradient_expression},
\[
\nabla V(\theta_t)
=
\sum\nolimits_{z \in \Y}\pi_{\theta_t}(z)\adv(z ; \theta_t)\phi(z) .
\]
Substituting these two expressions into the derivative identity above gives the first claim.
If, in addition, the feature vectors are orthonormal, then $\inprod{\phi(y)}{\phi(z)} = \indc{y = z}$ for all $y, z \in \Y$, and therefore
\[
\inprod{
\phi(y) - \sum\nolimits_{z \in \Y}\pi_{\theta_t}(z)\phi(z)
}{
\sum\nolimits_{u \in \Y}\pi_{\theta_t}(u)\adv(u; \theta_t)\phi(u)
}
=
\pi_{\theta_t}(y)\adv(y;\theta_t)
-
\sum_{z \in \Y}\pi_{\theta_t}(z)^2\adv(z;\theta_t) .
\]
Plugging this back into the general expression yields the orthonormal-features formula.
\end{proof}

\begin{lemma}
\label{lem:mediocre_output_reward_larger_than_initial_expected_reward}
Under \zcref{assum:reward_structure_orthonormal,assum:initial_probs_orthonormal}, it holds that $\Vgt (\theta_0) = \EE\nolimits_{y \sim \pi_{\theta_0}} \brk[s]{ \gtreward (y) } < \gtreward (\ymed) - \sqrt{\gamma}$, for $\gamma:= \pi_{\theta_0}(\ystar)^{14 / 13} M^{-14/13}$ with $M$ defined as in \zcref{assum:initial_probs_orthonormal}.
\end{lemma}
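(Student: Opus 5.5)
The plan is to compute the gap $\gtreward(\ymed) - \Vgt(\theta_0)$ exactly using the reward structure, and then show it exceeds $\sqrt{\gamma}$ using the two bullets of \zcref{assum:initial_probs_orthonormal}, rewritten in terms of $\gamma$.

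\textbf{Step 1: exact expression for the gap.} By \zcref{assum:reward_structure_orthonormal}, every output in $\Ybad$ has the same reward $\gtreward(\ybad)$. Hence
\[
\Vgt(\theta_0) = \pi_{\theta_0}(\ystar)\gtreward(\ystar) + \pi_{\theta_0}(\ymed)\gtreward(\ymed) + \pi_{\theta_0}(\Ybad)\gtreward(\ybad).
\]
Substituting $\pi_{\theta_0}(\ymed) = 1 - \pi_{\theta_0}(\ystar) - \pi_{\theta_0}(\Ybad)$ gives the identity
\[
\gtreward(\ymed) - \Vgt(\theta_0) = \Deltatwo \, \pi_{\theta_0}(\Ybad) - \Deltaone \, \pi_{\theta_0}(\ystar).
\]
This is the same identity used for $\advgt(\ymed;\cdot)$ in the proof of \zcref{lemma:l2_grad_bound}.

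\textbf{Step 2: translate the assumptions into statements about $\gamma$.} By definition $\gamma = (\pi_{\theta_0}(\ystar)/M)^{14/13}$, so $\gamma > 0$ because a softmax policy has full support. This gives two useful forms:
\[
\pi_{\theta_0}(\ystar) = M\gamma^{13/14}, \qquad \frac{\pi_{\theta_0}(\ystar)^{7/13}}{M^{7/13}} = \sqrt{\gamma}.
\]
\begin{itemize}
\item The lower bound in the second bullet of \zcref{assum:initial_probs_orthonormal} then reads $\pi_{\theta_0}(\Ybad) \geq 2\sqrt{\gamma}/\Deltatwo$. Therefore $\Deltatwo\,\pi_{\theta_0}(\Ybad) \geq 2\sqrt{\gamma}$.
\item The first bullet, raised to the power $14/13$, gives $\gamma < \min\{0.01,\dots\}$, and in particular $\gamma < 0.01$.
\item Since $M \leq 1$, we get $\pi_{\theta_0}(\ystar) \leq \gamma^{13/14}$.
\end{itemize}

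\textbf{Step 3: bound the negative term.} All rewards lie in $[-1,1]$, so $\Deltaone \leq 2$. Combining with Step 2,
\[
\Deltaone\,\pi_{\theta_0}(\ystar) \leq 2\gamma^{13/14} = 2\gamma^{3/7}\sqrt{\gamma} \leq 2\cdot(0.01)^{3/7}\sqrt{\gamma} < 0.3\sqrt{\gamma}.
\]

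\textbf{Step 4: conclude.} Plugging Steps 2 and 3 into the identity of Step 1 yields
\[
\gtreward(\ymed) - \Vgt(\theta_0) \geq 2\sqrt{\gamma} - 0.3\sqrt{\gamma} > \sqrt{\gamma}.
\]
This is the claimed strict inequality.

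There is no real obstacle here. The only point needing care is noticing that the exponent $7/13$ in the lower bound on $\pi_{\theta_0}(\Ybad)$ in \zcref{assum:initial_probs_orthonormal} is exactly what turns that bound into $2\sqrt{\gamma}/\Deltatwo$. The rest is the smallness of $\gamma$ absorbing the $\Deltaone\,\pi_{\theta_0}(\ystar)$ term.
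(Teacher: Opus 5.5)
Your proof is correct and follows essentially the same route as the paper: both write $\gtreward(\ymed)-\Vgt(\theta_0)$ in terms of $\Deltatwo\,\pi_{\theta_0}(\Ybad)$ minus a $\ystar$ contribution, use the second bullet of \zcref{assum:initial_probs_orthonormal} to get $\Deltatwo\,\pi_{\theta_0}(\Ybad)\geq 2\sqrt{\gamma}$, and absorb the $\ystar$ term via $\pi_{\theta_0}(\ystar)\leq\gamma^{13/14}$ and $\gamma<0.01$. The only cosmetic difference is that you keep the exact term $\Deltaone\,\pi_{\theta_0}(\ystar)$ and bound it by $0.3\sqrt{\gamma}$. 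The paper instead bounds the $\ystar$ contribution by $\gamma^{13/14}$, using $\gtreward(\ystar)\leq 1$ and $\gtreward(\ymed)>0$, and then concludes with $\gamma^{13/14}<\sqrt{\gamma}$.
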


\begin{proof}
Fix some $\ybad \in \Ybad$.
Under \zcref{assum:reward_structure_orthonormal,assum:initial_probs_orthonormal}, \zcref{assump: multi_s} establishes bounds on $\gamma$, $\pi_{\theta_0} (\ystar)$, and $\pi_{\theta_0} (\Ybad)$.
Based on these bounds, we can show that $\Vgt (\theta_0) < \gtreward (\ymed) - \sqrt{\gamma}$ as follows:
\begin{align*}
        \Vgt (\theta_0) &= \gtreward(\ymed)\pi_{\theta_0}(\ymed) + \gtreward(\ystar)\pi_{\theta_0}(\ystar) + \sum\nolimits_{z \in \Ybad}\gtreward(z)\pi_{\theta_0}(z)\nonumber\\
        &\le (1 - \pi_{\theta_0}(\Ybad))\gtreward(\ymed) + \gamma^{13/14} + \pi_{\theta_0}(\Ybad)\gtreward(\ybad)\nonumber\\
        &= \gtreward(\ymed) - \pi_{\theta_0}(\Ybad)\Delta_2  + \gamma^{13/14}\nonumber\\
        &\le \gtreward(\ymed) - 2\sqrt{\gamma} + \gamma^{13/14}\nonumber\\
        &< \gtreward(\ymed) - \sqrt{\gamma} ,
\end{align*}
The first inequality is from $\gtreward(\ystar) \le 1$ and $\pi_{\theta_0}(\ystar) \le \gamma^{13/14}$ (\zcref{assumpt: multi_ystar} in \zcref{assump: multi_s}); the second inequality is from $\pi_{\theta_0}(\Ybad) \ge 2\sqrt{\gamma}/\Delta_2$ (\zcref{assumpt: multi_ybad} in  \zcref{assump: multi_s}); and the last inequality is due to $\sqrt{\gamma} > \gamma^{13/14}$ for $\gamma \le 0.01$ (\zcref{assumpt: multi_gamma} in \zcref{assump: multi_s}).
\end{proof}

\begin{lemma}
\label{lem:mediocre_output_reward_larger_than_initial_expected_reward_neg_inner_product}
Under \zcref{assum:reward_structure_orthonormal,assum:initial_probs_neg_inner_product}, it holds that $\Vgt (\theta_0) = \EE\nolimits_{y \sim \pi_{\theta_0}} \brk[s]{ \gtreward (y) } < \gtreward (\ymed) - \sqrt{\gamma}$, for $\gamma := \pi_{\theta_0}(\ystar)^{14 / 13} M'^{-14/13}$ with $M'$ defined as in \zcref{assum:initial_probs_neg_inner_product}.
\end{lemma}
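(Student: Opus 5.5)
The plan is to mirror the proof of \zcref{lem:mediocre_output_reward_larger_than_initial_expected_reward} essentially verbatim, with \zcref{assump: loglin_neg_minor} playing the role of \zcref{assump: multi_s}. The expected reward depends only on the initial probabilities and the rewards, not on the feature geometry. So the feature-dependent constants in \zcref{assum:initial_probs_neg_inner_product} enter only through $M'$, hence through $\gamma$, and through the bounds on $\pi_{\theta_0}(\ystar)$ and $\pi_{\theta_0}(\Ybad)$ stated in terms of $\gamma$.

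Fix some $\ybad \in \Ybad$, and use that all outputs in $\Ybad$ share the reward $\gtreward(\ybad)$ (\zcref{assum:reward_structure_orthonormal}). First, write
\[
\Vgt(\theta_0) = \gtreward(\ymed)\pi_{\theta_0}(\ymed) + \gtreward(\ystar)\pi_{\theta_0}(\ystar) + \pi_{\theta_0}(\Ybad)\gtreward(\ybad) .
\]
Second, bound $\pi_{\theta_0}(\ymed) \le 1 - \pi_{\theta_0}(\Ybad)$, which is valid because $\gtreward(\ymed) > 0$. Third, bound $\gtreward(\ystar)\pi_{\theta_0}(\ystar) \le \gamma^{13/14}$, using $\gtreward(\ystar) \le 1$ and $\pi_{\theta_0}(\ystar) \le \gamma^{13/14}$ from \zcref{assump: loglin_neg_ystar} in \zcref{assump: loglin_neg_minor}. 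Together these give
\[
\Vgt(\theta_0) \le \gtreward(\ymed) - \Deltatwo \pi_{\theta_0}(\Ybad) + \gamma^{13/14} .
\]

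Next, invoke the lower bound $\pi_{\theta_0}(\Ybad) \ge 2\sqrt{\gamma}/\Deltatwo$ from \zcref{assumpt: loglin_neg_ybad} in \zcref{assump: loglin_neg_minor}. This yields $\Vgt(\theta_0) \le \gtreward(\ymed) - 2\sqrt{\gamma} + \gamma^{13/14}$. Finally, $\gamma \le 0.01$ by \zcref{assump: gamma_loglin_neg}, so $\gamma^{13/14} < \sqrt{\gamma}$ strictly, since $\gamma^{3/7} < 1$ and $\gamma > 0$. This gives the strict inequality $\Vgt(\theta_0) < \gtreward(\ymed) - \sqrt{\gamma}$.

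There is no real obstacle here. The only point requiring care is confirming that $\gamma > 0$, which holds because softmax policies have full support, and that $\pi_{\theta_0}(\ystar) = M'\gamma^{13/14} \le \gamma^{13/14}$ because $M' \le 1$. Both follow directly from the definitions.
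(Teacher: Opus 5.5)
Your proposal is correct and matches the paper's proof step for step: the same decomposition of $\Vgt(\theta_0)$, the bounds $\pi_{\theta_0}(\ystar) \le \gamma^{13/14}$ and $\pi_{\theta_0}(\Ybad) \ge 2\sqrt{\gamma}/\Deltatwo$ (the $\gamma$-reformulation of the initial-policy assumption), and the final comparison $\gamma^{13/14} < \sqrt{\gamma}$ for $0 < \gamma \le 0.01$. Your remarks that $\gamma > 0$ (full support of the softmax) and $M' \le 1$ are accurate and make explicit details the paper leaves implicit.
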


\begin{proof}
Fix some $\ybad \in \Ybad$.
Under \zcref{assum:reward_structure_orthonormal,assum:initial_probs_neg_inner_product}, \zcref{assump: loglin_neg_minor} establishes bounds on $\gamma$, $\pi_{\theta_0} (\ystar)$, and $\pi_{\theta_0} (\Ybad)$.
The proof of this lemma is analogous to that of \zcref{lem:mediocre_output_reward_larger_than_initial_expected_reward}, using the bounds provided by \zcref{assump: loglin_neg_minor} in place of those from \zcref{assump: multi_s}.
Based on these bounds, we can show that $\Vgt (\theta_0) < \gtreward (\ymed) - \sqrt{\gamma}$ as follows:
\begin{align*}
        \Vgt (\theta_0) &= \gtreward(\ymed)\pi_{\theta_0}(\ymed) + \gtreward(\ystar)\pi_{\theta_0}(\ystar) + \sum\nolimits_{z \in \Ybad}\gtreward(z)\pi_{\theta_0}(z)\nonumber\\
        &\le (1 - \pi_{\theta_0}(\Ybad))\gtreward(\ymed) + \gamma^{13/14} + \pi_{\theta_0}(\Ybad)\gtreward(\ybad)\nonumber\\
        &= \gtreward(\ymed) - \pi_{\theta_0}(\Ybad)\Delta_2  + \gamma^{13/14}\nonumber\\
        &\le \gtreward(\ymed) - 2\sqrt{\gamma} + \gamma^{13/14}\nonumber\\
        &< \gtreward(\ymed) - \sqrt{\gamma} .
\end{align*}
The first inequality is from $\gtreward(\ystar) \le 1$ and $\pi_{\theta_0}(\ystar) \le \gamma^{13/14}$ (\zcref{assump: loglin_neg_ystar} in \zcref{assump: loglin_neg_minor}); the second inequality is from $\pi_{\theta_0}(\Ybad) \ge 2\sqrt{\gamma}/\Delta_2$ (\zcref{assumpt: loglin_neg_ybad} in \zcref{assump: loglin_neg_minor}); and the last inequality is due to $\sqrt{\gamma} > \gamma^{13/14}$ for $\gamma \le 0.01$ (\zcref{assump: gamma_loglin_neg} in \zcref{assump: loglin_neg_minor}).
\end{proof}

\begin{lemma}
\label{lemma: max_prob_y_prime_s}
    Under \zcref{assum:reward_structure_orthonormal}, for all $\gamma > 0$, if $\Vgt (\theta_t) \le \gtreward(\ymed)-\gamma$, then $\pi_{\theta_t}(\ymed) \le 1 - \frac{\gamma}{1+\gtreward(\ymed)}$.
\end{lemma}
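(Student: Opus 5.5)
We prove the contrapositive form. Suppose $\pi_{\theta_t}(\ymed) > 1 - \frac{\gamma}{1+\gtreward(\ymed)}$, and show that $\Vgt(\theta_t) > \gtreward(\ymed) - \gamma$. The approach is to lower bound $\Vgt(\theta_t)$ by the worst-case placement of the mass not on $\ymed$.

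\textbf{Step 1: worst-case lower bound.} Write $p := \pi_{\theta_t}(\ymed)$. The expected reward decomposes as
\[
\Vgt(\theta_t) = p\,\gtreward(\ymed) + \pi_{\theta_t}(\ystar)\gtreward(\ystar) + \sum\nolimits_{z \in \Ybad}\pi_{\theta_t}(z)\gtreward(z).
\]
Every reward lies in $[-1,1]$, and $\gtreward(\ystar) > \gtreward(\ymed) > 0$ by \zcref{assum:reward_structure_orthonormal}. So the remaining mass $1-p$ contributes at least $-(1-p)$, giving
\[
\Vgt(\theta_t) \geq p\,\gtreward(\ymed) - (1-p) = \gtreward(\ymed) - (1-p)\brk*{1+\gtreward(\ymed)}.
\]

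\textbf{Step 2: conclude.} Under the supposition, $(1-p)(1+\gtreward(\ymed)) < \gamma$. Substituting into Step 1 gives $\Vgt(\theta_t) > \gtreward(\ymed) - \gamma$, which contradicts $\Vgt(\theta_t) \le \gtreward(\ymed) - \gamma$. Therefore $\pi_{\theta_t}(\ymed) \le 1 - \gamma/(1+\gtreward(\ymed))$.

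\textbf{Main obstacle.} There is essentially none; the only care needed is to use the crude bound $\gtreward \geq -1$ for all outputs other than $\ymed$, rather than trying to exploit the actual structure of the remaining rewards. The bound is positive-denominator safe since $1+\gtreward(\ymed) > 1$.
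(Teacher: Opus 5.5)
Your proof is correct and follows essentially the same argument as the paper: assume $\pi_{\theta_t}(\ymed) > 1 - \gamma/(1+\gtreward(\ymed))$, bound the contribution of the remaining mass from below using $\gtreward \geq -1$, and derive $\Vgt(\theta_t) > \gtreward(\ymed) - \gamma$, a contradiction. The only difference is bookkeeping: you rewrite the bound as $\gtreward(\ymed) - (1-p)(1+\gtreward(\ymed))$, which needs only $1+\gtreward(\ymed) > 0$, whereas the paper substitutes the threshold into $p\,\gtreward(\ymed)$ directly and so uses $\gtreward(\ymed) > 0$.
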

\begin{proof}
We will proceed by contradiction. Suppose to the contrary that $\pi_{\theta_t}(\ymed) > 1 - \frac{\gamma}{1 + \gtreward(\ymed)}$. Then,
\begin{align*}
    \Vgt (\theta_t) &= \pi_{\theta_t}(\ymed) \gtreward(\ymed) + \pi_{\theta_t}(\ystar) \gtreward(\ystar) + \sum_{z \in \Ybad} \pi_{\theta_t}(z) \gtreward(z)\\
    &> \bigg(1 - \frac{\gamma}{1+\gtreward(\ymed)}\bigg) \gtreward(\ymed) - \pi_{\theta_t}(\ystar)- \pi_{\theta_t}(\Ybad)\\
    &> \bigg(1 - \frac{\gamma}{1+\gtreward(\ymed)}\bigg) \gtreward(\ymed) - \frac{\gamma}{1+\gtreward(\ymed)}\\
    &= \gtreward(\ymed)-\gamma.
\end{align*}
Here, the first inequality is from our assumption on $\pi_{\theta_t}(\ymed)$, $\gtreward(\ymed)>0$ (\zcref{assum:reward_structure_orthonormal}), and $\gtreward(y) \ge -1$ for all $y \in \Y$; the second inequality is due to $\sum_{y \in \Y\setminus\{\ymed\}}\pi_{\theta_t}(y) < \frac{\gamma}{1 + \gtreward(\ymed)}$ also by assumption. 
This contradicts the premise that $\Vgt (\theta_t) \le \gtreward(\ymed) - \gamma$.
Hence, $\pi_{\theta_t}(\ymed) \le 1 - \frac{\gamma}{1+\gtreward(\ymed)}$, as desired.
\end{proof}

\begin{lemma}
\label{lemma: upperbound_ode}
For $T \in (0, \infty]$, let $g:[0,T)\to(0,\infty)$ be a continuously differentiable function satisfying
\[
\frac{d}{dt}g(t)\le c \cdot g(t)^p
\]
for all $t\in[0,T)$, where $c\neq0$ and $p>1$ are constants. Then it holds that
\[
g(t)\le 
\frac{g(0)}
{\big(1-(p-1)c\,g(0)^{p-1} \cdot t \big)^{\frac{1}{p-1}}}
\]
for all $t\in[0,T)$ such that the denominator is positive.
Similarly, if
\[
\frac{d}{dt}g(t) \ge c \cdot g(t)^p,
\]
then it holds that 
\[
g(t) \ge 
\frac{g(0)}
{\big(1-(p-1)c\,g(0)^{p-1} \cdot t \big)^{\frac{1}{p-1}}}
\]
for all $t\in[0,T)$ such that the denominator is positive, \ie:
\[
t<T \quad \text{and} \quad t<
\begin{cases}
\displaystyle \frac{1}{(p-1)c\,g(0)^{p-1}}, & c>0,\\[1em]
+\infty, & c<0.
\end{cases}
\]
\end{lemma}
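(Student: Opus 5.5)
The plan is the standard separation-of-variables argument for a Bernoulli-type differential inequality. Since $g>0$ and $p>1$, set
\[
u(t) := g(t)^{1-p},
\]
which is continuously differentiable on $[0,T)$ with
\[
\tfrac{d}{dt}u(t) = (1-p)\, g(t)^{-p}\, \tfrac{d}{dt}g(t).
\]
Because $1-p<0$ and $g(t)^{-p}>0$, the hypothesis $\tfrac{d}{dt}g \le c\, g^p$ flips to $\tfrac{d}{dt}u(t) \ge -(p-1)c$ for all $t\in[0,T)$. In the reverse case, $\tfrac{d}{dt}g \ge c\, g^p$ gives $\tfrac{d}{dt}u(t)\le -(p-1)c$.

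Next we integrate from $0$ to $t$ by the fundamental theorem of calculus. In the first case this gives
\[
u(t) \ge u(0) - (p-1)c\,t = g(0)^{1-p}\brk*{1-(p-1)c\,g(0)^{p-1} t},
\]
and in the second case the analogous upper bound $u(t)\le g(0)^{1-p}\brk*{1-(p-1)c\,g(0)^{p-1}t}$.

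We then restrict to those $t$ for which $D(t):=1-(p-1)c\,g(0)^{p-1}t>0$, and invert via the map $x\mapsto x^{-1/(p-1)}$, which is decreasing on $(0,\infty)$.

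\begin{itemize}
\item In the first case, $u(t)\ge g(0)^{1-p}D(t)>0$, so applying the decreasing map reverses the inequality and gives $g(t)=u(t)^{-1/(p-1)}\le g(0)\,D(t)^{-1/(p-1)}$, as claimed.
\item In the second case, $0<u(t)\le g(0)^{1-p}D(t)$, where $u(t)>0$ holds automatically since $g>0$. Inverting gives $g(t)\ge g(0)\,D(t)^{-1/(p-1)}$.
\end{itemize}

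Finally, the explicit description of when $D(t)>0$ follows directly: if $c<0$ then $D(t)\ge 1$ for all $t\ge0$, and if $c>0$ then $D(t)>0$ exactly when $t<1/\brk*{(p-1)c\,g(0)^{p-1}}$. Combined with $t<T$, this is the stated range.

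The only delicate point is keeping track of the sign flips: once from multiplying by the negative factor $1-p$, and once from inverting with the decreasing power map. Positivity of both sides is needed before the power map can be applied, which is exactly why the statement requires the denominator to be positive. No continuation or blow-up argument is needed, because the inequality is derived pointwise on $[0,T)$.
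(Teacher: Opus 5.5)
Your proof is correct and takes essentially the same route as the paper. The paper divides by $g^p$, integrates using $\frac{d}{ds}g(s)^{-(p-1)} = -(p-1)g'(s)/g(s)^p$, and then inverts, which is exactly your substitution $u = g^{1-p}$; you also carry out the reverse-inequality case explicitly, where the paper only says it is analogous.
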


\begin{proof}
Fix $t\in[0,T)$.
Since $g(s)>0$ for all $s\in[0,T)$, we may divide the inequality $g'(s)\le c\,g(s)^p$ by $g(s)^p$ and integrate over $[0,t]$ to obtain
\[
\int_0^t \frac{g'(s)}{g(s)^p}\,ds
\le c \cdot t .
\]
Since $\frac{d}{ds}g(s)^{-(p-1)} = -(p-1)g'(s)/g(s)^p$, we have that
\[
-\frac{1}{p-1}\brk*{g(t)^{-(p-1)} - g(0)^{-(p-1)}} \le c \cdot t .
\]
Rearranging this inequality, we then get
\[
g(t)^{-(p-1)} \ge g(0)^{-(p-1)}-(p-1)c \cdot t .
\]
Thus,
\[
g(t)\le \frac{g(0)}
{\big(1-(p-1)c\,g(0)^{p-1} \cdot t\big)^{1/(p-1)}}.
\]
If $c>0$, this requires
$t<1/((p-1)c\,g(0)^{p-1})$.
When $c<0$, the denominator is always positive, so the bound holds for all $t\in[0,T)$.
The proof of the lower bound from $g'(t) \ge c \cdot g(t)^p$ is analogous.
\end{proof}

	% FURTHER EXPERIMENTS AND IMPLEMENTATION DETAILS
	\section{Additional Experimental Results}
\label{app:experiments:further}

\subsection{Empirical Demonstration of Theoretical Results With Linear Softmax Policies}
\label{app:experiments:further:empirical_demonstration}

\zcref{fig:loglin_exps_sample_based_gradients} corroborates \zcref{fig:loglin_exps} by presenting the results of an identical experiment, except that the policy is trained using REINFORCE \citep{williams1992simple}, \ie, sample-based gradients of the expected proxy reward, instead of exact gradients.
The results show that outputs with mediocre proxy reward can impede optimization regardless of whether one uses exact or sample-based gradients.

\subsection{Comparison of Ranking Accuracy Variants}
\label{app:experiments:further:modified_acc}

Listed below are experimental results omitted from \zcref{sec:modified_acc}.

\begin{itemize}[leftmargin=6mm]
    \item \zcref{fig:rm_accuracy_with_gt_exps_test_rewards} supplements \zcref{fig:rm_accuracy_with_gt_exps} by reporting the correlation with ground truth reward increase and regret of ranking accuracy variants, when measuring reward increase on test examples from the UltraFeedback dataset as opposed to examples from the policy gradient training set.
    The results are analogous to \zcref{fig:rm_accuracy_with_gt_exps} since, as can be seen in \zcref{tab:rm-selection-llama-3b-instruct-ultrafeedback,tab:rm-selection-llama-1b-instruct-ultrafeedback,tab:rm-selection-olmo-1b-sft-ultrafeedback,tab:rm-selection-qwen-1-7b-base-ultrafeedback}, the reward increase on test examples is nearly identical to that on training examples.

    \item \zcref{fig:rm_accuracy_with_gt_exps_rb2} supplements \zcref{fig:rm_accuracy_with_gt_exps} by considering ranking accuracy variants computed on RewardBench2 instead of on UltraFeedback examples from the policy gradient training set.
    Across language models, accuracy values computed on RewardBench2 are substantially less predictive of which reward model leads to better language model performance, as indicated by the negative correlation and high regret.
    This highlights the importance of evaluating reward models on prompts close in distribution to those used for policy gradient training.

    \item \zcref{fig:rm_accuracy_with_gt_exps_diff_dataset_diff_gt} supports the experiments of \zcref{fig:rm_accuracy_with_gt_exps} by showing that similar trends persist when using: \emph{(i)} a different dataset for policy gradient training and reward model evaluation (WildChat-IF-On-Policy-8B \citep{lambert2024tulu} instead of UltraFeedback); or \emph{(ii)} a different ground truth reward model (Skywork-Reward-V2-Llama-3.1-8B \citep{liu2025skywork} instead of ArmoRM).

    \item \zcref{fig:rm_accuracy_with_winrates} supports the experiments of \zcref{fig:rm_accuracy_with_gt_exps} by showing that similar trends persist when evaluating language model performance through win-rate against the initial language model, according to a frontier (GPT) judge model, instead of through ground truth reward increase.

    \item Tables~\ref{tab:rm-selection-llama-3b-instruct-ultrafeedback} to \ref{tab:rm-selection-qwen-1-7b-base-ultrafeedback-win-rate} include the numerical results based on which \zcref{fig:rm_accuracy_with_gt_exps,fig:rm_accuracy_with_gt_exps_test_rewards,fig:rm_accuracy_with_gt_exps_rb2,fig:rm_accuracy_with_gt_exps_diff_dataset_diff_gt,fig:rm_accuracy_with_winrates} were generated.
    Specifically, for each setting (\ie, language model, reward model, and dataset), they report the increase in ground truth reward due to policy gradient training (or win-rate against the initial language model), as well as reward model accuracy values.
\end{itemize}

\subsection{When Should Partially Correct Outputs Be Rewarded?}
\label{app:experiments:further:reward_design}

Listed below are experimental results omitted from \zcref{sec:reward_design}.

\begin{itemize}[leftmargin=6mm]
    \item \zcref{fig:rlvr_partial_rewards_llama,fig:rlvr_partial_rewards_olmo} present analogous results to \zcref{fig:rlvr_partial_rewards_qwen} for the Llama-3.2-3B-Instruct and OLMo-2-1B-Instruct language models (instead of Qwen3-1.7B), respectively.
    
    \item \zcref{fig:rlvr_partial_rewards_qwen_gap_prob_but_partial_works} demonstrates that the probability of initially satisfying a constraint is not the sole factor determining its ease of learnability.
    In particular, it shows that when one constraint is initially satisfied with substantially higher probability than the other, it is possible for both constraints to be learned quickly under partial rewards.
    In such cases, under binary rewards (\ie, rewarding only fully correct outputs), the constraints can either also be learned quickly or fail to be learned, especially when the initial probability of satisfying both constraints is near-zero.

    \item \zcref{tab:lm-constraints-summary} specifies the IFBench constraint pairs used in each of the experiments corresponding to \zcref{fig:rlvr_partial_rewards_qwen,fig:rlvr_partial_rewards_llama,fig:rlvr_partial_rewards_olmo,fig:rlvr_partial_rewards_qwen_gap_prob_but_partial_works}.
\end{itemize}

\section{Additional Implementation Details}
\label{app:experiments:details}

In this appendix, we provide implementation details omitted from the main text and \zcref{app:experiments:further}.
Code for reproducing our results, based on the PyTorch~\citep{paszke2017automatic}, Hugging Face TRL~\citep{wolf2019huggingface}, and open-instruct~\citep{lambert2024tulu,olmo2025olmo} libraries,\ifdefined\CAMREADY
~can be found at \url{https://github.com/princeton-pli/imperfect-rewards}.
\else
~will be made publicly available.
\fi

\subsection{Empirical Demonstration of Theoretical Results With Linear Softmax Policies}
\label{app:experiments:details:empirical_demonstration}

\textbf{Output features.}
The feature vectors are of dimension $D = 5$ and the number of outputs is $5$ in all experiments.
For \zcref{fig:loglin_exps,fig:loglin_exps_sample_based_gradients}, we use standard basis vectors as output features.
Specifically, $\phi(\ystar) = (1,0,0,0,0)$, $\phi(\ymed) = (0,1,0,0,0)$, and the remaining three outputs have feature vectors $(0,0,1,0,0)$, $(0,0,0,1,0)$, and $(0,0,0,0,1)$.
For \zcref{fig:loglin_exps_features_sim}, the feature vectors of outputs other than $\ystar$ and $\ymed$ are the same as in the experiments of \zcref{fig:loglin_exps,fig:loglin_exps_sample_based_gradients}.
The feature vectors of $\ystar$ and $\ymed$ are set as follows.
\begin{itemize}[leftmargin=6mm]
    \item In the case of $\inprod{ \phi (\ystar) }{ \phi (\ymed) } = 0$:
    \[
        \phi (\ystar) = (3 / 2,0,0,0,0) \quad , \quad \phi (\ymed) = (0,1,0,0,0)
        \text{\,.}
    \]

    \item In the case of $\inprod{ \phi (\ystar) }{ \phi (\ymed)} < 0$:
    \[
        \phi (\ystar) = (3 / 2,0,0,0,0) \quad , \quad \phi (\ymed) = (- 1 / \sqrt{2}, 1 / \sqrt{2}, 0, 0,0)
        \text{\,.}
    \]
    
    \item In the case of $\inprod{ \phi (\ystar) }{ \phi (\ymed)} > 0$:
    \[
        \phi (\ystar) = (3/2 ,0,0,0,0) \quad , \quad \phi (\ymed) = (1 / \sqrt{2}, 1 / \sqrt{2}, 0, 0,0)
        \text{\,.}
    \]
\end{itemize}
Note that the norm of $\phi (\ystar)$ is taken to be larger than that of $\phi (\ymed)$ to accord with the conditions of \zcref{thm:attraction_to_mediocre_outputs_beneficial_error_pos_inner_product}.
We observed trends analogous to those in \zcref{fig:loglin_exps_features_sim} when setting the first coordinate of $\phi (\ystar)$ to $1$ instead of $3 /2$, in which case all feature vectors have unit norm.

\textbf{Initial policy.}
For the experiments of \zcref{fig:loglin_exps,fig:loglin_exps_sample_based_gradients}, we set the initial policy weights $\theta_0$ such that $\pi_{\theta_0} (\ystar)$ is either $0.05$, $0.1$, or $0.15$, $\pi_{\theta_0} (\ymed) = 0.5$, and the remaining probability mass is uniformly divided between the remaining three outputs.
Since the feature vectors in these experiments are standard basis vectors, we can achieve this by setting the coordinate of $\theta_0$ associated with each output $y \in \Y$ to the logarithm of the desired probability.
For the experiments of \zcref{fig:loglin_exps_features_sim}, we set $\theta_0$ such that $\pi_{\theta_0} (\ystar) = 0.05$, $\pi_{\theta_0} (\ymed) = 0.5$, and the remaining probability mass is uniformly divided between the remaining three outputs.
This is done by using a linear equations solver to find a $\theta_0$ satisfying $\inprod{ \phi (y) }{ \theta_0 } = \ln \pi (y)$ for all $y \in \Y$, where $\pi (y)$ is the desired initial probability of $y$.

\textbf{Policy gradient optimization.}
In the experiments of \zcref{fig:loglin_exps,fig:loglin_exps_features_sim}, we train linear softmax policies by running gradient ascent over the expected proxy or ground truth rewards using exact gradients.
In \zcref{fig:loglin_exps_sample_based_gradients}, we use REINFORCE \citep{williams1992simple}.
Namely, at each training step, we sample an output $y$ from the current policy and update the parameters via $\theta_{t + 1} = \theta_t + \eta \cdot \proxyreward(y) \cdot \nabla_\theta \ln \pi_{\theta_t}(y)$, where $\proxyreward(y) \cdot \nabla_\theta \ln \pi_{\theta_t}(y)$ is an unbiased estimate of $\nabla \Vproxy (\theta_t)$ and $\eta > 0$ is the learning rate.
We set the learning rate to $0.1$ in all experiments.
Each figure caption specifies the reward assignments in the corresponding experiments.

\textbf{Hardware.}
Standard laptop CPU.

\subsection{Comparison of Ranking Accuracy Variants}
\label{app:experiments:details:modified_acc}

\textbf{Language models.}
We consider as initial policies four language models that cover different model families and types (\ie, pretrained, supervised finetuned, and instruction-tuned): Llama-3.2-3B-Instruct, Llama-3.2-1B-Instruct, OLMo-2-1B-SFT, and Qwen3-1.7B-Base.
When processing inputs to the language models we use their default chat templates.

\textbf{Reward models.}
\zcref{tab:rms} lists the reward models used in the experiments of \zcref{sec:modified_acc,app:experiments:further:modified_acc}.
The reward models were chosen so that they cover a range of RewardBench2 scores \citep{malik2025rewardbench}.
When using Skywork-Reward-V2-Llama-3.1-8B as the ground truth reward model (\zcref{fig:rm_accuracy_with_gt_exps_diff_dataset_diff_gt}), we exclude it from the reward models used for training and include instead the ArmoRM \citep{wang2024interpretable} model.

\begin{table}[H]
    \vspace{0mm}
	\caption{
        Listed are the reward models used in the experiments of \zcref{sec:modified_acc,app:experiments:further:modified_acc}.
        The RewardBench2 scores are taken from the official leaderboard \citep{malik2025rewardbench}.
        The two models without a RewardBench2 score were chosen and positioned in the table based on their original RewardBench score \citep{lambert2025rewardbench}.
	}
	\vspace{0mm}
    \begin{center}
    \fontsize{8.5}{9.5}\selectfont
    \begin{tabular}{lc}
        \toprule
        Reward Model & RewardBench2 Score \\
        \midrule
        Skywork-Reward-V2-Llama-3.1-8B & 84.1 \\
        Skywork-Reward-V2-Qwen3-8B & 78.4 \\
        Skywork-Reward-V2-Qwen3-4B & 75.5 \\
        Llama-3.1-8B-Instruct-RM-RB2 & 72.8 \\
        Skywork-Reward-V2-Qwen3-1.7B & 68.2 \\
        GRM-Llama3.2-3B-rewardmodel-ft & -- \\
        Llama-3-OffsetBias-RM-8B & 64.8 \\
        Skywork-Reward-V2-Llama-3.2-1B & 64.4 \\
        Skywork-Reward-V2-Qwen3-0.6B & 61.2 \\
        RM-Mistral-7B & 59.6 \\
        internlm2-1\_8b-reward & 39.0 \\
        llama-3-tulu-2-8b-uf-mean-rm & -- \\
        RM-Gemma-2B & 30.6 \\
        \bottomrule
    \end{tabular}
    \end{center}
    \label{tab:rms}
\end{table}

\textbf{Reward normalization.}
To ensure fair comparison across reward models, which can differ in the scale of rewards they produce, we normalize rewards in each batch of training by subtracting the mean and dividing by the standard deviation.

\textbf{Data.}
We take the binarized version of UltraFeedback\footnote{
    \url{https://huggingface.co/datasets/HuggingFaceH4/ultrafeedback_binarized}
} \citep{cui2024ultrafeedback} and filter out examples in which the prompt or one of the outputs exceeds 512 tokens.
We then select a subset of 10000 examples from the training set (\texttt{train\_prefs} split), relabel output preferences in the selected subset and the test set (\texttt{test\_prefs} split) using the ground truth reward model (ArmoRM or Skywork-Reward-V2-Llama-3.1-8B, depending on the experiment), and filter out examples where both outputs have the same ground truth reward.
For the ArmoRM ground truth reward model, the resulting training and test sets contain 9808 and 1406 examples, respectively, while for the Skywork-Reward-V2-Llama-3.1-8B ground truth reward model, the resulting training and test sets contain 9930 and 1422 examples, respectively.
For experiments on the WildChat-IF-On-Policy-8B dataset\footnote{
    \url{https://huggingface.co/datasets/allenai/tulu-3-wildchat-if-on-policy-8b}
} (\zcref{fig:rm_accuracy_with_gt_exps_diff_dataset_diff_gt}), which does not have a test set, we follow the same procedure using 3000 randomly selected examples from the dataset and split the examples remaining after filtering into a training set of 1969 examples and a test set of 985 examples.

\textbf{Policy gradient optimization.}
Our RLOO implementation is based on the RLOOTrainer class from the TRL framework, which uses the Adam optimizer.
We set the learning rate to 1e-7, batch size to 32, KL regularization coefficient to 0.01, and the \texttt{num\_mini\_batches} hyperparameter to 1.
Optimization is carried out for two full passes over the prompts in the training set, where for each prompt in a batch we sample four outputs.

\textbf{Generation hyperparameters.}
For both training and evaluation, we generate outputs from the policies (\ie, language models) using a temperature of 1 and a maximum output length of 512 tokens.

\textbf{Ground truth reward increase policy evaluation.}
We evaluate the ground truth reward achieved by a policy based on fixed (randomly selected) subsets of 500 prompts from the policy gradient training and test sets.
We sample 10 outputs from the policy for each prompt, compute their ground truth rewards, and then average the rewards across all outputs and prompts.
As noted in \zcref{sec:modified_acc}, we mainly consider the ground truth reward increase on training examples as it directly reflects policy gradient optimization, which is the focus of our work, without conflating it with generalization.
Nonetheless, as reported in \zcref{app:experiments:further}, we found the reward increase on test examples to be nearly identical.
For each combination of reward model and initial policy, the reported ground truth reward increase is the mean across three separate runs with different random seeds.

\textbf{Win-rate policy evaluation.} 
For \zcref{fig:rm_accuracy_with_winrates}, instead of using a ground truth reward model for policy evaluation, we adopt the AlpacaEval framework \citep{alpaca_eval} to compute win-rates against the initial policy according to a GPT judge model.
The win-rates are computed for a single policy gradient run per combination of reward model and initial policy, based on the same 500 prompts used for evaluation via ground truth reward increase.
We sample one output from the initial policy and one from the final policy, for each prompt, and ask the judge model to select the better output using the format provided by AlpacaEval.
For all models, except Qwen3-1.7B-Base, \texttt{gpt-4-1106-preview} served as the judge model.
For Qwen3-1.7B-Base we used \texttt{gpt-4.1-2025-04-14} as the judge since the win-rates for this model were computed after the deprecation of \texttt{gpt-4-1106-preview}.

\textbf{Reward model evaluation.}
For each reward model and initial policy in \zcref{fig:rm_accuracy_with_gt_exps,fig:rm_accuracy_with_gt_exps_test_rewards,fig:rm_accuracy_with_gt_exps_diff_dataset_diff_gt,fig:rm_accuracy_with_winrates}, we compute all ranking-accuracy variants (Acc, Acc-W, HAcc, and HAcc-W) on the same 500 training prompts used for policy evaluation, where output preferences are relabeled using the ground truth reward model.
In \zcref{fig:rm_accuracy_with_gt_exps_rb2}, accuracy values are computed on RewardBench2, after excluding examples from the ``Ties'' subset and  relabeling preferences using the ground truth reward model.
Lastly, to compute HAcc and HAcc-W, we estimate the mean proxy reward achieved by an initial policy for each prompt using 10 sampled outputs.

\textbf{Hardware.}
All experiments ran on Nvidia H100 GPUs with 80GB memory.
For the Llama-3.2-1B-Instruct and OLMo-2-1B-SFT models, we used two GPUs per policy gradient run.
For the Llama-3.2-3B-Instruct and Qwen3-1.7B-Base models, we used four GPUs per policy gradient run.
Evaluations were carried out on a single GPU.

\subsection{When Should Partially Correct Outputs Be Rewarded?}
\label{app:experiments:details:reward_design}

\textbf{Language models.}
We consider as initial policies three instruction-tuned language models: Qwen3-1.7B (thinking disabled), OLMo-2-1B-Instruct, and Llama-3.2-3B-Instruct.
When processing inputs to the language models we use their default chat templates.

\textbf{Data.}
We take the binarized version of UltraFeedback and randomly sample 4000 prompts that do not exceed 512 tokens from the \texttt{train\_prefs} split.
Then, we construct instruction following datasets by appending to each prompt a pair of constraints from IFBench \citep{pyatkin2025generalizing} using the following template:
\begin{quote}
\begin{center}
\texttt{\{original prompt\} \{constraint 1\} \{constraint 2\}}
\end{center}
\end{quote}
All prompts within a dataset share the same pair of constraints and the datasets differ in which constraints they contain.
For each initial policy, we experimented with multiple pairs of constraints and report the results for pairs exhibiting representative behaviors.

\textbf{Reward design.}
We consider two reward schemes.
The “partial rewards'' scheme corresponds to giving a reward of 0.5 for each constraint satisfied.
Thus, an output receives reward 0, 0.5, or 1 depending on whether it satisfies none, exactly one, or both constraints. 
The “binary (full-correctness) rewards'' scheme corresponds to assigning a reward of 1 to fully correct outputs, satisfying both constraints, and 0 to all other outputs.

\textbf{Policy gradient optimization.}
We rely on the open-instruct framework and train in bfloat16 mixed precision using its default DAPO objective implementation \citep{yu2025dapo}. 
Under our configuration, which adopts symmetric clipping (with $\varepsilon = 0.2$), disables dynamic sampling, sets \texttt{non\_stop\_penalty} to false, and does not use KL regularization, the objective reduces to the GRPO objective with token-level normalization. 
Each batch consists of 16 unique prompts.
The number of sampled outputs per prompt varies across language models: 96 for Qwen3-1.7B, 64 for Llama-3.2-3B-Instruct, and 32 for OLMo-2-1B-Instruct.
Optimization is carried out for four full passes over the training prompts using the Adam optimizer and a linear decay learning rate schedule with warmup ratio 0.03 and base learning rate 1e-6.
We chose these relatively large numbers of outputs per prompt together with this learning rate schedule to promote stable training.
Note that for the learning rate decay schedule we set the number of epochs to eight, but ended up stopping early after four epochs (1000 training steps) since we observed that the probabilities of success tend to plateau by that point.

\textbf{Policy evaluation.}
The probability of satisfying a constraint (or both constraints jointly) is estimated at each training step by the fraction of outputs in a batch that satisfy the constraint (or both constraints).

\textbf{Generation hyperparameters.}
We generate outputs from the policies (\ie, language models) using a temperature of 1 and a maximum output length of 512 tokens.

\textbf{Hardware.}
Each experiment ran on two Nvidia H100 GPUs with 80GB memory.

\textbf{Effect of randomness on the success of binary (full-correctness) rewards.}
When the initial probability of satisfying both constraints is low, success under binary rewards can vary across runs since it depends on fully correct outputs being sampled often enough during training.
This phenomenon persists even when fixing the random seed due to non-determinism in GPU computation.
In particular, in the setting of \zcref{fig:rlvr_partial_rewards_qwen}, binary rewards led to successful learning of both constraints in roughly 50\% of our runs, whereas in the remaining runs the policy failed to learn either constraint.
When the initial probability of satisfying both constraints is not low, we found the outcome under binary rewards to be more consistent across runs.

\newpage

\begin{figure}[H]
	\vspace{0mm}
	\begin{center}
        \includegraphics[width=1\textwidth]{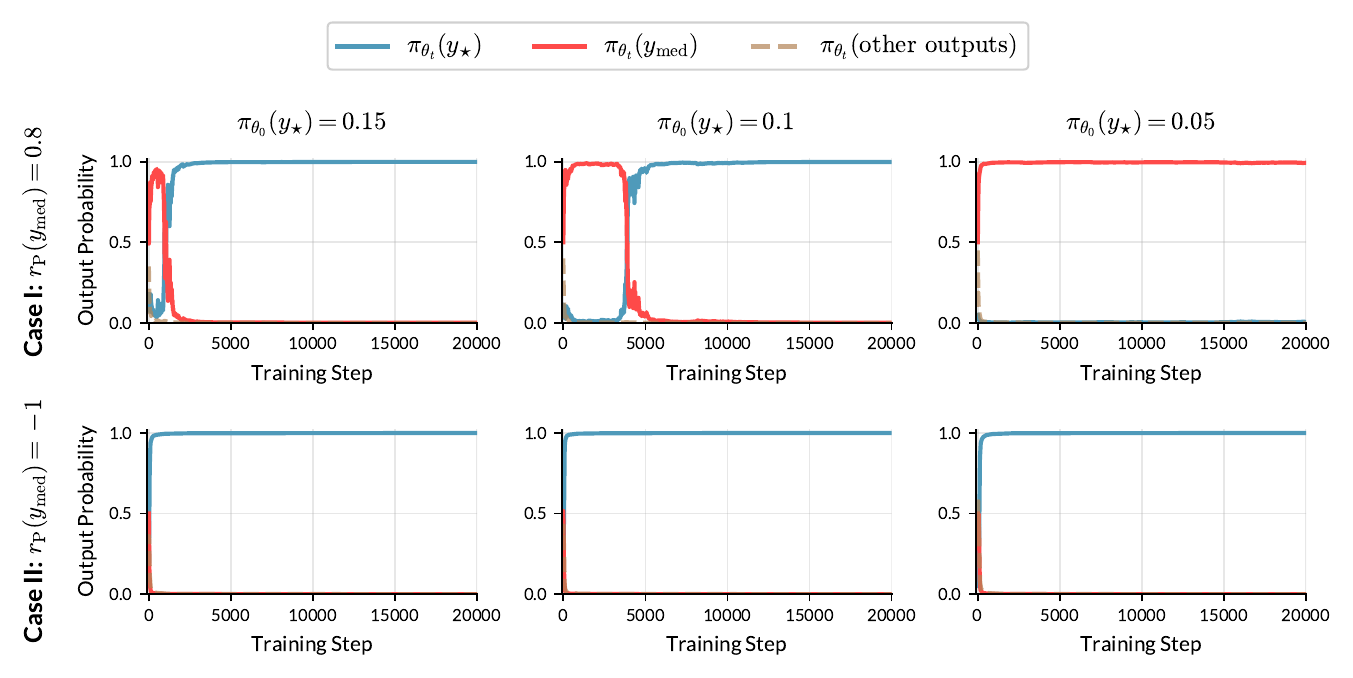}
	\end{center}
	\vspace{-2.25mm}
	\caption{
       	\textbf{Attraction to outputs with mediocre proxy reward can impede policy gradient optimization (with sample-based gradients).}
        This figure presents the results of an experiment identical to that of \zcref{fig:loglin_exps}, except that the policy is trained using REINFORCE \citep{williams1992simple}, \ie, sample-based gradients of the expected proxy reward, instead of exact gradients.
        As when training with exact gradients of the expected proxy reward, outputs with mediocre proxy reward can impede optimization.
        See \zcref{app:experiments:details} for additional implementation details.
	}
	\label{fig:loglin_exps_sample_based_gradients}
\end{figure}

\begin{figure}[H]
	\vspace{0mm}
	\begin{center}
        \includegraphics[width=1\textwidth]{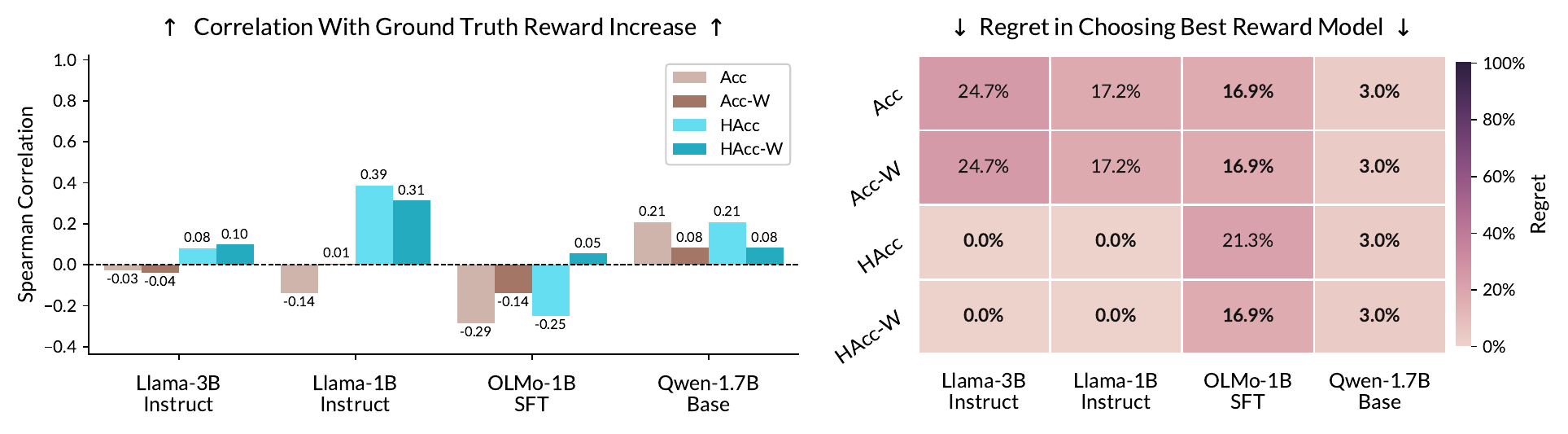}
	\end{center}
	\vspace{-2.25mm}
	\caption{
        \textbf{Harm-aware ranking accuracy variants are more predictive of which reward model leads to better language model performance.}
        This figure supplements \zcref{fig:rm_accuracy_with_gt_exps} by reporting the same metrics, but based on the reward increase over test examples from the UltraFeedback dataset.
        The results are analogous to \zcref{fig:rm_accuracy_with_gt_exps}, where reward increase is measured on UltraFeedback examples used for policy gradient training.
        Specifically, as can be seen in \zcref{tab:rm-selection-llama-3b-instruct-ultrafeedback,tab:rm-selection-llama-1b-instruct-ultrafeedback,tab:rm-selection-olmo-1b-sft-ultrafeedback,tab:rm-selection-qwen-1-7b-base-ultrafeedback}, the reward increase on test examples is nearly identical to that on training examples.
        Additional implementation details are provided in \zcref{app:experiments:details}.
    }
	\label{fig:rm_accuracy_with_gt_exps_test_rewards}
\end{figure}

\begin{figure}[H]
	\vspace{0mm}
	\begin{center}
        \includegraphics[width=1\textwidth]{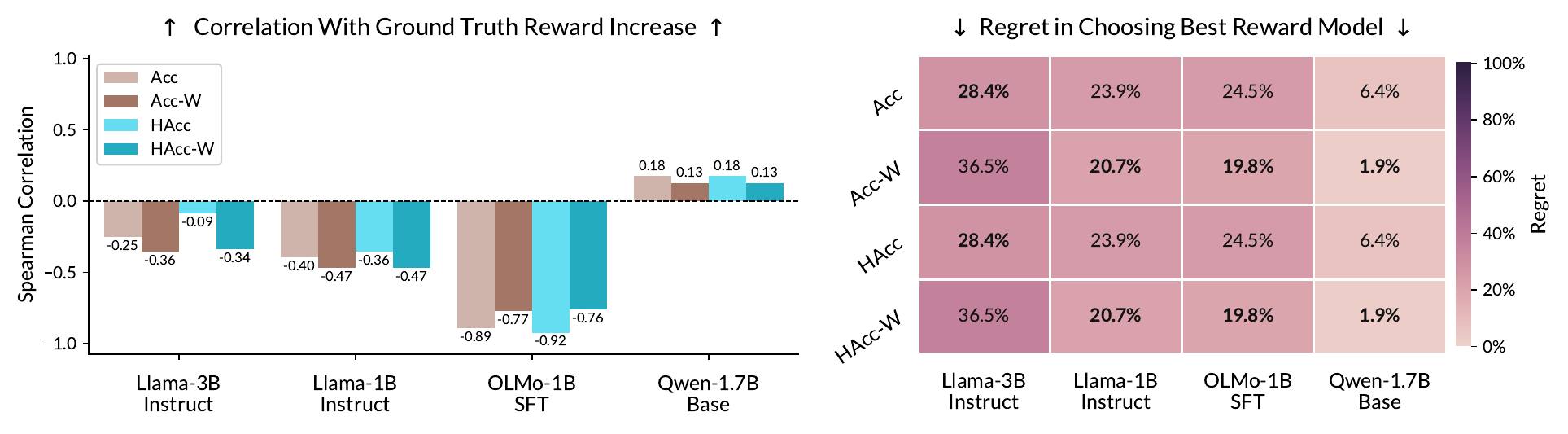}
	\end{center}
	\vspace{-2.25mm}
	\caption{
        \textbf{Ranking accuracy variants computed on prompts differing from those used for policy gradient training are less predictive of which reward model leads to better language model performance.}
        This figure supplements \zcref{fig:rm_accuracy_with_gt_exps} by considering ranking accuracy variants computed on RewardBench2 instead of on UltraFeedback examples from the policy gradient training set.
        Across language models, accuracy values computed on RewardBench2 are substantially less predictive of which reward model leads to better language model performance, as indicated by the negative correlation and high regret.
        This highlights a pitfall of evaluating reward models based on prompts that differ from those used for training the language model.
        See \zcref{app:experiments:details} for additional implementation details.
	}
	\label{fig:rm_accuracy_with_gt_exps_rb2}
\end{figure}

\begin{figure}[H]
	\vspace{0mm}
	\begin{center}
        \includegraphics[width=0.9\textwidth]{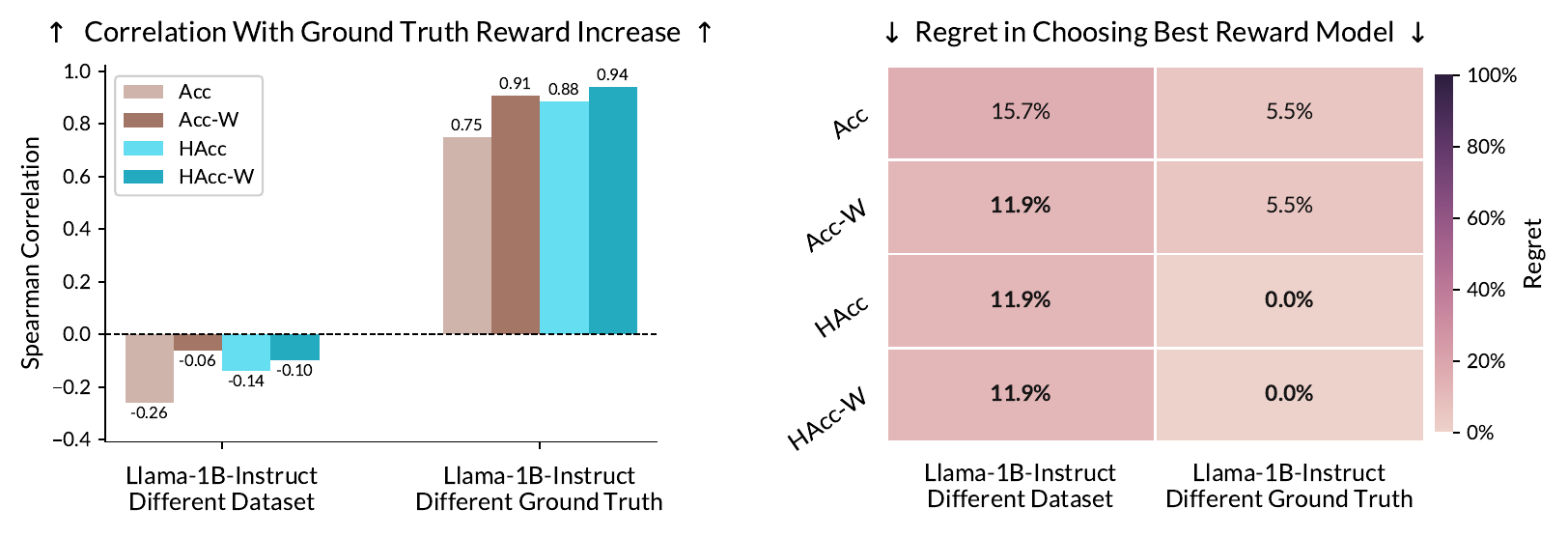}
	\end{center}
	\vspace{-2.25mm}
	\caption{
        \textbf{Harm-aware ranking accuracy variants are more predictive of which reward model leads to better language model performance.}
        This figure supports the experiments of \zcref{fig:rm_accuracy_with_gt_exps} by showing the same trends persist when using: \emph{(i)} a different dataset for policy gradient training and reward model evaluation (WildChat-IF-On-Policy-8B \citep{lambert2024tulu} instead of UltraFeedback); or \emph{(ii)} a different ground truth reward model for evaluating language model performance and accuracy values (Skywork-Reward-V2-Llama-3.1-8B \citep{liu2025skywork} instead of ArmoRM).
        Both sets of experiments were conducted with the Llama-3.2-1B-Instruct language model.
        See \zcref{app:experiments:details} for additional implementation details.
	}
	\label{fig:rm_accuracy_with_gt_exps_diff_dataset_diff_gt}
\end{figure}

\begin{figure}[H]
	\vspace{0mm}
	\begin{center}
        \includegraphics[width=1\textwidth]{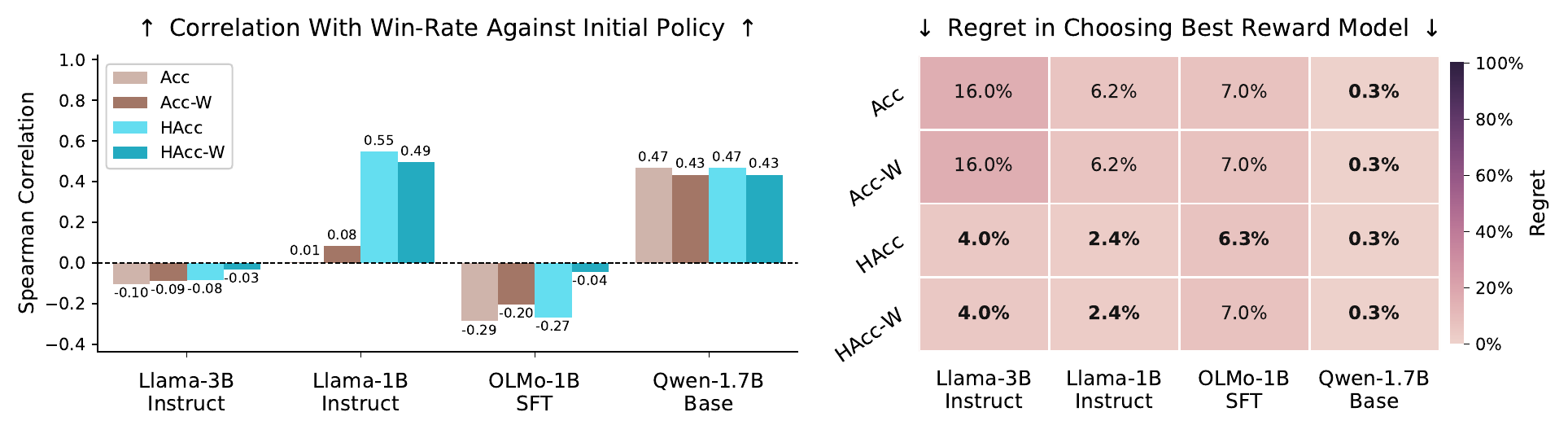}
	\end{center}
	\vspace{-2.25mm}
	\caption{
        \textbf{Harm-aware ranking accuracy variants are more predictive of which reward model leads to better language model performance.}
        This figure supports \zcref{fig:rm_accuracy_with_gt_exps} by showing that similar trends persist when evaluating language model performance through win-rate against the initial language model, according to a frontier (GPT) judge model, instead of through ground truth reward increase.
        See \zcref{app:experiments:details} for additional implementation details.
	}
	\label{fig:rm_accuracy_with_winrates}
\end{figure}

\begin{figure}[H]
	\vspace{0mm}
	\begin{center}
        \includegraphics[width=1\textwidth]{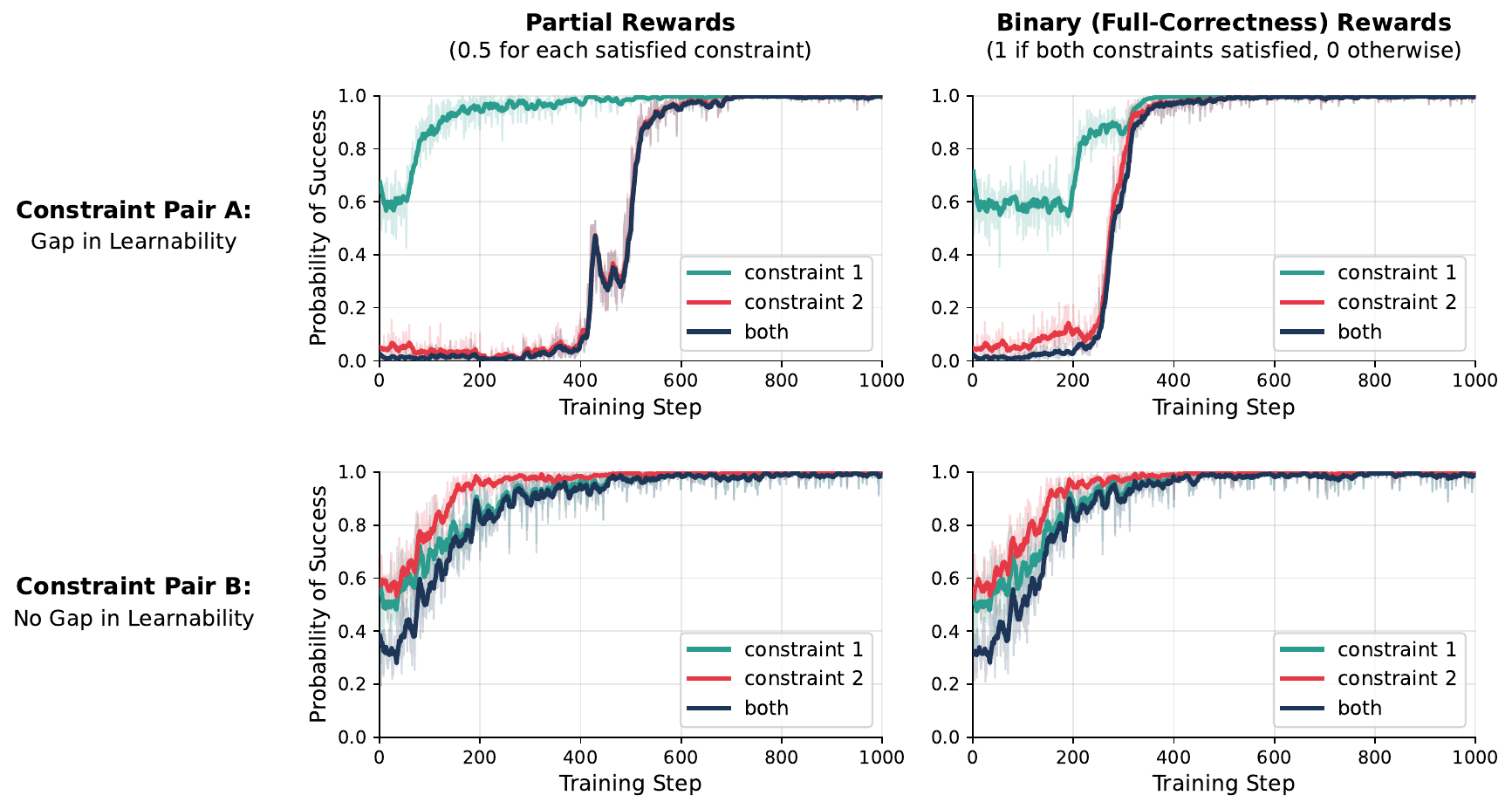}
	\end{center}
	\vspace{-2.25mm}
	\caption{
        \textbf{Rewarding partially correct outputs can impede policy gradient optimization.}
        This figure presents the results of an experiment analogous to that of \zcref{fig:rlvr_partial_rewards_qwen}, where the language model is Llama-3.2-3B-Instruct instead of Qwen3-1.7B.
        Similarly to \zcref{fig:rlvr_partial_rewards_qwen}, when the initial probability of satisfying one constraint is noticeably higher than the probability of satisfying the other, rewarding partial correctness can cause the policy to stall on satisfying only the easier constraint.
        In this case, rewarding only fully correct outputs can lead to faster learning of both constraints.
        However, when both constraints are initially satisfied with similar probability, both reward designs tend to work well.
        See \zcref{app:experiments:details} for further implementation details.
    }
	\label{fig:rlvr_partial_rewards_llama}
\end{figure}

\begin{figure}[H]
	\vspace{0mm}
	\begin{center}
        \includegraphics[width=1\textwidth]{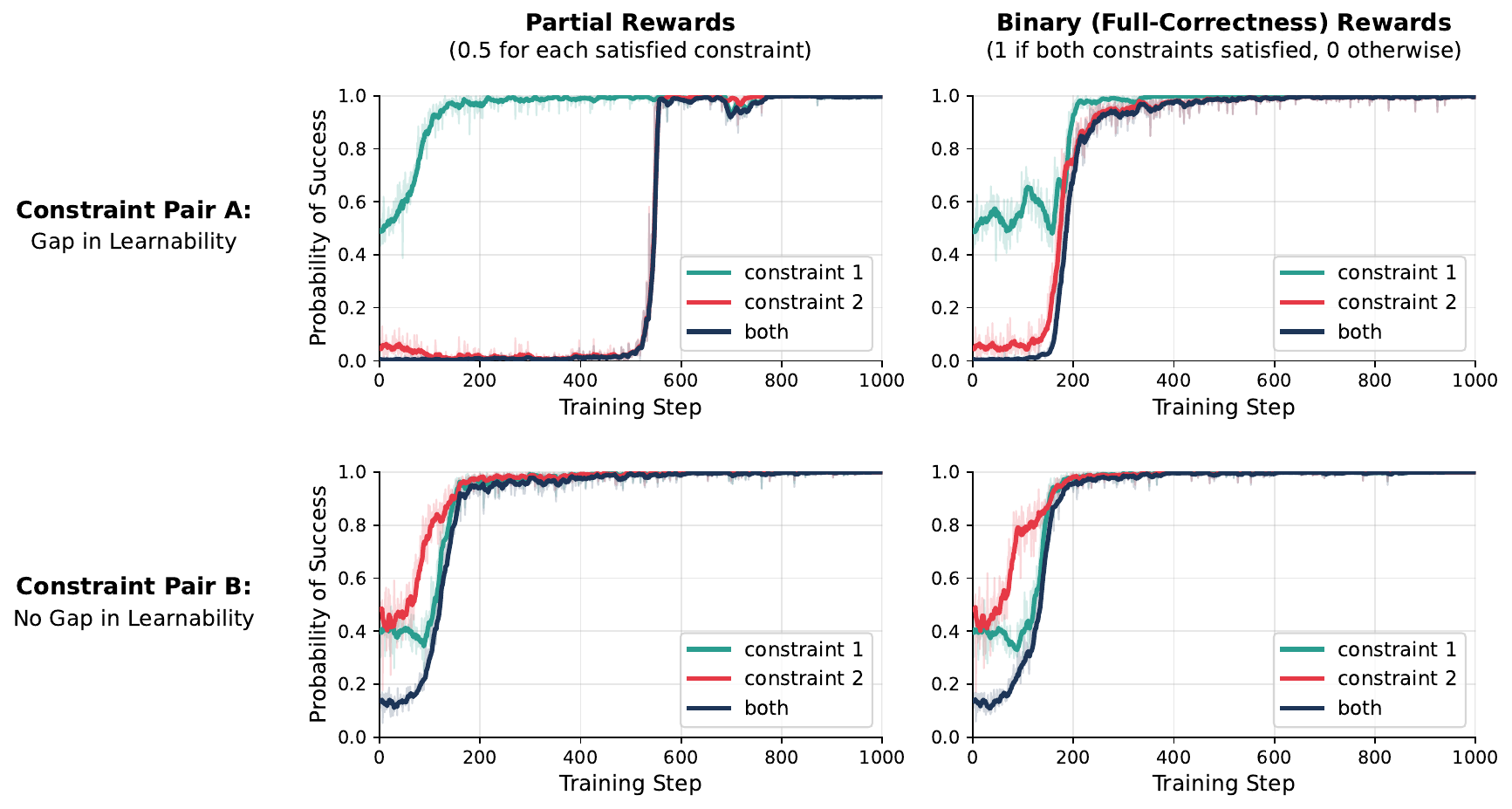}
	\end{center}
	\vspace{-2.25mm}
	\caption{
        \textbf{Rewarding partially correct outputs can impede policy gradient optimization.}
        This figure presents the results of an experiment analogous to that of \zcref{fig:rlvr_partial_rewards_qwen}, where the language model is OLMo-2-1B-Instruct instead of Qwen3-1.7B.
        Similarly to \zcref{fig:rlvr_partial_rewards_qwen}, when the initial probability of satisfying one constraint is noticeably higher than the probability of satisfying the other, rewarding partial correctness can cause the policy to stall on satisfying only the easier constraint.
        In this case, rewarding only fully correct outputs can lead to faster learning of both constraints.
        However, when both constraints are initially satisfied with similar probability, both reward designs tend to work well.
        See \zcref{app:experiments:details} for further implementation details.
    }
	\label{fig:rlvr_partial_rewards_olmo}
\end{figure}

\begin{figure}[H]
	\vspace{0mm}
	\begin{center}
        \includegraphics[width=1\textwidth]{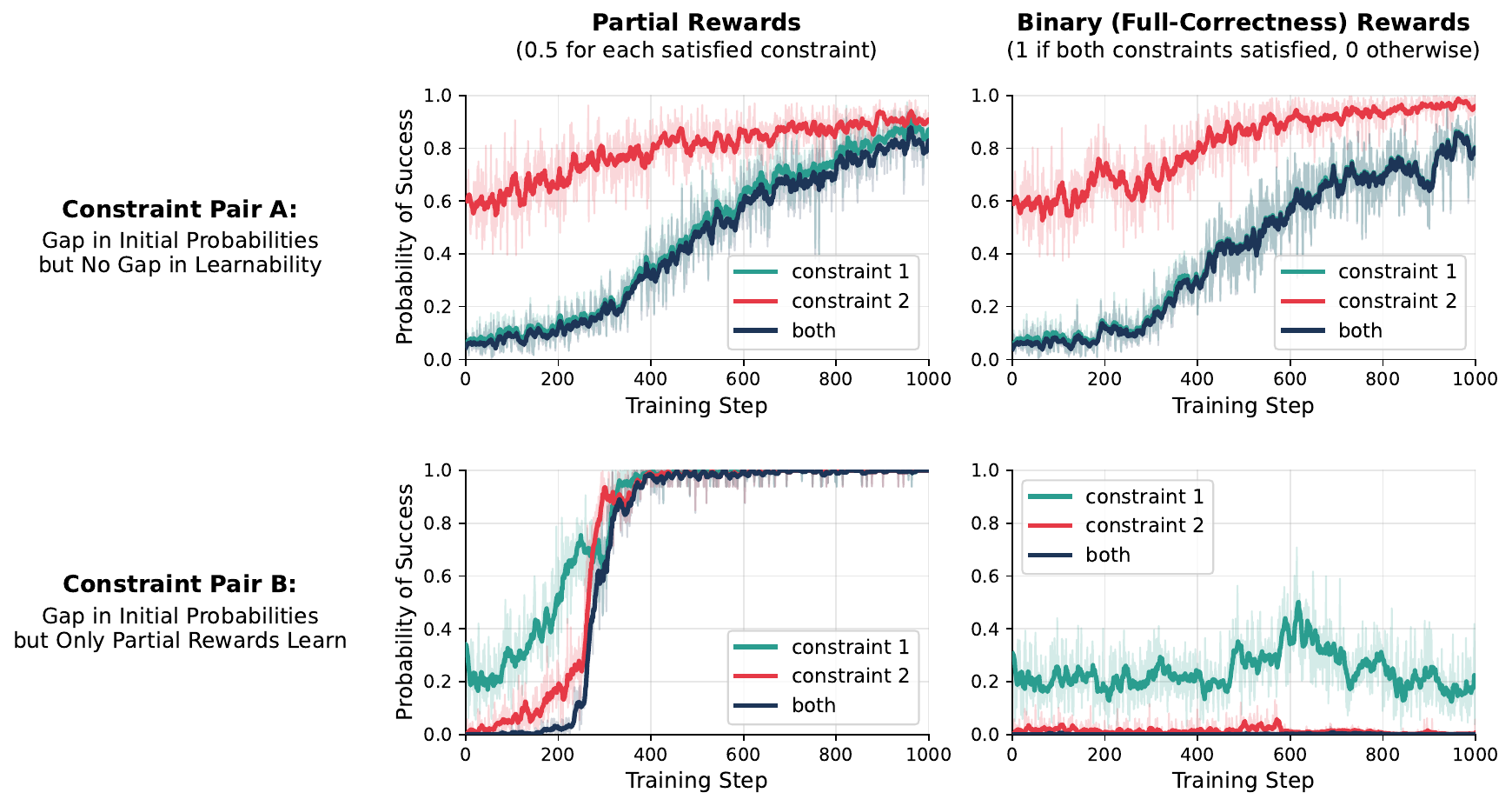}
	\end{center}
	\vspace{-2.25mm}
	\caption{
        \textbf{The probability of initially satisfying a constraint is not the sole factor determining its ease of learnability.}
        This figure supplements \zcref{fig:rlvr_partial_rewards_qwen} by demonstrating that when one constraint is initially satisfied with substantially higher probability than the other, it is possible for both constraints to be learned at a similar rate under partial rewards.
        In such cases, under binary rewards, we find that the constraints can either also be learned relatively in tandem (top row) or fail to be learned (bottom row).
        The latter often occurs when the initial probability of satisfying both constraints is near-zero.
        These experiments were conducted with the Qwen3-1.7B language model and the same training procedure as in \zcref{fig:rlvr_partial_rewards_qwen}, but used different constraint pairs.
    }
	\label{fig:rlvr_partial_rewards_qwen_gap_prob_but_partial_works}
\end{figure}

% Llama-3.2-3B-Instruct | UltraFeedback
\begin{table}[H]
\caption{
    Numerical results underlying \zcref{fig:rm_accuracy_with_gt_exps,fig:rm_accuracy_with_gt_exps_test_rewards} for the Llama-3.2-3B-Instruct language model.
    Specifically, reported is the ground truth reward increase (GT Increase) on train and test prompts (mean and standard deviation over three separate runs), along with accuracy values for all considered reward models.
    We measure accuracy values on examples from the policy gradient training set, with preference labels provided by the ArmoRM ground truth reward model.
}
\begin{center}
\fontsize{7.5}{9}\selectfont
\begin{tabular}{lcccccc}
\toprule
\addlinespace[1.5mm]
\multicolumn{7}{c}{\textbf{Language Model:} Llama-3.2-3B-Instruct} \\
\addlinespace[1.1mm]
\toprule
\addlinespace[1.5mm]
 & \multicolumn{6}{c}{Dataset: UltraFeedback} \\
\cmidrule(lr){2-7}
Reward Model (used for training) & GT Increase (train) & GT Increase (test) & Acc & Acc-W & HAcc & HAcc-W \\
\midrule
GRM-Llama3.2-3B-rewardmodel-ft & $\mathbf{0.374}\,{\scriptstyle \pm 0.011}$ & $\mathbf{0.369}\,{\scriptstyle \pm 0.002}$ & $0.850$ & $0.842$ & $\mathbf{0.930}$ & $\mathbf{0.925}$ \\
internlm2-1\_8b-reward & $0.220\,{\scriptstyle \pm 0.018}$ & $0.223\,{\scriptstyle \pm 0.015}$ & $0.828$ & $0.802$ & $0.892$ & $0.859$ \\
Llama-3.1-8B-Instruct-RM-RB2 & $0.268\,{\scriptstyle \pm 0.005}$ & $0.270\,{\scriptstyle \pm 0.004}$ & $0.810$ & $0.791$ & $0.880$ & $0.853$ \\
Llama-3-OffsetBias-RM-8B & $0.278\,{\scriptstyle \pm 0.007}$ & $0.278\,{\scriptstyle \pm 0.003}$ & $\mathbf{0.864}$ & $\mathbf{0.850}$ & $0.926$ & $0.914$ \\
llama-3-tulu-2-8b-uf-mean-rm & $0.273\,{\scriptstyle \pm 0.014}$ & $0.293\,{\scriptstyle \pm 0.008}$ & $0.792$ & $0.765$ & $0.856$ & $0.842$ \\
RM-Gemma-2B & $0.329\,{\scriptstyle \pm 0.013}$ & $0.334\,{\scriptstyle \pm 0.006}$ & $0.778$ & $0.764$ & $0.878$ & $0.867$ \\
RM-Mistral-7B & $0.290\,{\scriptstyle \pm 0.016}$ & $0.293\,{\scriptstyle \pm 0.008}$ & $0.840$ & $0.794$ & $0.890$ & $0.847$ \\
Skywork-Reward-V2-Llama-3.1-8B & $0.238\,{\scriptstyle \pm 0.010}$ & $0.243\,{\scriptstyle \pm 0.007}$ & $0.796$ & $0.739$ & $0.890$ & $0.874$ \\
Skywork-Reward-V2-Llama-3.2-1B & $0.286\,{\scriptstyle \pm 0.001}$ & $0.292\,{\scriptstyle \pm 0.005}$ & $0.806$ & $0.779$ & $0.920$ & $0.883$ \\
Skywork-Reward-V2-Qwen3-0.6B & $0.277\,{\scriptstyle \pm 0.002}$ & $0.281\,{\scriptstyle \pm 0.007}$ & $0.802$ & $0.766$ & $0.904$ & $0.879$ \\
Skywork-Reward-V2-Qwen3-1.7B & $0.272\,{\scriptstyle \pm 0.004}$ & $0.276\,{\scriptstyle \pm 0.005}$ & $0.826$ & $0.793$ & $0.906$ & $0.872$ \\
Skywork-Reward-V2-Qwen3-4B & $0.268\,{\scriptstyle \pm 0.008}$ & $0.274\,{\scriptstyle \pm 0.008}$ & $0.844$ & $0.809$ & $0.912$ & $0.885$ \\
Skywork-Reward-V2-Qwen3-8B & $0.281\,{\scriptstyle \pm 0.002}$ & $0.285\,{\scriptstyle \pm 0.004}$ & $0.834$ & $0.800$ & $0.924$ & $0.894$ \\
\bottomrule
\end{tabular}
\end{center}
\label{tab:rm-selection-llama-3b-instruct-ultrafeedback}
\end{table}

% Llama-3.2-3B-Instruct | RewardBench2
\begin{table}[H]
\caption{
    For the Llama-3.2-3B-Instruct language model, this table reports reward model accuracy values computed on examples from RewardBench2, with preference labels provided by the ArmoRM ground truth reward model.
    These results were used for producing \zcref{fig:rm_accuracy_with_gt_exps_rb2}.
}
\begin{center}
\fontsize{7.5}{9}\selectfont
\begin{tabular}{lcccc}
\toprule
\addlinespace[1.5mm]
\multicolumn{5}{c}{\textbf{Language Model:} Llama-3.2-3B-Instruct} \\
\addlinespace[1.1mm]
\toprule
\addlinespace[1.5mm]
 & \multicolumn{4}{c}{Dataset: RewardBench2} \\
\cmidrule(lr){2-5}
Reward Model (used for training) & Acc & Acc-W & HAcc & HAcc-W \\
\midrule
GRM-Llama3.2-3B-rewardmodel-ft & $0.698$ & $0.646$ & $0.753$ & $0.684$ \\
internlm2-1\_8b-reward & $0.467$ & $0.467$ & $0.555$ & $0.571$ \\
Llama-3.1-8B-Instruct-RM-RB2 & $\mathbf{0.739}$ & $0.714$ & $\mathbf{0.767}$ & $0.733$ \\
Llama-3-OffsetBias-RM-8B & $0.704$ & $0.693$ & $0.735$ & $0.721$ \\
llama-3-tulu-2-8b-uf-mean-rm & $0.602$ & $0.623$ & $0.626$ & $0.633$ \\
RM-Gemma-2B & $0.387$ & $0.408$ & $0.442$ & $0.439$ \\
RM-Mistral-7B & $0.653$ & $0.630$ & $0.684$ & $0.650$ \\
Skywork-Reward-V2-Llama-3.1-8B & $0.720$ & $\mathbf{0.725}$ & $0.751$ & $\mathbf{0.742}$ \\
Skywork-Reward-V2-Llama-3.2-1B & $0.672$ & $0.638$ & $0.718$ & $0.665$ \\
Skywork-Reward-V2-Qwen3-0.6B & $0.647$ & $0.644$ & $0.681$ & $0.665$ \\
Skywork-Reward-V2-Qwen3-1.7B & $0.690$ & $0.672$ & $0.721$ & $0.688$ \\
Skywork-Reward-V2-Qwen3-4B & $0.712$ & $0.700$ & $0.732$ & $0.717$ \\
Skywork-Reward-V2-Qwen3-8B & $0.720$ & $0.707$ & $0.744$ & $0.724$ \\
\bottomrule
\end{tabular}
\end{center}
\label{tab:rm-selection-llama-3b-instruct-rewardbench2}
\end{table}

% Llama-3.2-1B-Instruct | UltraFeedback
\begin{table}[H]
\caption{
    Numerical results underlying \zcref{fig:rm_accuracy_with_gt_exps,fig:rm_accuracy_with_gt_exps_test_rewards} for the Llama-3.2-1B-Instruct language model.
    Specifically, reported is the ground truth reward increase (GT Increase) on train and test prompts (mean and standard deviation over three separate runs), along with accuracy values for all considered reward models.
    We measure accuracy values on examples from the policy gradient training set, with preference labels provided by the ArmoRM ground truth reward model.
}
\begin{center}
\fontsize{7.5}{9}\selectfont
\begin{tabular}{lcccccc}
\toprule
\addlinespace[1.5mm]
\multicolumn{7}{c}{\textbf{Language Model:} Llama-3.2-1B-Instruct} \\
\addlinespace[1.1mm]
\toprule
\addlinespace[1.5mm]
 & \multicolumn{6}{c}{Dataset: UltraFeedback} \\
\cmidrule(lr){2-7}
Reward Model (used for training) & GT Increase (train) & GT Increase (test) & Acc & Acc-W & HAcc & HAcc-W \\
\midrule
GRM-Llama3.2-3B-rewardmodel-ft & $\mathbf{0.457}\,{\scriptstyle \pm 0.015}$ & $\mathbf{0.465}\,{\scriptstyle \pm 0.014}$ & $0.850$ & $0.835$ & $\mathbf{0.896}$ & $\mathbf{0.895}$ \\
internlm2-1\_8b-reward & $-1.945\,{\scriptstyle \pm 0.100}$ & $-1.960\,{\scriptstyle \pm 0.105}$ & $0.828$ & $0.801$ & $0.850$ & $0.820$ \\
Llama-3.1-8B-Instruct-RM-RB2 & $0.348\,{\scriptstyle \pm 0.010}$ & $0.359\,{\scriptstyle \pm 0.012}$ & $0.810$ & $0.796$ & $0.838$ & $0.822$ \\
Llama-3-OffsetBias-RM-8B & $0.376\,{\scriptstyle \pm 0.006}$ & $0.385\,{\scriptstyle \pm 0.010}$ & $\mathbf{0.864}$ & $\mathbf{0.852}$ & $0.888$ & $0.876$ \\
llama-3-tulu-2-8b-uf-mean-rm & $0.421\,{\scriptstyle \pm 0.012}$ & $0.433\,{\scriptstyle \pm 0.021}$ & $0.792$ & $0.777$ & $0.826$ & $0.809$ \\
RM-Gemma-2B & $0.403\,{\scriptstyle \pm 0.008}$ & $0.425\,{\scriptstyle \pm 0.012}$ & $0.778$ & $0.775$ & $0.864$ & $0.874$ \\
RM-Mistral-7B & $0.402\,{\scriptstyle \pm 0.011}$ & $0.418\,{\scriptstyle \pm 0.009}$ & $0.840$ & $0.819$ & $0.864$ & $0.854$ \\
Skywork-Reward-V2-Llama-3.1-8B & $0.362\,{\scriptstyle \pm 0.005}$ & $0.364\,{\scriptstyle \pm 0.009}$ & $0.796$ & $0.742$ & $0.822$ & $0.773$ \\
Skywork-Reward-V2-Llama-3.2-1B & $0.390\,{\scriptstyle \pm 0.004}$ & $0.398\,{\scriptstyle \pm 0.004}$ & $0.806$ & $0.779$ & $0.876$ & $0.837$ \\
Skywork-Reward-V2-Qwen3-0.6B & $0.384\,{\scriptstyle \pm 0.008}$ & $0.396\,{\scriptstyle \pm 0.008}$ & $0.802$ & $0.773$ & $0.840$ & $0.807$ \\
Skywork-Reward-V2-Qwen3-1.7B & $0.390\,{\scriptstyle \pm 0.010}$ & $0.400\,{\scriptstyle \pm 0.010}$ & $0.826$ & $0.796$ & $0.852$ & $0.814$ \\
Skywork-Reward-V2-Qwen3-4B & $0.383\,{\scriptstyle \pm 0.010}$ & $0.388\,{\scriptstyle \pm 0.014}$ & $0.844$ & $0.823$ & $0.862$ & $0.835$ \\
Skywork-Reward-V2-Qwen3-8B & $0.389\,{\scriptstyle \pm 0.014}$ & $0.393\,{\scriptstyle \pm 0.005}$ & $0.834$ & $0.809$ & $0.850$ & $0.820$ \\
\bottomrule
\end{tabular}
\end{center}
\label{tab:rm-selection-llama-1b-instruct-ultrafeedback}
\end{table}

% Llama-3.2-1B-Instruct | RewardBench2
\begin{table}[H]
\caption{
    For the Llama-3.2-1B-Instruct language model, this table reports reward model accuracy values computed on examples from RewardBench2, with preference labels provided by the ArmoRM ground truth reward model.
    These results were used for producing \zcref{fig:rm_accuracy_with_gt_exps_rb2}.
}
\begin{center}
\fontsize{7.5}{9}\selectfont
\begin{tabular}{lcccc}
\toprule
\addlinespace[1.5mm]
\multicolumn{5}{c}{\textbf{Language Model:} Llama-3.2-1B-Instruct} \\
\addlinespace[1.1mm]
\toprule
\addlinespace[1.5mm]
 & \multicolumn{4}{c}{Dataset: RewardBench2} \\
\cmidrule(lr){2-5}
Reward Model (used for training) & Acc & Acc-W & HAcc & HAcc-W \\
\midrule
GRM-Llama3.2-3B-rewardmodel-ft & $0.696$ & $0.642$ & $0.714$ & $0.654$ \\
internlm2-1\_8b-reward & $0.467$ & $0.462$ & $0.523$ & $0.530$ \\
Llama-3.1-8B-Instruct-RM-RB2 & $\mathbf{0.739}$ & $0.711$ & $\mathbf{0.746}$ & $0.716$ \\
Llama-3-OffsetBias-RM-8B & $0.704$ & $0.693$ & $0.710$ & $0.699$ \\
llama-3-tulu-2-8b-uf-mean-rm & $0.602$ & $0.621$ & $0.612$ & $0.623$ \\
RM-Gemma-2B & $0.387$ & $0.417$ & $0.427$ & $0.445$ \\
RM-Mistral-7B & $0.653$ & $0.631$ & $0.663$ & $0.637$ \\
Skywork-Reward-V2-Llama-3.1-8B & $0.720$ & $\mathbf{0.727}$ & $0.729$ & $\mathbf{0.730}$ \\
Skywork-Reward-V2-Llama-3.2-1B & $0.672$ & $0.634$ & $0.693$ & $0.646$ \\
Skywork-Reward-V2-Qwen3-0.6B & $0.647$ & $0.649$ & $0.662$ & $0.657$ \\
Skywork-Reward-V2-Qwen3-1.7B & $0.690$ & $0.682$ & $0.698$ & $0.685$ \\
Skywork-Reward-V2-Qwen3-4B & $0.712$ & $0.707$ & $0.718$ & $0.709$ \\
Skywork-Reward-V2-Qwen3-8B & $0.720$ & $0.716$ & $0.726$ & $0.719$ \\
\bottomrule
\end{tabular}
\end{center}
\label{tab:rm-selection-llama-1b-instruct-rewardbench2}
\end{table}

% OLMo-2-1B-SFT | UltraFeedback
\begin{table}[H]
\caption{
    Numerical results underlying \zcref{fig:rm_accuracy_with_gt_exps,fig:rm_accuracy_with_gt_exps_test_rewards} for the OLMo-2-1B-SFT language model.
    Specifically, reported is the ground truth reward increase (GT Increase) on train and test prompts (mean and standard deviation over three separate runs), along with accuracy values for all considered reward models.
    We measure accuracy values on examples from the policy gradient training set, with preference labels provided by the ArmoRM ground truth reward model.
}
\begin{center}
\fontsize{7.5}{9}\selectfont
\begin{tabular}{lcccccc}
\toprule
\addlinespace[1.5mm]
\multicolumn{7}{c}{\textbf{Language Model:} OLMo-2-1B-SFT} \\
\addlinespace[1.1mm]
\toprule
\addlinespace[1.5mm]
 & \multicolumn{6}{c}{Dataset: UltraFeedback} \\
\cmidrule(lr){2-7}
Reward Model (used for training) & GT Increase (train) & GT Increase (test) & Acc & Acc-W & HAcc & HAcc-W \\
\midrule
GRM-Llama3.2-3B-rewardmodel-ft & $0.218\,{\scriptstyle \pm 0.011}$ & $0.198\,{\scriptstyle \pm 0.014}$ & $0.850$ & $0.827$ & $\mathbf{0.880}$ & $0.861$ \\
internlm2-1\_8b-reward & $\mathbf{0.281}\,{\scriptstyle \pm 0.006}$ & $\mathbf{0.251}\,{\scriptstyle \pm 0.011}$ & $0.828$ & $0.793$ & $0.854$ & $0.809$ \\
Llama-3.1-8B-Instruct-RM-RB2 & $0.212\,{\scriptstyle \pm 0.004}$ & $0.191\,{\scriptstyle \pm 0.006}$ & $0.810$ & $0.778$ & $0.828$ & $0.789$ \\
Llama-3-OffsetBias-RM-8B & $0.236\,{\scriptstyle \pm 0.009}$ & $0.209\,{\scriptstyle \pm 0.009}$ & $\mathbf{0.864}$ & $\mathbf{0.856}$ & $0.878$ & $\mathbf{0.863}$ \\
llama-3-tulu-2-8b-uf-mean-rm & $0.260\,{\scriptstyle \pm 0.006}$ & $0.234\,{\scriptstyle \pm 0.002}$ & $0.792$ & $0.759$ & $0.814$ & $0.782$ \\
RM-Gemma-2B & $0.275\,{\scriptstyle \pm 0.004}$ & $0.242\,{\scriptstyle \pm 0.006}$ & $0.778$ & $0.760$ & $0.814$ & $0.800$ \\
RM-Mistral-7B & $0.266\,{\scriptstyle \pm 0.006}$ & $0.233\,{\scriptstyle \pm 0.004}$ & $0.840$ & $0.798$ & $0.846$ & $0.809$ \\
Skywork-Reward-V2-Llama-3.1-8B & $0.213\,{\scriptstyle \pm 0.002}$ & $0.192\,{\scriptstyle \pm 0.007}$ & $0.796$ & $0.748$ & $0.818$ & $0.769$ \\
Skywork-Reward-V2-Llama-3.2-1B & $0.234\,{\scriptstyle \pm 0.006}$ & $0.212\,{\scriptstyle \pm 0.004}$ & $0.806$ & $0.763$ & $0.826$ & $0.776$ \\
Skywork-Reward-V2-Qwen3-0.6B & $0.238\,{\scriptstyle \pm 0.001}$ & $0.216\,{\scriptstyle \pm 0.005}$ & $0.802$ & $0.757$ & $0.824$ & $0.771$ \\
Skywork-Reward-V2-Qwen3-1.7B & $0.241\,{\scriptstyle \pm 0.001}$ & $0.219\,{\scriptstyle \pm 0.009}$ & $0.826$ & $0.784$ & $0.840$ & $0.794$ \\
Skywork-Reward-V2-Qwen3-4B & $0.225\,{\scriptstyle \pm 0.009}$ & $0.206\,{\scriptstyle \pm 0.002}$ & $0.844$ & $0.807$ & $0.856$ & $0.814$ \\
Skywork-Reward-V2-Qwen3-8B & $0.226\,{\scriptstyle \pm 0.008}$ & $0.204\,{\scriptstyle \pm 0.006}$ & $0.834$ & $0.794$ & $0.846$ & $0.802$ \\
\bottomrule
\end{tabular}
\end{center}
\label{tab:rm-selection-olmo-1b-sft-ultrafeedback}
\end{table}

% OLMo-2-1B-SFT | RewardBench2
\begin{table}[H]
\caption{
    For the OLMo-2-1B-SFT language model, this table reports reward model accuracy values computed on examples from RewardBench2, with preference labels provided by the ArmoRM ground truth reward model.
    These results were used for producing \zcref{fig:rm_accuracy_with_gt_exps_rb2}.    
}
\begin{center}
\fontsize{7.5}{9}\selectfont
\begin{tabular}{lcccc}
\toprule
\addlinespace[1.5mm]
\multicolumn{5}{c}{\textbf{Language Model:} OLMo-2-1B-SFT} \\
\addlinespace[1.1mm]
\toprule
\addlinespace[1.5mm]
 & \multicolumn{4}{c}{Dataset: RewardBench2} \\
\cmidrule(lr){2-5}
Reward Model (used for training) & Acc & Acc-W & HAcc & HAcc-W \\
\midrule
GRM-Llama3.2-3B-rewardmodel-ft & $0.691$ & $0.651$ & $0.708$ & $0.658$ \\
internlm2-1\_8b-reward & $0.467$ & $0.465$ & $0.526$ & $0.527$ \\
Llama-3.1-8B-Instruct-RM-RB2 & $\mathbf{0.739}$ & $0.715$ & $\mathbf{0.750}$ & $0.721$ \\
Llama-3-OffsetBias-RM-8B & $0.704$ & $0.703$ & $0.708$ & $0.705$ \\
llama-3-tulu-2-8b-uf-mean-rm & $0.602$ & $0.622$ & $0.615$ & $0.624$ \\
RM-Gemma-2B & $0.387$ & $0.414$ & $0.402$ & $0.418$ \\
RM-Mistral-7B & $0.653$ & $0.645$ & $0.659$ & $0.647$ \\
Skywork-Reward-V2-Llama-3.1-8B & $0.720$ & $0.735$ & $0.733$ & $0.742$ \\
Skywork-Reward-V2-Llama-3.2-1B & $0.672$ & $0.653$ & $0.683$ & $0.658$ \\
Skywork-Reward-V2-Qwen3-0.6B & $0.647$ & $0.667$ & $0.666$ & $0.675$ \\
Skywork-Reward-V2-Qwen3-1.7B & $0.690$ & $0.703$ & $0.704$ & $0.708$ \\
Skywork-Reward-V2-Qwen3-4B & $0.712$ & $0.723$ & $0.727$ & $0.730$ \\
Skywork-Reward-V2-Qwen3-8B & $0.720$ & $\mathbf{0.738}$ & $0.733$ & $\mathbf{0.744}$ \\
\bottomrule
\end{tabular}
\end{center}
\label{tab:rm-selection-olmo-1b-sft-rewardbench2}
\end{table}

% Qwen3-1.7B-Base | UltraFeedback
\begin{table}[H]
\caption{
    Numerical results underlying \zcref{fig:rm_accuracy_with_gt_exps,fig:rm_accuracy_with_gt_exps_test_rewards} for the Qwen3-1.7B-Base language model.
    Specifically, reported is the ground truth reward increase (GT Increase) on train and test prompts (mean and standard deviation over three separate runs), along with accuracy values for all considered reward models.
    We measure accuracy values on examples from the policy gradient training set, with preference labels provided by the ArmoRM ground truth reward model.
}
\begin{center}
\fontsize{7.5}{9}\selectfont
\begin{tabular}{lcccccc}
\toprule
\addlinespace[1.5mm]
\multicolumn{7}{c}{\textbf{Language Model:} Qwen3-1.7B-Base} \\
\addlinespace[1.1mm]
\toprule
\addlinespace[1.5mm]
 & \multicolumn{6}{c}{Dataset: UltraFeedback} \\
\cmidrule(lr){2-7}
Reward Model (used for training) & GT Increase (train) & GT Increase (test) & Acc & Acc-W & HAcc & HAcc-W \\
\midrule
GRM-Llama3.2-3B-rewardmodel-ft & $\mathbf{2.257}\,{\scriptstyle \pm 0.009}$ & $\mathbf{2.264}\,{\scriptstyle \pm 0.006}$ & $0.850$ & $0.859$ & $0.850$ & $0.859$ \\
internlm2-1\_8b-reward & $-0.969\,{\scriptstyle \pm 0.038}$ & $-0.964\,{\scriptstyle \pm 0.052}$ & $0.828$ & $0.807$ & $0.832$ & $0.807$ \\
Llama-3.1-8B-Instruct-RM-RB2 & $2.113\,{\scriptstyle \pm 0.007}$ & $2.106\,{\scriptstyle \pm 0.006}$ & $0.810$ & $0.817$ & $0.810$ & $0.817$ \\
Llama-3-OffsetBias-RM-8B & $2.187\,{\scriptstyle \pm 0.050}$ & $2.197\,{\scriptstyle \pm 0.056}$ & $\mathbf{0.864}$ & $\mathbf{0.865}$ & $\mathbf{0.866}$ & $\mathbf{0.868}$ \\
llama-3-tulu-2-8b-uf-mean-rm & $2.206\,{\scriptstyle \pm 0.006}$ & $2.207\,{\scriptstyle \pm 0.004}$ & $0.792$ & $0.794$ & $0.794$ & $0.795$ \\
RM-Gemma-2B & $2.011\,{\scriptstyle \pm 0.074}$ & $2.026\,{\scriptstyle \pm 0.065}$ & $0.778$ & $0.790$ & $0.778$ & $0.790$ \\
RM-Mistral-7B & $2.207\,{\scriptstyle \pm 0.006}$ & $2.205\,{\scriptstyle \pm 0.004}$ & $0.840$ & $0.843$ & $0.840$ & $0.843$ \\
Skywork-Reward-V2-Llama-3.1-8B & $2.179\,{\scriptstyle \pm 0.008}$ & $2.177\,{\scriptstyle \pm 0.007}$ & $0.796$ & $0.756$ & $0.796$ & $0.756$ \\
Skywork-Reward-V2-Llama-3.2-1B & $2.213\,{\scriptstyle \pm 0.004}$ & $2.216\,{\scriptstyle \pm 0.004}$ & $0.806$ & $0.793$ & $0.806$ & $0.793$ \\
Skywork-Reward-V2-Qwen3-0.6B & $2.200\,{\scriptstyle \pm 0.007}$ & $2.213\,{\scriptstyle \pm 0.008}$ & $0.802$ & $0.768$ & $0.802$ & $0.768$ \\
Skywork-Reward-V2-Qwen3-1.7B & $2.203\,{\scriptstyle \pm 0.009}$ & $2.207\,{\scriptstyle \pm 0.005}$ & $0.826$ & $0.803$ & $0.826$ & $0.803$ \\
Skywork-Reward-V2-Qwen3-4B & $2.201\,{\scriptstyle \pm 0.008}$ & $2.199\,{\scriptstyle \pm 0.003}$ & $0.844$ & $0.821$ & $0.844$ & $0.821$ \\
Skywork-Reward-V2-Qwen3-8B & $2.214\,{\scriptstyle \pm 0.008}$ & $2.216\,{\scriptstyle \pm 0.008}$ & $0.834$ & $0.814$ & $0.834$ & $0.814$ \\
\bottomrule
\end{tabular}
\end{center}
\label{tab:rm-selection-qwen-1-7b-base-ultrafeedback}
\end{table}

% Qwen3-1.7B-Base | RewardBench2
\begin{table}[H]
\caption{
    For the Qwen3-1.7B-Base language model, this table reports reward model accuracy values computed on examples from RewardBench2, with preference labels provided by the ArmoRM ground truth reward model.
    These results were used for producing \zcref{fig:rm_accuracy_with_gt_exps_rb2}.
}
\begin{center}
\fontsize{7.5}{9}\selectfont
\begin{tabular}{lcccc}
\toprule
\addlinespace[1.5mm]
\multicolumn{5}{c}{\textbf{Language Model:} Qwen3-1.7B-Base} \\
\addlinespace[1.1mm]
\toprule
\addlinespace[1.5mm]
 & \multicolumn{4}{c}{Dataset: RewardBench2} \\
\cmidrule(lr){2-5}
Reward Model (used for training) & Acc & Acc-W & HAcc & HAcc-W \\
\midrule
GRM-Llama3.2-3B-rewardmodel-ft & $0.691$ & $0.648$ & $0.691$ & $0.648$ \\
internlm2-1\_8b-reward & $0.467$ & $0.470$ & $0.469$ & $0.470$ \\
Llama-3.1-8B-Instruct-RM-RB2 & $\mathbf{0.739}$ & $0.714$ & $\mathbf{0.739}$ & $0.714$ \\
Llama-3-OffsetBias-RM-8B & $0.704$ & $0.697$ & $0.705$ & $0.697$ \\
llama-3-tulu-2-8b-uf-mean-rm & $0.602$ & $0.652$ & $0.602$ & $0.652$ \\
RM-Gemma-2B & $0.387$ & $0.421$ & $0.387$ & $0.421$ \\
RM-Mistral-7B & $0.653$ & $0.652$ & $0.653$ & $0.652$ \\
Skywork-Reward-V2-Llama-3.1-8B & $0.720$ & $0.729$ & $0.720$ & $0.729$ \\
Skywork-Reward-V2-Llama-3.2-1B & $0.672$ & $0.655$ & $0.672$ & $0.655$ \\
Skywork-Reward-V2-Qwen3-0.6B & $0.647$ & $0.669$ & $0.647$ & $0.669$ \\
Skywork-Reward-V2-Qwen3-1.7B & $0.690$ & $0.700$ & $0.690$ & $0.700$ \\
Skywork-Reward-V2-Qwen3-4B & $0.712$ & $0.723$ & $0.712$ & $0.723$ \\
Skywork-Reward-V2-Qwen3-8B & $0.720$ & $\mathbf{0.732}$ & $0.720$ & $\mathbf{0.732}$ \\
\bottomrule
\end{tabular}
\end{center}
\label{tab:rm-selection-qwen-1-7b-base-rewardbench2}
\end{table}

% Llama-1B-Instruct Different Dataset | WildChat-IF
\begin{table}[H]
\caption{
    Numerical results underlying \zcref{fig:rm_accuracy_with_gt_exps_diff_dataset_diff_gt} for the Llama-3.2-1B-Instruct language model trained on the WildChat-IF-On-Policy-8B dataset.
    Specifically, reported is the ground truth reward increase (GT Increase) on train and test prompts (mean and standard deviation over three separate runs), along with accuracy values for all considered reward models.
    We measure accuracy values on examples from the policy gradient training set, with preference labels provided by the ArmoRM ground truth reward model.
}
\begin{center}
\fontsize{7.5}{9}\selectfont
\begin{tabular}{lcccccc}
\toprule
\addlinespace[1.5mm]
\multicolumn{7}{c}{\textbf{Language Model:} Llama-3.2-1B-Instruct} \\
\addlinespace[1.1mm]
\toprule
\addlinespace[1.5mm]
 & \multicolumn{6}{c}{Dataset: WildChat-IF-On-Policy-8B} \\
\cmidrule(lr){2-7}
Reward Model (used for training) & GT Increase (train) & GT Increase (test) & Acc & Acc-W & HAcc & HAcc-W \\
\midrule
GRM-Llama3.2-3B-rewardmodel-ft & $0.361\,{\scriptstyle \pm 0.033}$ & $0.338\,{\scriptstyle \pm 0.035}$ & $0.838$ & $0.841$ & $0.852$ & $0.847$ \\
internlm2-1\_8b-reward & $-0.689\,{\scriptstyle \pm 0.924}$ & $-0.693\,{\scriptstyle \pm 0.915}$ & $0.740$ & $0.732$ & $0.762$ & $0.748$ \\
Llama-3.1-8B-Instruct-RM-RB2 & $0.297\,{\scriptstyle \pm 0.049}$ & $0.277\,{\scriptstyle \pm 0.049}$ & $0.848$ & $0.826$ & $0.858$ & $0.832$ \\
Llama-3-OffsetBias-RM-8B & $0.360\,{\scriptstyle \pm 0.049}$ & $0.329\,{\scriptstyle \pm 0.035}$ & $0.846$ & $0.824$ & $0.850$ & $0.826$ \\
llama-3-tulu-2-8b-uf-mean-rm & $0.380\,{\scriptstyle \pm 0.050}$ & $0.350\,{\scriptstyle \pm 0.041}$ & $0.782$ & $0.777$ & $0.792$ & $0.779$ \\
RM-Gemma-2B & $\mathbf{0.391}\,{\scriptstyle \pm 0.057}$ & $\mathbf{0.369}\,{\scriptstyle \pm 0.051}$ & $0.738$ & $0.735$ & $0.776$ & $0.773$ \\
RM-Mistral-7B & $0.356\,{\scriptstyle \pm 0.045}$ & $0.324\,{\scriptstyle \pm 0.041}$ & $0.838$ & $0.827$ & $0.848$ & $0.834$ \\
Skywork-Reward-V2-Llama-3.1-8B & $0.306\,{\scriptstyle \pm 0.054}$ & $0.278\,{\scriptstyle \pm 0.048}$ & $0.856$ & $0.840$ & $0.868$ & $0.842$ \\
Skywork-Reward-V2-Llama-3.2-1B & $0.329\,{\scriptstyle \pm 0.054}$ & $0.318\,{\scriptstyle \pm 0.041}$ & $0.854$ & $0.845$ & $0.872$ & $0.863$ \\
Skywork-Reward-V2-Qwen3-0.6B & $0.321\,{\scriptstyle \pm 0.064}$ & $0.302\,{\scriptstyle \pm 0.051}$ & $0.818$ & $0.832$ & $0.836$ & $0.850$ \\
Skywork-Reward-V2-Qwen3-1.7B & $0.329\,{\scriptstyle \pm 0.058}$ & $0.299\,{\scriptstyle \pm 0.044}$ & $0.844$ & $0.836$ & $0.850$ & $0.838$ \\
Skywork-Reward-V2-Qwen3-4B & $0.330\,{\scriptstyle \pm 0.039}$ & $0.297\,{\scriptstyle \pm 0.035}$ & $\mathbf{0.876}$ & $0.864$ & $0.878$ & $0.864$ \\
Skywork-Reward-V2-Qwen3-8B & $0.344\,{\scriptstyle \pm 0.042}$ & $0.315\,{\scriptstyle \pm 0.036}$ & $0.874$ & $\mathbf{0.869}$ & $\mathbf{0.880}$ & $\mathbf{0.870}$ \\
\bottomrule
\end{tabular}
\end{center}
\label{tab:rm-selection-llama-1b-instruct-different-dataset-wildchat-if}
\end{table}

% Llama-1B-Instruct Different Ground Truth | UltraFeedback (Skywork Reward V2 Llama 3 1 8B)
\begin{table}[H]
\caption{
    Numerical results underlying \zcref{fig:rm_accuracy_with_gt_exps_diff_dataset_diff_gt} for the Llama-3.2-1B-Instruct language model.
    Specifically, reported is the ground truth reward increase (GT Increase) on train and test prompts (mean and standard deviation over three separate runs), along with accuracy values for all considered reward models.
    We measure accuracy values on examples from the policy gradient training set, with preference labels provided by the Skywork-Reward-V2-Llama-3.1-8B ground truth reward model.
}
\begin{center}
\fontsize{7.5}{9}\selectfont
\begin{tabular}{lcccccc}
\toprule
\addlinespace[1.5mm]
\multicolumn{7}{c}{\textbf{Language Model:} Llama-3.2-1B-Instruct} \\
\addlinespace[1.1mm]
\toprule
\addlinespace[1.5mm]
 & \multicolumn{6}{c}{Dataset: UltraFeedback (Ground Truth: Skywork-Reward-V2-Llama-3.1-8B)} \\
\cmidrule(lr){2-7}
Reward Model (used for training) & GT Increase (train) & GT Increase (test) & Acc & Acc-W & HAcc & HAcc-W \\
\midrule
ArmoRM-Llama3-8B-v0-1 & $0.420\,{\scriptstyle \pm 0.105}$ & $0.389\,{\scriptstyle \pm 0.107}$ & $0.810$ & $0.779$ & $0.834$ & $0.811$ \\
GRM-Llama3.2-3B-rewardmodel-ft & $0.466\,{\scriptstyle \pm 0.084}$ & $0.462\,{\scriptstyle \pm 0.103}$ & $0.808$ & $0.805$ & $0.842$ & $0.837$ \\
internlm2-1\_8b-reward & $-1.768\,{\scriptstyle \pm 0.221}$ & $-1.824\,{\scriptstyle \pm 0.247}$ & $0.774$ & $0.721$ & $0.802$ & $0.745$ \\
Llama-3.1-8B-Instruct-RM-RB2 & $0.416\,{\scriptstyle \pm 0.082}$ & $0.406\,{\scriptstyle \pm 0.078}$ & $0.862$ & $0.804$ & $0.878$ & $0.822$ \\
Llama-3-OffsetBias-RM-8B & $0.440\,{\scriptstyle \pm 0.062}$ & $0.417\,{\scriptstyle \pm 0.072}$ & $0.826$ & $0.814$ & $0.852$ & $0.836$ \\
llama-3-tulu-2-8b-uf-mean-rm & $0.414\,{\scriptstyle \pm 0.146}$ & $0.397\,{\scriptstyle \pm 0.152}$ & $0.810$ & $0.767$ & $0.826$ & $0.776$ \\
RM-Gemma-2B & $0.330\,{\scriptstyle \pm 0.176}$ & $0.338\,{\scriptstyle \pm 0.178}$ & $0.740$ & $0.703$ & $0.832$ & $0.785$ \\
RM-Mistral-7B & $0.471\,{\scriptstyle \pm 0.116}$ & $0.453\,{\scriptstyle \pm 0.119}$ & $0.824$ & $0.801$ & $0.848$ & $0.816$ \\
Skywork-Reward-V2-Llama-3.2-1B & $\mathbf{0.533}\,{\scriptstyle \pm 0.095}$ & $\mathbf{0.522}\,{\scriptstyle \pm 0.106}$ & $0.878$ & $0.886$ & $\mathbf{0.920}$ & $\mathbf{0.914}$ \\
Skywork-Reward-V2-Qwen3-0.6B & $0.503\,{\scriptstyle \pm 0.101}$ & $0.498\,{\scriptstyle \pm 0.101}$ & $0.852$ & $0.850$ & $0.898$ & $0.890$ \\
Skywork-Reward-V2-Qwen3-1.7B & $0.510\,{\scriptstyle \pm 0.101}$ & $0.489\,{\scriptstyle \pm 0.097}$ & $0.856$ & $0.864$ & $0.904$ & $0.907$ \\
Skywork-Reward-V2-Qwen3-4B & $0.503\,{\scriptstyle \pm 0.078}$ & $0.479\,{\scriptstyle \pm 0.086}$ & $0.894$ & $0.881$ & $0.912$ & $0.899$ \\
Skywork-Reward-V2-Qwen3-8B & $0.504\,{\scriptstyle \pm 0.080}$ & $0.488\,{\scriptstyle \pm 0.078}$ & $\mathbf{0.896}$ & $\mathbf{0.893}$ & $0.908$ & $0.908$ \\
\bottomrule
\end{tabular}
\end{center}
\label{tab:rm-selection-llama-1b-instruct-different-ground-truth-ultrafeedback-skywork-reward-v2-Llama-3.1-8b}
\end{table}

% Llama-3.2-3B-Instruct | UltraFeedback Win-Rate
\begin{table}[H]
\caption{
    Numerical results underlying \zcref{fig:rm_accuracy_with_winrates} for the Llama-3.2-3B-Instruct language model.
    Specifically, for each reward model, the table reports the win-rate of the language model after policy gradient against the initial language model, as judged by a frontier GPT model, along with the accuracy values for the reward model.
    We measure accuracy values on examples from the policy gradient training set, with preference labels provided by the ArmoRM ground truth reward model.
}
\begin{center}
\fontsize{7.5}{9}\selectfont
\begin{tabular}{lccccc}
\toprule
\addlinespace[1.5mm]
\multicolumn{6}{c}{\textbf{Language Model:} Llama-3.2-3B-Instruct} \\
\addlinespace[1.1mm]
\toprule
 & \multicolumn{5}{c}{Dataset: UltraFeedback} \\
\cmidrule(lr){2-6}
Reward Model (used for training) & Win-Rate & Acc & Acc-W & HAcc & HAcc-W \\
\midrule
GRM-Llama3.2-3B-rewardmodel-ft & $0.738$ & $0.850$ & $0.842$ & $\mathbf{0.930}$ & $\mathbf{0.925}$ \\
internlm2-1\_8b-reward & $0.704$ & $0.828$ & $0.802$ & $0.892$ & $0.859$ \\
Llama-3.1-8B-Instruct-RM-RB2 & $0.715$ & $0.810$ & $0.791$ & $0.880$ & $0.853$ \\
Llama-3-OffsetBias-RM-8B & $0.646$ & $\mathbf{0.864}$ & $\mathbf{0.850}$ & $0.926$ & $0.914$ \\
llama-3-tulu-2-8b-uf-mean-rm & $0.702$ & $0.792$ & $0.765$ & $0.856$ & $0.842$ \\
RM-Gemma-2B & $\mathbf{0.769}$ & $0.778$ & $0.764$ & $0.878$ & $0.867$ \\
RM-Mistral-7B & $0.726$ & $0.840$ & $0.794$ & $0.890$ & $0.847$ \\
Skywork-Reward-V2-Llama-3.1-8B & $0.677$ & $0.796$ & $0.739$ & $0.890$ & $0.874$ \\
Skywork-Reward-V2-Llama-3.2-1B & $0.720$ & $0.806$ & $0.779$ & $0.920$ & $0.883$ \\
Skywork-Reward-V2-Qwen3-0.6B & $0.735$ & $0.802$ & $0.766$ & $0.904$ & $0.879$ \\
Skywork-Reward-V2-Qwen3-1.7B & $0.713$ & $0.826$ & $0.793$ & $0.906$ & $0.872$ \\
Skywork-Reward-V2-Qwen3-4B & $0.724$ & $0.844$ & $0.809$ & $0.912$ & $0.885$ \\
Skywork-Reward-V2-Qwen3-8B & $0.702$ & $0.834$ & $0.800$ & $0.924$ & $0.894$ \\
\bottomrule
\end{tabular}
\end{center}
\label{tab:rm-selection-llama-3b-instruct-ultrafeedback-win-rate}
\end{table}

% Llama-3.2-1B-Instruct | UltraFeedback Win-Rate
\begin{table}[H]
\caption{
    Numerical results underlying \zcref{fig:rm_accuracy_with_winrates} for the Llama-3.2-1B-Instruct language model.
    Specifically, for each reward model, the table reports the win-rate of the language model after policy gradient against the initial language model, as judged by a frontier GPT model, along with the accuracy values for the reward model.
    We measure accuracy values on examples from the policy gradient training set, with preference labels provided by the ArmoRM ground truth reward model.
}
\begin{center}
\fontsize{7.5}{9}\selectfont
\begin{tabular}{lccccc}
\toprule
\addlinespace[1.5mm]
\multicolumn{6}{c}{\textbf{Language Model:} Llama-3.2-1B-Instruct} \\
\addlinespace[1.1mm]    
\toprule
 & \multicolumn{5}{c}{Dataset: UltraFeedback} \\
\cmidrule(lr){2-6}
Reward Model (used for training) & Win-Rate & Acc & Acc-W & HAcc & HAcc-W \\
\midrule
GRM-Llama3.2-3B-rewardmodel-ft & $0.737$ & $0.850$ & $0.835$ & $\mathbf{0.896}$ & $\mathbf{0.895}$ \\
internlm2-1\_8b-reward & $0.009$ & $0.828$ & $0.801$ & $0.850$ & $0.820$ \\
Llama-3.1-8B-Instruct-RM-RB2 & $0.701$ & $0.810$ & $0.796$ & $0.838$ & $0.822$ \\
Llama-3-OffsetBias-RM-8B & $0.708$ & $\mathbf{0.864}$ & $\mathbf{0.852}$ & $0.888$ & $0.876$ \\
llama-3-tulu-2-8b-uf-mean-rm & $0.722$ & $0.792$ & $0.777$ & $0.826$ & $0.809$ \\
RM-Gemma-2B & $0.733$ & $0.778$ & $0.775$ & $0.864$ & $0.874$ \\
RM-Mistral-7B & $\mathbf{0.755}$ & $0.840$ & $0.819$ & $0.864$ & $0.854$ \\
Skywork-Reward-V2-Llama-3.1-8B & $0.695$ & $0.796$ & $0.742$ & $0.822$ & $0.773$ \\
Skywork-Reward-V2-Llama-3.2-1B & $0.741$ & $0.806$ & $0.779$ & $0.876$ & $0.837$ \\
Skywork-Reward-V2-Qwen3-0.6B & $0.725$ & $0.802$ & $0.773$ & $0.840$ & $0.807$ \\
Skywork-Reward-V2-Qwen3-1.7B & $0.704$ & $0.826$ & $0.796$ & $0.852$ & $0.814$ \\
Skywork-Reward-V2-Qwen3-4B & $0.712$ & $0.844$ & $0.823$ & $0.862$ & $0.835$ \\
Skywork-Reward-V2-Qwen3-8B & $0.716$ & $0.834$ & $0.809$ & $0.850$ & $0.820$ \\
\bottomrule
\end{tabular}
\end{center}
\label{tab:rm-selection-llama-1b-instruct-ultrafeedback-win-rate}
\end{table}

% OLMo-2-1B-SFT | UltraFeedback Win-Rate
\begin{table}[H]
\caption{
    Numerical results underlying \zcref{fig:rm_accuracy_with_winrates} for the OLMo-2-1B-SFT language model.
    Specifically, for each reward model, the table reports the win-rate of the language model after policy gradient against the initial language model, as judged by a frontier GPT model, along with the accuracy values for the reward model.
    We measure accuracy values on examples from the policy gradient training set, with preference labels provided by the ArmoRM ground truth reward model.
}
\begin{center}
\fontsize{7.5}{9}\selectfont
\begin{tabular}{lccccc}
\toprule
\addlinespace[1.5mm]
\multicolumn{6}{c}{\textbf{Language Model:} OLMo-2-1B-SFT} \\
\addlinespace[1.1mm]
\toprule
 & \multicolumn{5}{c}{Dataset: UltraFeedback} \\
\cmidrule(lr){2-6}
Reward Model (used for training) & Win-Rate & Acc & Acc-W & HAcc & HAcc-W \\
\midrule
GRM-Llama3.2-3B-rewardmodel-ft & $0.616$ & $0.850$ & $0.827$ & $\mathbf{0.880}$ & $0.861$ \\
internlm2-1\_8b-reward & $0.628$ & $0.828$ & $0.793$ & $0.854$ & $0.809$ \\
Llama-3.1-8B-Instruct-RM-RB2 & $0.607$ & $0.810$ & $0.778$ & $0.828$ & $0.789$ \\
Llama-3-OffsetBias-RM-8B & $0.611$ & $\mathbf{0.864}$ & $\mathbf{0.856}$ & $0.878$ & $\mathbf{0.863}$ \\
llama-3-tulu-2-8b-uf-mean-rm & $0.627$ & $0.792$ & $0.759$ & $0.814$ & $0.782$ \\
RM-Gemma-2B & $0.642$ & $0.778$ & $0.760$ & $0.814$ & $0.800$ \\
RM-Mistral-7B & $\mathbf{0.657}$ & $0.840$ & $0.798$ & $0.846$ & $0.809$ \\
Skywork-Reward-V2-Llama-3.1-8B & $0.610$ & $0.796$ & $0.748$ & $0.818$ & $0.769$ \\
Skywork-Reward-V2-Llama-3.2-1B & $0.611$ & $0.806$ & $0.763$ & $0.826$ & $0.776$ \\
Skywork-Reward-V2-Qwen3-0.6B & $0.634$ & $0.802$ & $0.757$ & $0.824$ & $0.771$ \\
Skywork-Reward-V2-Qwen3-1.7B & $0.624$ & $0.826$ & $0.784$ & $0.840$ & $0.794$ \\
Skywork-Reward-V2-Qwen3-4B & $0.610$ & $0.844$ & $0.807$ & $0.856$ & $0.814$ \\
Skywork-Reward-V2-Qwen3-8B & $0.586$ & $0.834$ & $0.794$ & $0.846$ & $0.802$ \\
\bottomrule
\end{tabular}
\end{center}
\label{tab:rm-selection-olmo-1b-sft-ultrafeedback-win-rate}
\end{table}

% Qwen3-1.7B-Base | UltraFeedback Win-Rate
\begin{table}[H]
\caption{
        Numerical results underlying \zcref{fig:rm_accuracy_with_winrates} for the Qwen3-1.7B-Base language model.
        Specifically, for each reward model, the table reports the win-rate of the language model after policy gradient against the initial language model, as judged by a frontier GPT model, along with the accuracy values for the reward model.
        We measure accuracy values on examples from the policy gradient training set, with preference labels provided by the ArmoRM ground truth reward model.
     }
\begin{center}
\fontsize{7.5}{9}\selectfont
\begin{tabular}{lccccc}
\toprule
\addlinespace[1.5mm]
\multicolumn{6}{c}{\textbf{Language Model:} Qwen3-1.7B-Base} \\
\addlinespace[1.1mm]    
\toprule
 & \multicolumn{5}{c}{Dataset: UltraFeedback} \\
\cmidrule(lr){2-6}
Reward Model (used for training) & Win-Rate & Acc & Acc-W & HAcc & HAcc-W \\
\midrule
GRM-Llama3.2-3B-rewardmodel-ft & $0.957$ & $0.850$ & $0.859$ & $0.850$ & $0.859$ \\
internlm2-1\_8b-reward & $0.191$ & $0.828$ & $0.807$ & $0.832$ & $0.807$ \\
Llama-3.1-8B-Instruct-RM-RB2 & $0.963$ & $0.810$ & $0.817$ & $0.810$ & $0.817$ \\
Llama-3-OffsetBias-RM-8B & $0.962$ & $\mathbf{0.864}$ & $\mathbf{0.865}$ & $\mathbf{0.866}$ & $\mathbf{0.868}$ \\
llama-3-tulu-2-8b-uf-mean-rm & $0.936$ & $0.792$ & $0.794$ & $0.794$ & $0.795$ \\
RM-Gemma-2B & $0.919$ & $0.778$ & $0.790$ & $0.778$ & $0.790$ \\
RM-Mistral-7B & $0.955$ & $0.840$ & $0.843$ & $0.840$ & $0.843$ \\
Skywork-Reward-V2-Llama-3.1-8B & $0.957$ & $0.796$ & $0.756$ & $0.796$ & $0.756$ \\
Skywork-Reward-V2-Llama-3.2-1B & $0.961$ & $0.806$ & $0.793$ & $0.806$ & $0.793$ \\
Skywork-Reward-V2-Qwen3-0.6B & $0.951$ & $0.802$ & $0.768$ & $0.802$ & $0.768$ \\
Skywork-Reward-V2-Qwen3-1.7B & $0.946$ & $0.826$ & $0.803$ & $0.826$ & $0.803$ \\
Skywork-Reward-V2-Qwen3-4B & $0.959$ & $0.844$ & $0.821$ & $0.844$ & $0.821$ \\
Skywork-Reward-V2-Qwen3-8B & $\mathbf{0.965}$ & $0.834$ & $0.814$ & $0.834$ & $0.814$ \\
\bottomrule
\end{tabular}
\end{center}
\label{tab:rm-selection-qwen-1-7b-base-ultrafeedback-win-rate}
\end{table}

\begin{table}[H]
\caption{
This table specifies the IFBench constraint pairs used in each of the experiments corresponding to \zcref{fig:rlvr_partial_rewards_qwen,fig:rlvr_partial_rewards_llama,fig:rlvr_partial_rewards_olmo,fig:rlvr_partial_rewards_qwen_gap_prob_but_partial_works}.
We include the constraint name according to the IFBench taxonomy (\eg, \texttt{words:start\_verb}), followed by the instruction that is appended to the prompt.
}
\begin{center}
\fontsize{7.5}{9}\selectfont
\setlength{\tabcolsep}{4pt}
\begin{tabular}{c c p{2.9cm} p{2.9cm}}
\toprule
\textbf{Language Model} &
\textbf{Constraint Pair} &
\multicolumn{1}{c}{\textbf{Constraint 1}} &
\multicolumn{1}{c}{\textbf{Constraint 2}} \\
\midrule
\noalign{\vskip 2pt}
\multirow{2}{*}[-28pt]{\shortstack[c]{Qwen3-1.7B\\ \zcref{fig:rlvr_partial_rewards_qwen}}}
& Pair A
& \texttt{words:start\_verb}\newline The response must start with a verb.
& \texttt{words:alphabet}\newline Each word must start with the next letter of the alphabet, looping back to `A' after `Z'.
\\
\noalign{\vskip 4pt}
\cline{2-4}
\noalign{\vskip 4pt}
& Pair B
& \texttt{count:numbers}\newline Include exactly 13 numbers in the response.
& \texttt{format:newline}\newline Write each word on a new line.
\\
\noalign{\vskip 4pt}
\midrule
\noalign{\vskip 2pt}
\multirow{2}{*}[-28pt]{\shortstack[c]{Llama-3.2-3B-Instruct\\ \zcref{fig:rlvr_partial_rewards_llama}}}
& Pair A
& \texttt{count:pronouns}\newline The response should include at least 5 pronouns.
& \texttt{words:last\_first}\newline The last word of each sentence must become the first word of the next sentence.
\\
\noalign{\vskip 4pt}
\cline{2-4}
\noalign{\vskip 4pt}
& Pair B
& \texttt{words:start\_verb}\newline The response must start with a verb.
& \texttt{count:pronouns}\newline The response should include at least 5 pronouns.
\\
\noalign{\vskip 4pt}
\midrule
\noalign{\vskip 2pt}
\multirow{2}{*}[-28pt]{\shortstack[c]{OLMo-2-1B-Instruct\\ \zcref{fig:rlvr_partial_rewards_olmo}}}
& Pair A
& \texttt{count:pronouns}\newline The response should include at least 5 pronouns.
& \texttt{sentence:increment}\newline Each sentence must contain exactly 5 more words than the previous one.
\\
\noalign{\vskip 4pt}
\cline{2-4}
\noalign{\vskip 4pt}
& Pair B
& \texttt{format:quotes}\newline Include quotes within quotes within quotes, at least 3 levels deep, alternating between double quotes and single quotes.
& \texttt{words:repeats}\newline The response should not repeat any word more than 10 times.
\\
\noalign{\vskip 4pt}
\midrule
\noalign{\vskip 2pt}
\multirow{2}{*}[-28pt]{\shortstack[c]{Qwen3-1.7B\\ \zcref{fig:rlvr_partial_rewards_qwen_gap_prob_but_partial_works}}}
& Pair A
& \texttt{count:punctuation}\newline Use every standard punctuation mark at least once, including semicolons, colons, and the interrobang (?!).
& \texttt{count:person\_names}\newline Mention at least 7 different person names in the response, from this list of person names: Emma, Liam, Sophia...
\\
\noalign{\vskip 4pt}
\cline{2-4}
\noalign{\vskip 4pt}
& Pair B
& \texttt{words:start\_verb}\newline The response must start with a verb.
& \texttt{custom:multiples}\newline Count from 10 to 50 but only print multiples of 7.
\\
\noalign{\vskip 2pt}
\bottomrule
\end{tabular}
\end{center}
\label{tab:lm-constraints-summary}
\end{table}

	% % CHECKLIST
	% \newpage
	% \input{checklist}

\end{document}